\tiny\color{gray},
\newcommand{\de}{\partial}
\newcommand{\wt}{\widetilde}
\newcommand{\radius}{R}
\DeclareMathOperator{\Osc}{Osc}
\DeclareMathOperator{\Ric}{Ric}
\DeclareMathOperator{\grad}{grad\,}
\let\div\relax %
\DeclareMathOperator{\div}{div}
\DeclareMathOperator{\Vol}{Vol}
\DeclareMathOperator{\Tr}{Tr}
\DeclareMathOperator{\Var}{Var}
\DeclareMathOperator{\LSI}{LSI}
\DeclareMathOperator{\PI}{PI}
\DeclareMathOperator{\Ker}{Ker}
\DeclareMathOperator{\BM}{BM}
\DeclareMathOperator{\SDP}{SDP}
\DeclareMathOperator{\MaxCut}{MaxCut}
\DeclareMathOperator{\poly}{poly}
\DeclareMathOperator{\ddiag}{ddiag}
\DeclareMathOperator{\WF}{WF}
\newcommand{\sff}{\mathrm{I\!I}}
\numberwithin{equation}{section}
\theoremstyle{plain}
\newtheorem{theorem}{Theorem}[section]
\newtheorem{corollary}[theorem]{Corollary}
\newtheorem{lemma}[theorem]{Lemma}
\newtheorem{proposition}[theorem]{Proposition}
\theoremstyle{definition}
\newtheorem{definition}[theorem]{Definition}
\newtheorem*{definition*}{Definition}
\newtheorem*{remark*}{Remark}
\newtheorem{assumption}[theorem]{Assumption}
\crefname{assumption}{Assumption}{Assumption}
\crefname{assumption}{Assumption}{Assumption}
\newcommand{\Murat}[1]{}
\definecolor{OliveGreen}{rgb}{0,0.6,0}
\newcommand{\bill}[1]{}
\title{Riemannian Langevin Algorithm for Solving Semidefinite Programs}
 \author{
  Mufan (Bill) Li\thanks{
    Department of Statistical Sciences at
    University of Toronto, and Vector Institute, \texttt{mufan.li@mail.utoronto.ca}
  }
  \and 
  Murat A. Erdogdu\thanks{
    Department of Computer Science and Department of Statistical Sciences at
    University of Toronto, and Vector Institute, \texttt{erdogdu@cs.toronto.edu}
  }
}
\date{\today}
\begin{document} 

\maketitle

\begin{abstract}

We propose a Langevin diffusion-based algorithm 
for non-convex optimization and sampling 
on a product manifold of spheres. 
Under a logarithmic Sobolev inequality, 
we establish a guarantee for finite iteration 
convergence to the Gibbs distribution 
in terms of Kullback--Leibler divergence. 
We show that with an appropriate temperature choice, 
the suboptimality gap to the global minimum is guaranteed to be 
arbitrarily small with high probability.  

As an application, we consider the Burer--Monteiro approach 
for solving a semidefinite program (SDP) with diagonal constraints, 
and analyze the proposed Langevin algorithm 
for optimizing the non-convex objective. 
In particular, we establish a logarithmic Sobolev inequality 
for the Burer--Monteiro problem when there are no spurious local minima,
but under the presence saddle points. 
Combining the results, we then provide a global optimality guarantee 
for the SDP and the Max-Cut problem. 
More precisely, we show that the Langevin algorithm achieves 
$\epsilon$ accuracy with high probability in 
$\widetilde{\Omega}( \epsilon^{-5} )$ iterations. 

\end{abstract}

\tableofcontents

\section{Introduction}
We consider the following optimization problem on a manifold
\begin{equation}
\label{eq:opt_problem}
	\min_{x \in M} F(x) \,, 
	\quad 
	\text{ where }\quad
	M = \underbrace{S^d \times \cdots \times S^d}_{n \text{ times}} \,, 
\end{equation}
where $S^d$ is the $d$-dimensional unit sphere, 
and $F:M \to \mathbb{R}$ is a non-convex objective function. 
Manifold structures often arise naturally from 
adding constraints to optimization problems on a Euclidean space. 
In matrix optimization for example, 
we often have constraints on rank, positive definiteness, symmetry etc., 
which lead to a Riemannian manifold \citep{absil2009optimization}. 
Most notably, the Burer--Monteiro method applied to various semidefinite programs with diagonal constraints 
can be written in the above form~\citep{burer2003nonlinear} with many applications including Max-Cut, community detection, and group synchronization.
See also \cite{hu2019brief} for a recent survey on 
manifold optimization. 

For non-convex optimization and sampling on Euclidean spaces, 
the unadjusted Langevin algorithm and its variants 
have been widely studied. 
See 
\cite{gelfand1991recursive,raginsky2017nonconvex,cheng2018sharp,dalalyan2019user,durmus2017nonasymptotic,erdogdu2018global,li2019stochastic,vempala2019rapid}
and the references therein. 
The main goal of these algorithms is to approximate 
the Langevin diffusion on $\mathbb{R}^N$ 
\begin{equation}
\label{eq:intro_langevin_diffusion}
	dZ_t = - \grad F(Z_t) \, dt + \sqrt{ \frac{2}{\beta} } \, dW_t \,, 
\end{equation}
where we use $\grad F$ to denote the Riemannian gradient 
(reserving $\nabla$ for the Levi--Civita connection, 
and in this special case the manifold is $\mathbb{R}^N$), 
$\{W_t\}_{t\geq 0}$ to denote the standard Brownian motion 
on $\mathbb{R}^N$, 
and $\beta > 0$ to denote the inverse temperature.
Under appropriate assumptions, it is well known that 
$\{Z_t\}_{t \geq 0}$ has a stationary Gibbs distribution 
$\nu(x) = \frac{1}{\mathcal{Z}} e^{-\beta F(x)}$, 
where $\mathcal{Z}$ is the constant normalizing $\nu(x)$ to be a probability density 
\citep{bakry2013analysis}. 
For global optimization, 
the density $\nu(x)$ will concentrate 
around the global minima in the limit $\beta \to \infty$ 
\citep{gelfand1991recursive}. 
For a finite choice of $\beta$, 
it is also possible to characterize the suboptimality
of the Gibbs distribution $\nu(x)$
\citep{raginsky2017nonconvex,erdogdu2018global}. 

Despite the success of algorithms based on Langevin diffusion in Euclidean spaces, 
the manifold setting introduces significant difficulties.
In fact, continuous time diffusion processes 
on manifolds are generally well understood \citep{hsu2002stochastic}; 
however, the numerical discretizations remain scarcely studied. 
Langevin algorithms on manifolds were first proposed 
using local coordinates to construct their updates 
\cite{girolami2011riemann,patterson2013stochastic}. 
For a special class of Hessian-type manifolds, 
Langevin updates can be done in a dual Euclidean space
via a mirror descent-type algorithm \cite{zhang2020wasserstein}. 
However, this assumption does not generalize to many manifolds, 
including compact manifolds considered in our setting.
Sampling close to a level set manifold in a Euclidean space 
is also studied in the context of unconstrained matrix optimization 
with a Langevin algorithm \cite{moitra2020fast}.

In this paper, our main focus is global non-convex optimization with Langevin algorithm on manifolds.  
Our main contributions can be summarized as follows. 
\begin{enumerate}[noitemsep]
	\item \textbf{Algorithm:} We propose a practical 
		Riemannian Langevin algorithm on the manifold of 
		products of spheres. The algorithm is first order and 
		relies on sampling an exact Brownian motion increment on a sphere, 
		which is shown to be achievable only 
		recently~\cite{jenkins2017exact,mijatovic2018note}. 
	\item \textbf{Sampling rate:} Under a logarithmic Sobolev inequality (LSI) 
		with constant $\alpha > 0$ and 
		for all $\epsilon > 0$, 
		we show that in $\Omega\left( \frac{\beta nd}{\alpha^2 \, \epsilon} 
			\log \frac{1}{\epsilon} \right)$ iterations 
			(recall $\dim M = nd$), 
		the proposed Langevin algorithm is within 
		$\epsilon$-Kullback--Leibler divergence 
		to the Gibbs distribution $\nu$. 
		We note this matches the best known complexity 
		in the Euclidean case \citep{vempala2019rapid}. 
	\item \textbf{Optimization error:} We show that the Gibbs distribution $\nu$ finds 
		an $\epsilon$-global minimum with probability $1-\delta$ 
		whenever $\beta \geq \Omega\left(\frac{nd}{\epsilon} 
			\log \frac{n}{\epsilon \delta} \right)$. 
		We note this matches the best bound in 
		the Euclidean case up to log factors~\cite{raginsky2017nonconvex,erdogdu2018global}. 
	\item \textbf{LSI under unique minimum:} 
		We develop a novel escape time based technique 
		to show that for a non-convex objective function $F$
		with a unique global minimum ($F$ may still have saddles),
		the Gibbs distribution $\frac{1}{\mathcal{Z}} e^{-\beta F(x)}$ 
		with a sufficiently large $\beta > 0$ 
		satisfies a Poincar\'{e} inequality 
		with dimension and temperature independent constant, 
		and a LSI with constant 
		$\alpha^{-1} = O(n \beta)$. 
		This significantly improves many existing bounds 
		with exponential dependence on $nd$ and $\beta$.
		Using this result, for all $\epsilon>0$, 
		we show that the Langevin algorithm
		finds an $\epsilon$-optimal 
		solution of the problem \eqref{eq:opt_problem} in 
		$\wt{\Omega}\left( \epsilon^{-2.5} \right)$ 
		iterations, 
		where $\wt{\Omega}(\cdot)$ ignores log factors. 
	\item \textbf{LSI for the Burer--Monteiro problem:} 
		As an application, 
		we study the Burer--Monteiro relaxation of 
		the Max-Cut SDP~\citep{burer2003nonlinear} 
		using the Langevin algorithm. 
		We show that the LSI constant for the Burer--Monteiro problem
		is of order $O(n \beta^2)$ whenever all first order critical points 
		are either saddle or global minima. 
		This result implies that
		the Langevin algorithm finds an $\epsilon$-optimal global minimum 
		of the Burer--Monteiro problem in 
		$\wt{\Omega}\left( \epsilon^{-4.5} \right)$ 
		iterations. Compared to the previous case, 
		the rate is worse due to the nonuniqueness of global minima. 
\end{enumerate}

The rest of this article is organized as follows. 
In \cref{sec:langevin_under_lsi}, 
we introduce the Riemannian Langevin algorithm, 
and state the sampling convergence guarantee 
under LSI.
In \cref{sec:lsi_overview}, 
we state the main result on estimating the LSI constant 
for non-convex minimization problems, 
as well as the corresponding runtime complexity of the Langevin algorithm. 
In \cref{sec:sdp}, 
we discuss the connection to SDPs and Max-Cut problems. 
In \cref{sec:discussion}, 
we provide a discussion of the results in this work, 
including the potential to extend to more general manifolds. 
In \cref{sec:sketch}, 
we provide an overview of the proof approach 
for the main results. 
In \cref{sec:conv_proof,sec:subopt_proof,sec:escape_time_proof,sec:cor_proof}, 
we provide the detailed proofs of the main results.

\section{Riemannian Langevin Algorithm under LSI}
\label{sec:langevin_under_lsi}

We start by introducing some notations, 
where most of Riemannian geometry notations 
are based on \cite{lee2019riemann}, 
and the diffusion theory notations are based on 
\cite{bakry2013analysis}. 
Let us equip the $nd$-dimensional manifold $M$ 
with a Riemannian metric $g$ 
via the natural embedding into $\mathbb{R}^{n(d+1)}$. 
Let $\nabla$ denote the Levi--Civita connection, 
let $dV_g$ denote the Riemannian volume form, 
and let $dx$ denote the Euclidean volume form. 
For $x \in M$, we denote the tangent space at $x$ by $T_x M$, 
the tangent bundle as $TM$, 
and the space of smooth vector fields on $M$ 
as $\mathfrak{X}(M)$. 
For $u,v \in T_x M$, we let $\langle u, v \rangle_g$ 
denote the Riemannian inner product, and $|u|_g$ denote the resulting norm.
For $u,v \in \mathbb{R}^p$, we also denote by
$\langle u, v \rangle$ and $|u|$,
the corresponding Euclidean counterparts. 
We denote by $\langle A, B\rangle = \Tr(A^\top B)$,
the trace inner product of the matrices 
$A, B \in \mathbb{R}^{p\times q}$. 

Let $C^k(M)$ denote the same space 
$k$-time differentiable functions on $M$. 
For $\phi \in C^2(M)$, 
we denote the Riemannian gradient as 
$\grad \phi \in \mathfrak{X}(M)$ 
(differentiating from the connection $\nabla$), 
the Hessian as 
$\nabla^2 \phi:\mathfrak{X}(M) \times \mathfrak{X}(M) 
	\to C(M)$, 
the divergence of a vector field 
$A \in \mathfrak{X}(M)$ as $\div A$, 
and the Laplace--Beltrami operator (or Laplacian for short) 
as $\Delta \phi = \div \grad \phi$. 
We will also use the musical isomorphisms $\sharp, \flat$ 
to raise and drop an index, in particular we will write 
$\nabla^2 \phi^\sharp: \mathfrak{X}(M) \to \mathfrak{X}(M)$ 
such that $\nabla^2 \phi(x)[u,v] = 
\langle \nabla^2 \phi(x)^\sharp[u], v \rangle_g$ 
for all $u,v \in T_x M$. 
We say that a function 
$\phi:M \to \mathbb{R}$ is $K_1$-Lipschitz 
if its (weak) derivative exists and is bounded, i.e. 
$\sup_{|v|_g = 1} \langle \grad \phi(x) , v \rangle_g \leq K_1$ 
for all $x \in M$. 
Similarly, we say that 
$\grad \phi$ is $K_2$-Lipschitz if 
$\sup_{|v|_g = 1} \nabla^2 \phi(x)[v,v] \leq K_2$ 
for all $x \in M$, 
and $\nabla^2 \phi$ is $K_3$-Lipschitz if 
$\sup_{|v_1|_g,|v_2|_g,|v_3|_g \leq 1} 
\nabla^3 \phi(x)[v_1,v_2,v_3] \leq K_3$ for all $x \in M$. 
This allows us to state our first assumption on 
the objective function $F$. 

\begin{assumption}
\label{asm:c2}
	$F \in C^2(M)$.
	Without loss of generality, 
	we let $\min_{x \in M} F(x) = 0$. 
\end{assumption}
Here, we remark that $F \in C^2(M)$ implies that $F$ and $\grad F$
are Lipschitz due to $M$ being compact; henceforth, we let
$K_1, K_2 \geq 1$ denote the Lipschitz constants of
$F$ and $\grad F$, respectively.
We will next introduce several definitions for 
stochastic analysis on manifolds 
\citep[Section 1.2-1.3]{hsu2002stochastic}. 
Let $(\Omega, \mathcal{F}_*, \mathbb{P})$ 
be a filtered probability space, 
where $\mathcal{F}_* = \{ \mathcal{F}_t \}_{t \geq 0}$ 
is a complete right continuous filtration,
and
let $L$ be a second order elliptic differential operator on 
a compact Riemannian manifold $(M,g)$. 
We say $\{Z_t\}_{t\geq 0}$ is a $M$-valued 
diffusion process generated by $L$ 
if $\{Z_t\}_{t \geq 0}$ is 
an $\mathcal{F}_*$-adapted semimartingale and 
\begin{equation}
\label{eq:martingale_representation_defn}
	M^f(Z)_t := f(Z_t) - f(Z_0) - \int_0^t \, Lf(Z_s) \, ds \,, 
\end{equation}
is an $\mathcal{F}_*$-adapted local martingale 
for all $f \in C^\infty(M)$. 
In particular, if $\{Z_t\}_{t\geq 0}$ is generated by 
$L \, \phi := \langle - \grad F, \grad \phi \rangle_g 
+ \frac{1}{\beta} \, \Delta \phi$, 
where we emphasize $\grad$ is the Riemannian gradient 
and $\Delta$ is the Laplace--Beltrami operator, 
we say $\{Z_t\}_{t\geq 0}$ is a Langevin diffusion. 

While this definition is not constructive, 
the existence and uniqueness of $L$-diffusions 
are well established \citep[Theorem 1.3.4 and 1.3.6]{hsu2002stochastic}. 
Furthermore, there is an explicit construction via 
horizontal lifts to the frame bundle 
\citep[Section 2.3]{hsu2002stochastic}, 
which can be interpretted as rolling the manifold 
against the corresponding Euclidean diffusion path 
without ``slipping''. 
See \cref{fig:sphere} 
for an example of a Brownian motion 
on a sphere $S^2$ generated by rolling along 
a Brownian motion path on $\mathbb{R}^2$. 

\begin{figure}[h]
\centering
\includegraphics[trim={0 0.2 0in 1.2in}, clip,width = 0.6\textwidth]{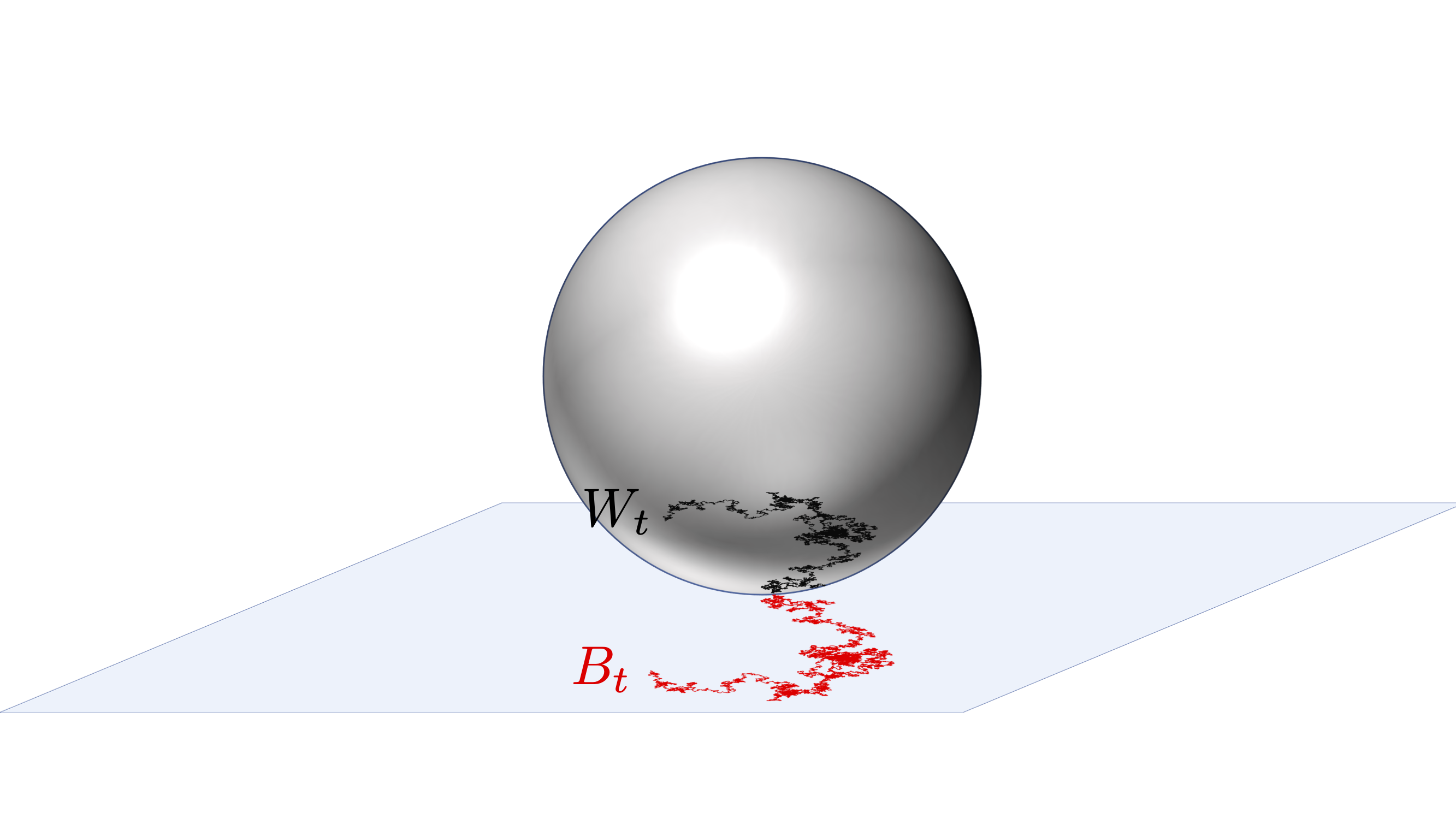} 
\vspace{-.2in}
\caption{Generating a Brownian motion $\{W_t\}_{t\geq 0}$ 
	on a sphere $S^2$ by ``rolling without slipping'' 
	on an Euclidean Brownian motion path $\{B_t\}_{t \geq 0}$ 
	in $\mathbb{R}^2$. }
\label{fig:sphere}
\end{figure}
\noindent This allows us to write the Langevin diffusion 
using the same (formal) notation 
\begin{equation}
\label{eq:langevin_diffusion}
	dZ_t = - \grad F(Z_t) \, dt + \sqrt{ \frac{2}{\beta} } \, dW_t \,, 
\end{equation}
where $\{W_t\}_{t\geq 0}$ is a Brownian motion on $M$, 
and $Z_0$ is initialized with a distribution $\mu_0$ 
supported on $M$. 
We emphasize the above stochastic differential equation (SDE) 
is interpreted in the sense that for all $f \in C^2(M)$, 
the process $M^f(Z)_t$ defined in \cref{eq:martingale_representation_defn} 
is a martingale. 

Recall the generator 
$L \, \phi = \langle -\grad F, \grad \phi \rangle_g 
+ \frac{1}{\beta} \Delta \, \phi$ 
for all $\phi \in C^2(M)$, and
the stationary Gibbs density 
$\nu(x) := \frac{1}{\mathcal{Z}} e^{-\beta F(x)}$ on $M$.
We define the carr\'{e} du champ operator 
$\Gamma (\phi) := L( \phi^2 ) - 2 \phi \, L \, \phi 
	= \frac{1}{\beta} \, | \grad \phi |_g^2$, 
and
call $(M,\nu,\Gamma)$ the Markov triple 
as per conventions of \cite{bakry2013analysis}.

\begin{definition*}
We say the Markov triple $(M, \nu, \Gamma)$ satisfies 
a \textbf{logarithmic Sobolev inequality} 
with constant $\alpha > 0$, 
denoted $\LSI(\alpha)$, 
if for all probability measures $\mu$ such that $\mu \ll \nu$ 
with $h := \frac{d\mu}{d\nu} \in C^1(M)$, 
we have the inequality 
\begin{equation}
	H_\nu(\mu) := \int_M h \log h \, d\nu 
	\leq 
		\frac{1}{2 \alpha} \int_M \frac{ \Gamma(h) }{ h } \, d\nu 
	=: 
		\frac{1}{2 \alpha} I_\nu(\mu) \,, 
\end{equation}
where $H_\nu(\mu)$ is the Kullback--Leibler (KL) divergence, 
and $I_\nu(\mu)$ is the relative Fisher information.
\end{definition*}

Intuitively, Langevin diffusion $Z_t$ in \cref{eq:langevin_diffusion} 
can be interpreted as a gradient flow in 
the space of probability distributions 
\citep{jordan1998variational}, 
and $\LSI(\alpha)$ acts like 
a gradient domination condition on the KL divergence. 
Just as Polyak-\L ojasiewicz inequality implies 
an exponential rate of convergence 
for a gradient flow 
\citep{polyak1963gradient,lojasiewicz1963topological}, 
$\LSI(\alpha)$ implies exponential convergence 
for Langevin diffusion in this divergence. 
More precisely, 
if we let $Z_t \sim \rho_t$ for all $t \geq 0$, 
then we have a well known exponential decay result 
\citep[Theorem 5.2.1]{bakry2013analysis} 
\begin{equation}
\label{eq:kl_exp_conv}
	H_\nu(\rho_t) \leq H_\nu(\rho_0) \, e^{-2\alpha t} \,. 
\end{equation}

We remark that 
we chose the Markov triple convention 
to adjust for a factor of $\beta$. 
This convention gives us 
a factor of $e^{-2\alpha t}$ in \cref{eq:kl_exp_conv}, 
whereas if we choose the LSI convention without reference to 
the carr\'{e} du champ operator $\Gamma$, 
we would end up with a factor of 
$\exp( -\frac{2\alpha t}{\beta} )$ instead. 
We note there are also alternative approaches to analyze 
convergence in KL divergence based on a modified LSI 
\citep{toscani2000trend,erdogdu2020convergence}, 
which is a strictly weaker condition. 
While we assume $\LSI(\alpha)$ in this section 
to establish convergence, 
we will derive a non-trivial lower bound for $\alpha$ 
in \cref{sec:lsi_overview}. 

\begin{assumption}
\label{asm:lsi}
$(M,\nu,\Gamma)$ satisfies $\LSI(\alpha)$ for some 
constant $\alpha > 0$. 
Note our Markov triple convention is temperature dependent, 
that is for all distribution $\mu$ and $h = \frac{d\mu}{d\nu}$ we have 
\begin{equation}
	H_\nu(\mu) 
	\leq 
	\frac{1}{2\alpha} \frac{1}{\beta} 
	\int \frac{ |\grad h|_g^2 }{h} \, d\nu \,. 
\end{equation}

\end{assumption}

We will next turn to discretizing 
the Langevin diffusion in \cref{eq:langevin_diffusion}. 
Before we consider the manifold setting, 
we recall the Langevin algorithm 
in the Euclidean space $\mathbb{R}^N$ 
with a step size $\eta > 0$ 
\begin{equation}
	X_{k+1} 
	= X_k - \eta \, \grad F(X_k) 
        + \sqrt{ \frac{2\eta}{\beta} } \, \xi_k \, \text{ where }\   \xi_k \sim N(0, I_{N}).
\end{equation}
First, we observe that we cannot simply add a gradient update 
on the manifold, 
as generally straight lines are not contained on manifolds. 
Therefore, we propose to instead take a geodesic path 
starting in the direction of the gradient. 
This operation is known as the exponential map 
$\exp: TM \to M$, 
which is explicitly computable on spheres 
via the embedding $S^d \subset \mathbb{R}^{d+1}$ 
\citep{absil2009optimization} 
\begin{equation}
	\exp(x, tv) = x \cos( t|v| ) + \frac{v}{|v|} \sin( t|v| ) \,, 
\end{equation}
where $x, v \in \mathbb{R}^{d+1}$ such that $|x| = 1$, 
$\langle x, v \rangle = 0$, 
and $t \in \mathbb{R}$. 
The exponential map can also be naturally 
extended to the product manifold $M$ 
by computing it for each sphere separately. 

Secondly, we also cannot add an increment of Brownian motion 
on the manifold, as it is no longer Gaussian. 
In fact, it is generally difficult to sample Brownian motion 
increments on manifolds. 
While the direction is simple (since it is uniform), 
the magnitude, also known as the radial process, 
is difficult to sample exactly 
\begin{equation}
	dr_t = \frac{d-1}{2} \cot(r_t) \, dt + dB_t \,. 
\end{equation}
We remark that while an approximation of the diffusion is easy, 
the analysis of non-exact Brownian increments is difficult to analyze. 

This line of work of exact sampling of one dimensional diffusions 
was first started by \cite{beskos2006retrospective} 
based on an idea of rejecting biased samples to recover exact samples. 
The main breakthrough came with \cite{jenkins2017exact}, 
where the authors were able to sample a transformation of the radial process, 
known as the Wright--Fisher diffusion (see \cref{subsec:app_wright_fisher}). 
This was further transformed back to an algorithm to 
sample spherical Brownian motions again by \cite{mijatovic2018note}. 
Since $M$ is a product of spheres, 
it is equivalent to sampling $n$ independent Brownian increments on $S^d$. 
We provide more details on the sampler in \cref{sec:app_brownian_inc}. 

Putting the two operations together, 
we can now define the interpolated Langevin update 
with step size $\eta>0$ as 
\begin{equation}
\label{eq:langevin_algorithm}
	\widehat{X}_{k\eta + t} = \exp(X_{k\eta}, - t \grad F(X_{k\eta})) \,,
	\quad 
	X_{k\eta + t} = W\left( \widehat{X}_{k\eta + t}, \frac{2 t}{\beta} 
		\right) \,,
\end{equation}
where $W(x,t): M \times \mathbb{R}^+ \to M$ is 
a Brownian motion increment starting at $x \in M$ for time $t \geq 0$, 
and we abuse notation slightly to define the discrete Langevin update as 
$X_k := X_{k\eta}$ for a step size $\eta>0$. 
We also initialize $X_0 \sim \rho_0$ 
where $\rho_0$ is supported on $M$.

By viewing one step of the Langevin update as a continuous time process, 
we can derive a de Bruijn's identity for the process $\{X_t\}_{0\leq t\leq\eta}$. 

\begin{lemma}
[De Bruijn's Identity for the Discretization]
\label{lm:de_bruijn_disc_main}
Let $\{X_t\}_{0\leq t\leq\eta}$ be the continuous time representation 
of the Langevin algorithm defined above, 
and let $\rho_{t}(x)$ be the density of $X_t$. 
Then we have the following de Bruijn type identity 
\begin{equation}
\begin{aligned}
	&\overbrace{ 
	\de_t H_\nu( \rho_t ) 
		= - I_\nu(\rho_t) }^{
				\text{de Bruijn's identity}}
	+ \overbrace{
			\mathbb{E} \left\langle
			\grad F(X_t) - 
			b(t, X_0, X_t) \,, \,
			\grad \log \frac{\rho_t(X_t)}{\nu(X_t)}
		\right\rangle_g }^{
				\text{Euclidean discretization error}}
				\\ 
	&\qquad\qquad\quad\,\,
	\underbrace{
		- \mathbb{E} \div_{X_t} b(t,X_0,X_t) 
			\log \frac{\rho_t(X_t)}{\nu(X_t)} 
			}_{
			\text{Riemannian discretization error}}
			\,, 
\end{aligned}
\end{equation}
where we define $\gamma_t(x_0):= \exp( x_0, - t \, \grad F(x_0) )$, 
$b(t, x_0, x) := P_{\gamma_t(x_0), x} 
		P_{x_0, \gamma_t(x_0)} \grad F(x_0)$, 
and we use $P_{x,y}:T_x M \to T_y M$ to denote 
the parallel transport map along the unique shortest geodesic 
connecting $x,y$ when it exists, 
and zero otherwise. 

\end{lemma}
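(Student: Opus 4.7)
The plan is to differentiate $H_\nu(\rho_t)$ directly, after first identifying the PDE satisfied by the conditional density $\rho_t(x \mid x_0)$ of $X_t$ given $X_0 = x_0$. Since $X_t = W(\gamma_t(x_0), 2t/\beta)$ is a Brownian increment on $M$ of duration $2t/\beta$ started at the moving base point $\gamma_t(x_0)$, the conditional density equals the heat kernel value $\rho_t(x\mid x_0) = P_{2t/\beta}(\gamma_t(x_0), x)$, where $P_s$ is the heat kernel on $M$ with generator $\tfrac{1}{2}\Delta$. The chain rule gives
\begin{equation*}
\partial_t \rho_t(x\mid x_0) = \tfrac{2}{\beta}\,\partial_s P_s(y,x)\big|_{s=2t/\beta,\,y=\gamma_t(x_0)} + \big\langle \dot\gamma_t(x_0),\,\grad_y P_s(y,x)\big|_{y=\gamma_t(x_0)}\big\rangle_g,
\end{equation*}
whose first summand equals $\tfrac{1}{\beta}\Delta_x \rho_t(x\mid x_0)$ by the heat equation $\partial_s P_s = \tfrac{1}{2}\Delta P_s$.

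For the second summand I would exploit the product-of-spheres geometry. On each $S^d$ factor, the heat kernel depends only on the ambient inner product $\langle y,x\rangle$, and a direct computation of the projected gradients yields the identity $\grad_y P_s(y,x) = -P_{x,y}\grad_x P_s(y,x)$ off the antipodal cut locus (a measure-zero set); this extends to $M$ via the product structure. Combined with $\dot\gamma_t(x_0) = -P_{x_0,\gamma_t(x_0)}\grad F(x_0)$ and the parallel-transport isometry $\langle v,P_{x,y}w\rangle_{g} = \langle P_{y,x}v,w\rangle_{g}$, the second summand becomes $\langle b(t,x_0,x),\grad_x \rho_t(x\mid x_0)\rangle_g$, yielding the non-conservative PDE
\begin{equation*}
\partial_t \rho_t(x\mid x_0) = \tfrac{1}{\beta}\Delta_x \rho_t(x\mid x_0) + \big\langle b(t,x_0,x),\,\grad_x \rho_t(x\mid x_0)\big\rangle_g.
\end{equation*}

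To finish, I would marginalize over $X_0\sim\rho_0$ and plug into $\partial_t H_\nu(\rho_t) = \int_M \log(\rho_t/\nu)\,\partial_t \rho_t\,dV_g$. The Laplacian contribution, together with $\grad\log\nu = -\beta\grad F$ and integration by parts on the compact $M$, gives the standard de Bruijn piece $-I_\nu(\rho_t) + \mathbb{E}\langle\grad F(X_t),\grad\log(\rho_t/\nu)(X_t)\rangle_g$. For the drift piece, I would split using the product rule $\langle b,\grad_x \rho_t(\cdot\mid x_0)\rangle_g = \div_x(b\,\rho_t(\cdot\mid x_0)) - \rho_t(\cdot\mid x_0)\,\div_x b$: integrating the divergence summand against $\log(\rho_t/\nu)$ by parts and invoking the tower property produces the Euclidean error $-\mathbb{E}\langle b(t,X_0,X_t),\grad\log(\rho_t/\nu)(X_t)\rangle_g$, while the remaining summand produces precisely the Riemannian error $-\mathbb{E}[\div_{X_t}b(t,X_0,X_t)\,\log(\rho_t/\nu)(X_t)]$. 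Summing the three contributions recovers the stated identity.

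The main obstacle is verifying the sphere identity $\grad_y P_s(y,x) = -P_{x,y}\grad_x P_s(y,x)$, which is what separates this derivation from its Euclidean counterpart: in flat space $\grad_y P = -\grad_x P$ is immediate and $\div_x b = 0$, whereas on $M$ the identity relies on the rotational symmetry of each sphere factor, and the genuinely nonzero $\div_x b$ is exactly what creates the new Riemannian discretization error term absent from the classical de Bruijn identity.
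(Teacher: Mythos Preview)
Your proposal is correct and follows essentially the same approach as the paper: derive the conditional Fokker--Planck equation for $\rho_{t\mid 0}$ via the chain rule applied to the heat kernel at the moving base point $\gamma_t(x_0)$, invoking the sphere gradient-symmetry identity $\grad_y P_s(y,x)=-P_{x,y}\grad_x P_s(y,x)$ (which the paper proves as a separate lemma using exactly the rotational symmetry you indicate), then differentiate $H_\nu(\rho_t)$ and integrate by parts. The only cosmetic difference is that for the drift contribution you first apply the product rule $\langle b,\grad_x\rho\rangle=\div_x(b\rho)-\rho\,\div_x b$ and then integrate one piece by parts, whereas the paper integrates $\langle \grad_x\rho_{t0},b\rangle\log(\rho_t/\nu)$ by parts first and then expands $\div_x[b\log(\rho_t/\nu)]$; these are equivalent rearrangements of the same calculation.
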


The full proof can be found in \cref{subsec:de_bruijn_proof}. 
Here we observe first component is exactly 
the classical de Bruijn's identity 
$\de_t H_\nu( \rho_t ) = - I_\nu(\rho_t)$ 
for the Langevin diffusion $\{Z_t\}_{t \geq 0}$
\citep[Proposition 5.2.2]{bakry2013analysis}. 
Since $\LSI(\alpha)$ implies 
$\de_t H_\nu( \rho_t ) = - I_\nu(\rho_t) 
\leq - 2 \alpha H_\nu(\rho_t)$, 
we can almost use a Gr\"{o}nwall type inequality 
to establish an exponential decay result similar to \cref{eq:kl_exp_conv}. 
To complete the argument, we seek a bound on the extra terms, 
which can be interpretted as discretization errors. 
We note the second term involving the inner product 
correspond to the error arising from discretizing Langevin diffusion 
in Euclidean space \citep[Equation (31)]{vempala2019rapid}, 
hence we call it the \emph{Euclidean discretization error}. 
Additionally, the last term is a new source of error arising 
from the manifold structure. 

We emphasize the divergence term 
in the Riemannian discretization error has the form 
$\div_x P_{y,x} v$ for some $v \in T_y M$, 
and this term diverges as $x$ moves towards the cut locus of $y$, 
i.e. where the geodesic is no longer unique. 
Therefore a careful analysis of the distribution 
of the Brownian motion is required to 
provide a tight bound on this error term. 

We are now ready to state our convergence result 
for sampling with the Langevin algorithm. 

\begin{theorem}
[Sampling Bound in KL Divergence]
\label{thm:finite_iteration_KL_bound}
Let $F$ satisfy \cref{asm:c2}, 
$(M,\nu,\Gamma)$ satisfy \cref{asm:lsi}, 
and suppose $d\geq 3$. 
Let $\{X_k\}_{k\geq 1}$ be the Langevin algorithm 
defined in \cref{eq:langevin_algorithm}, 
with initialization $\rho_0 \in C^1(M)$. 
If we choose $\beta \geq 1$ and 
$0 \leq \eta \leq \min\left( \frac{2}{3\alpha}, 
\frac{\alpha}{ 24 K_2 \sqrt{(\beta+d)d} } \right)$, 
then we have the following bound on the KL divergence 
of $\rho_k := \mathcal{L}(X_k)$ 
\begin{equation}
	H_\nu(\rho_k) 
	\leq 
		H_\nu(\rho_0) \,  
		e^{-\alpha k \eta} 
		+ 45 nd K_2^2 
		\frac{\eta}{\alpha} 
		\,. 
\end{equation}
\end{theorem}
The full proof can be found in \cref{subsec:conv_kl_bounds}. 
We reiterate that our convention of $\LSI(\alpha)$ 
includes an adjustment of a factor of $\beta$ in the Fisher information, 
more precisely 
\begin{equation}
	H_\nu(\mu) \leq \frac{1}{2\alpha} \frac{1}{\beta} 
		\int \frac{|\grad h|_g^2}{h} \, d\nu \,, 
\end{equation}
where $h = \frac{d\mu}{d\nu}$, 
and in the case of $\beta=1$ our convention aligns with the usual LSI. 
We also remark the order matches the best known sampling bound 
for the Langevin algorithm in an Euclidean space 
\citep{vempala2019rapid}. 
Given this error bound, we will choose the step size 
$\eta \leq O\left( \frac{\epsilon \alpha}{ nd K_2^2 } \right)$ 
(note the Theorem also requires 
$\eta \leq O\left( \frac{1}{\sqrt{\beta}} \right)$), 
to get the run complexity of 
$k \geq \Omega\left( \frac{nd K_2^2 }{ \alpha^2 \epsilon } 
\log \frac{H_\nu(\rho_0)}{ \epsilon } \right)$. 
The dependence on parameters are as follows. 
We have a linear dependence on dimension $nd$, 
which is likely tight for the unadjusted Langevin algorithm 
without additional assumptions, 
as it matches many existing work with different metrics
\cite{vempala2019rapid,cheng2018convergence,dalalyan2017further,dalalyan2019user,durmus2019analysis}. 
In terms of error tolerance, we have $\frac{1}{\epsilon}$, 
which can only be improved if we use a higher order discretization method 
or if the target distribution is unbiased (e.g. via Metropolis adjusting). 
With respect to condition number, 
we have $\frac{K_2^2}{\alpha^2} = \kappa^2$, 
which matches other existing analysis for KL divergence 
\cite{vempala2019rapid,cheng2018convergence}, 
but it is likely not tight as other analyses of first order methods 
have achieved better dependence \cite{wibisono2019proximal}. 
We will discuss a comparison with other sampling algorithms 
in \cref{sec:discussion}.

\subsection{Global Optimization Error}

To convert our sampling bound into bounds on optimization error, we now turn our attention
to bounding the suboptimality of the Gibbs distribution $\nu$ compared against the global minimum of $F$.
We state the following high probability bound.

\begin{theorem}
[Gibbs Suboptimality Bound]
\label{thm:gibbs_high_prob_bound}
Let $F$ satisfy \cref{asm:c2} 
and suppose $d \geq 3$. 
For all 
$\epsilon \in (0,1]$ and $\delta \in (0,1)$, 
if we choose 
\begin{equation}
	\beta 
	\geq 
		\frac{3 nd }{\epsilon} 
		\log \frac{ n K_2 }{ \epsilon \, \delta }
		\,, 
\end{equation}
then, we have that the Gibbs distribution 
$\nu(x) := \frac{1}{Z} e^{ -\beta F(x) }$ 
satisfies the following bound 
\begin{equation}
	\nu\left( F - \min_{y \in M} F(y) \geq \epsilon \right) 
	\leq \delta \,. 
\end{equation}
In other words, 
samples from $\nu$ find an $\epsilon$-approximate 
global minimum of $F$ with probability $1-\delta$. 
\end{theorem}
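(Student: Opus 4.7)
The plan is to establish this concentration estimate by the standard Laplace-type argument: bound $\nu(\{F \geq \epsilon\})$ by bounding its numerator from above and its denominator (the partition function) from below by restricting to a small geodesic ball around a global minimizer. The main work is to carry out the volume estimates correctly on the product-of-spheres manifold and to extract the precise logarithmic factor $\log(nK_2/\epsilon\delta)$.

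First, normalize so that $\min F = 0$ (as in \cref{asm:c2}) and fix a global minimizer $x^* \in M$, so $\grad F(x^*) = 0$. Since $\grad F$ is $K_2$-Lipschitz (equivalently $\nabla^2 F[v,v] \leq K_2 |v|_g^2$), a second-order Taylor expansion along the minimizing geodesic from $x^*$ to any $x \in M$ gives
\begin{equation}
F(x) \leq \tfrac{K_2}{2}\, d_g(x, x^*)^2.
\end{equation}
This is the standard quadratic upper bound; it requires only the one-sided Hessian bound supplied by \cref{asm:c2} and does not rely on any convexity.

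Next, write
\begin{equation}
\nu(\{F \geq \epsilon\})
= \frac{\int_{\{F \geq \epsilon\}} e^{-\beta F}\, dV_g}{\int_M e^{-\beta F}\, dV_g}
\leq \frac{e^{-\beta \epsilon}\, \Vol(M)}{\int_{B(x^*, r)} e^{-\beta F}\, dV_g}
\leq \frac{\Vol(M)}{\Vol(B(x^*, r))}\, e^{-\beta(\epsilon - K_2 r^2/2)},
\end{equation}
for any radius $r>0$, where the denominator estimate uses the Taylor bound on the ball $B(x^*, r)$. I will then choose $r^2 = \epsilon/(3K_2)$ (or similar) so that the $K_2 r^2/2$ term is absorbed into a constant fraction of $\epsilon$, leaving an effective factor $e^{-\beta\epsilon/c}$ for some small $c$.

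The key step is the volume bound. Since $M = (S^d)^n$ carries the product metric, the product of spherical balls of radius $r/\sqrt{n}$ is contained in $B(x^*, r)$, so
\begin{equation}
\Vol(B(x^*, r)) \geq \prod_{i=1}^n \Vol_{S^d}\bigl(B(x_i^*, r/\sqrt{n})\bigr).
\end{equation}
For $\rho \leq 1$, the constant-curvature $+1$ volume comparison yields $\Vol_{S^d}(B(\cdot, \rho)) \geq c_d\, \rho^d$ for an explicit dimension-dependent constant $c_d$ (for instance using $\sin s \geq 2s/\pi$ on $[0,1]$). Combined with $\Vol(M) = \Vol(S^d)^n$, this gives
\begin{equation}
\log \frac{\Vol(M)}{\Vol(B(x^*, r))} \leq nd \log\!\frac{C\sqrt{n}}{r} = \tfrac{nd}{2}\log\!\frac{C^2 n}{r^2},
\end{equation}
for an absolute constant $C$. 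Plugging in $r^2 \asymp \epsilon/K_2$ produces the $nd\log(nK_2/\epsilon)$ term in the final bound.

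Finally, solving $\nu(\{F\geq\epsilon\}) \leq \delta$ by taking logarithms yields a lower bound on $\beta$ of the form $\beta \geq (C'/\epsilon)\bigl[nd\log(nK_2/\epsilon) + \log(1/\delta)\bigr]$, which I will combine into a single logarithm to match the clean statement $\beta \geq \frac{3nd}{\epsilon}\log\frac{nK_2}{\epsilon\delta}$. The main obstacle is bookkeeping: getting the $\sqrt{n}$ inside the logarithm exactly right from the product-ball inclusion, and verifying that the constants from the spherical volume comparison are absorbed cleanly into the claimed prefactor of $3$ (rather than blowing up with $d$); this is the only non-routine part, since the rest is a textbook Laplace estimate.
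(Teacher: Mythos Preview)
Your approach is correct and shares the same Laplace-type skeleton as the paper's proof, but the denominator estimate is carried out differently. The paper (\cref{lm:sub_opt_quad_approx,lm:sub_opt_integral_approx}) lower-bounds the partition function by the full Gaussian integral $\int_{B_R} e^{-\beta K_2 d_g(x,x^*)^2/2}\,dV_g$ in normal coordinates, using the volume-form estimate $\sqrt{\det g}\geq(2/\pi)^{n(d-1)}$; this preserves the full rate $e^{-\beta\epsilon}$ but produces a factor $\beta^{nd/2}$, so one must then solve the implicit inequality $\beta\epsilon-\tfrac{nd}{2}\log\beta\geq\cdots$. You instead fix $r^2\asymp\epsilon/K_2$ and use the crude bound $\int_{B}e^{-\beta F}\geq\Vol(B)\,e^{-\beta K_2 r^2/2}$, trading a constant fraction of the exponential rate (you get $e^{-5\beta\epsilon/6}$) for a $\beta$-free volume ratio and hence an explicit inequality for $\beta$. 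Your product-ball inclusion $\prod_i B_{S^d}(x_i^*,r/\sqrt{n})\subset B_M(x^*,r)$ plays the role of the paper's normal-coordinate volume bound; both reduce the geometry to a single sphere. Your route is more elementary and avoids the $\beta$ vs.\ $\log\beta$ balancing, while the paper's is marginally sharper in the exponent. The constant-absorption you flag is indeed routine: the residual terms $n(d-1)\log(\pi/2)$, $n\log d$, and $\tfrac{nd}{2}\log 3$ are all $O(nd)$ and fold into the logarithm once $nK_2/\epsilon\geq 1$; matching the prefactor $3$ exactly may require a slightly different choice of $r$ (e.g.\ $r^2=\epsilon/K_2$ giving $e^{-\beta\epsilon/2}$), but the scaling is the same.
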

The full proof can be found in 
\cref{sec:subopt_proof}. 
We observe that dropping logarithmic factors 
using the notation $\wt{\Omega}(\cdot)$, 
we can write the condition on $\beta$ as 
$\beta \geq \wt{\Omega}( \frac{nd}{\epsilon} )$. 
This dependence is likely tight as it matches 
the case of quadratic objective functions on $\mathbb{R}^{nd}$, 
where Gibbs distribution is a Gaussian, 
and the expected suboptimality is exactly $\frac{nd}{\beta}$. 
Finally, above results allow us to compute the runtime complexity 
of the Langevin algorithm to reach an $\epsilon$-global minimum. 

\begin{corollary}
[Runtime Complexity]
\label{cor:runtime_complexity_lsi}
Let $F$ satisfy \cref{asm:c2}, 
and let $(M,\nu,\Gamma)$ satisfy \cref{asm:lsi}. 
Further let the initialization $\rho_0 \in C^1(M)$, 
and $d \geq 3$. 
Then for all choices of 
$\epsilon \in (0,1]$ 
and $\delta \in (0,1)$, 
we can choose 
\begin{equation}
\begin{aligned}
	\beta &\geq 
		\frac{3nd}{\epsilon} 
		\log \frac{ 2 n K_2 }{ \epsilon \, \delta }
			\,, \quad 
	\eta \leq 
		\min\left\{ 
				\frac{2}{3 \alpha} \, , \, 
				\frac{ \alpha \delta^2 \sqrt{\epsilon} }{ 
				180 nd K_2^2 
				\sqrt{ \log \frac{2nK_2}{\epsilon \delta} } } 
			\right\} 
			\,, \\ 
	k &\geq 
		\max\left\{ 
		\frac{3}{2} \,, 
		180 nd \frac{K_2^2}{\alpha^2} 
		\frac{ \sqrt{ \log \frac{2nK_2}{\epsilon\delta} } }{ \delta^2 \sqrt{\epsilon} }
		\right\} 
		\left( 2 \log \frac{2}{\delta} + \log H_\nu(\rho_0) \right) 
		\,, 
\end{aligned}
\end{equation}
such that the Langevin algorithm $\{X_k\}_{k\geq 1}$ 
defined in \cref{eq:langevin_algorithm} 
with distribution $X_k \sim \rho_k$ satisfies 
\begin{equation}
	\rho_k\left( F - \min_{y \in M} F(y) \geq \epsilon \right) 
	\leq \delta \,. 
\end{equation}
In other words, $X_k$ finds an $\epsilon$-global minimum 
with probability $1 - \delta$. 
\end{corollary}
The full proof can be found in \cref{subsec:runtime_lsi}. 
When naively viewing only in terms of the error tolerance $\epsilon>0$, 
\cref{cor:runtime_complexity_lsi} implies that it is sufficient to use 
$k \geq \wt{\Omega}\left( \frac{ nd }{\delta^2 \sqrt{\epsilon}} \right)$ 
steps to reach an $\epsilon$-global minimum, 
where $\wt{\Omega}(\cdot)$ here hides dependence 
on other parameters and log factors. 
However, we note that the logarithmic Sobolev constant $\alpha$ 
can depend on $\beta$ implicitly,
and as $\beta$ depends on $n, d, \epsilon$, 
this could lead to a worse runtime complexity. 
In fact, a naive lower bound on $\alpha$ can lead to 
exponential dependence on these parameters, 
hence leading to an exponential runtime. 
We will discuss this in further detail in \cref{sec:lsi_overview}, 
where we establish explicit dependence of $\alpha$ on $n,d,\beta$ 
that lead to an explicit runtime complexity.

\section{LSI and Runtime Complexity}
\label{sec:lsi_overview}

In this section, we provide sufficient conditions 
for LSI($\alpha$)
and a lower bound for the constant $\alpha$. 
While estimates for LSI constants 
are well studied \citep{bakry2013analysis,wang1997logarithmic,wang1997estimation,bakry1985diffusions,holley1987logarithmic,block2020fast,bakry2008simple,cattiaux2010note}, 
we note that straight forward approaches in Euclidean spaces 
via the Bakry--\'Emery criterion will not work here, 
as strong convexity is not possible on compact manifolds. 
Notably, when there are no spurious local minima, 
Menz and Schlichting \cite{menz2014poincare} 
introduced a perturbation method to 
establish the bound $\alpha^{-1} \leq O(\beta)$, 
but depends on dimension exponentially. 
We further improve this result by 
removing the exponential dependence on dimension $n,d$ 
in a similar setting. 

\begin{assumption}
\label{asm:app_c3}
	$F \in C^3(M)$. 
	Without loss of generality, we let
	$\min_{x\in M} F(x) = 0$. 
\end{assumption}

As a consequence of compactness and continuity,
there exist constants $K_1, K_2, K_3 \geq 1$ such that 
$F$ is $K_1$-Lipschitz, $\grad F$ is $K_2$-Lipschitz, 
and $\nabla^2 F$ is $K_3$-Lipschitz. 
Here we remark that assuming $F$ is $K_1$-Lipschitz 
does not contradict LSI on a compact manifold. 

\begin{assumption}
\label{asm:app_unique_min}
	The set of global minima
	$\mathcal{X} := \{ x \in M | F(x) = \inf_{y \in M} F(y) = 0 \}$ 
	is geodesically convex, 
	i.e. for any two points $x,y \in \mathcal{X}$, 
	the minimum distance geodesic path connecting $x,y$ 
	is also contained in $\mathcal{X}$. 
\end{assumption}

Here we remark this is strictly more general than 
a unique global minimum.

\begin{assumption}[Weak Morse]
\label{asm:app_morse}
For all critical points $y$, we have that the Hessian eigenvalues in the escape directions are bounded away from zero. 
More precisely, there exists $\lambda_* \in (0,1]$ such that for all $y$ with $\grad F(y) = 0$, we have:
\begin{enumerate}
	\item for all $v \in \Ker(\nabla^2 F(y)^\sharp)$, we have that $\grad F(\exp_y(v)) = 0$; in other words, $\exp_y(v)$ is still a critical point. 
	\item for all $v \in \Ker(\nabla^2 F(y)^\sharp)^\bot$, we have that $|\nabla^2 F(y)^\sharp[v]|_g \geq \lambda_* |v|_g$. 
\end{enumerate}
Furthermore, for all critical points that is not a global minimum, we have that $\lambda_{\min} (\nabla^2 F(y)) \leq - \lambda_* < 0$; in other words, there are no spurious local minima, 
	and all saddles points have an escape. 
\end{assumption}

We remark this condition is strictly weaker than 
a standard Morse assumption (see for example \cite{mei2018landscape}), 
and we only assume $\lambda_* \leq 1$ to simplify computation of explicit constants. 
Here we note that even if the saddle point has an escape direction, 
having ``flat directions'' will also slow down convergence of Langevin diffusion, 
as it will take longer to ``find the saddle point.''
This is characterized more precisesly using the Lyapunov functional, 
and we discuss this further in \cref{subsub:establish_lyapunov}. 

We can now state the main result of this section. 

\begin{theorem}
[Logarithmic Sobolev Inequality]
\label{thm:lyapunov_log_sobolev}
Suppose $F$ satisfies 
\cref{asm:app_c3,asm:app_morse,asm:app_unique_min} 
and $d \geq 2$. 
Then for all choices of $a,\beta > 0$ such that 
\begin{equation}
\begin{aligned}
	a^2 
	&\geq 
		\max\left( \frac{24 K_2 nd}{ C_F^2 } \,, 
		\frac{160}{\lambda_*} 
		\right)
		\,, \quad 
	\beta 
	\geq
		a^2 
		\max\left( 
			\, \frac{ 4 K_3^2 }{ \lambda_{*}^2 } \,, 
			\, K_3^2 a^4 \, 
		 \right) \,, 
\end{aligned}
\end{equation}
where $\lambda_*$ is defined in \cref{asm:app_morse}, 
we have that the Markov triple $(M, \nu, \Gamma)$ 
satisfies $\LSI(\alpha)$ with constant 
\begin{equation}
	\frac{1}{\alpha} 
	= 
		\begin{cases}
		 	440 \frac{K_2 n \beta}{\lambda_*^2} \,, 
		 	& \text{ if the global minimum is unique, } \\ 
		 	605 \frac{K_2 n \beta^2}{K_* \lambda_*} \,, 
		 	& \text{ otherwise, } 
		\end{cases} 
\end{equation}
where $K_* = \exp \left( \frac{ -2 C_F^2}{ K_2 K_3 } \right) $. 

\end{theorem}

The full proof can be found in \cref{sec:escape_time_proof}. 
Here, we observe that $\alpha^{-1} \leq O(n\beta)$ or $O(n\beta^2)$, 
which is a significant improvement over any standard approach 
to establishing $\LSI(\alpha)$, 
where one typically gets exponential dependence on 
both dimension $nd$ and temperature $\beta$. 
We also remark that a dimension free Poincar\'{e} inequality 
was established in \cref{prop:lyapunov_poincare} 
as an intermediate result; 
if the global minimum is unique, 
the constant is also temperature free. 

The main bottlenecks preventing improvement at the moment are 
(1) the requirement that $\beta \gtrsim (nd)^3$ 
for establishing the LSI, which likely can be reduced to $\beta \gtrsim nd$ 
with a more careful analysis 
(2) the fact that LSI is worse than Poincar\'e by a factor of $n\beta$, 
and we can likely use Poincar\'e  instead to analyze convergence 
(3) the fact that we did not quotient out the symmetry of the problem, 
which would yield a unique minimum and improve the dependence on $\beta$ by one order. 
Putting these together, 
we would be hopeful of an Poincar\'e constant of order $\kappa^{-1} = O(1)$ 
under the requirement of $\beta \gtrsim nd$, 
which would yield a runtime complexity of 
$\widetilde O(\frac{n d}{\delta^2 \epsilon^{0.5}})$ 
from Corollary 2.6. 
While the runtime complexity can still be improved 
via a more careful analysis, 
we note this is first polynomial time result 
for a Langevin based algorithm, 
as existing log-Sobolev inequalities on compact manifolds 
have an exponential dependence on dimension \cite{menz2014poincare}. 

We will also state the current runtime complexity. 

\begin{corollary}
\label{cor:runtime_general_problem}
Let $F:M \to \mathbb{R}$ satisfy 
\cref{asm:app_c3,asm:app_unique_min,asm:app_morse}. 
Let $\{X_k\}_{k\geq 1}$ be the Langevin algorithm 
defined in \cref{eq:langevin_algorithm}, 
with initialization $\rho_0 \in C^1(M)$, 
and $d \geq 3$. 
For all choices of 
$\epsilon \in (0,1]$ and $\delta \in (0,1)$, 
if $\beta$ and $\eta$ satisfy the conditions in 
\cref{cor:runtime_complexity_lsi} 
and \cref{thm:lyapunov_log_sobolev}, 
then choosing $k$ as
\begin{equation}
	k \geq 
	\begin{cases}
		\wt\Omega \left( 
			\frac{n^{9.5} d^{8}}{\epsilon^{2.5} \delta^2} 
		\right)
		\,, 
		& \text{ if the global minimum is unique, } \\
		\wt\Omega \left( 
			\frac{n^{15.5} d^{14}}{\epsilon^{4.5} \delta^2} 
		\right) 
		\,, 
		& \text{ otherwise, }
	\end{cases}
\end{equation}
where $\wt{\Omega}(\cdot)$ hides dependence on 
$\poly\left( K_2, K_3, C_F^{-1}, \lambda_*^{-1}, 
K_*,
\log \frac{ nd \, K_2 }{ \epsilon \, \delta } \,, 
\log H_\nu(\rho_0) \,
\right)$ 
and $K_* = \exp \left( \frac{ -2 C_F^2}{ K_2 K_3 } \right) $, 
we have that the Langevin algorithm $\{X_k\}_{k\geq 1}$ 
defined in \cref{eq:langevin_algorithm} 
with distribution $\rho_k := \mathcal{L}(X_k)$ satisfies 
\begin{equation}
	\rho_k\left( F - \min_{y \in M} F(y) \geq \epsilon \right) 
	\leq \delta \,. 
\end{equation}
In other words, $X_k$ finds an $\epsilon$-global minimum 
with probability $1 - \delta$. 

\end{corollary}

The proof can be found in 
\cref{subsec:runtime_general_problem}. 
Compared to the exponential runtime of \cite{raginsky2017nonconvex}
(which considers the Langevin algorithm for non-convex optimization 
in the Euclidean setting),
both of our results are polynomial runtime. 
This is due to the additional weak Morse 
and connected minima assumptions 
in the manifold setting, 
which yields an LSI constant that has 
polynomial temperature and dimension dependence, 
i.e., \cref{thm:lyapunov_log_sobolev}.

\section{Application to SDP and Max-Cut}
\label{sec:sdp}

In this section, we discuss an application 
of the Langevin algorithm for solving the Max-Cut SDP. 
More specifically, we consider the following SDP 
for a symmetric matrix $A \in \mathbb{R}^{n\times n}$ 
\begin{equation}
\label{eq:sdp}
\begin{aligned}
	\SDP(A) := &\max_{X} \, \langle A, X \rangle \,, \\ 
	&\text{subject to } 
		X_{ii} = 1 \, \text{ for all } \, i \in [n] \,, 
	\text{and } 
		X \succeq 0 \,, 
\end{aligned}
\end{equation}
where $[n] = \{1, 2, \cdots, n\}$ 
and we use $X \succeq 0$ to denote positive semidefiniteness. 
SDPs have a great number of applications in fields 
across computer science and engineering.
See \cite{erdogdu2017inference,majumdar2019survey} for a recent survey of 
methods and applications. 
In a seminal work, Goemans and Williamson \cite{goemans1995improved} 
analyzed the SDP in \cref{eq:sdp} 
as a convex relaxation of the Max-Cut problem on 
an undirected graph $G$ with adjacency matrix $A_G$ 
\begin{equation}
\label{eq:maxcut}
	\MaxCut(A_G) := \max_{x_i \in \{ -1, +1 \}} 
	\frac{1}{4} \sum_{i,j=1}^n A_{G,ij} ( 1 - x_i x_j) \,, 
\end{equation}
where we the relaxation of the SDP arises from choosing $A = - A_G$. 
Most importantly, the authors introduced a rounding scheme 
for optimal solutions of the SDP 
that leads to a $0.878$-optimal Max-Cut. 
Furthermore, the SDP admits a unique solution under strict complementarity, 
a condition that is known to be satisfied for almost every cost matrix $A$ 
(with respect to the Lebesgue on real symmetric matrices) 
\cite{alizadeh1997complementarity}. 
While it is well known that SDPs can be solved to arbitrary accuracy
in polynomial time via interior point methods 
\citep{nesterov2013introductory}, 
when used naively, computational costs of these methods 
scale poorly with the problem dimension $n$. 
This led to a large literature of fast SDP solvers 
\citep{arora2005fast,arora2007combinatorial,steurer2010fast,garber2011approximating,lee2019widetilde}. 

In an alternative approach, 
Burer and Monteiro \cite{burer2003nonlinear} 
introduced a low rank and non-convex reparametrization of the convex problem \cref{eq:sdp} given as
\begin{equation}
\label{eq:bm_sdp}
\begin{aligned}
	\BM(A) := &\max_x \, \langle x, A x \rangle \,, \\ 
	&\text{subject to } 
		x = [ x^{(1)}, \cdots, x^{(n)} ]^\top 
		\in \mathbb{R}^{n \times (d+1)}
	\text{ and } 
		|x^{(i)}| = 1 \text{ for all } i \in [n] \,, 
\end{aligned}
\end{equation}
where $x^{(i)} \in \mathbb{R}^{d+1}$ 
and $|x^{(i)}|$ denotes the Euclidean norm. 
We emphasize this problem is 
contained in our framework
since the manifold is a product of spheres. 
More precisely, we choose the objective function as
\begin{equation}
\label{eq:bm_loss_function}
	F(x) := - \langle x, A x \rangle 
			+ \max_{y \in M} \langle y, A y \rangle \,, 
\end{equation}
where we observe that the constant offset gives us 
$\min_{x \in M} F(x) = 0$. 
Furthermore, if we choose $(d+1)(d+2) \geq 2n$, 
then we also have that the optimal solution to 
the Burer--Monteiro problem \cref{eq:bm_sdp} 
is also an optimal solution to the SDP \cref{eq:sdp} 
\citep{barvinok1995problems,pataki1998rank,burer2003nonlinear}. 

While the Burer--Monteiro problem is non-convex, 
it is empirically observed that simple first order methods 
still perform very well \citep{javanmard2016phase}. 
To explain this phenomenon, 
it was shown that when 
$(d+1)(d+2) > 2n$, 
all second order stationary points (local minima) 
are also global minima 
for almost every cost matrix $A$ 
\cite{boumal2016non,waldspurger2018rank,cifuentes2019burer,pumir2018smoothed,cifuentes2019polynomial}. 
In a different approach by \cite{mei2017solving}, 
the authors used a Grothendieck-type inequality 
to show that all critical points are approximately optimal 
up to a multiplicative factor of $1 - \frac{1}{d}$. 

Finally, putting these discussion together, 
we can verify the assumptions required to establish an LSI, 
and therefore provide a runtime complexity guarantee. 

\begin{corollary}
\label{cor:sdp_maxcut_no_asm}
Let $F$ be the Burer--Monteiro loss function defined in \cref{eq:bm_loss_function}. 
Then for all choices of $d$ such that $(d+1)(d+2)>2n$ 
and almost every cost matrix $A$, 
$F$ satisfies \cref{asm:app_c3,asm:app_morse,asm:app_unique_min}. 

Furthermore, if we choose $d = \left\lceil \sqrt{2n} \right\rceil$, 
then for all $\epsilon \in (0,1]$ and $\delta \in (0,1)$, 
$\beta$ and $\eta$ satisfying the conditions in 
\cref{cor:runtime_complexity_lsi} and \cref{thm:lyapunov_log_sobolev}, 
and choosing $k$ as 
\begin{equation}
	k \geq 
		\wt\Omega \left( 
			\frac{ n^{22.5} }{\epsilon^{4.5} \delta^2} 
		\right) 
		\,, 
\end{equation}
where $\wt{\Omega}(\cdot)$ hides dependence on 
$\poly\left( K_2, K_3, C_F^{-1}, \lambda_*^{-1}, 
K_*
\log \frac{ n \, K_2 }{ \epsilon \, \delta } \,, 
\log H_\nu(\rho_0) \, 
\right)$ and 
$K_* = \exp \left( \frac{ -2 C_F^2}{ K_2 K_3 } \right) $, 
we have that with probability $1 - \delta$, 
the Langevin algorithm $\{X_k\}_{k \geq 1}$ 
defined in \cref{eq:langevin_algorithm} finds 
an $\epsilon$-global solution of the SDP \cref{eq:sdp} 
after $k$ iterations 
for almost every cost matrix $A$. 

Additionally, if we let $\epsilon' := \epsilon / (4 \MaxCut(A_G))$, 
then using $X_k$ and the random rounding scheme of \citep{goemans1995improved}, 
we recover an $0.878 (1-\epsilon')$-optimal Max-Cut 
for almost every adjacency matrix $A_G$. 
\end{corollary}

\begin{remark*}
Here the notion of \textbf{almost every} cost matrix $A$ is 
with respect to the Lebesgue measure on real symmetric matrices 
\cite{alizadeh1997complementarity}. 
\end{remark*}

As discussed in the earlier section, 
this is the first polynomial runtime result for the Langevin algorithm; 
and if the main technical roadblocks of the functional inequalities can be resolved, 
we are hopeful of a significant improvement in the runtime complexity.
For context, we mention some comparable algorithms, 
including a generic solver of SDP with diagonal constraints 
$\widetilde O(m \epsilon^{-3.5})$ \cite{lee2019widetilde}, 
where $m$ is the number of non-sparse entries in the cost matrix $A$; 
the fast Riemannian trust region method 
$O(n^3 d^2 \epsilon^{-2})$ \cite{mei2017solving}, 
but each iteration typically cost $O(n^2 d)$; 
Block coordinate ascent method 
$O(n^3 d^2 \epsilon^{-2})$ \cite{erdogdu2018convergence} 
with each iteration typically cost $O(nd)$.

\section{Discussion}
\label{sec:discussion}

\textbf{Extending to general manifolds}. 
Many of the proof techniques used in this paper 
only depend on knowing an explicit local coordinate system, 
not the specific manifold structure. 
However, it is not yet clear which unifying properties 
are needed to avoid working with known local coordinates. 
Using comparison theory \citep[Section 11]{lee2019riemann} 
on manifolds with bounded (Ricci and scalar) curvature, 
we speculate all analyses can be sandwiched by the edge cases: 
a sphere for maximum curvature 
and a hyperbolic space for minimum curvature. 
Therefore, this work can be viewed as the first step 
in a multiple step program to study 
Langevin algorithms on general manifolds. 

We briefly mention two additional difficulties with extending to 
more general manifolds. 
First, we remark Brownian motion increments are difficult to 
sample on general manifolds.
In fact, even sampling on spheres was a recent result 
\citep{jenkins2017exact,mijatovic2018note}. 
Without a Brownian motion increment, 
it is difficult to relate 
the discrete time algorithm \cref{eq:langevin_algorithm} 
to the continuous time diffusion \cref{eq:langevin_diffusion}, 
since
the algorithm does not lead to 
an infinitesimal generator containing the Laplace-Beltrami operator.
Second, it is generally difficult to 
provide a tight bound on the error terms in de Bruijn's identity without local coordinates.
Luckily on spheres, they can be studied 
via stereographic coordinates (\cref{subsec:app_stereo}) 
and a transformation to Wright-Fisher diffusion (\cref{subsec:app_wright_fisher}).

\vspace{0.2cm}
\noindent
\textbf{Comparison to other samplers}. 
There are known alternative sampling algorithms on manifolds.
Most notably,
Hamiltonian Monte Carlo (HMC) is widely studied
\cite{liu2016stochastic,betancourt2013generalizing,holbrook2018geodesic,cobb2019introducing,byrne2013geodesic}. 
For optimization, simulated annealing is also often used as 
a variant of MCMC \cite{kalai2006simulated,belloni2015escaping}. 
From a theoretical perspective, 
analyzing these algorithms require an orthogonal set of techniques
compared to Langevin algorithms. 
In particular, Metropolis adjusted algorithms 
are often studied in discrete time in terms of total variation distance 
\citep{meyn2009markov,douc2018markov}. 
On the other hand, Langevin algorithms are often 
studied via an approximation by 
the continuous time Langevin diffusion \cref{eq:langevin_diffusion}, 
with many techniques rooted in partial differential equations (PDEs) 
\citep{bakry2013analysis}. Therefore, our paper can be seen as complimentary to this line of work, ultimately solving the same problem but with a different set of tools.

Closer to our proposed algorithm, 
there are variants of projected and proximal Langevin algorithms 
\cite{brosse2017sampling,bubeck2018sampling,wibisono2019proximal}. 
While the projected and proximal algorithms have 
straight forward implementations on the manifold in practice, 
the analysis of convergence are usually quite difficult. 
In particular, the main difficulty arises in comparing 
the randomness of the algorithm to a spherical Brownian motion increment, 
which is not problematic in Euclidean space as Gaussians 
are easily sampled and analyzed. 
As a result, the results of the analyses are not directly comparable. 
That being said, it is plausible that the analysis of 
the proximal Langevin algorithm \cite{wibisono2019proximal} 
can be extended to our setting, 
as both our approaches are based on similar techniques 
from \cite{vempala2019rapid}. 
Several modifications are likely in place to adapt to the manifold setting, 
for example there will likely be a new Riemannian discretization error term 
as we had in Lemma 2.3. 
The proximal algorithm here is particularly interesting as it achieves 
a higher order bias error term of $O(\eta^2)$, 
and therefore yields an improved dependence on the condition number of 
$\frac{K_2^{1.5}}{\alpha^{1.5}} = \kappa^{1.5}$. 
This suggests that our current result in Theorem 2.4 
(which matches \cite{vempala2019rapid}) is likely not tight 
in terms of the condition number. 
However, the proximal algorithm requires an additional smoothness assumption, 
results in a higher order dimension dependence, 
and perhaps most importantly is not directly implementable. 
Once a proximal solver is introduced, additional error terms will 
need to be analyzed as well for a fair comparison.

\section{Overview of Proofs for Main Theorems}
\label{sec:sketch}

\subsection{Convergence of the Langevin Algorithm}

In this subsection, 
we briefly sketch the proof of \cref{thm:finite_iteration_KL_bound}. 
We first write the one step update as a continuous time process. 
In particular, we let 
\begin{equation}
\label{eq:cont_time_rep_overview}
	\widehat{X}_t = \exp( X_0, - t \grad F(X_0) ) \,, 
	\quad 
	X_t = W\left( \widehat{X}_t, \frac{2t}{\beta} \right) \,, 
\end{equation}
where recall $\exp:M \times T_M \to M$ 
is the standard exponential map, 
and $W:M \times \mathbb{R}^+ \to M$ is an increment of 
the standard Brownian motion, 
and further observe that at time $t=\eta$, 
we have that the discrete time update $X_1$ 
defined in \cref{eq:langevin_algorithm}. 

We will attempt to establish a KL divergence bound for this process. 
For convenience, we will define $\gamma_t(x) := \exp(x, -t \grad F(x))$, 
and we will be able to derive a Fokker--Planck equation 
for $\rho_{t|0}$ which we define as the condition density of 
$X_t$ given $X_0 = x_0$ in \cref{lm:disc_fokker_planck} 
\begin{equation}
	\de_t \rho_{t|0}(x|x_0) 
	= \frac{1}{\beta} \Delta \rho_{t|0}(x|x_0) 
		+ \left\langle 
			\grad_x \rho_{t|0}(x|x_0),
			P_{\gamma_{t}(x_0),x} P_{x_0, \gamma_t(x_0)}
			\grad F(x_0)
		\right\rangle_g \,,
\end{equation}
where we add a subscript $x$ to differential operators 
such as $\grad$ to denote the 
derivative with respect to the $x$ variable as needed, 
and we use $P_{x,y}:T_x M \to T_y M$ 
to denote the parallel transport 
along the geodesic between $x,y$ when it is unique, 
and zero otherwise. 

Using the Fokker--Planck equation, 
we can derive a de Bruijn identity for $\rho_t$ 
in \cref{lm:de_bruijn_disc_main} 
\begin{equation}
\begin{aligned}
	\de_t H_\nu( \rho_t ) 
		&= - I_\nu(\rho_t) 
	+ 
		\mathbb{E} \left\langle
			\grad F(X_t) - 
			b(t, X_0, X_t) \,, \,
			\grad \log \frac{\rho_t(X_t)}{\nu(X_t)}
		\right\rangle_g 
		\\ 
	&\quad\,\,
		- \mathbb{E} \div_{X_t} b(t,X_0,X_t) 
			\log \frac{\rho_t(X_t)}{\nu(X_t)} 
			\,, 
\end{aligned}
\end{equation}
where we define 
$b(t, x_0, x) := P_{\gamma_t(x_0), x} 
P_{x_0, \gamma_t(x_0)} \grad F(x_0)$. 

Here we remark if $X_t$ is exactly the Langevin diffusion, 
then the classical de Bruijn's identity 
$\de_t H_\nu( \rho_t ) = - I_\nu(\rho_t)$ 
corresponds to exactly the first term 
\citep[Proposition 5.2.2]{bakry2013analysis}. 
We also observe that the second term involving the inner product 
correspond to the term arising from discretizing Langevin diffusion 
in $\mathbb{R}^{nd}$ from \cite[Equation 31]{vempala2019rapid}, 
hence we call it the \emph{Euclidean discretization error}. 
Finally, being on the manifold $M$, 
we will also get a final term involving $\div_x b(t,x_0,x)$, 
which requires a careful analysis to give a tight bound. 

We will next control the two latter terms. 
Starting with the inner product term, 
since it matches the Eucldiean case, 
we can use the same argument as \cite{vempala2019rapid}. 
In particular, we will need a similar application of 
Talagrand's inequality in \cref{lm:exp_grad_bound}, 
then via basic Cauchy--Schwarz calculations in 
\cref{lm:inner_prod_term_bound}, 
we can get 
\begin{equation}
	\mathbb{E} \left\langle
		\grad F(X_t) - 
		b(t, X_0, X_t) \,, \,
		\grad \log \frac{\rho_t(X_t)}{\nu(X_t)}
	\right\rangle_g 
	\leq 
		\frac{1}{8} I_\nu(\rho_t) 
		+ 8 tnd K_2^2 ( 1 + t K_2 ) 
		+ \frac{ 16 \beta t^2 K_2^4 }{\alpha} H_\nu(\rho_0) \,. 
\end{equation}

Now the divergence term from Riemannian discretization error 
is far more tricky. 
Using a stereographic local coordinate, 
we can show a couple of important identities for 
the divergence of the vector field 
in \cref{lm:stereo_div_symmetry_main}. 
Here we use the superscript $(i)$ to denote 
the $i^\text{th}$ sphere component of the product manifold $M$, 
and use $g'$ to denote the Riemannian metric on $S^d$. 
Then we can show 
\begin{equation}
\begin{aligned}
	\mathbb{E} \left[ \left. 
			\div_{X_t} 
			b(t, X_0, X_t) \right| X_0 = x_0 \right] 
	&= 
		0 \,, \\ 
	\mathbb{E} \left[ \left. \left( \div_{X_t} 
			b(t, X_0, X_t) 
		\right)^2 \right| X_0 = x_0 \right]
	&= 
		| \grad F(x_0) |_{g}^2 \, 
		\left( \frac{2}{d} + 1 \right) 
		\mathbb{E} \tan\left( 
			\frac{1}{2} d_{g'}( \gamma_t(x_0)^{(i)}, X_t^{(i)} ) 
			\right)^2 \,, 
\end{aligned}
\end{equation}
where $i \in [n]$ is arbitrary. 

Observe that $d_{g'}(\gamma_t(x_0)^{(\ell)}, x^{(\ell)})$ 
is exactly the geodesic distance traveled by a Brownian motion 
in time $2t / \beta$. 
This radial process can be transformed to the well known 
Wright--Fisher diffusion, 
which we have a density formula in the form of 
an infinite series \cref{thm:wf_transition_density}. 
Using this formula, we can give an explicit bound on 
the expected value of the squared divergence term 
in \cref{lm:tan_L2_bound} 
\begin{equation}
	\mathbb{E} \tan\left( 
		\frac{1}{2} d_{g'}( \gamma_t(x_0)^{(i)}, X_t^{(i)} ) 
		\right)^2 
	\leq 
		\frac{4td}{\beta} 
		\,. 
\end{equation}
Here we remark that $\frac{2td}{\beta}$ is the variance of 
the Brownian motion term in Euclidean space $\mathbb{R}^d$, 
and that $\tan(\theta)\approx \theta$ when $\theta$ is small, 
therefore this result is tight up to a universal constant. 

Returning to the original divergence term 
in de Bruijn's identity, 
we can now split it into two using Young's inequality. 
Since the only randomness is a Brownian motion on $M$ 
when conditioned on $X_0$, 
we can use a local Poincar\'{e} inequality 
with constant $\frac{1}{2t}$, 
and control the log density ratio term 
by Fisher information in \cref{prop:exp_div_term_bound} 
\begin{equation}
	\mathbb{E} \div b(t, X_0, X_t) 
		\log \frac{ \rho_t(X_t) }{ \nu(X_t) }
	\leq 
		\frac{1}{8} I_{\nu}(\rho_t) 
		+ \frac{128 t^2 d K_2^2}{\alpha \beta} H_\nu(\rho_0)
		+ \frac{64 t^2 nd^2 K_2}{\beta^2} 
		\,. 
\end{equation}

Putting everything back together into the de Bruijn's identity, 
we can get a KL divergence bound using a Gr\"{o}nwall type inequality 
in \cref{thm:one_step_KL_bound} 
\begin{equation}
	H_\nu( \rho_t ) 
	\leq 
		e^{-\alpha t} H_\nu( \rho_0 ) 
		+ 
		30 nd K_2^2 t^2 
		\,. 
\end{equation}

The main result \cref{thm:finite_iteration_KL_bound} 
follows as a straight forward corollary, 
where we use a geometric sum bound to to control 
the KL divergence of $\rho_k := \mathcal{L}(X_k)$ 
\begin{equation}
	H_\nu(\rho_k) 
	\leq 
		H_\nu(\rho_0) \,  
		e^{-\alpha k \eta} 
		+ 45 nd K_2^2 
		\frac{\eta}{\alpha} 
		\,. 
\end{equation}

\subsection{Suboptimality of the Gibbs Distribution}

Recall the Gibbs distribution 
$\nu = \frac{1}{Z} e^{-\beta F}$, 
and we want to provide an upper bound on 
$\nu(F \geq \epsilon)$ 
given a sufficiently large $\beta > 0$. 
Without loss of generality, 
we assumed that $\inf_x F(x) = 0$, 
and let $x^*$ achieve the global min. 

The first observation we will make is that 
$\nu(F \geq \epsilon)$ can be written as a fraction 
\begin{equation}
	\nu(F \geq \epsilon) 
	= 
		\frac{ \int_{F \geq \epsilon} e^{-\beta F} dV_g(x) 
			}{
			\int_{F \geq \epsilon} e^{-\beta F} dV_g(x) 
			+ 
			\int_{F < \epsilon} e^{-\beta F} dV_g(x) 
			} 
	\leq  
		\frac{ \int_{F \geq \epsilon} e^{-\beta F} dV_g(x) 
			}{ 
			\int_{F < \epsilon} e^{-\beta F} dV_g(x) 
			} \,. 
\end{equation}

Now observe that the numerator can be upper bounded by 
\begin{equation}
	 \int_{F \geq \epsilon} e^{-\beta F} dV_g(x) 
	 \leq 
	 	e^{-\beta \epsilon} \Vol(M) \,, 
\end{equation}
and the denominator can be lower bounded by 
a quadratic approximation 
(\cref{lm:sub_opt_quad_approx})
\begin{equation}
	\int_{F < \epsilon} e^{-\beta F} dV_g(x) 
	\geq 
		\int_{B_R} e^{-\beta \frac{K_2}{2} d_g(x,x^*)^2 } dV_g(x) \,, 
\end{equation}
where $B_R :=\{ x \in M | \frac{K_2}{2} d_g(x,x^*)^2 < \epsilon \} $, 
this result follows from 
$F \leq \frac{K_2}{2} d_g(x,x^*)^2$ on $B_R$. 

Next we will use a similar normal coordinates approximation, 
such that when $R>0$ is sufficiently small, 
we can approximate the numerator as a Gaussian bound 
after normalizing the integral 
(\cref{lm:sub_opt_integral_approx} and 
\cref{eq:lb_by_gaussian}) 
\begin{equation}
	\int_{B_R} e^{-\beta \frac{K_2}{2} d_g(x,x^*)^2 } dV_g(x) 
	\geq 
		C \beta^{-nd/2} \mathbb{P}[ |X| \leq R ] \,, 
\end{equation}
where $X \sim N(0, (\beta K_2)^{-1} I_{nd})$, 
$C$ is a constant independent of $\beta$ and $\epsilon$. 
Here $\beta^{-nd/2}$ arises from the normalizing constant 
of a Gaussian integral. 
We also observe that we can choose $\beta$ sufficiently large 
such that $\mathbb{P}[ |X| \leq R ] \geq 1/2$. 

We now return to the original quantity, 
and observe that it is sufficient to show 
\begin{equation}
	\nu(F \geq \epsilon) 
	\leq 
		C e^{-\beta \epsilon} \beta^{ nd/2 } 
	\leq 
		\delta \,, 
\end{equation}
for some new constant $C$. 
With some additional calculation, 
we will see that it is sufficient to choose 
\begin{equation}
	\beta \geq \frac{C}{\epsilon} \log \frac{1}{\epsilon\delta} \,, 
\end{equation}
which gives us the desired result 
in \cref{thm:gibbs_high_prob_bound}.

\subsection{Logarithmic Sobolev Constant}

\subsubsection{Existing Results on Lyapunov Conditions}

In the context of Markov diffusions, 
a Lyapunov function is a map $W:M \to [1,\infty)$ 
that satisfies the following type of condition on 
a subset $B \subset M$ 
\begin{equation}
	L W \leq - \theta W \,, 
\end{equation}
where $L$ is the It\^{o} generator of the Langevin diffusion 
in \cref{eq:langevin_diffusion}, 
and $\theta$ is either a positive constant 
or a function depending on $x$. 
Here the set $B$ is chosen usch that the Gibbs measure $\nu$
satisfies a Poincar\'e inequality when restricted to $M\setminus B$. 
Under these two conditions, 
we have that $\nu$ satisfies a Poincar\'{e} inequality on $M$ 
\cite{bakry2008simple,cattiaux2013poincare}. 
A slight strengthening of the condition along 
with a Poincar\'{e} inequality 
also implies a logarithmic Sobolev inequality 
\cite{cattiaux2010note}. 
The advantage of this approach is that 
both of these results have made their constants explicit, 
which allows us to compute explicit dependence 
on dimension and temperature. 

Our work builds on \cite{menz2014poincare}, 
where the authors developed a careful perturbation 
to remove exponential dependence on $\beta$ 
due to the Holley--Stroock perturbation. 
However, after careful calculations, 
we found this approach still introduced 
an exponential dependence on dimensions $nd$. 
To get a polynomial runtime in terms of dimension, 
we need to develop a new technique without perturbation. 
We will defer to \cref{subsec:app_lyapunov} 
for more details on the existing Lyapunov methods. 

Lyapunov function is often interpreted as an energy quantity 
that decreases with time, 
hence controlling a desired process. 
Alternatively, we can interpret the Lyapunov function 
as the moment generating function of an escape time, 
which is well known in the Markov chain literature 
(see e.g. \cite[Theorem 14.1.3]{douc2018markov}), 
and first introduced for Markov diffusions 
in \cite{cattiaux2013poincare,cattiaux2017hitting}. 
Let $\{Z_t\}_{t\geq 0}$ be the Langevin diffusion 
defined in \cref{eq:langevin_diffusion}, 
and we know from the Feynman--Kac formula 
\citep[Theorem 7.15]{bovier2016metastability} 
that the unique solution to $LW = - \theta W$ is 
\begin{equation}
	W(x) = \mathbb{E} [ e^{\theta \tau} | Z_0 = x ] \,, 
\end{equation}
where $\tau := \{ t\geq 0 | Z_t \notin B \}$ 
is the first escape time. 
In fact, the existence of the Feynman--Kac solution 
is equivalent to the existence of a Lyapunov function 
\cite{cattiaux2013poincare}. 
Here we remark that the boundary needs to be handled carefully 
to an explicit estimate of the Poincar\'e constant 
\cite{cattiaux2013poincare}. 

The proof will come in several steps. 
First, we will establish an escape time method 
to finding a Lyapunov function. 
Secondly, we will prove a Poincar\'{e} inequality 
using the Lyapunov method from \cite{bakry2008simple}. 
Finally, we will prove the logarithmic Sobolev inequality 
by adapting HWI inequality approach of \cite{cattiaux2010note}.

\subsubsection{Establishing the Lyapunov Conditions}
\label{subsub:establish_lyapunov}

We will first establish a Lyapunov condition 
away from all saddle points, 
i.e. a point $y \in M$ such that 
$\grad F(y) = 0$ and the Hessian $\nabla^2 F(y)$ 
has at least one negative eigenvalue. 
Building on \cite{menz2014poincare}, 
we will also choose the Lyapunov function 
$W(x) = \exp\left( \frac{\beta}{2} F(x) \right)$, 
and then the Lyapunov condition is reduced to 
\begin{equation}
	\frac{LW}{W} 
	= 
		\frac{1}{2} \Delta F 
		- \frac{\beta}{4} |\grad F|_g^2 
	\leq 
		- \theta \,. 
\end{equation}
Here we emphasize that \cref{asm:app_morse} is in a sense necessary 
to upper bound the Lyapunov term by an negative constant, 
as we require $|\grad F|_g$ to grow as we move away from the saddle point. 
This can be interpretted as the ``flat directions'' of the saddle point 
can also slow down convergence of Langevin diffusion. 
We also note this is not unique to Langevin, 
as even gradient flow alone will yield a similar $- |\grad F|_g^2$ term, 
and upper bounding this is similar in spirit to 
the establishing exponential convergence using 
the Polyak-\L ojasiewicz inequality 
\cite{polyak1963gradient,lojasiewicz1963topological}. 

Therefore choosing a sufficiently large $\beta>0$ 
will meet the Lyapunov condition (\cref{lm:lyapunov_away_from_crit}). 
Additionally, we observe that near the unique global minimum, 
we have either (1) strong convexity of the function $F$, or 
(2) a positive Ricci curvature on $S^d$, 
hence we get a Poincar\'{e} inequality 
with the classical Bakry--\'{E}mery condition. 
Together this implies a Poincar\'{e} inequality away 
from saddle points (\cref{prop:lyapunov_poincare}). 

Next we will choose a second Lyapunov function 
that satisfies the condition only near saddle points. 
In particular, we will choose the escape time representation of 
$W(x) = \mathbb{E} [ e^{\theta \tau} | Z_0 = x ]$, 
where $\tau$ is the first escape time of $Z_t$ 
away from a neighbourhood from each saddle point. 
Using an equivalent characterization of 
subexponential random variables 
(\cref{thm:subexp_equiv_wainwright}), 
it is equivalent to establish a tail bound of the type 
\begin{equation}
	\mathbb{P}[ \tau > t ] \leq c \, e^{ - \theta t / 2} \,, 
\end{equation}
for some constant $c > 0$ and all $t \geq 0$. 

To study the escape time, 
we will first use a ``quadratic approximation'' 
of the function $F$ near the saddle point. 
Then the squared radial process $\frac{1}{2} d_g(y, Z_t)^2$ 
is lower bounded by a Cox--Ingersoll--Ross (CIR) process 
of the following form 
\begin{equation}
	d \left[ \frac{1}{2} r_t^2 \right] 
	= 
	\left[ 
		c_1 \frac{1}{2} r_t^2 
		+ \frac{c_2}{ \beta } 
		\right] \, dt 
		+ \sqrt{ \frac{2}{\beta} } 
			r_t \, dB_t 
		\,, 
\end{equation}
where $c_1, c_2 > 0$ are constants specified in the proof. 

Since we can compute the density of this process explicitly 
(\cref{cor:cir_density}), 
we can compute a desired escape time bound 
in \cref{prop:local_escape_time_bound}. 
Putting it together with the Lyapunov method, 
we have that $\nu$ satisfies a Poincar\'{e} inequality on $M$. 

Finally, to get a logarithmic Sobolev inequality, 
we will use the HWI inequality and Rothaus tightening from 
\cite{cattiaux2010note}, which first establishes 
\begin{equation}
	H_\nu(\mu) 
	\leq 
		W_2(\mu,\nu) \sqrt{\beta I_\nu(\mu)}
		+ \frac{\beta K_2}{2} W_2(\mu,\nu)^2 
	\leq 
		\frac{\tau \beta}{2} I_\nu(\mu) 
		+ \left( \frac{1}{2\tau} + \frac{\beta K}{2} \right) 
			W_2^2(\mu, \nu) 
	=: 
		A I_\nu(\mu) + B \,, 
\end{equation}
where $\tau > 0$ can be optimized later. 
Then if $\nu$ also satisfies a Poincar\'e inequality with constant $\kappa$, 
we can tighten the defective log-Soboelv inequality via Rothaus Lemma to get 
\begin{equation}
	H_\nu(\mu) \leq \left( A + \frac{B+2}{\kappa} \right) I_\nu(\mu) \,. 
\end{equation}
The logarithmic Sobolev constant is computed in 
\cref{thm:lyapunov_log_sobolev}.

\section{Convergence of the Langevin Algorithm - 
Proof of \cref{thm:finite_iteration_KL_bound}}
\label{sec:conv_proof}

\subsection{The Continuous Time Representation}

Once again we recall the continuous time process representation 
of the Langevin algorithm 
\begin{equation}
\label{eq:cont_time_rep}
	\widehat{X}_t = \exp( X_0, -t \grad F(X_0) ) \,,
	\quad
	X_t = W\left( \widehat{X}_t, \frac{2 t}{\beta} \right) \,,
\end{equation}
where $\exp:M \times TM \to M$ is 
the standard exponential map, 
and $W:M \times \mathbb{R}^+ \to M$ is 
a Brownian motion increment. 
We also recall the notation 
\begin{equation}
	\gamma_t(x) := \exp(x, - t \grad F(x)) \,.
\end{equation} 

Here we observe that at $t=0$, $\gamma_0(x) = x$. 
At the same time, $X_t$ is a Brownian motion 
starting at $\gamma_t(X_0)$, 
therefore the (conditional) density of $X_t$ 
starting at $X_0 = x_0$ is 
\begin{equation}
	\rho_{t|0}(x|x_0) = p(t, \gamma_t(x_0), x) \,,
\end{equation}
where $p(t,y,x)$ is the density of 
a Brownian motion with diffusion coefficient 
$\sqrt{2/\beta}$ and starting at $y$. 
In other words, $p$ is the unique solution to 
the following heat equation 
\begin{equation}
\label{eq:heat}
	\de_t p(t,y,x) = \frac{1}{\beta} \Delta_x p(t,y,x) \,, 
	\quad p(0,y,x) = \delta_y(x) \,, 
	\quad x,y \in M \,, 
\end{equation}
where $\Delta_x$ denotes the Laplacian (Laplace--Beltrami operator) 
with respect to the $x$ variable, 
$\delta_y$ is the Dirac delta distribution, 
and the partial differential equation (PDE) 
is interpreted in the distributional sense. 
See \cite[Theorem 4.1.1]{hsu2002stochastic} 
for additional details on the heat equation 
and Brownian motion density. 

To derive the Fokker--Planck equation, 
we will need an additional symmetry result 
about the density $p(t,y,x)$.

\begin{lemma}
\label{lm:grad_symmetry}
Let $p(t,y,x)$ be the unique solution of \cref{eq:heat}, 
i.e. the density of a Brownian motion on $M$ 
with diffusion coefficient $\sqrt{2/\beta}$. 
For all $x,y \in M$, 
let $P_{x,y}:T_x M \to T_y M$ be the parallel transport along 
the unique shortest geodesic connecting $x,y$ when it exists, 
and zero otherwise. 
Then we have 
\begin{equation}
	\grad_y p(t,y,x) = - P_{x,y} \grad_x p(t,y,x) \,, 
	\quad \forall t \geq 0, 
\end{equation}
where we use the subscript $\grad_y$ to denote 
the gradient with respect to the $y$ variable. 
\end{lemma}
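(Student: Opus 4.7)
The plan is to leverage the product structure of $M$ to reduce to a single sphere $S^d$, then exploit rotational symmetry to express the heat kernel as a function only of geodesic distance, at which point the identity reduces to a standard relation between the gradients of the distance function at its two endpoints, mediated by parallel transport along the connecting geodesic.

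First, since $\Delta = \sum_{i=1}^n \Delta^{(i)}$ splits as a sum of factorwise Laplacians on the product and the Dirac initial condition factorizes, the unique solution of the heat equation~\eqref{eq:heat} on $M$ takes the product form $p(t,y,x) = \prod_{i=1}^n p_S(t, y^{(i)}, x^{(i)})$, where $p_S$ denotes the $S^d$-heat kernel at the same diffusion coefficient. Both the Riemannian gradient and parallel transport along a product geodesic act coordinatewise, so it suffices to establish $\grad_y p_S(t,y,x) = -P_{x,y}\grad_x p_S(t,y,x)$ on a single $S^d$.

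Next, the orthogonal group $O(d+1)$ acts on $S^d$ by isometries and commutes with $\Delta$; combined with uniqueness of solutions to the heat equation, this forces $p_S(t, Ry, Rx) = p_S(t, y, x)$ for every $R \in O(d+1)$. Since $O(d+1)$ acts transitively on ordered pairs at a fixed geodesic distance, there exists a smooth $f$ with $p_S(t, y, x) = f(t, d_{g'}(y,x))$, where $g'$ denotes the round metric on $S^d$. Applying the chain rule off the cut locus yields $\grad_y p_S = \partial_r f \cdot \grad_y d_{g'}$ and $\grad_x p_S = \partial_r f \cdot \grad_x d_{g'}$.

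It then remains to verify the geometric identity $\grad_y d_{g'}(y,x) = -P_{x,y}\grad_x d_{g'}(y,x)$ off the cut locus. Let $\gamma:[0,L] \to S^d$ be the unique minimizing unit-speed geodesic with $\gamma(0) = y$, $\gamma(L) = x$, and $L = d_{g'}(y, x)$. A first variation computation gives $\grad_y d_{g'}(y,x) = -\gamma'(0)$ and $\grad_x d_{g'}(y,x) = \gamma'(L)$; since the geodesic velocity is parallel along $\gamma$, we have $P_{y,x}\gamma'(0) = \gamma'(L)$, and inverting yields the claim. Multiplying by $\partial_r f$ and using linearity of parallel transport completes the single-sphere case, which lifts to $M$ coordinatewise. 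At antipodal pairs on $S^d$ rotational symmetry around either endpoint forces $\grad p_S$ to vanish, matching the convention $P_{x,y} = 0$. The only delicate point is consistent bookkeeping across the product at the cut locus, but this is a measure-zero set that does not affect any integrated quantity in which the lemma is later invoked.
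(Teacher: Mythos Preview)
Your proposal is correct and follows essentially the same approach as the paper: reduce to a single sphere via the product structure, exploit rotational symmetry to recognize that the heat kernel depends only on geodesic distance, and then relate the two gradients through parallel transport of the geodesic velocity. Your chain-rule formulation via $p_S = f(t, d_{g'}(y,x))$ and the first-variation formula is slightly cleaner than the paper's more informal ``radial direction'' argument, and your remark that the product cut locus is a measure-zero set not affecting integrated uses is in fact more honest than the paper's treatment of that boundary case.
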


\begin{proof}

Since $M = S^d \times \cdots \times S^d$ is a product, 
it is sufficient to prove the desired result on a single sphere. 
For the rest of the proof, 
we will also drop the time variable $t$ and simply write 
$p(y,x)$ and $p(x,y)$ instead. 

We will start by considering the case when 
there is a unique shortest geodesic connecting $x$ and $y$. 
Due to Kolmogorov characterization of reversible diffusions
\citep[Extend Proposition 4.5 to $M$]{pavliotis2014stochastic}, 
we have the following identity 
\begin{equation}
	p(y,x) = p(x,y) \,, 
	\quad 
	\forall \, x,y \in M \,, 
\end{equation}
therefore it is equivalent to prove 
\begin{equation}
	\grad_y p(x,y) = - P_{x,y} \grad_x p(y, x) \,. 
\end{equation}

Let us view $S^d$ with center $x$, 
such that all points $y \in S^d$ can be seen as 
having a radial component $r = d_g(x, y)$ 
and an angular component $\theta \in T_x S^d$ 
such that $|\theta|_g = 1$. 
Next we observe that since $S^d$ is symmetrical 
in all angular directions, 
the vector $v = \grad_y p(x,y)$ must be 
in the radial direction. 

This implies that $r_t := \exp(y, tv)$ is a geodesic, 
and $r_\tau = x$ for some $\tau > 0$ 
(in fact $\tau$ is either $d_g(y,x) / |v|_g$ 
or $2\pi - d_g(y,x) / |v|_g$). 
Similarly, we can view $S^d$ with center $y$, 
and let $\wt{v} = \grad_x p(y,x)$, 
such that $\wt{r}_t := \exp(x, t \wt{v})$ 
is also a geodesic with $\wt{r}_\tau = y$. 
Due to symmetry of $x,y$, 
we must also have $r_t = \wt{r}_{\tau-t}$ for all $t \in [0,\tau]$. 
Since the time derivative of a geodesic is 
parallel transported along its path, 
we can write 
\begin{equation}
	v 
	= \de_t r_t |_{t=0} 
	= - \de_t \wt{r}_t|_{t = \tau} 
	= - P_{\wt{r}} \wt{v} \,, 
\end{equation}
where we use $P_{\wt{r}}$ to denote the parallel transport 
along the path $\{\wt{r}_t\}_{t=0}^\tau$. 
To get the desired result, it is sufficient to use the fact that 
all parallel transports along geodesics to the same end point 
are equivalent on $S^d$, 
which implies $P_{\wt{r}} \wt{v} = P_{x,y} \wt{v}$. 

Finally, we will consider $x,y$ in the cut locus of each other, 
in other words, $x,y$ are on the opposite poles of $S^d$. 
In this case, we observe that the gradient must remain radial, 
but symmetrical in all angular directions. 
Therefore we must have 
\begin{equation}
	\grad_x p(y,x) = 0 \,, 
\end{equation}
which also gives us the desired result. 

\end{proof}

\begin{lemma}
[Fokker--Planck Equation of the Discretization]
\label{lm:disc_fokker_planck}
Let $\{X_t\}_{t\geq 0}$ be the continuous time representation of 
the Langevin algorithm defined in \cref{eq:cont_time_rep}, 
and let $\rho_{t|0}(x|x_0)$ be 
the conditional density of $X_t|X_0 = x_0$. 
Then we have that $\rho_{t|0}$ satisfies the following 
Fokker--Planck equation 
\begin{equation}
\label{eq:disc_fokker_planck}
	\de_t \rho_{t|0}(x|x_0) 
	= \frac{1}{\beta} \Delta_x \rho_{t|0}(x|x_0) 
		+ \left\langle 
			\grad_x \rho_{t|0}(x|x_0),
			P_{\gamma_{t}(x_0),x} P_{x_0, \gamma_t(x_0)}
			\grad F(x_0)
		\right\rangle_g \,,
\end{equation}
where we add a subscript $\grad_x$ to denote the 
gradient with respect to the $x$ variable as needed, 
and we use $P_{x,y}$ to denote the parallel transport 
along the unique shortest geodesic between $x,y$ when it exists 
and zero otherwise. 
\end{lemma}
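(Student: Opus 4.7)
The plan is to differentiate the explicit formula $\rho_{t|0}(x|x_0) = p(t,\gamma_t(x_0),x)$ in $t$ via the chain rule on the manifold, and then reduce the result to the stated form using the heat equation for $p$ together with \cref{lm:grad_symmetry}. Concretely, I would first observe that since $X_t = W(\widehat{X}_t, 2t/\beta)$ is, conditional on $X_0 = x_0$, a Brownian motion on $M$ of diffusivity $\sqrt{2/\beta}$ launched from the deterministic point $\gamma_t(x_0) = \exp(x_0,-t\grad F(x_0))$, we have $\rho_{t|0}(x|x_0) = p(t,\gamma_t(x_0),x)$, where $p$ is the fundamental solution to \cref{eq:heat}.

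Next, I would differentiate in $t$. The time variable enters $p(t,y,x)$ through its first slot and through the base point $y = \gamma_t(x_0)$, so
\begin{equation*}
    \de_t \rho_{t|0}(x|x_0)
    = \de_t p(t,y,x)\big|_{y=\gamma_t(x_0)}
    + \bigl\langle \grad_y p(t,y,x)\big|_{y=\gamma_t(x_0)},\, \de_t \gamma_t(x_0) \bigr\rangle_g.
\end{equation*}
The first summand becomes $\tfrac{1}{\beta}\Delta_x \rho_{t|0}(x|x_0)$ by the heat equation \cref{eq:heat}. For the second, since $t \mapsto \gamma_t(x_0)$ is by construction the geodesic with initial velocity $-\grad F(x_0)$, its time derivative is the parallel transport of that initial velocity along itself, giving $\de_t \gamma_t(x_0) = -P_{x_0,\gamma_t(x_0)}\grad F(x_0)$.

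The remaining step is to convert $\grad_y p$ to $\grad_x p$. Applying \cref{lm:grad_symmetry}, $\grad_y p(t,\gamma_t(x_0),x) = -P_{x,\gamma_t(x_0)}\grad_x p(t,\gamma_t(x_0),x)$, and using that parallel transport is a linear isometry with $P_{x,\gamma_t(x_0)}^{-1} = P_{\gamma_t(x_0),x}$, the inner product rearranges to
\begin{equation*}
    \bigl\langle \grad_x \rho_{t|0}(x|x_0),\, P_{\gamma_t(x_0),x} P_{x_0,\gamma_t(x_0)} \grad F(x_0) \bigr\rangle_g,
\end{equation*}
exactly matching the second term in \cref{eq:disc_fokker_planck}. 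Combining the two contributions yields the stated PDE.

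The main subtlety will be handling the cut locus: \cref{lm:grad_symmetry} is stated with $P_{x,y}$ defined as zero when the shortest geodesic is non-unique, so I must verify that on the exceptional set where $x$ lies in the cut locus of $\gamma_t(x_0)$, both sides of the identity agree (the lemma shows the Brownian density has vanishing $\grad_y p$ there by symmetry, so the product of parallel transports is multiplied against zero). Since the cut locus has measure zero and the equation is understood in the distributional sense, this pointwise issue causes no trouble; everything else is a clean application of the chain rule, the heat equation for $p$, and the isometry properties of parallel transport.
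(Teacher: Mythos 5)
Your proposal is correct and follows essentially the same route as the paper's proof: differentiate $\rho_{t|0}(x|x_0)=p(t,\gamma_t(x_0),x)$ by the chain rule, use the heat equation for the $\de_t p$ term, identify $\de_t\gamma_t(x_0)=-P_{x_0,\gamma_t(x_0)}\grad F(x_0)$ since $\gamma_t$ is a geodesic, and convert $\grad_y p$ to $\grad_x p$ via \cref{lm:grad_symmetry} together with the isometry of parallel transport. Your extra remark about the cut locus (where the convention $P_{x,y}=0$ and the vanishing of the radial gradient keep both sides consistent on a measure-zero set) is a fine clarification but not a departure from the paper's argument.
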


\begin{proof}

We will directly compute the time derivative 
\begin{equation}
\begin{aligned}
	\de_t \rho_{t|0}(x|x_0)
	&= \frac{d}{dt} p(t, \gamma_t(x_0), x) \\
	&= \de_t p(t, \gamma_t(x_0), x) + 
		\langle 
			\grad_{\gamma_t(x_0)} p(t, \gamma_t(x_0), x), 
			\de_t \gamma_t(x_0)
		\rangle_g \,, 
\end{aligned}
\end{equation}
where we use $\frac{d}{dt}$ to denote the time derivative 
to all variables depending on $t$, 
and $\de_t$ to denote the derivative with respect to 
the first variable in $p(t,y,x)$. 

Here we observe that 
\begin{equation}
	\de_t p(t, \gamma_t(x_0), x) 
	= \frac{1}{\beta} \Delta_x p(t, \gamma_t(x_0), x) \,, 
\end{equation}
due to being the density of the standard Brownian motion. 

Next we will observe that since $\gamma_t(x_0)$ 
is a geodesic, we simply have 
$\de_t \gamma_t(x_0) = P_{x_0, \gamma_t(x_0)} (-\grad F(x_0))$. 
Furthermore, we can use \cref{lm:grad_symmetry} 
to write 
\begin{equation}
	\grad_{\gamma_t(x_0)} p(t, \gamma_t(x_0), x)
	= - P_{x, \gamma_t(x_0)}
		\grad_x p(t, \gamma_t(x_0), x) \,. 
\end{equation}

Then using the fact that parallel transport preserves 
the Riemannian inner product, we have that 
\begin{equation}
\begin{aligned}
	& \langle 
		\grad_{\gamma_t(x_0)} p(t, \gamma_t(x_0), x), 
		\de_t \gamma_t(x_0)
	\rangle_g \\
	&= 
	\langle 
		- P_{x, \gamma_t(x_0)}
		\grad_x p(t, \gamma_t(x_0), x), 
		P_{x_0, \gamma_t(x_0)} (-\grad F(x_0))
	\rangle_g \\
	&= 
	\langle 
		\grad_x p(t, \gamma_t(x_0), x), 
		P_{\gamma_t(x_0), x} 
		P_{x_0, \gamma_t(x_0)} \grad F(x_0)
	\rangle_g \,. 
\end{aligned}
\end{equation}

Finally, rewriting $p(t, \gamma_t(x_0), x)$ 
as $\rho_{t|0}(x|x_0)$ gives us the desired 
Fokker--Planck equation. 

\end{proof}

\subsection{De Bruijn's Identity for the Discretization}
\label{subsec:de_bruijn_proof}

We will next derive the de Bruijn type identity 
for the discretization, 
which we first stated in \cref{lm:de_bruijn_disc_main}. 
Here we recall that the Markov triple $(M,\nu,\Gamma)$ 
is defined as $\nu = \frac{1}{Z} e^{-\beta F}$ 
and $\Gamma(\phi) = \frac{1}{\beta} | \grad \phi |_g^2$. 
We will also define $\rho_t(x)$ as the density of $X_t$, 
and let $h = \rho_t / \nu$. 
We further recall the KL divergence and 
Fisher information are defined as 
\begin{equation}
\begin{aligned}
	H_\nu(\rho_t) &= \int_M h \log h \, d\nu 
	= 
		\int_M \log \frac{\rho_t(x)}{ \nu(x) } \, \rho_t(x) \, dV_g(x) 
		\,, \\
	I_\nu(\rho_t) 
	&= 
		\int_M \frac{ \Gamma( h ) }{ h } \, d\nu 
	= 
		\frac{1}{\beta} 
		\int_M \left| 
			\grad \log \frac{ \rho_t(x) }{ \nu(x) }
			\right|_g^2 \,
			\rho_t(x) \, dV_g(x) \,, 
\end{aligned}
\end{equation}
where we recall that $dV_g$ is the Riemannian volume form.

\begin{lemma}
[De Bruijn's Identity for the Discretization]
\label{lm:de_bruijn_disc}
Let $\{X_t\}_{t\geq 0}$ be the continuous time representation 
of the Langevin algorithm defined in \cref{eq:cont_time_rep}, 
and let $\rho_{t}(x)$ be the density of $X_t$. 
Then we have the following de Bruijn type identity 
\begin{equation}
\label{eq:de_bruijn_disc}
	\de_t H_\nu( \rho_t ) 
	= - I_\nu(\rho_t) 
	+ \mathbb{E} \left\langle
		\grad F(X_t) - 
		b(t, X_0, X_t) \,, \,
		\grad \log \frac{\rho_t(X_t)}{\nu(X_t)}
	\right\rangle_g 
	- \mathbb{E} \div_{X_t} b(t,X_0,X_t) 
		\log \frac{\rho_t(X_t)}{\nu(X_t)}  
	\,, 
\end{equation}
where we define 
\begin{equation}
	b(t, x_0, x) := P_{\gamma_t(x_0), x} 
		P_{x_0, \gamma_t(x_0)} \grad F(x_0) \,, 
\end{equation}
and recall $P_{x,y}:T_x M \to T_y M$ is 
the parallel transport map along the unique shortest geodesic 
connecting $x,y$ when it exists, 
and zero otherwise. 

\end{lemma}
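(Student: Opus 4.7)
The plan is to differentiate $H_\nu(\rho_t) = \int_M \rho_t \log(\rho_t/\nu) \, dV_g$ under the integral sign. Since $\int_M \de_t \rho_t \, dV_g = \de_t \int_M \rho_t \, dV_g = 0$, one gets
\begin{equation*}
    \de_t H_\nu(\rho_t) = \int_M \de_t \rho_t \cdot \log \frac{\rho_t}{\nu} \, dV_g.
\end{equation*}
So the first step is to derive a Fokker--Planck equation for the marginal density $\rho_t(x) = \int_M \rho_{t|0}(x|x_0) \rho_0(x_0) \, dV_g(x_0)$ by integrating \cref{lm:disc_fokker_planck} against $\rho_0(x_0)$. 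The Laplacian term immediately yields $\frac{1}{\beta}\Delta_x \rho_t$. The drift term requires care because $b(t, x_0, x)$ depends on $x_0$, so I would use the product rule $\div_x[\rho_{t|0} \, b] = \langle \grad_x \rho_{t|0}, b\rangle_g + \rho_{t|0} \div_x b$ to split it into a divergence of a weighted $b$ plus a residual. Defining the conditional expectations $\bar b(t,x) := \mathbb{E}[b(t,X_0,X_t)\mid X_t=x]$ and $\bar d(t,x) := \mathbb{E}[\div_{X_t} b(t,X_0,X_t)\mid X_t=x]$, this produces
\begin{equation*}
    \de_t \rho_t = \tfrac{1}{\beta}\Delta \rho_t + \div[\rho_t \, \bar b] - \rho_t \, \bar d.
\end{equation*}

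Next I would plug this into $\de_t H_\nu(\rho_t)$ and integrate by parts via Stokes' theorem on the closed manifold $M$. The Laplacian contribution becomes $-\frac{1}{\beta}\int \langle \grad \rho_t, \grad\log(\rho_t/\nu)\rangle_g \, dV_g$, which I would rewrite using $\grad \log \rho_t = \grad\log(\rho_t/\nu) - \beta \grad F$ (since $\grad\log\nu = -\beta \grad F$). This separates cleanly into $-I_\nu(\rho_t)$ plus a cross term $\int \rho_t \langle \grad F, \grad\log(\rho_t/\nu)\rangle_g \, dV_g$. Similarly, the divergence term yields $-\int \rho_t \langle \bar b, \grad\log(\rho_t/\nu)\rangle_g \, dV_g$, while the residual contributes $-\int \rho_t \, \bar d \log(\rho_t/\nu)\, dV_g$ directly. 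Combining the $\grad F$ and $\bar b$ inner products gives a single inner product against $\grad F - \bar b$, and the tower property converts both $\bar b$ and $\bar d$ back to joint expectations over $(X_0, X_t)$, producing exactly the two discretization error terms in the statement.

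The main obstacle is the $x_0$-dependence of the drift $b(t, x_0, x)$, which prevents a naive integration by parts and forces the product-rule decomposition that generates the extra $\div_x b$ term. A secondary subtlety is that $b(t, x_0, \cdot)$ is only defined off the cut locus of $\gamma_t(x_0)$, but this locus has zero Riemannian measure and $b$ was set to zero there, so the Stokes' theorem computation is valid in the distributional sense against the smooth heat kernel density. Once these two points are handled, the identity follows by straightforward bookkeeping of the four contributions.
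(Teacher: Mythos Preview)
Your proposal is correct and follows essentially the same approach as the paper: differentiate $H_\nu(\rho_t)$ under the integral, substitute the conditional Fokker--Planck equation from \cref{lm:disc_fokker_planck}, integrate by parts on $M$, and use the identity $\frac{1}{\beta}\grad\rho_t = \rho_t\bigl(\frac{1}{\beta}\grad\log(\rho_t/\nu) - \grad F\bigr)$ to extract $-I_\nu(\rho_t)$ plus the $\grad F$ cross term. The only cosmetic difference is that you pre-process the marginal Fokker--Planck into the form $\de_t\rho_t = \frac{1}{\beta}\Delta\rho_t + \div[\rho_t\,\bar b] - \rho_t\,\bar d$ via conditional expectations before pairing with $\log(\rho_t/\nu)$, whereas the paper keeps the joint density $\rho_{t0}$ throughout and applies the product rule $\div_x[b\log(\rho_t/\nu)] = \langle b,\grad\log(\rho_t/\nu)\rangle_g + (\div_x b)\log(\rho_t/\nu)$ only at the final integration-by-parts step; the underlying manipulations and the resulting terms are identical.
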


\begin{proof}

Before we compute the time derivative of 
KL divergence, 
we will first compute $\de_t \rho_t$ 
using \cref{lm:disc_fokker_planck} 
\begin{equation}
\begin{aligned}
	\de_t \rho_t(x) 
	&= \de_t \int_M \rho_{t|0}(x|x_0) \rho_0(x_0) \, dV_g(x_0) \\
	&= \int_M 
		\left[
			\frac{1}{\beta} \Delta_x \rho_{t|0}(x|x_0) 
			+ \left\langle 
				\grad_x \rho_{t|0}(x|x_0),
				b(t, x_0, x)
			\right\rangle_g
		\right] \, \rho_0(x_0) \, dV_g(x_0) \\ 
	&= \frac{1}{\beta} \Delta_x \int_M \rho_{t|0}(x|x_0) 
		\rho_0(x_0) \, dV_g(x_0)
		+ \int_M \left\langle 
			\grad_x \rho_{t|0}(x|x_0) \rho_0(x_0),
			b(t, x_0, x) 
		\right\rangle_g \, dV_g(x_0) \\
	&= \frac{1}{\beta} \Delta_x \rho_t(x)
		+ \int_M \left\langle 
			\grad_x \rho_{t0}(x,x_0),
			b(t, x_0, x) 
		\right\rangle_g \, dV_g(x_0) \,, 
\end{aligned}
\end{equation}
where we denote $\rho_{t0}(x,x_0)$ 
as the joint density of $(X_t, X_0)$. 

Returning to the time derivative of KL divergence, 
we will compute it explicitly using the previous equation 
\begin{equation}
\begin{aligned}
	\de_t H_\nu( \rho_t ) 
	&= \de_t \int_M \rho_t \log \frac{\rho_t}{\nu} \, dV_g(x) \\
	&= \int_M \de_t \rho_t \log \frac{\rho_t}{\nu} 
		+ \rho_t \frac{\de_t \rho_t}{\rho_t} 
		- \rho_t \de_t \log \nu \, dV_g(x) 
		\\
	&= \int_M \de_t \rho_t \log \frac{\rho_t}{\nu} \, dV_g(x) 
		+ 
		\de_t \int_M \rho_t \, dV_g(x) \\
	&= \int_M \left[ 
		\frac{1}{\beta} \Delta_x \rho_t(x)
		+ \int_M \left\langle 
			\grad_x \rho_{t0}(x,x_0),
			b(t, x_0, x) 
		\right\rangle_g \, dV_g(x_0) 
		\right] 
	\log \frac{\rho_t}{\nu} dV_g(x) \\
	&= \int_M 
		\frac{1}{\beta} \Delta_x \rho_t(x) 
		\log \frac{\rho_t}{\nu} dV_g(x) 
		+ \int_M \int_M \left\langle 
			\grad_x \rho_{t0}(x,x_0),
			b(t, x_0, x) 
		\right\rangle_g \, dV_g(x_0) 
	\log \frac{\rho_t}{\nu} dV_g(x) 
	\,. 
\end{aligned}
\end{equation}

Here we will treat the two integrals separately 
using integration by parts (divergence theorem). 
For the first integral, 
we have that 
\begin{equation}
	\int_M 
		\frac{1}{\beta} \Delta_x \rho_t(x) 
		\log \frac{\rho_t}{\nu} dV_g(x) 
	= 
		\int_M 
			- \frac{1}{\beta} 
			\left\langle \grad \rho_t(x) , 
				\grad \log \frac{\rho_t}{\nu} 
			\right\rangle_g 
		dV_g(x) \,. 
\end{equation}

Next we will use the fact that 
\begin{equation}
\begin{aligned}
	\rho_t(x) 
		\left[ 
			\frac{1}{\beta} \grad \log \frac{\rho_t}{\nu}
			- \grad F(x)
		\right] 
	&= 
		\rho_t 
		\left[ 
			\frac{1}{\beta} \frac{ \grad  \rho_t}{ \rho_t }
			- \frac{1}{\beta} \frac{ \grad \nu }{ \nu } 
			- \grad F(x)
		\right] \\ 
	&= 
		\rho_t 
		\left[ 
			\frac{1}{\beta} \frac{ \grad  \rho_t}{ \rho_t }
			+ \grad F(x) 
			- \grad F(x) 
		\right] \\ 
	&= 
		\frac{1}{\beta} \grad \rho_t(x) \,, 
\end{aligned}
\end{equation}
which allows us to write 
\begin{equation}
\begin{aligned}
	\int_M 
		- \frac{1}{\beta} 
		\left\langle \grad \rho_t(x) , 
			\grad \log \frac{\rho_t}{\nu} 
		\right\rangle_g 
		dV_g(x) 
	&= \int_M 
		- \rho_t(x) 
		\left\langle \frac{1}{\beta} \grad \log \frac{\rho_t}{\nu}
			- \grad F(x) , 
			\grad \log \frac{\rho_t}{\nu} 
		\right\rangle_g 
		dV_g(x) \\ 
	&= - I_\nu(\rho_t)
		+ \int_M 
		\rho_t(x) 
		\left\langle 
			\grad F(x) , 
			\grad \log \frac{\rho_t}{\nu} 
		\right\rangle_g 
		dV_g(x) \\ 
	&= 
		- I_\nu(\rho_t)
		+ \mathbb{E} 
			\left\langle 
				\grad F(X_t) , 
				\grad \log \frac{\rho_t(X_t) }{\nu(X_t)} 
			\right\rangle_g 
		\,. 
\end{aligned}
\end{equation}

For the second integral, 
we will also use integration by parts to write 
\begin{equation}
\begin{aligned}
	& \int_M \int_M \left\langle 
			\grad_x \rho_{t0}(x,x_0),
			b(t, x_0, x) 
		\right\rangle_g \, dV_g(x_0) 
	\log \frac{\rho_t}{\nu} dV_g(x) \\ 
	&= 
	- \int_M \int_M 
			\rho_{t0}(x,x_0) 
		\div_x \left[
			b(t, x_0, x) 
			\log \frac{\rho_t}{\nu}
		\right] \, dV_g(x_0) 
		dV_g(x) \\ 
	&= - \int_M \int_M 
			\rho_{t0}(x,x_0) 
		\left\langle 
			b(t, x_0, x) , 
			\grad \log \frac{\rho_t(x)}{\nu(x)}
		\right\rangle_g 
		+ \rho_{t0}(x,x_0) \div_x ( b(t,x_0,x) ) 
		\log \frac{\rho_t}{\nu}
		\, dV_g(x_0) 
		dV_g(x) \\ 
	&= \mathbb{E} 
		\left\langle - b(t, X_0, X_t), 
			\grad \log \frac{\rho_t(X_t)}{\nu(X_t)}
		\right\rangle_g 
		- \mathbb{E} 
			\div_{X_t} ( b(t,X_0,X_t) ) 
			\log \frac{\rho_t(X_t)}{\nu(X_t)} \,. 
\end{aligned}
\end{equation}

Finally, by merging the two integral terms together, 
we get the desired identity. 

\end{proof}

\subsection{Bounding the Inner Product Term}

Here we will first adapt a Talagrand's inequality based on 
the expected gradient from \cite[Lemma 11,12]{vempala2019rapid}. 

\begin{lemma}
\label{lm:exp_grad_bound}
Suppose $(M, \nu, \Gamma)$ satisfies $\LSI(\alpha)$ and 
$\grad F$ is $K_2$-Lipschitz. 
Then for all probability distributions $\rho$, 
we have the bound 
\begin{equation}
	\int \, |\grad F(x)|_g^2 \, d\rho(x) 
	\leq 
		\frac{4K_2}{\alpha} H_\nu(\rho) + \frac{2ndK_2}{\beta} \,. 
\end{equation}
\end{lemma}

\begin{proof}

Firstly, we observe that by integration by parts 
\begin{equation}
	\int \Delta F \, d\nu = - \int \langle \grad F, \grad \nu \rangle_g d\Vol 
	= \beta \int \, |\grad F|_g^2 d\nu \,. 
\end{equation}
Combined with $K_2$-Lipschitzness we get the bound of 
\begin{equation}
	\int \, |\grad F|_g^2 d\nu \leq \frac{nd K_2}{\beta} \,. 
\end{equation}

At the same time, recall that we denote $P_{x,y}: T_x M \to T_y M$ 
as the parallel transport via the unique shortest geodesic if it exists, 
we can write 
\begin{equation}
	| \grad F(x) |_g 
	\leq | \grad F(x) - P_{y,x} \grad F(y) |_g 
			+ | P_{y,x} \grad F(y) |_g 
	\leq K_2 d_g(x,y) + |\grad F(y)|_g \,. 
\end{equation}

Now let $\mu(x,y)$ denote the optimal $W_2$ coupling of $\rho(x),\nu(y)$, 
and using $ab \leq \frac{a^2}{2} + \frac{b^2}{2}$, 
we can write 
\begin{equation}
\begin{aligned}
	\int \, |\grad F(x)|_g^2 \, d \rho(x) 
	&= \int \, |\grad F(x)|_g^2 \, d \mu(x,y) \\ 
	&\leq \int \, \left( K_2 d_g(x,y) + |\grad F(y)|_g \right)^2 d \mu(x,y) \\ 
	&\leq 
		2 K_2^2 W_2(\rho, \nu)^2 
		+ 2 \int \, |\grad F(y)|_g^2 \, d\nu(y) \,, 
\end{aligned}
\end{equation}
where we used the previous inequality on $|\grad F(x)|_g$. 

Using the fact that $\LSI(\alpha)$ implies the transport inequality 
$W_2(\rho,\nu)^2 \leq \frac{2}{\alpha} H_\nu(\rho)$, 
and the previous expected gradient bound, 
we get the desired result 
\begin{equation}
	2 K_2^2 W_2(\rho, \nu)^2 
		+ 2 \int \, |\grad F(y)|_g^2 \, d\nu(y) 
	\leq 
		\frac{4 K_2^2}{\alpha} H_\nu(\rho) 
		+ \frac{2nd K_2}{\beta} \,. 
\end{equation}

\end{proof}

\begin{lemma}
\label{lm:inner_prod_term_bound}
Let $\{X_t\}_{t\geq 0}$ be the continuous time representation 
of the Langevin algorithm defined in \cref{eq:cont_time_rep}. 
Suppose further that $(M,\nu,\Gamma)$ satisfies $\LSI(\alpha)$ 
and $\grad F$ is $K_2$-Lipschitz, 
then we have that 
\begin{equation}
	\mathbb{E} \left\langle
		\grad F(X_t) - 
		b(t, X_0, X_t) \,, \,
		\grad \log \frac{\rho_t(X_t)}{\nu(X_t)}
	\right\rangle_g 
	\leq 
		\frac{1}{8} I_\nu(\rho_t) 
		+ 8 tnd K_2^2 ( 1 + t K_2 ) 
		+ \frac{ 16 \beta t^2 K_2^4 }{\alpha} H_\nu(\rho_0) \,, 
\end{equation}
where we define 
\begin{equation}
	b(t, x_0, x) := P_{\gamma_t(x_0), x} 
		P_{x_0, \gamma_t(x_0)} \grad F(x_0) \,, 	
\end{equation}
and recall $P_{x,y}:T_x M \to T_y M$ is 
the parallel transport map along the unique shortest geodesic 
connecting $x,y$ when it exists, 
and zero otherwise. 

\end{lemma}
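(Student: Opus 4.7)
The plan is a Young's inequality split to isolate the Fisher-information contribution, followed by two geometric estimates: a Lipschitz control of the drift error along geodesics using parallel transport, and an expected-squared-distance bound for the Brownian-motion increment on the product of spheres.

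First, I would apply the pointwise inequality $\langle u,v\rangle_g \leq \beta|u|_g^2 + \tfrac{1}{4\beta}|v|_g^2$ in $T_{X_t}M$ with $u = \grad F(X_t) - b(t,X_0,X_t)$ and $v = \grad\log(\rho_t(X_t)/\nu(X_t))$. Taking expectations and recalling that $I_\nu(\rho_t) = \tfrac{1}{\beta}\mathbb{E}|\grad\log(\rho_t/\nu)(X_t)|_g^2$ from the definition of $\Gamma$, the $|v|^2$ term collapses to exactly $\tfrac{1}{4}I_\nu(\rho_t)$. The remaining task reduces to establishing
\begin{equation*}
\beta\,\mathbb{E}|\grad F(X_t) - b(t,X_0,X_t)|_g^2 \leq 2K_2^2\left(2tnd + \beta t^2 K_1^2\right).
\end{equation*}

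Next, I would decompose the drift error by inserting the intermediate point $\gamma_t(X_0)$ and using that parallel transport is an isometry:
\begin{equation*}
|\grad F(X_t) - b(t,X_0,X_t)|_g \leq |\grad F(X_t) - P_{\gamma_t(X_0),X_t}\grad F(\gamma_t(X_0))|_g + |\grad F(\gamma_t(X_0)) - P_{X_0,\gamma_t(X_0)}\grad F(X_0)|_g.
\end{equation*}
Since $|\nabla^2 F|_g \leq K_2$ by \cref{asm:c2}, integrating the covariant derivative along each geodesic segment yields $|\grad F(y) - P_{x,y}\grad F(x)|_g \leq K_2\,d_g(x,y)$. After squaring with $(p+q)^2 \leq 2p^2 + 2q^2$, the problem reduces to bounding two expected squared geodesic distances. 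The first is deterministic and immediate: $d_g(X_0,\gamma_t(X_0)) = t|\grad F(X_0)|_g \leq tK_1$ by the definition of the exponential-map update and the $K_1$-Lipschitzness of $F$.

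The main obstacle, and the only genuinely manifold-flavoured step, is the Brownian-motion distance estimate $\mathbb{E}[d_g(\gamma_t(X_0),X_t)^2 \mid X_0] \leq 2tnd/\beta$. Conditionally on $X_0$, the process $X_t$ is a Brownian motion on $M$ with generator $\tfrac{1}{\beta}\Delta$ started at $\gamma_t(X_0)$. On a single sphere $S^d$, I would apply Dynkin's formula to $r^2/2$ where $r(\cdot) = d_{S^d}(\gamma_t(X_0)^{(i)},\cdot)$: away from the cut locus $\Delta_{S^d}r = (d-1)\cot r$, so $\Delta_{S^d}(r^2/2) = 1 + (d-1)r\cot r \leq d$ via the elementary bound $r\cot r \leq 1$ on $(0,\pi)$, yielding $\mathbb{E}[d_{S^d}(\gamma_t(X_0)^{(i)},X_t^{(i)})^2] \leq 2td/\beta$ on each component. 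Because the product metric satisfies $d_g^2 = \sum_{i=1}^n d_{S^d}^2$ and the components of the product Brownian motion are independent, summing over the $n$ factors gives the claimed bound, which combined with the deterministic piece and a multiplication by $\beta$ completes the proof. The subtlety is that $r^2/2$ is only continuous at the cut locus while Dynkin's formula nominally requires $C^2$ regularity; I would handle this via the standard approximation argument, truncating just inside the cut locus and taking limits, using that the cut locus has measure zero and is visited by the Brownian motion on a time set of measure zero.
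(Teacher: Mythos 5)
Your proof is correct and follows the paper's argument almost step for step: the same Young split $\langle u,v\rangle_g \le \beta|u|_g^2 + \tfrac{1}{4\beta}|v|_g^2$ producing exactly $\tfrac14 I_\nu(\rho_t)$, the same insertion of the intermediate point $\gamma_t(X_0)$ with isometry of parallel transport, the same $K_2$-Lipschitz bound along each geodesic segment, the same $(p+q)^2\le 2p^2+2q^2$, and the same deterministic estimate $d_g(X_0,\gamma_t(X_0)) = t\,|\grad F(X_0)|_g \le tK_1$. The only place you diverge is the estimate $\mathbb{E}\,d_g(\gamma_t(X_0),X_t)^2 \le 2tnd/\beta$: the paper obtains it by invoking its radial comparison result (\cref{cor:conc_mean_comparison}), which dominates the spherical radial process by the Euclidean one and then uses $\mathbb{E}\,\tfrac{2}{\beta}|B_t|^2 = \tfrac{2tnd}{\beta}$, whereas you derive it directly on each sphere from Dynkin's formula together with $\Delta_{S^d}(r^2/2) = 1+(d-1)\,r\cot r \le d$, then sum over the $n$ factors. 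Both routes give the identical constant; yours is self-contained and avoids importing the comparison theorem, at the price of the cut-locus regularity issue for $r^2/2$, which you correctly flag and which is handled exactly as you say (truncation near the cut locus, or equivalently noting that the local-time correction in the radial decomposition only pushes $r$ down, so the inequality survives). One cosmetic remark: independence of the sphere components is not needed for this step, since $d_g^2 = \sum_i d_{S^d}^2$ and linearity of expectation already suffice.
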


\begin{proof}

We start by using the Cauchy-Schwarz and Young's inequality to write
\begin{equation}
	\langle a, b \rangle_g 
	\leq 
		|a|_g |b|_g 
	\leq 
		2 \beta |a|_g^2 + \frac{1}{8 \beta} |b|_g^2 \,. 
\end{equation}

Applying this to the inner product term, we can get 
\begin{equation}
	\mathbb{E} \left\langle
		\grad F(X_t) - 
		b(t, X_0, X_t) \,, \,
		\grad \log \frac{\rho_t(X_t)}{\nu(X_t)}
	\right\rangle_g  
	\leq 
		2\beta \, \mathbb{E} \, 
			|\grad F(X_t) - b(t, X_0, X_t)|_g^2 \,
		+ \frac{1}{8} I_\nu(\rho_t) \,, 
\end{equation}
therefore, it is sufficient to study 
the first expectation term. 

To this end, we add an intermediate term to the gradient difference 
\begin{equation}
\begin{aligned}
	& \grad F(X_t) - b(t, X_0, X_t) \\
	&= \grad F(X_t) - 
		P_{\gamma_t(X_0), X_t} P_{X_0, \gamma_t(X_0)} 
		\grad F(x_0) \\
	&= \grad F(X_t) 
		- P_{\gamma_t(X_0), X_t} \grad F(\gamma_t(X_0)) \\
	&\quad + P_{\gamma_t(X_0), X_t} \grad F(\gamma_t(X_0))
		- P_{\gamma_t(X_0), X_t} P_{X_0, \gamma_t(X_0)} 
		\grad F(x_0) \,. 
\end{aligned}
\end{equation}

Then using triangle inequality and the fact that 
$\grad F$ is $K_2$-Lipschitz,
we can write 
\begin{equation}
\begin{aligned}
	&|\grad F(X_t) - b(t, X_0, X_t)|_g \\
	&\leq | \grad F(X_t) 
		- P_{\gamma_t(X_0), X_t} \grad F(\gamma_t(X_0)) |_g \\
	&\quad + | P_{\gamma_t(X_0), X_t} \grad F(\gamma_t(X_0))
		- P_{\gamma_t(X_0), X_t} P_{X_0, \gamma_t(X_0)} 
		\grad F(x_0) |_g \\ 
	&\leq K_2 d_g(\gamma_t(X_0), X_t) 
		+ K_2 d_g(X_0, \gamma_t(X_0)) \,, 
\end{aligned}
\end{equation}
where $d_g$ is the geodesic distance. 

Now we return to the original expectation term, 
we can write 
\begin{equation}
\begin{aligned}
	2\beta \, \mathbb{E} \, 
		|\grad F(X_t) - b(t, X_0, X_t)|_g^2
	&\leq 
		4 \beta K_2^2 \mathbb{E} 
		[ d_g(\gamma_t(X_0), X_t)^2 
		+ d_g(X_0, \gamma_t(X_0))^2 ] \,, 
\end{aligned}
\end{equation}
where we used the inequality $(a+b)^2 \leq 2a^2 + 2b^2$. 

Since the process between $\gamma_t(X_0)$ to $X_t$ 
is just a Brownian motion, 
we can use the radial comparison theorem 
(\cref{cor:conc_mean_comparison}) to write 
\begin{equation}
	\mathbb{E} d_g(\gamma_t(X_0), X_t)^2 
	\leq 
		\mathbb{E} \frac{2}{\beta} |B_t|^2 
	= 
		\frac{2t nd}{\beta} \,, 
\end{equation}
where $B_t$ is a standard $\mathbb{R}^{nd}$ Brownian motion. 

The second term is simply  
the path between $X_0$ and $\gamma_t(X_0)$ 
is a geodesic, and therefore we have that 
\begin{equation}
	\mathbb{E} d_g(X_0, \gamma_t(X_0))^2 
	= 
		\mathbb{E} |t \grad F(X_0)|_g^2
	\leq 
		t^2 \left( \frac{4K_2^2}{\alpha} H_\nu(\rho_0) + \frac{2ndK_2}{\beta} 
		\right) \,, 
\end{equation}
where we used the bound from \cref{lm:exp_grad_bound}. 

Putting it together, we can write 
\begin{equation}
\begin{aligned}
	2 \beta \mathbb{E} \, 
		|\grad F(X_t) - b(t, X_0, X_t)|_g^2 
	&\leq 
		4 \beta K_2^2 
		\left[ \frac{2t nd}{\beta} 
		+ t^2 \, \left( \frac{4K_2^2}{\alpha} H_\nu(\rho_0) + \frac{2ndK_2}{\beta} 
		\right)
		\right] \\ 
	&= 
		8 tnd K_2^2 ( 1 + t K_2 ) 
		+ \frac{ 16 \beta t^2 K_2^4 }{\alpha} H_\nu(\rho_0) 
		\,, 
\end{aligned}
\end{equation}
which gives us the desired bound. 

\end{proof}

\subsection{Bounding the Divergence Term}

Here we define the notation $x = (x^{(1)}, \cdots, x^{(n)} ) \in M$ 
where $x^{(\ell)}$ represents the coordinate in $\ell^\text{th}$ 
sphere $S^d$, 
and similarly write $v = (v^{(1)}, \cdots, v^{(n)} ) \in T_x M$ 
with $v^{(\ell)} \in T_{x^{(\ell)}} S^d$. 
We will also use $g'$ to denote the metric on a single sphere. 
This allows us to state the a technical result. 

\begin{lemma}
[Rotational Symmetry Identity]
\label{lm:stereo_div_symmetry_main}
Let $\{X_t\}_{t\geq 0}$ be the continuous time representation of 
the Langevin algorithm defined in \cref{eq:cont_time_rep}. 
Then we have that 
\begin{equation}
\begin{aligned}
	\mathbb{E} \left[ \left. 
			\div_{X_t} 
			b(t, X_0, X_t) \right| X_0 = x_0 \right] 
	&= 
		0 \,, \\ 
	\mathbb{E} \left[ \left. \left( \div_{X_t} 
			b(t, X_0, X_t) 
		\right)^2 \right| X_0 = x_0 \right]
	&= 
		| \grad F(x_0) |_{g}^2 \, 
		\left( \frac{2}{d} + 1 \right) 
		\mathbb{E} \tan\left( 
			\frac{1}{2} d_{g'}( \gamma_t(x_0)^{(i)}, X_t^{(i)} ) 
			\right)^2 \,, 
\end{aligned}
\end{equation}
where $i \in [n]$ is arbitrary. 

\end{lemma}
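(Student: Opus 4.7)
The plan is to exploit the product structure of $M = (S^d)^n$ to reduce both identities to a single-sphere computation, and then use the rotational symmetry of spherical Brownian motion about its starting point. Since $b$ decomposes componentwise as $b^{(\ell)}(t,x_0,x) = P_{\gamma_t(x_0)^{(\ell)}, x^{(\ell)}} w^{(\ell)}$ with $w^{(\ell)} := (P_{x_0, \gamma_t(x_0)} \grad F(x_0))^{(\ell)}$, the divergence $\div_{X_t} b$ splits as a sum over spheres, and conditional on $X_0 = x_0$ the $X_t^{(\ell)}$ are independent Brownian motions on $S^d$ starting at $y^{(\ell)} := \gamma_t(x_0)^{(\ell)}$. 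Both identities therefore reduce to analysing, on a single $S^d$, the parallel-transport vector field $V(x) := P_{y,x} w$ for fixed $y$ and $w \in T_y S^d$.

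For the first identity, I would appeal to a reflection-symmetry argument: the reflection $\sigma$ of $\mathbb{R}^{d+1}$ through the hyperplane $w^{\perp}$ restricts to an isometry of $S^d$ fixing $y$ and negating $w$, and naturality of parallel transport under isometries forces $\sigma^* V = -V$, hence $(\div V)\circ \sigma = -\div V$. Since the law of $X_t^{(\ell)}$ is invariant under any isometry fixing $y^{(\ell)}$, the expectation vanishes, and summing over $\ell$ yields the stated identity.

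For the second identity the first step is to obtain a closed form for $\div V$ on a single $S^d$. Using geodesic polar coordinates $(r,u)$ centered at $y$, with $u \in S^{d-1} \subset T_y S^d$ and $r = d_{g'}(y,x)$, the parallel-transport formula makes the radial component $V^r = \langle w, u\rangle$ depend only on $u$, while the angular component reduces, after accounting for the conformal scaling $\sin r$ between the intrinsic slice metric and the round $S^{d-1}$ metric, to the tangential gradient on $S^{d-1}$ of the linear function $u \mapsto \langle w, u\rangle$. Its divergence is the Laplacian of a degree-one spherical harmonic, yielding a dimension-dependent eigenvalue factor; combined with the warping term $(d-1)\cot r \cdot V^r$ and simplified via $\cot r - 1/\sin r = -\tan(r/2)$, this produces an expression for $\div V$ proportional to $\langle w, u\rangle \tan(r/2)$.

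To assemble the second-moment identity, I would then use the fact that under rotational symmetry of spherical Brownian motion about $y$, the radial distance $R^{(\ell)}$ is independent of the angular direction $U^{(\ell)}$, which is uniform on $S^{d-1}$, so that $\mathbb{E}[\langle \hat{w}^{(\ell)}, U^{(\ell)}\rangle^2] = 1/d$ by symmetry. Squaring and summing the per-factor divergences, the cross terms vanish by independence together with the zero-mean identity from the first step, and $\sum_\ell |w^{(\ell)}|_g^2 = |\grad F(x_0)|_g^2$ since parallel transport preserves norms on each factor. The main obstacle is the single-sphere divergence computation itself, where careful book-keeping between the round $S^{d-1}$ metric and the intrinsic slice metric $\sin^2(r)\, g_{S^{d-1}}$ is needed to pin down the correct dimensional coefficient.
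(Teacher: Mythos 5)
Your route is sound and genuinely different from the paper's. The paper proves both identities by working in stereographic coordinates centered at $\gamma_t(x_0)^{(i)}$: it first shows (\cref{lm:div_loc_coord}) that $P_{0,x}v = v(1+|x|^2)$ in these coordinates, so $\div_x P_{0,x}v$ is linear in the coordinates of $x$, whence the zero mean follows from the sign symmetry of the Brownian density, and the second moment is assembled exactly as you propose (independence of $|X_t|$ and the uniform angular part, $\mathbb{E}Y_j^2 = 1/d$, and $|x| = \tan(\tfrac12 d_{g'}(0,x))$). Your reflection argument through $w^{\perp}$ gives the zero-mean identity coordinate-free and is correct (the cut locus is a null set, so the convention of setting the transport to zero there is harmless), and your intrinsic polar-coordinate computation of $\div(P_{y,\cdot}w)$ is the right replacement for the paper's Christoffel-symbol calculation: the radial part contributes $(d-1)\cot r\,\langle w,u\rangle$, the angular part is $\tfrac{1}{\sin r}\Delta_{S^{d-1}}\langle w,u\rangle = -\tfrac{d-1}{\sin r}\langle w,u\rangle$, and $\cot r - \tfrac{1}{\sin r} = -\tan(r/2)$.

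The catch is the coefficient you left open. Carrying your computation out gives
\begin{equation}
	\div\bigl(P_{y,\cdot}w\bigr)(x) = -(d-1)\,\langle w,u\rangle\,\tan\!\left(\tfrac{r}{2}\right)\,,
\end{equation}
(easily sanity-checked on $S^2$), and assembling the second moment exactly as you describe then produces the factor $\tfrac{(d-1)^2}{d}$, not $\tfrac{2}{d}+1$. The same constant comes out of the stereographic route if the contraction is taken as $\Gamma^j_{ij} = \de_i \log\sqrt{\det g} = -\tfrac{2d\,x_i}{1+|x|^2}$, which gives $\div_x(P_{0,x}v) = -2(d-1)\langle v,x\rangle$ rather than the expression appearing in \cref{lm:div_loc_coord}; the subsequent expansion in the paper's proof of \cref{lm:stereo_div_symmetry} is also not consistent with its own formula. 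So no amount of book-keeping between the slice metric $\sin^2(r)\,g_{S^{d-1}}$ and the round metric will land your argument on the stated $\tfrac{2}{d}+1$: your method proves the identity with the constant $\tfrac{(d-1)^2}{d}$, and you need to either state it that way (noting that \cref{cor:div_l2_bound} and \cref{prop:exp_div_term_bound} then pick up an extra factor of order $d$) or reconcile the discrepancy before relying on the lemma as written. The remaining ingredients of your plan — independence of radius and direction, $\mathbb{E}\langle \hat w, U\rangle^2 = 1/d$, vanishing cross terms across spheres by independence plus the zero-mean identity, and $\sum_{\ell} |w^{(\ell)}|_g^2 = |\grad F(x_0)|_g^2$ by norm preservation of parallel transport — are all fine and match the paper's.
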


The full proof can be found in 
\cref{subsec:app_stereo} \cref{lm:stereo_div_symmetry}.

Now we will turn to controlling the quantity in expectation. 
The next result will use several a couple of well known results 
regarding the Wright--Fisher diffusion, 
which we include in \cref{subsec:app_wright_fisher}. 

\begin{lemma}
\label{lm:tan_L2_bound}
Let $\{X_t\}_{t\geq 0}$ be the continuous time representation of 
the Langevin algorithm defined in \cref{eq:cont_time_rep}. 
For all $t \geq 0$ and $d \geq 3$, 
we have the following bound  
\begin{equation}
	\mathbb{E} \tan\left( 
		\frac{1}{2} d_{g'}( \gamma_t(x_0)^{(i)}, X_t^{(i)} ) 
		\right)^2 
	\leq 
		\frac{4td}{\beta} \,. 
\end{equation}
\end{lemma}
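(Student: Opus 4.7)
The plan is to reduce the estimate to a one-dimensional diffusion and then invoke the explicit Wright--Fisher transition density from \cref{thm:wf_transition_density}.

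First, I would note that $X_t^{(i)}$ is distributed as a Brownian motion on $S^d$ (generator $\frac{1}{\beta}\Delta$) started at $\gamma_t(x_0)^{(i)}$, so the radial process $r_t := d_{g'}(\gamma_t(x_0)^{(i)}, X_t^{(i)})$, obtained by writing the Laplace--Beltrami operator in polar coordinates around $\gamma_t(x_0)^{(i)}$, satisfies
\[
dr_t = \sqrt{2/\beta}\, dW_t + \frac{d-1}{\beta}\cot(r_t)\, dt, \qquad r_0 = 0,
\]
on $(0,\pi)$. Applying It\^o's formula to $y_t := \sin^2(r_t/2)$ and simplifying with $\sin(r_t) = 2\sqrt{y_t(1-y_t)}$ and $\cos(r_t) = 1 - 2y_t$ transforms this into the Wright--Fisher SDE
\[
dy_t = \frac{d}{2\beta}(1-2y_t)\, dt + \sqrt{\frac{2}{\beta}\, y_t(1-y_t)}\, dB_t, \qquad y_0 = 0,
\]
with symmetric mutation rates $d/2$. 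Since $\tan^2(r_t/2) = y_t/(1-y_t)$, the target becomes $\mathbb{E}[y_t/(1-y_t)]$.

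Second, I would substitute the eigenfunction expansion of $p_t(y\mid 0)$ from \cref{thm:wf_transition_density} --- a series in Jacobi polynomials $P_n^{(d/2-1,d/2-1)}(1-2y)$ weighted by $e^{-n(n+d-1)t/\beta}$ against the stationary $\mathrm{Beta}(d/2,d/2)$ measure --- and integrate $\int_0^1 y/(1-y)\, p_t(y\mid 0)\, dy$ term by term. The low-order terms produce the dominant linear-in-$t$ contribution, consistent with $\tan^2(r/2) \approx r^2/4$ and $\mathbb{E}[r_t^2] \leq 2dt/\beta$ in the Euclidean regime; the exponentially decaying higher-order terms are easily absorbed, yielding the claimed $4dt/\beta$.

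The main obstacle is the cut-locus singularity at $r=\pi$ (equivalently $y=1$), where $\tan^2(r/2)$ blows up. A direct It\^o-plus-Gr\"onwall attempt on $\phi(r) := \tan^2(r/2)$ gives the identity $L\phi = \frac{1}{\beta}\bigl[\frac{d}{2} + 2\phi + \frac{4-d}{2}\phi^2\bigr]$, but the unboundedness of $\phi$ means the stochastic integral is only a local martingale: na\"ively dropping the $\phi^2$ term (for $d \geq 4$) and applying Gr\"onwall gives a bound growing exponentially in $t/\beta$, inconsistent with the finite equilibrium expectation $d/(d-2)$. The Wright--Fisher density representation circumvents this because $d \geq 3$ makes $y = 1$ an entrance boundary, ensuring $y/(1-y)$ is integrable against both the stationary and transient densities and the term-by-term integration is absolutely convergent and uniform in $t$.
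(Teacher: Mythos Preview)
Your reduction to the Wright--Fisher diffusion and the identification $\tan^2(r_t/2)=Y_T/(1-Y_T)$ match the paper exactly. The gap is in the step that follows.

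First, a factual point: the density formula in \cref{thm:wf_transition_density} is \emph{not} the Jacobi-polynomial spectral expansion. It is the Griffiths representation of the transition law starting from $0$ as a mixture $\sum_{m\ge 0} q_m(T)\,\mathrm{Beta}(d/2,\,d/2+m)$, where $\{q_m(T)\}$ is a probability distribution on $\mathbb{N}$ (the ancestral-lineage distribution). So your plan to integrate $y/(1-y)$ against a sum of Jacobi eigenfunctions is not what the cited theorem supplies.

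Second, even with the correct mixture form, direct integration gives
\[
\mathbb{E}\!\left[\frac{Y_T}{1-Y_T}\right]=\sum_{m\ge 0} q_m(T)\,\frac{d/2}{d/2+m-1},
\]
and extracting a bound linear in $T=2t/\beta$ now requires quantitative control of $q_m(T)$ as $T\downarrow 0$ (mass concentrates at large $m$, with $\mathbb{E}_q[m]\sim 2/T$). That is doable but is exactly the information the paper avoids needing. Your description of this step (``low-order terms produce the dominant linear-in-$t$ contribution'') is in fact backwards: it is the high-$m$ tail that matters for small $T$.

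The paper's device is to \emph{keep} It\^o's formula rather than discard it. Taking expectations in your identity $L\phi=\tfrac{1}{\beta}\bigl[\tfrac{d}{2}+2\phi+\tfrac{4-d}{2}\phi^2\bigr]$ and rewriting as a partial fraction in $Y$ gives
\[
\mathbb{E}\!\left[\frac{Y_T}{1-Y_T}\right]=\int_0^T \mathbb{E}\!\left[\frac{\tfrac{d}{4}(1-2Y_s)+\tfrac12 Y_s}{(1-Y_s)^2}\right]ds
\;\le\;\frac{d-1}{2}\int_0^T \mathbb{E}\!\left[\frac{1}{1-Y_s}\right]ds,
\]
the inequality using $d\ge 3$. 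Now the density theorem is invoked only to show the integrand is bounded by a constant: $\mathbb{E}[1/(1-Y_s)]\le \frac{d-1}{d/2-1}\le 4$, which is a one-line Beta integral with no need to track the $s$-dependence of $q_m(s)$. The linear-in-$T$ factor comes entirely from the outer time integral. Your Gr\"onwall objection does not apply because the paper never closes the inequality back on $\mathbb{E}[\phi]$; it bounds the integrand by an absolute constant instead.
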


\begin{proof}

We start by invoking \cref{lm:radial_wright_fisher} 
with the transformation of 
$Y_{T(t)} = \frac{1}{2} ( 1 - \cos d_{g'}( \gamma_t(x_0)^{(i)}, X_t^{(i)} ) )$ 
and the time change $T(t) = 2t/\beta$, 
which implies $Y_T$ is a Wright--Fisher diffusion satisfying the SDE 
\begin{equation}
	dY_T = \frac{d}{4} ( 1 - 2Y_T) \, dT 
		+ \sqrt{ Y_T (1 - Y_T) } \, dB_T \,, 
\end{equation}
where $\{B_t\}_{t\geq 0}$ is a standard Brownian motion in $\mathbb{R}$. 

Furthermore, we also have that by the tangent double angle formula, 
and basic trignometry 
\begin{equation}
\begin{aligned}
	\tan\left( 
		\frac{1}{2} d_{g'}( \gamma_t(x_0)^{(i)}, X_t^{(i)} ) 
		\right)^2 
	&= 
		\tan\left( 
		\frac{1}{2} \arccos( 1 - 2Y_T )
		\right)^2 
		\\ 
	&= 
		\frac{ 1 - \cos \circ \arccos( 1 - 2Y_T ) }{ 
			1 + \cos \circ \arccos( 1 - 2Y_T ) } 
		\\ 
	&= 
		\frac{Y_T}{ 1 - Y_T} \,. 
\end{aligned}
\end{equation}

Using It\^{o}'s Lemma, we have that 
\begin{equation}
	\frac{Y_T}{1 - Y_T} 
	= 
		\frac{Y_0}{1 - Y_0} 
		+ 
			\int_0^T \, 
			\frac{ \frac{d}{4} (1 - 2Y_s) + \frac{1}{2} Y_s }{ (1 - Y_s)^2 }
			\, ds 
		+ 
			\int_0^T \, 
			\frac{ \sqrt{ Y_s (1-Y_s) } }{ (1 - Y_s)^2 }
			\, dB_s \,, 
\end{equation}
and since $Y_0 = 0$ and the It\^{o} integral is a martingale, 
we can write the desired expectation as 
\begin{equation}
	\mathbb{E} \tan\left( 
		\frac{1}{2} d_{g'}( \gamma_t(x_0)^{(i)}, X_t^{(i)} ) 
		\right)^2 
	= 
		\mathbb{E} \int_0^T \, 
			\frac{ \frac{d}{4} (1 - 2Y_s) + \frac{1}{2} Y_s }{ (1 - Y_s)^2 }
			\, ds \,. 
\end{equation}

We further observe that when $d \geq 3$, 
we can upper bound the integrand as 
\begin{equation}
	\frac{ \frac{d}{4} (1 - 2Y_s) + \frac{1}{2} Y_s }{ (1 - Y_s)^2 } 
	= 
		\frac{d-1}{2} \frac{1}{1 - Y_s} 
		+ \frac{2-d}{4} \frac{1}{(1 - Y_s)^2} 
	\leq 
		\frac{d-1}{2} \frac{1}{1 - Y_s} \,, 
\end{equation}
and since the integrand is positive, 
we can also exchange the order of expectation and integration 
using Tonelli's theorem to write 
\begin{equation}
	\mathbb{E} \tan\left( 
		\frac{1}{2} d_{g'}( \gamma_t(x_0)^{(i)}, X_t^{(i)} ) 
		\right)^2 
	\leq 
		\int_0^T \, 
			\mathbb{E} \, \frac{d-1}{2} \frac{1}{1 - Y_s} 
			\, ds \,. 
\end{equation}

At this point we will invoke \cref{thm:wf_transition_density} 
to write the expectation as an integral 
\begin{equation}
	\mathbb{E} \, \frac{1}{1-Y_s} 
	= 
		\int_0^1 
			\frac{1}{1-y}
			\sum_{m=0}^\infty q_m(s) \, 
			\frac{ y^{d/2 - 1} (1-y)^{d/2 + m - 1} }{ 
				B(d/2, d/2 + m) } \, dy \,, 
\end{equation}
where $\{q_m(s)\}_{m\geq 0}$ is a probability distribution 
over $\mathbb{N}$ for each $s \geq 0$ 
and $B(\theta_1, \theta_2)$ is the beta function. 

Since the integrand is positive, 
we will once again exchange the order of integration and sum 
to compute the beta integral 
\begin{equation}
\begin{aligned}
	\mathbb{E} \, \frac{1}{1-Y_s} 
	&= 
		\sum_{m=0}^\infty q_m(s) \, 
			\int_0^1 \, 
				\frac{ y^{d/2 - 1} (1-y)^{d/2 + m - 2} }{ 
				B(d/2, d/2 + m) } \, dy 
				\\ 
	&= 
		\sum_{m=0}^\infty q_m(s) \, 
			\frac{ B(d/2, d/2 + m - 1) }{ 
				B(d/2, d/2 + m) } 
				\\ 
	&= 
		\sum_{m=0}^\infty q_m(s) \, 
			\frac{ d-1 }{ d/2 - 1 } \,. 
\end{aligned}
\end{equation}

Using the fact that $d \geq 3$ once again, we have 
\begin{equation}
	\sum_{m=0}^\infty q_m(s) \, 
		\frac{ d-1 }{ d/2 - 1 } 
	= 
		\sum_{m=0}^\infty q_m(s) 
		\frac{2}{ 1 - 1/(d-1) } 
	\leq 
		4 \,,
\end{equation}
which implies 
\begin{equation}
	\mathbb{E} \tan\left( 
		\frac{1}{2} d_{g'}( \gamma_t(x_0)^{(i)}, X_t^{(i)} ) 
		\right)^2 
	\leq 
		\int_0^{2t/\beta} \, 
			2(d-1) \, ds 
	= 
		\frac{4(d-1)t}{\beta} \,, 
\end{equation}
which is the desired result. 

\end{proof}

\begin{corollary}
\label{cor:div_l2_bound}
Let $\{X_t\}_{t\geq 0}$ be the continuous time representation of 
the Langevin algorithm defined in \cref{eq:cont_time_rep}. 
For all $t \geq 0$ and $d \geq 3$, 
we have the following bound 
\begin{equation}
	\mathbb{E} (\div b(t, X_0, X_t) )^2 
	\leq 		
		\frac{8td}{\beta} 
		\mathbb{E} |\grad F(X_0)|_g^2 \,. 
\end{equation}
\end{corollary}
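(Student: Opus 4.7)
The plan is to chain together the two previous lemmas, \cref{lm:stereo_div_symmetry_main} and \cref{lm:tan_L2_bound}, via the tower property of conditional expectation. Concretely, I would first write
\begin{equation*}
\mathbb{E} (\div b(t, X_0, X_t))^2
= \mathbb{E} \left[ \mathbb{E} \left[ \left. (\div_{X_t} b(t, X_0, X_t))^2 \right| X_0 \right] \right],
\end{equation*}
so that the inner conditional expectation is exactly the object computed in \cref{lm:stereo_div_symmetry_main}.

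Next, I would substitute the rotational symmetry identity to rewrite the conditional second moment as $|\grad F(X_0)|_g^2 \, (\tfrac{2}{d} + 1) \, \mathbb{E}[\tan(\tfrac{1}{2} d_{g'}(\gamma_t(X_0)^{(i)}, X_t^{(i)}))^2 \mid X_0]$ for arbitrary $i \in [n]$. Since \cref{lm:tan_L2_bound} actually gives the bound $\tfrac{4(d-1)t}{\beta}$ on this tangent-squared expectation (uniformly in $x_0$, as inspection of its proof confirms), the estimate passes through the outer expectation unchanged.

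Combining, I obtain
\begin{equation*}
\mathbb{E} (\div b(t, X_0, X_t))^2
\leq \left( \frac{2}{d} + 1 \right) \frac{4td}{\beta} \, \mathbb{E} |\grad F(X_0)|_g^2,
\end{equation*}
and the final step is the elementary simplification: for $d \geq 3$ (in fact for $d \geq 2$) we have $\tfrac{2}{d} + 1 \leq 2$, hence $(\tfrac{2}{d}+1)\tfrac{4td}{\beta} \leq \tfrac{8td}{\beta}$, giving the claimed bound.

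There is no real obstacle here, as this is a direct consequence of the two preceding lemmas together with tower property; the only thing to be careful about is that the bound in \cref{lm:tan_L2_bound} holds uniformly in the starting point $x_0$ so that it pulls cleanly through the outer expectation, and that the constant simplification only costs a harmless factor absorbed into the $8$.
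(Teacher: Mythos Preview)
Your proposal is correct and matches the paper's own proof essentially line for line: the paper also simply combines \cref{lm:stereo_div_symmetry_main} and \cref{lm:tan_L2_bound}, then uses $\tfrac{2}{d}+1 \leq 2$ for $d \geq 3$ to absorb the constant into the~$8$. Your explicit mention of the tower property and of the uniformity in $x_0$ is a welcome clarification but not a departure from the paper's argument.
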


\begin{proof}

We simply combine the results of 
\cref{lm:stereo_div_symmetry_main,lm:tan_L2_bound} 
to write 
\begin{equation}
\begin{aligned}
	\mathbb{E} (\div b(t, x_0, x) )^2 
	&= 
		\mathbb{E} | \grad F(x_0) |_g^2 
			\left( \frac{2}{d} + 1 \right)
		\mathbb{E} \tan\left( 
			\frac{1}{2} d_{g_\ell}(\gamma_t(x_0)_\ell, x_\ell) 
			\right)^2 
			\\ 
	&\leq 
		\left( \frac{2}{d} + 1 \right) 
		\frac{4td}{\beta} 
		\mathbb{E} | \grad F(x_0) |_g^2 
		\,, 
\end{aligned}
\end{equation}
and observe that when $d\geq 3$ 
we have that $2/d + 1 \leq 2$, 
which gives us the desired result. 

\end{proof}

\begin{proposition}
\label{prop:exp_div_term_bound}
Let $\{X_t\}_{t\geq 0}$ be the continuous time representation of 
the Langevin algorithm defined in \cref{eq:cont_time_rep}. 
Suppose $F$ satisfies \cref{asm:c2} 
and $(M,\nu,\Gamma)$ satisfies \cref{asm:lsi}. 
Then for all $t\geq 0$ and $d \geq 3$, 
we have the following bound 
\begin{equation}
	- \mathbb{E} \div b(t, X_0, X_t) 
		\log \frac{ \rho_t(X_t) }{ \nu(X_t) }
	\leq 
		\frac{1}{8} I_{\nu}(\rho_t) 
		+ \frac{128 t^2 d K_2^2}{\alpha \beta} H_\nu(\rho_0)
		+ \frac{64 t^2 nd^2 K_2}{\beta^2} 
			\,. 
\end{equation}
\end{proposition}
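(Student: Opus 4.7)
The plan is to exploit the vanishing conditional mean of the divergence term, $\mathbb{E}[\div_{X_t} b(t,X_0,X_t) \mid X_0] = 0$, established by \cref{lm:stereo_div_symmetry_main}. This lets us rewrite the left-hand side as a conditional covariance, which we then control by Cauchy--Schwarz plus a local Poincar\'{e} inequality for the heat kernel, before applying Young's inequality to split the resulting geometric mean into the two terms of the stated bound.

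First, since $\mathbb{E}[\div b(t,X_0,X_t) \mid X_0] = 0$, we may subtract any $X_0$-measurable constant from $\log(\rho_t/\nu)(X_t)$ without changing $\mathbb{E}[\div b \cdot \log(\rho_t/\nu) \mid X_0]$; equivalently, this inner expectation equals $\mathrm{Cov}(\div b, \log(\rho_t/\nu) \mid X_0)$. Applying Cauchy--Schwarz conditionally and then again to the outer expectation over $X_0$ yields
\begin{equation*}
-\mathbb{E}\bigl[ \div b(t,X_0,X_t) \log \tfrac{\rho_t(X_t)}{\nu(X_t)} \bigr]
\leq \sqrt{\mathbb{E}[(\div b)^2] \cdot \mathbb{E}\bigl[\mathrm{Var}(\log(\rho_t/\nu)(X_t) \mid X_0)\bigr]}.
\end{equation*}

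Second, I would bound the two factors separately. For the first factor, \cref{cor:div_l2_bound} combined with $|\grad F|_g \leq K_1$ (\cref{asm:c2}) gives $\mathbb{E}[(\div b)^2] \leq (8td/\beta)\,K_1^2$. For the second factor, conditional on $X_0$ the law of $X_t$ is the heat kernel at time $2t/\beta$ starting from $\gamma_t(X_0)$. Since $M$ is a product of spheres with nonnegative Ricci curvature, the Bakry--\'{E}mery gradient estimate for the heat semigroup with generator $\tfrac{1}{\beta}\Delta$ produces the local Poincar\'{e} inequality
\begin{equation*}
\mathrm{Var}(f(X_t) \mid X_0) \leq \tfrac{2t}{\beta}\,\mathbb{E}\bigl[|\grad f(X_t)|_g^2 \,\big|\, X_0\bigr],
\end{equation*}
which is at least as strong as the Euclidean Gaussian case (to which it reduces in the flat limit). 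Taking $f = \log(\rho_t/\nu)$ and using the tower property, the second factor is bounded by $(2t/\beta)\cdot \beta I_\nu(\rho_t) = 2t\, I_\nu(\rho_t)$, recalling that $\Gamma(h) = \tfrac{1}{\beta}|\grad h|_g^2$ in the Markov triple convention.

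Combining these gives $-\mathbb{E}[\div b \cdot \log(\rho_t/\nu)] \leq \sqrt{(16 t^2 d K_1^2/\beta) \cdot I_\nu(\rho_t)}$. Finally I apply Young's inequality $\sqrt{ab} \leq a + b/4$ with $a = 16t^2 d K_1^2/\beta$ and $b = I_\nu(\rho_t)$ to conclude
\begin{equation*}
-\mathbb{E}\bigl[ \div b \log \tfrac{\rho_t}{\nu} \bigr]
\leq \tfrac{16 t^2 d K_1^2}{\beta} + \tfrac{1}{4} I_\nu(\rho_t).
\end{equation*}

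The main obstacle is justifying the local Poincar\'{e} step with the correct constant: I must track how the generator's factor of $1/\beta$ rescales the Bakry--\'{E}mery curvature-dimension condition on the sphere (where $\mathrm{Ric} = d-1 > 0$) so that the resulting Poincar\'{e} constant on the heat kernel distribution at time $2t/\beta$ is at least $\beta/(2t)$, matching the Euclidean short-time behavior rather than degrading due to geometric effects. Once this constant is locked in, the rest is two Cauchy--Schwarz applications and one Young bound.
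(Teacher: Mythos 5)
Your proposal is correct, and it reaches the stated constants exactly. It uses the same core ingredients as the paper's proof — the vanishing mean of the divergence from \cref{lm:stereo_div_symmetry_main}, the second-moment bound $\mathbb{E}[(\div b)^2] \leq \tfrac{8td}{\beta}K_1^2$ from \cref{cor:div_l2_bound} with $K_1$-Lipschitzness, and the local Poincar\'{e} inequality for the heat kernel with constant $\tfrac{1}{2t}$ (valid since $\Ric_g \geq 0$ gives $CD((d-1)/\beta,\infty) \Rightarrow \tfrac{1-e^{-2\kappa t}}{\kappa} \leq 2t$, cf. \cref{thm:local_poincare,cor:heat_local_log_sobolev}) — but organizes them differently. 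The paper centers $\log(\rho_t/\nu)$ at the \emph{global} mean $H_\nu(\rho_t)$ (using the unconditional identity $\mathbb{E}\,\div b = 0$), applies Young's inequality with a free parameter $\epsilon = 4t$ pointwise on the product, and then invokes the local Poincar\'{e} inequality on $\mathbb{E}[(\log(\rho_t/\nu) - H_\nu(\rho_t))^2]$. You instead use the \emph{conditional} identity $\mathbb{E}[\div b \mid X_0] = 0$ to rewrite the term as a conditional covariance, apply Cauchy--Schwarz twice, and defer Young's inequality to the end. The two routes are equivalent in strength, but yours is actually tighter as a matter of bookkeeping: the local Poincar\'{e} inequality controls the conditional variance (centering at $\mathbb{E}[\,\cdot \mid X_0]$), whereas the paper applies it to the globally centered second moment, which exceeds the average conditional variance by $\mathbb{E}\bigl[(\mathbb{E}[\log\tfrac{\rho_t}{\nu}(X_t)\mid X_0] - H_\nu(\rho_t))^2\bigr]$ — a term the paper's argument silently drops. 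Your conditional-covariance formulation avoids that issue entirely, so nothing further is needed.
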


\begin{proof}

We start by observing that 
\begin{equation}
	\mathbb{E} \log \frac{ \rho_t(X_t) }{ \nu(X_t) } 
	= H_\nu( \rho_t ) \,, 
\end{equation}
then we can use the fact 
$\mathbb{E} \div b(t, X_0, X_t) = 0$ from 
\cref{lm:stereo_div_symmetry_main} 
to write 
\begin{equation}
\begin{aligned}
	\mathbb{E} \div b(t, X_0, X_t) 
		\log \frac{ \rho_t(X_t) }{ \nu(X_t) }
	&=  
		\mathbb{E} \div b(t, X_0, X_t) 
			\left( \log \frac{ \rho_t(X_t) }{ \nu(X_t) } - H_\nu( \rho_t ) 
			\right) 
		+ 
			H_\nu( \rho_t ) \, 
			\mathbb{E} \div b(t, X_0, X_t) 
			\\ 
	&= 
		\mathbb{E} \div b(t, X_0, X_t) 
			\left( \log \frac{ \rho_t(X_t) }{ \nu(X_t) } - H_\nu( \rho_t ) 
			\right) 
			\,. 
\end{aligned}
\end{equation}

We will then split the first term into two more 
using Young's inequality 
\begin{equation}
	- \mathbb{E} \div b(t, X_0, X_t) 
		\left( 
			\log \frac{ \rho_t(X_t) }{ \nu(X_t) }
				- H_\nu( \rho_t ) 
		\right) 
	\leq 
		\frac{\epsilon}{2} 
			\mathbb{E} \div b(t,X_0,X)^2 
		+ \frac{1}{2\epsilon}
			\mathbb{E} \left(
				\log \frac{ \rho_t(X_t) }{ \nu(X_t) } - H_\nu( \rho_t ) 
				\right)^2 \,, 
\end{equation}
where $\epsilon > 0$ will be chosen later. 

Using \cref{cor:div_l2_bound}, we can control the first term as 
\begin{equation}
	\frac{\epsilon}{2} \mathbb{E} \div b(t,X_0,X)^2 
	\leq 
		\frac{\epsilon}{2} \frac{8td}{\beta} 
			\mathbb{E} | \grad F(x_0) |_g^2  
	\leq 
		\frac{\epsilon}{2} \frac{8td}{\beta} 
			\left( 
				\frac{4 K_2^2}{\alpha} H_\nu(\rho_0) 
				+ \frac{2nd K_2}{\beta} 
			\right) \,, 
\end{equation}
where we used the bound from \cref{lm:exp_grad_bound}. 

For the second term, we start be rewriting expectation 
as a double integral over the conditional density first 
\begin{equation}
	\int_M f(x) \, \rho_t(x) \, dV_g(x)
	= \int_M \int_M f(x) \, \rho_{t|0}(x | x_0) \, dV_g(x) \, dV_g(x_0)
	\,. 
\end{equation}

This way we can use the fact that 
$\rho_{t|0}(x|x_0) = p(t, \gamma_t(x_0), x)$ 
is the density of a Brownian motion starting at $\gamma_t(x_0)$.  
Hence using \cref{thm:local_poincare}, 
$\rho_{t|0}(x|x_0)$ satisfies a local Poincar\'{e} inequality 
with constant $\frac{1}{2t}$. 
More precisely, we have that 
\begin{equation}
	\mathbb{E} \left(
				\log \frac{ \rho_t(X_t) }{ \nu(X_t) } - H_\nu( \rho_t ) 
				\right)^2
	\leq 
		\frac{ 2t }{\beta}
		\mathbb{E} \left(
				\grad \log \frac{ \rho_t(X_t) }{ \nu(X_t) }
				\right)_g^2 
		= 2t \, I_\nu(\rho_t) \,. 
\end{equation}

To get the desired coefficient on $I_\nu(\rho_t)$, 
we can simply choose $\epsilon = 8t$, 
and this gives us the bound 
\begin{equation}
	- \mathbb{E} \div b(t, X_0, X_t) 
		\left( 
			\log \frac{ \rho_t(X_t) }{ \nu(X_t) }
				- H_\nu( \rho_t ) 
		\right) 
	\leq 
		\frac{32 t^2 d}{\beta} 
			\left( 
				\frac{4 K_2^2}{\alpha} H_\nu(\rho_0) 
				+ \frac{2nd K_2}{\beta} 
			\right) 
		+ \frac{1}{8} I_{\nu}(\rho_t) \,, 
\end{equation}
which is the desired result. 

\end{proof}

\subsection{Main Result - KL Divergence Bounds}
\label{subsec:conv_kl_bounds}

\begin{theorem}
[One Step KL Divergence Bound]
\label{thm:one_step_KL_bound}
Let $\{X_t\}_{t\geq 0}$ be the continuous time representation of 
the Langevin algorithm defined in \cref{eq:cont_time_rep}. 
Assume $F$ satisfies \cref{asm:c2}
and that $(M,\nu,\Gamma)$ satisfies \cref{asm:lsi}. 
Then for all $0 \leq t \leq \min\left( \frac{2}{3\alpha} , 
\frac{\alpha}{ 24 K_2 \sqrt{(\beta+d)d} } \right)$ 
and $d \geq 3$, 
we have the following growth bound on the KL divergence 
for $\rho_t := \mathcal{L}(X_t)$ 
\begin{equation}
	H_\nu( \rho_t ) 
	\leq 
		e^{-\alpha t} H_\nu( \rho_0 ) 
		+ 
		30 nd K_2^2 t^2 
		\,. 
\end{equation}
\end{theorem}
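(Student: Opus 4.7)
The plan is to combine the three prior results---the de Bruijn-type identity (\cref{lm:de_bruijn_disc_main}), the Cauchy--Schwarz bound on the Euclidean discretization error (\cref{lm:inner_prod_term_bound}), and the divergence bound on the Riemannian discretization error (\cref{prop:exp_div_term_bound})---into a single differential inequality for $H_\nu(\rho_t)$, then close the argument with the logarithmic Sobolev inequality and a Gr\"onwall-type estimate.

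First I would substitute the two discretization-error bounds into the de Bruijn identity. The Euclidean error contributes $\tfrac14 I_\nu(\rho_t) + 2K_2^2(2tnd + \beta t^2 K_1^2)$ and the Riemannian error contributes $\tfrac14 I_\nu(\rho_t) + 16 t^2 d K_1^2/\beta$, so together with the leading $-I_\nu(\rho_t)$ this yields
\begin{equation*}
\de_t H_\nu(\rho_t) \leq -\tfrac12 I_\nu(\rho_t) + 4tnd K_2^2 + 2\beta t^2 K_1^2 K_2^2 + \tfrac{16 t^2 d K_1^2}{\beta}.
\end{equation*}
Next, invoking \cref{asm:lsi} in the form $I_\nu(\rho_t)\geq 2\alpha H_\nu(\rho_t)$ converts the Fisher information term into $-\alpha H_\nu(\rho_t)$, so that
\begin{equation*}
\de_t H_\nu(\rho_t) \leq -\alpha H_\nu(\rho_t) + E(t), \qquad E(t) := 4tnd K_2^2 + 2\beta t^2 K_1^2 K_2^2 + \tfrac{16 t^2 d K_1^2}{\beta}.
\end{equation*}

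The third step is a standard Gr\"onwall-type integration: multiplying by $e^{\alpha t}$ and integrating from $0$ to $t$ gives
\begin{equation*}
H_\nu(\rho_t) \leq e^{-\alpha t} H_\nu(\rho_0) + \int_0^t e^{-\alpha(t-s)} E(s)\, ds.
\end{equation*}
The final step is to bound the integral and match the claimed constants. Since we only need the bound for $0<t\leq \min(1,\alpha^{-1})$, I will use the crude estimate $e^{-\alpha(t-s)} \leq 1$ and $s^3\leq s^2$ (valid for $s\leq 1$) to reduce the integral to a polynomial in $t$. Using the normalizations $\beta\geq 1$ and $K_1,K_2\geq 1$ from \cref{asm:c2}, the $tnd K_2^2$ term can be absorbed into $nd\beta K_1^2 K_2^2$ and the $\beta K_1^2 K_2^2 t^3/3$ term into the leading $nd\beta K_1^2 K_2^2 t^2$, leaving a bound of the form $(3nd\beta K_1^2 K_2^2 + 8dK_1^2/\beta)\,t^2$. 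Rewriting this using $t\leq \alpha^{-1}$ so as to produce a factor of $(\alpha t)^2$ completes the estimate.

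The main obstacle is bookkeeping the constants so that the three separate error contributions collapse into the two summands in the stated bound; none of the steps is analytically hard once the ingredients are in place, but one must use $\beta\geq 1$, $K_1,K_2\geq 1$, and the step-size restriction $t\leq \min(1,\alpha^{-1})$ carefully to absorb the linear-in-$t$ term from the Euclidean error into the quadratic terms and to convert the $t^2$ prefactor into $(\alpha t)^2$ with constants $3$ and $8$ as stated.
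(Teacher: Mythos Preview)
Your overall strategy matches the paper's: plug \cref{lm:inner_prod_term_bound} and \cref{prop:exp_div_term_bound} into the de~Bruijn identity, collect the remaining $-\tfrac12 I_\nu(\rho_t)$, invoke $\LSI(\alpha)$ to turn it into $-\alpha H_\nu(\rho_t)$, and then Gr\"onwall. The only place you diverge from the paper is in how you process the Gr\"onwall integral, and that is precisely where a gap appears.

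The paper first uses $t\le 1$ (together with $K_1,K_2,\beta,n,d\ge 1$) to replace the mixed error
\[
4tndK_2^2+2\beta t^2K_1^2K_2^2+\tfrac{16t^2dK_1^2}{\beta}
\]
by the single linear term $\bigl(6nd\beta K_1^2K_2^2+\tfrac{16dK_1^2}{\beta}\bigr)t$, and then evaluates the convolution integral exactly:
\[
\int_0^t e^{\alpha(s-t)}\,s\,ds=\frac{1}{\alpha^2}\bigl(e^{-\alpha t}-1+\alpha t\bigr)\le\frac{t^2}{2}.
\]
This produces the constants $3$ and $8$ cleanly (halving $6$ and $16$). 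By contrast you keep $E(s)$ quadratic, throw away the exponential kernel via $e^{-\alpha(t-s)}\le 1$, and integrate to a polynomial in $t$. That is harmless and also lands on a bound of order $t^2$ with admissible constants.

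The gap is your last step. You write that ``$t\le\alpha^{-1}$'' lets you rewrite $t^2$ as $(\alpha t)^2$. First, the hypothesis $t\le\alpha^{-1}$ is not part of \emph{this} theorem (it enters only in the finite-iteration corollary); the present statement is for all $0\le t\le 1$. Second, and more importantly, the inequality runs the wrong way: from $\alpha t\le 1$ you get $(\alpha t)^2\le t^2$, not $t^2\le(\alpha t)^2$. So you cannot upgrade a $t^2$ bound to an $(\alpha t)^2$ bound this way. In fact, if you track the paper's own computation you will see that $\tfrac{1}{\alpha^2}\bigl(e^{-\alpha t}-1+\alpha t\bigr)\le \tfrac{t^2}{2}$, so the honest bound the argument delivers is $\bigl(3nd\beta K_1^2K_2^2+\tfrac{8dK_1^2}{\beta}\bigr)t^2$; the displayed $(\alpha t)^2$ in the statement is a slip (note the downstream proof of the finite-iteration bound also uses $C\eta^2$, not $C(\alpha\eta)^2$). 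Your derivation is therefore correct up to the $t^2$ bound; drop the spurious conversion to $(\alpha t)^2$ and the extraneous assumption $t\le\alpha^{-1}$.
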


\begin{proof}

We start by writing down the de Bruijn's identity 
for the discretization process 
from \cref{lm:de_bruijn_disc_main} 
\begin{equation}
	\de_t H_\nu( \rho_t ) 
	= 
	- I_\nu(\rho_t) 
	+ \mathbb{E} \left\langle
		\grad F(X_t) - 
		b(t, X_0, X_t) \,, \,
		\grad \log \frac{\rho_t(X_t)}{\nu(X_t)}
	\right\rangle_g 
	- \mathbb{E} \div b(t,X_0,X_t) 
		\log \frac{\rho_t(X_t)}{\nu(X_t)} \,. 
\end{equation}

Then using the results \cref{lm:inner_prod_term_bound} 
and \cref{prop:exp_div_term_bound}, 
we can get the following bound 
\begin{equation}
\begin{aligned}
	\de_t H_\nu( \rho_t ) 
	&\leq 
		- I_\nu(\rho_t) \\
	&\quad 
		+ \frac{1}{8} I_\nu(\rho_t) 
		+ 16 \frac{\beta t^2 K_2^2}{\alpha} H_\nu(\rho_0) 
		+ 8 tnd K_2^2 (1 + tK_2) \\ 
	&\quad 
		+ \frac{1}{8} I_{\nu}(\rho_t) 
		+ 128 \frac{t^2 d K_2^2}{\alpha \beta} H_\nu(\rho_0) 
		+ 64 \frac{t^2nd^2 K_2}{\beta^2} 
			\,. 
\end{aligned}
\end{equation}

Plugging in the Sobolev inequality for $\nu$
and $\beta \geq 1$, 
we can get 
\begin{equation}
\begin{aligned}
	\de_t H_\nu( \rho_t ) 
	&\leq 
		- \frac{3}{2} \alpha 
			H_\nu( \rho_t ) 
		+ t ( 8ndK_2^2  ) 
		+ t^2 (8 K_2^2 + 64 d)  nd K_2
		+ t^2 \frac{K_2^2}{\alpha} \left( 16 \beta + 128 d 
		\right) H_\nu(\rho_0) 
		\\ 
	&=: 
		- a H_\nu(\rho_t) 
		+ c_1 t 
		+ c_2 t^2 
		+ c_3 t^2 H_\nu(\rho_0) 
		\,. 
\end{aligned}
\end{equation}

We observe the above is a Gr\"{o}nwall type 
differential inequality, 
and using a Gr\"{o}nwall type argument 
in \cref{lm:gronwall}, 
we can get the following bound 
\begin{equation}
\begin{aligned}
	H_\nu( \rho_t ) 
	&\leq 
		e^{-a t} H_\nu( \rho_0 ) 
		+ 
		\int_0^t \, e^{a(s-t)} \, (c_1 s + c_2 s^2
			+ c_3 s^2 H_\nu(\rho_0) ) \, ds 
		\\ 
	&\leq  
		e^{-a t} H_\nu( \rho_0 ) 
		+ 
		(c_1 t + c_2 t^2 + c_3 t^2 H_\nu(\rho_0) ) 
		\int_0^t \, e^{a(s-t)} \, ds 
		\\ 
	&= 
		e^{-a t} H_\nu( \rho_0 ) 
		+ (c_1 t + c_2 t^2 + c_3 t^2 H_\nu(\rho_0) ) 
		\frac{1-e^{-at}}{a} 
		\\ 
	&= 
		e^{-at} \left( 1 + \frac{e^{at} - 1}{a} c_3 t^2 \right) H_\nu(\rho_0) 
		+ (c_1 t + c_2 t^2) \frac{1-e^{-at}}{a} 
		\,. 
\end{aligned}
\end{equation}

There are a few simplifications in place. 
Firstly, we have that $1 - e^{-at} \leq at$ for all $t\geq 0$. 
Then using the fact that $t \leq \frac{2}{3\alpha} = \frac{1}{a}$, 
we also have that $e^{at} \leq 1 + 2at$, which gives us 
\begin{equation}
	H_\nu( \rho_t ) 
	\leq 
		e^{-\frac{3}{2} \alpha t}( 1 + 2 c_3 t^2 ) H_\nu(\rho_0) 
		+ c_1 t^2 + c_2 t^3 \,. 
\end{equation}

Now we will use the fact that $t \leq \frac{\alpha}{24 K_2 \sqrt{(\beta+d)d}}$ 
to get that $2c_3 t^2 = 2 \frac{K_2^2}{\alpha} \left( 16 \beta + 128 d \right) t^2 
\leq \frac{1}{2}\alpha t$, which implies the following 
\begin{equation}
	1 + 2 c_3 t^2 \leq 1 + \frac{\alpha t}{2} \leq e^{\frac{\alpha}{2} t} \,, 
\end{equation}
and at the same time we also have that 
\begin{equation}
	c_2 t^3 
	\leq 
		8 nd K_2^2 \left( \frac{K_2 + 64 d}{ 24 K_2 \sqrt{\beta} d } \right) t^2 
	\leq 
		22 nd K_2^2 t^2 \,. 
\end{equation}

Putting these together, we get the desired result of 
\begin{equation}
	H_\nu( \rho_t ) 
	\leq 
		e^{-\alpha t} H_\nu( \rho_0 ) 
		+ 
		30 nd K_2^2 t 
		\,. 
\end{equation}

\end{proof}

Here we will restate the main result, 
which was originally \cref{thm:finite_iteration_KL_bound}. 

\begin{theorem}
[Finite Iteration KL Divergence Bound]
Let $F$ satisfy \cref{asm:c2}, 
$(M,\nu,\Gamma)$ satisfy \cref{asm:lsi}, 
and suppose $d\geq 3$. 
Let $\{X_k\}_{k\geq 1}$ be the Langevin algorithm 
defined in \cref{eq:langevin_algorithm}, 
with initialization $\rho_0 \in C^1(M)$. 
If we choose $\beta \geq 1$ and 
$0 \leq \eta \leq \min\left( \frac{2}{3\alpha}, 
\frac{\alpha}{ 24 K_2 \sqrt{(\beta+d)d} } \right)$, 
then we have the following bound on the KL divergence 
of $\rho_k := \mathcal{L}(X_k)$ 
\begin{equation}
	H_\nu(\rho_k) 
	\leq 
		H_\nu(\rho_0) \,  
		e^{-\alpha k \eta} 
		+ 45 nd K_2^2 
		\frac{\eta}{\alpha} 
		\,. 
\end{equation}
\end{theorem}

\begin{proof}

From \cref{thm:one_step_KL_bound}, 
we get that for all positive integers $k$ 
\begin{equation}
	H_\nu( \rho_{k} ) 
	\leq e^{ -\alpha \eta } H_\nu( \rho_{k-1} ) 
		+ C \eta^2 \,, 
\end{equation}
where $C = 30 nd K_2^2$. 

We can then continue to expand the KL divergence term 
for iteration $k-1$ and smaller to get 
\begin{equation}
	H_\nu( \rho_k ) 
	\leq e^{-\alpha k \eta} H_\nu( \rho_0 ) 
		+ \sum_{\ell=0}^{k-1} e^{-\alpha \ell \eta} C \eta^2 \,. 
\end{equation}

We can upper bound the second term 
by an infinite geometric series, 
hence leading to 
\begin{equation}
	\sum_{\ell=0}^{k-1} e^{-\alpha \ell \eta} C \eta^2
	\leq C \eta^2 \frac{1}{ 1 - e^{-\alpha \eta} } 
	\leq C \eta^2 \frac{1}{\alpha\eta( 1 - \alpha \eta / 2 )} \,, 
\end{equation}
where we used the fact that $1 - e^{-x} \geq x-x^2/2$. 
Using the constraint that $\eta \leq \frac{2}{3\alpha}$, 
we further get that 
\begin{equation}
	\sum_{\ell=0}^{k-1} e^{-\alpha \ell \eta} C \eta^2
	\leq C \frac{3\eta}{2\alpha} \,. 
\end{equation}

Finally, putting everything together, 
we can get the desired bound for all $k > 0$ 
\begin{equation}
	H_\nu( \rho_k ) 
	\leq 
		e^{-\alpha k \eta} H_\nu( \rho_0 ) 
		+ 45 nd K_2^2 
		\frac{\eta}{\alpha} \,. 
\end{equation}

\end{proof}

\section{Suboptimality of the Gibbs Distribution - 
Proof of \cref{thm:gibbs_high_prob_bound}}
\label{sec:subopt_proof}

In this subsection, 
we attempt to prove results about optimization using 
the Gibbs distribution 
\begin{equation}
	\nu(x) = \frac{1}{Z} e^{ -\beta F(x) } \,, 
\end{equation}
where $Z$ is the normalizing constant. 

Without loss of generality, 
we shall assume $\min_x F(x) = 0$, 
since constant shifts do not affect 
the algorithm or the distribution. 
We start with the following approximation Lemma. 

\begin{lemma}
\label{lm:sub_opt_quad_approx}
Let $F$ be $K_2$-smooth on $(M,g)$, 
and let $x^*$ be any global minimum of $F$. 
Then for all $\epsilon > 0$, 
we have the following bound 
\begin{equation}
	\nu( F \geq \epsilon ) 
	\leq \frac{
		e^{-\beta \epsilon} \Vol(M)
		}{
		\int_{B_\radius(x^*)} 
			e^{ -\beta \frac{K_2}{2} d_g(x^*, x)^2 } dV_g(x)
		} \,, 
\end{equation}
where $\radius = \sqrt{ 2\epsilon / K_2 }$, 
$B_\radius(x^*)$ is the geodesic ball of radius $\radius$ 
centered at $x^*$, 
and $d_g$ is the geodesic distance.  
\end{lemma}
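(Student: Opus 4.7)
The plan is to decompose $\nu(F \geq \epsilon)$ as a ratio of two integrals, upper bound the numerator by a trivial volume estimate, and lower bound the denominator by restricting to a small geodesic ball around $x^*$ where a quadratic upper bound on $F$ is available via smoothness.

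First, I would write
\begin{equation*}
    \nu(F \geq \epsilon)
    = \frac{\int_{\{F \geq \epsilon\}} e^{-\beta F(x)}\,dV_g(x)}{\int_M e^{-\beta F(x)}\,dV_g(x)}
    \leq \frac{\int_{\{F \geq \epsilon\}} e^{-\beta F(x)}\,dV_g(x)}{\int_{\{F < \epsilon\}} e^{-\beta F(x)}\,dV_g(x)},
\end{equation*}
which removes the partition function from the problem. The numerator is then bounded by $e^{-\beta \epsilon} \Vol(M)$, since $F \geq \epsilon$ on the domain of integration and $F$ is nonnegative by the normalization in \cref{asm:c2}.

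For the denominator, the key step is to show $B_R(x^*) \subseteq \{F < \epsilon\}$, so that the denominator is at least $\int_{B_R(x^*)} e^{-\beta F(x)}\,dV_g(x)$. Since $x^*$ is a global minimum, $\grad F(x^*) = 0$ and $F(x^*) = 0$. I would then use a Taylor expansion along the geodesic from $x^*$ to $x$ (e.g., for $x \in B_R(x^*)$ within the injectivity radius, set $\gamma(s) := \exp(x^*, s v)$ for suitable $v$, and apply the fundamental theorem of calculus twice to $s \mapsto F(\gamma(s))$): the first-derivative term vanishes, and the $K_2$-Lipschitz gradient assumption gives $\nabla^2 F \preceq K_2 \,g$ in operator norm, hence
\begin{equation*}
    F(x) \leq \tfrac{K_2}{2}\, d_g(x^*, x)^2
\end{equation*}
for $x \in B_R(x^*)$. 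With $R = \sqrt{2\epsilon/K_2}$, this forces $F(x) < \epsilon$ on the ball, so $B_R(x^*) \subseteq \{F < \epsilon\}$, and applying the pointwise bound $e^{-\beta F(x)} \geq e^{-\beta K_2 d_g(x^*, x)^2 / 2}$ under the integral yields the claimed denominator.

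The main subtlety, and really the only place care is needed, is the quadratic upper bound on $F$: on a compact manifold with positive curvature such as a product of spheres, one must either check that $R = \sqrt{2\epsilon/K_2}$ is below the injectivity radius (true for $\epsilon$ small enough since the injectivity radius of $S^d$ is $\pi$) or handle the cut locus case separately. In the relevant regime where $\epsilon$ is small, a single minimizing geodesic from $x^*$ to $x$ exists and the Taylor argument above goes through unchanged. Combining the numerator and denominator bounds gives exactly the stated inequality.
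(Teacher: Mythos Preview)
Your proposal is correct and follows essentially the same approach as the paper: both split $\nu(F\geq\epsilon)$ as a ratio, bound the numerator trivially by $e^{-\beta\epsilon}\Vol(M)$, and lower bound the denominator by restricting to $B_R(x^*)$ and applying the quadratic upper bound $F(x)\leq\tfrac{K_2}{2}d_g(x^*,x)^2$ proved by integrating the Hessian bound along a geodesic from $x^*$. The only cosmetic difference is that the paper packages the comparison via an auxiliary function $G(x)=\tfrac{K_2}{2}d_g(x^*,x)^2$ on $B_R(x^*)$ and $G\equiv\epsilon$ outside, whereas you work directly with the set inclusion $B_R(x^*)\subseteq\{F<\epsilon\}$; these are logically equivalent.
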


\begin{proof}

We start by defining the unnormalized measure 
\begin{equation}
	\widehat{\nu}_F( A ) 
	:= \int_A e^{-\beta F(x)} dV_g(x) \,. 
\end{equation}

Then letting $D_\epsilon(F) := \{ x : F(x) < \epsilon \}$, 
we can rewrite the desired probability as 
\begin{equation}
	\nu( F \geq \epsilon ) 
	= \frac{ \widehat{\nu}_F( D_\epsilon(F)^c ) 
		}{
			\widehat{\nu}_F( D_\epsilon(F) )
			+ \widehat{\nu}_F( D_\epsilon(F)^c )
		} \,. 
\end{equation}

Observe that right hand side is now a function of $F$, 
and to achieve an upper bound, 
it is sufficient to modify $F$ such that 
either increase $\widehat{\nu}_F( D_\epsilon(F)^c )$ 
and/or decrease $\widehat{\nu}_F( D_\epsilon(F) )$. 

More precisely, for any function $G$ such that 
\begin{equation}
	\widehat{\nu}_F( D_\epsilon(F) )
	\geq 
	\widehat{\nu}_G( D_\epsilon(G) ) \,, 
	\quad 
	\widehat{\nu}_F( D_\epsilon(F)^c ) 
	\leq 
	\widehat{\nu}_G( D_\epsilon(G)^c ) \,, 
\end{equation}
we then also have 
\begin{equation}
	\frac{ \widehat{\nu}_F( D_\epsilon(F)^c ) 
		}{
			\widehat{\nu}_F( D_\epsilon(F) )
			+ \widehat{\nu}_F( D_\epsilon(F)^c )
		}
	\leq 
	\frac{ \widehat{\nu}_G( D_\epsilon(G)^c ) 
		}{
			\widehat{\nu}_G( D_\epsilon(G) )
			+ \widehat{\nu}_G( D_\epsilon(G)^c )
		} \,. 
\end{equation}

To this goal, we will upper bound $F$ near $x^*$ 
using a ``quadratic'' function, i.e. 
\begin{equation}
	G(x) = \frac{K_2}{2} d_g(x^*, x) \,, 
	\quad x \in B_\radius(x^*)^2 \,. 
\end{equation}

To see that $G(x) \geq F(x)$ on $B_\radius(x^*)$, 
we will let $\gamma(t)$ be the unit speed geodesic 
such that $\gamma(0) = x^*$ and $\gamma(d_g(x^*,x)) = x$. 
Then using the fact that $F(x^*) = 0$ and $dF(x^*) = 0$, 
we can write 
\begin{equation}
	F(x) = \int_0^{d_g(x^*,x)} dF( \dot{\gamma}(s_1) ) \, ds_1
		= \int_0^{d_g(x^*,x)} \int_0^{s_1} 
			\nabla^2 F( \dot{\gamma}(s_2), \dot{\gamma}(s_2) )
			ds_2 \, ds_1 \,. 
\end{equation}

Using the fact that $F$ is $K_2$-smooth 
and $\gamma(t)$ is unit speed, 
we can upper bound the integrand by $K_2$, 
and therefore we can write 
\begin{equation}
	\int_0^{d_g(x^*,x)} \int_0^{s_1} 
			\nabla^2 F( \dot{\gamma}(s_2), \dot{\gamma}(s_2) )
			ds_2 \, ds_1
	\leq 
		\int_0^{d_g(x^*,x)} \int_0^{s_1} 
			K_2 
			ds_2 \, ds_1
	= \frac{K_2}{2} d_g(x^*, x)^2 \,, 
\end{equation}
which is the desired bound. 

For $x \notin B_\radius(x^*)$, 
we will simply define $G(x) := \epsilon$. 
Then we must have that 
\begin{equation}
	\widehat{\nu}_F( D_\epsilon(F) ) 
	= \int_{D_\epsilon(F)} e^{-\beta F(x)} dV_g(x)
	\geq \int_{D_\epsilon(G)} e^{-\beta G(x)} dV_g(x) 
	= \widehat{\nu}_G( D_\epsilon(G) ) 
	\,, 
\end{equation}
since $D_\epsilon(G) \subset D_\epsilon(F)$ 
and $e^{-\beta F} \geq e^{-\beta G}$. 

The other direction is obtained similarly 
\begin{equation}
	\widehat{\nu}_F( D_\epsilon(F)^c ) 
	= \int_{ D_\epsilon(F)^c } e^{ -\beta F(x) } dV_g(x) 
	\leq \int_{ D_\epsilon(F)^c } e^{ -\beta \epsilon } dV_g(x) 
	\leq \int_{ D_\epsilon(G)^c } e^{ -\beta \epsilon } dV_g(x) 
	= \widehat{\nu}_G( D_\epsilon(G)^c ) 
	\,. 
\end{equation}

Putting everything together, 
we get 
\begin{equation}
\begin{aligned}
	\frac{ \widehat{\nu}_F( D_\epsilon(F)^c ) 
		}{
			\widehat{\nu}_F( D_\epsilon(F) )
			+ \widehat{\nu}_F( D_\epsilon(F)^c )
		}
	&\leq 
	\frac{ \widehat{\nu}_G( D_\epsilon(G)^c ) 
		}{
			\widehat{\nu}_G( D_\epsilon(G) )
			+ \widehat{\nu}_G( D_\epsilon(G)^c )
		} \\ 
	&= \frac{
		e^{-\beta \epsilon} \Vol( M \setminus B_\radius(x^*) ) 
		}{
		e^{-\beta \epsilon} \Vol( M \setminus B_\radius(x^*) ) 
		+ \int_{B_\radius(x^*)} 
			e^{ -\beta \frac{K_2}{2} d_g(x^*, x)^2 } dV_g(x)
		} \,. 
\end{aligned}
\end{equation}

We get the desired result by further upper bounding 
this term by 
\begin{equation}
	\frac{
		e^{-\beta \epsilon} \Vol( M \setminus B_\radius(x^*) ) 
		}{
		e^{-\beta \epsilon} \Vol( M \setminus B_\radius(x^*) ) 
		+ \int_{B_\radius(x^*)} 
			e^{ -\beta \frac{K_2}{2} d_g(x^*, x)^2 } dV_g(x)
		}
	\leq 
	\frac{
		e^{-\beta \epsilon} \Vol(M) 
		}{
		\int_{B_\radius(x^*)} 
			e^{ -\beta \frac{K_2}{2} d_g(x^*, x)^2 } dV_g(x)
		} \,. 
\end{equation}

\end{proof}

\begin{lemma}
\label{lm:sub_opt_integral_approx}
For $\radius \leq \pi / 2$ and 
$\beta \geq \frac{nd}{\radius^2 K_2}$, 
we have the following lower bound 
\begin{equation}
	\int_{B_\radius(x^*)} 
			e^{ -\beta \frac{K_2}{2} d_g(x^*, x)^2 } dV_g(x)
	\geq
		\left( \frac{2}{\pi} \right)^{n(d-1)} 
		\left( \frac{2\pi}{\beta K_2} \right)^{nd/2} 
		\left[ 1 - \exp\left( 
			-\frac{1}{2}( \radius^2 \beta K_2 / 2 - nd )
			\right)
		\right] \,. 
\end{equation}
\end{lemma}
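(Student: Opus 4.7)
The plan is to exploit the product structure of $M = (S^d)^n$ together with spherical polar coordinates around $x^*$, apply Jordan's inequality to produce the $(2/\pi)^{n(d-1)}$ factor, and then recognize the remaining integral as a Gaussian integral on $\mathbb{R}^{nd}$ truncated to a Euclidean ball of radius $\radius$.

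First, I would write $x^* = (x^{*(1)},\dots,x^{*(n)})$ and use geodesic polar coordinates $(r_i,\omega_i) \in [0,\pi) \times S^{d-1}$ on each sphere around $x^{*(i)}$. Under the product metric, $d_g(x^*,x)^2 = \sum_{i=1}^n r_i^2$ and $dV_g = \prod_{i=1}^n \sin^{d-1}(r_i)\, dr_i\, d\omega_i$. Since $R \leq \pi/2$, Jordan's inequality $\sin(r) \geq \frac{2}{\pi} r$ on $[0,\pi/2]$ applies pointwise, giving
\begin{equation*}
\prod_{i=1}^n \sin^{d-1}(r_i) \;\geq\; \left(\tfrac{2}{\pi}\right)^{n(d-1)} \prod_{i=1}^n r_i^{d-1},
\end{equation*}
which pulls out the first factor in the bound. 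I would then switch to normal coordinates $y^{(i)} \in T_{x^{*(i)}} S^d \cong \mathbb{R}^d$ with $|y^{(i)}|=r_i$, for which $\prod_i r_i^{d-1} dr_i d\omega_i = dy$ as Lebesgue measure on $\mathbb{R}^{nd}$. This reduces the problem to lower bounding
\begin{equation*}
\int_{\{|y|\leq \radius\}} e^{-\beta K_2 |y|^2/2}\, dy \;=\; \left(\tfrac{2\pi}{\beta K_2}\right)^{nd/2} \mathbb{P}\!\left[|X|\leq \radius\right],
\end{equation*}
where $X \sim N(0,(\beta K_2)^{-1} I_{nd})$.

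The remaining step is a chi-squared tail bound: I would write $|X|^2 \beta K_2 \sim \chi^2_{nd}$ and apply the standard Chernoff inequality $\mathbb{P}[\chi^2_k \geq u] \leq e^{-\lambda u}(1-2\lambda)^{-k/2}$ with the convenient choice $\lambda = 1/4$, which yields
\begin{equation*}
\mathbb{P}[|X| \geq \radius] \;\leq\; e^{-\radius^2 \beta K_2/4}\, 2^{nd/2} \;\leq\; \exp\!\left(-\tfrac{1}{2}(\radius^2\beta K_2/2 - nd)\right),
\end{equation*}
using only $\log 2 \leq 1$. Taking $\mathbb{P}[|X|\leq \radius] \geq 1 - \mathbb{P}[|X|\geq \radius]$ and multiplying through by the prefactors from the first paragraph gives exactly the stated bound.

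The main obstacle is really a bookkeeping one rather than a conceptual one: tracking that on each factor sphere, the ball $B_\radius(x^*) \subset M$ in the product Riemannian distance corresponds cleanly to the Euclidean ball of radius $\radius$ in the normal coordinates $y = (y^{(1)},\dots,y^{(n)})$ (thanks to the Pythagorean identity for the product metric), and verifying that $R \leq \pi/2$ keeps us strictly inside the injectivity radius of every sphere so that the polar-coordinates formula and Jordan's inequality are both valid. The condition $\beta \geq nd/(\radius^2 K_2)$ only plays a role in the final step: when it is violated (specifically when $\radius^2 \beta K_2/2 \leq nd$) the right-hand side of the claim is non-positive and the inequality is trivial, so no further care is required.
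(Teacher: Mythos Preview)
Your argument is correct. The reduction to a truncated Gaussian integral on $\mathbb{R}^{nd}$ is exactly what the paper does: the paper passes to normal coordinates and bounds $\sqrt{\det g}\ge(2/\pi)^{n(d-1)}$ via \cref{lm:normal_coord_vol_est}, which is precisely your Jordan inequality $\sin r\ge\frac{2}{\pi}r$ written in coordinate form, and the identification of $B_\radius(x^*)$ with the Euclidean ball follows from the product Pythagorean identity just as you note.

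Where you genuinely diverge is the Gaussian tail step. The paper bounds $\mathbb{P}[|Z|\ge \radius\sqrt{\beta K_2}]$ by the $1$-Lipschitz Gaussian concentration inequality applied to $z\mapsto|z|$, together with $\mathbb{E}|Z|\le\sqrt{nd}$ and a Young-type estimate $(a-b)^2\ge a^2/2-b^2$; this route actually \emph{uses} the hypothesis $\beta\ge nd/(\radius^2 K_2)$ to guarantee the deviation parameter is nonnegative. Your Chernoff bound on $\chi^2_{nd}$ with $\lambda=1/4$ is more elementary and self-contained: it gives $\mathbb{P}[\chi^2_{nd}\ge \radius^2\beta K_2]\le 2^{nd/2}e^{-\radius^2\beta K_2/4}$ directly, and $\log 2\le 1$ finishes. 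Your observation that the stated hypothesis on $\beta$ then plays no logical role (the bound being trivially true otherwise) is also correct, and is a small bonus over the paper's argument.
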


\begin{proof}

We begin by rewriting the integral in normal coordinates, 
and observe that a geodesic ball $B_\radius(x^*) \subset M $ 
is then $B_\radius(0) \subset \mathbb{R}^{nd}$. 
Then we get 
\begin{equation}
	\int_{B_\radius(x^*)} 
			e^{ -\beta \frac{K_2}{2} d_g(x^*, x)^2 } dV_g(x)
	= 
	\int_{B_\radius(0)} 
			e^{ -\beta \frac{K_2}{2} |x|^2 } \sqrt{\det g} \, dx \,, 
\end{equation}
where $g$ is the Riemannian metric on 
$M = S^d \times \cdots S^d$ ($n$-times). 

At this point, we observe 
the block diagonal structure of $g$ 
gives us 
\begin{equation}
	\det g = ( \det g_{S^d} )^n \,, 
\end{equation}
where $g_{S^d}$ is the Riemannian metric of $S^d$ 
in normal coordinates. 
Finally, we can also have the lower bound 
from \cref{lm:normal_coord_vol_est} 
whenever $|y| \leq \pi / 2$ 
\begin{equation}
	\det g_{S^d} \geq \left( \frac{2}{\pi} \right)^{2(d-1)} \,. 
\end{equation}

This then implies the lower bound on the volume form 
\begin{equation}
	\sqrt{ \det g } \geq \left( \frac{2}{\pi} \right)^{n(d-1)} \,, 
\end{equation}
leading to the following integral lower bound 
\begin{equation}
	\int_{B_\radius(x^*)} 
			e^{ -\beta \frac{K_2}{2} d_g(x^*, x)^2 } dV_g(x)
	\geq 
		\left( \frac{2}{\pi} \right)^{n(d-1)}
		\int_{B_\radius(0)} 
				e^{ -\beta \frac{K_2}{2} |x|^2 } dx \,. 
\end{equation}

We next observe the above integral is an unnormalized 
Gaussian integral in $\mathbb{R}^{nd}$ 
with variance $(\beta K_2)^{-1}$. 
So we can rewrite the integral as 
\begin{equation}
\label{eq:lb_by_gaussian}
	\int_{B_\radius(0)} 
				e^{ -\beta \frac{K_2}{2} |x|^2 } dx
	= \left( \frac{2\pi}{\beta K_2} \right)^{nd/2} 
		\mathbb{P}[ |Z| \leq \radius \sqrt{\beta K_2} ] \,, 
\end{equation}
where $Z \sim N(0, I_{nd})$ is a standard Gaussian. 

At this point, it is sufficient to provide 
a Gaussian tail bound, since 
\begin{equation}
	\mathbb{P}[ |Z| \leq \radius \sqrt{\beta K_2} ] 
	= 1 - \mathbb{P}[ |Z| \geq \radius \sqrt{\beta K_2} ]
	= 1 - \mathbb{P}[ |Z| - \mathbb{E} |Z| 
		\geq \radius \sqrt{\beta K_2} - \mathbb{E} |Z| ] \,. 
\end{equation}

Using the Cauchy-Schwarz inequality, we get that 
\begin{equation}
	\mathbb{E} |Z| \leq \sqrt{ \mathbb{E} |Z|^2 } = \sqrt{nd} \,, 
\end{equation}
and therefore we can bound 
\begin{equation}
	\mathbb{P}[ |Z| - \mathbb{E} |Z| 
		\geq \radius \sqrt{\beta K_2} - \mathbb{E} |Z| ]
	\leq 
		\mathbb{P}[ |Z| - \mathbb{E} |Z| 
		\geq \radius \sqrt{\beta K_2} - \sqrt{nd} ] \,. 
\end{equation}

Using the assumption $\beta \geq \frac{nd}{\radius^2 K_2}$, 
we ensure that $\radius \sqrt{\beta K_2} - \sqrt{nd} \geq 0$, 
hence we can use the $1$-Lipschitz concentration bound 
to get 
\begin{equation}
	\mathbb{P}[ |Z| - \mathbb{E} |Z| 
		\geq \radius \sqrt{\beta K_2} - \sqrt{nd} ]
	\leq 
		\exp\left( -\frac{1}{2} 
			\left( \radius \sqrt{\beta K_2} - \sqrt{nd}
			\right)^2
			\right) \,. 
\end{equation}

Here we will use Young's inequality to write 
$ab \leq \frac{a^2}{4} + b^2$, 
which leads to 
\begin{equation}
	(a - b)^2 = a^2 + b^2 - 2ab \geq a^2 + b^2 - \frac{1}{2} a^2 - 2b^2
	= \frac{1}{2} a^2 - b^2 \,, 
\end{equation}
and applying to $a = \radius \sqrt{\beta K_2}, b = \sqrt{nd}$, 
we get that 
\begin{equation}
	\exp\left( -\frac{1}{2} 
			\left( \radius \sqrt{\beta K_2} - \sqrt{nd}
			\right)^2
			\right)
	\leq 
	\exp\left( -\frac{1}{2} 
			\left( \radius^2 \beta K_2 /2 - nd
			\right)
			\right) \,. 
\end{equation}

Putting everything together, 
we get the desired lower bound 
\begin{equation}
	\int_{B_\radius(x^*)} 
			e^{ -\beta \frac{K_2}{2} d_g(x^*, x)^2 } dV_g(x)
	\geq
		\left( \frac{2}{\pi} \right)^{n(d-1)} 
		\left( \frac{2\pi}{\beta K_2} \right)^{nd/2} 
		\left[ 1 - \exp\left( 
			-\frac{1}{2}( \radius^2 \beta K_2 / 2 - nd )
			\right)
		\right] \,. 
\end{equation}

\end{proof}

We will now restate our main result on suboptimality, 
originally \cref{thm:gibbs_high_prob_bound}. 

\begin{theorem}
[Gibbs Suboptimality Bound]
Let $F$ satisfy \cref{asm:c2} 
and suppose $d \geq 3$. 
Then for all 
$\epsilon \in (0,1]$ and $\delta \in (0,1)$, 
when we choose 
\begin{equation}
	\beta 
	\geq 
		\frac{3 nd }{\epsilon} 
		\log \frac{ n K_2 }{ \epsilon \, \delta }
		\,, 
\end{equation}
then we have that the Gibbs distribution 
$\nu(x) := \frac{1}{Z} e^{ -\beta F(x) }$ 
satisfies the following bound 
\begin{equation}
	\nu\left( F - \min_{y \in M} F(y) \geq \epsilon \right) 
	\leq \delta \,. 
\end{equation}
In other words, 
$\nu$ finds an $\epsilon$-approximate 
global minimum of $F$ 
with probability $1-\delta$. 

\end{theorem}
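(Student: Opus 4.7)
The plan is to combine the two lemmas already established in this section, \cref{lm:sub_opt_quad_approx} and \cref{lm:sub_opt_integral_approx}, and then choose $\beta$ so that the resulting upper bound on $\nu(F \geq \epsilon)$ is at most $\delta$. The first lemma gives
\begin{equation*}
    \nu(F \geq \epsilon) \leq \frac{e^{-\beta\epsilon}\Vol(M)}{\int_{B_R(x^*)} e^{-\beta K_2 d_g(x^*,x)^2/2}\,dV_g(x)},
\end{equation*}
with $R = \sqrt{2\epsilon/K_2}$, and the second gives a lower bound on the denominator that separates a Gaussian normalizing constant $(2\pi/(\beta K_2))^{nd/2}$ from the $1-$Gaussian-tail factor.

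Next I would specialize the tail factor. With $R = \sqrt{2\epsilon/K_2}$ the bracketed exponent simplifies to $-\tfrac12(\beta\epsilon - nd)$, so imposing $\beta \geq 2nd/\epsilon$ forces $1 - \exp(-\tfrac12(\beta\epsilon - nd)) \geq 1 - e^{-nd/2} \geq \tfrac12$. Simultaneously, because $\epsilon \leq 1$ and $K_2 \geq 1$ one checks the constraints $R \leq \pi/2$ and $\beta \geq nd/(R^2 K_2)$ required by \cref{lm:sub_opt_integral_approx}. At this point I would also bound $\Vol(M) = \Vol(S^d)^n$ by a crude estimate such as $\Vol(S^d) \leq (2\pi)^d$, so that $\log \Vol(M) \leq nd \log(2\pi)$.

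Taking logarithms of the resulting ratio gives a condition of the form
\begin{equation*}
    \beta\epsilon \;\geq\; \log\frac{1}{\delta} + \frac{nd}{2}\log(\beta K_2) + c\, nd + \log 2,
\end{equation*}
for some absolute constant $c$. The final and trickiest step is to solve this self-referential inequality for $\beta$ in closed form, because $\beta$ appears under a logarithm on the right. My plan is a standard bootstrap: first plug in the ansatz $\beta = \frac{A nd}{\epsilon}\log\frac{nK_2}{\epsilon\delta}$ for some constant $A$ to be determined, then observe that $\log(\beta K_2) \leq \log(nK_2/(\epsilon\delta)) + \log(A nd)$, and the $\log(And)$ term is absorbed into the main log because $nK_2/(\epsilon\delta) \geq nd$ up to constants. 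Choosing $A = 3$ (or adjusting slightly) then makes the inequality hold, which recovers exactly the hypothesis of the theorem.

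The main obstacle is book-keeping: making sure the constant in front of the leading $nd/\epsilon$ term comes out to $3$ (and that the single log $\log(nK_2/(\epsilon\delta))$ dominates all the auxiliary logarithmic terms from $\Vol(M)$, $\log\beta$, and the $\log(2/\pi)^{n(d-1)}$ factor). There is no conceptual difficulty, but careful handling of the log-of-log dependence and the various geometric constants is what determines whether the stated clean bound $\beta \geq \frac{3nd}{\epsilon}\log\frac{nK_2}{\epsilon\delta}$ is actually achievable or whether one has to absorb extra factors into the base of the logarithm.
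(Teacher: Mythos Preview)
Your outline is correct and matches the paper's approach closely: both combine \cref{lm:sub_opt_quad_approx} and \cref{lm:sub_opt_integral_approx}, reduce the Gaussian tail factor to $\geq 1/2$, take logarithms, and then resolve the self-referential $\log\beta$ term.

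Two points of comparison worth noting. First, to get the argument of the logarithm exactly as $nK_2/(\epsilon\delta)$ (with no $d$ inside), the paper does \emph{not} use a crude bound like $\Vol(S^d)\leq (2\pi)^d$; it applies Stirling's approximation to $\Gamma(\tfrac{d+1}{2}+1)$ to obtain $\log\Vol(M)\leq nd\bigl(-\tfrac12\log\tfrac{d+1}{2}+2\bigr)$. The resulting $-\tfrac12\log\tfrac{d+1}{2}$ term is precisely what cancels the $\log d$ coming from $\tfrac{nd}{2}\log\beta$ (since $\beta\sim nd/\epsilon$). With your crude estimate you would end up with $\log\frac{ndK_2}{\epsilon\delta}$ rather than $\log\frac{nK_2}{\epsilon\delta}$, so this is exactly the ``extra factor in the base of the logarithm'' you anticipated. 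Second, instead of your bootstrap ansatz for the self-referential inequality, the paper splits $\beta\epsilon$ in two and uses that $\tfrac{\beta\epsilon}{2}-\tfrac{nd}{2}\log\beta$ is increasing for $\beta\geq nd/\epsilon$, bounding it below by its value at that point; this converts the inequality into one linear in $\beta$ without iterating. Both methods are standard and give the same order; the paper's trick is slightly cleaner for extracting the explicit constant $3$.
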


\begin{proof}

Given the results of \cref{lm:sub_opt_quad_approx} 
and \cref{lm:sub_opt_integral_approx}, 
we have the following upper bound 
by choosing $R = \sqrt{ 2\epsilon / K_2 }$ 
\begin{equation}
	\nu( F \geq \epsilon ) 
	\leq 
	\frac{ e^{-\beta \epsilon} \Vol(M) }{
		\left( \frac{2}{\pi} \right)^{n(d-1)} 
		\left( \frac{2\pi}{\beta K_2} \right)^{nd/2} 
		\left[ 1 - \exp\left( 
			-\frac{1}{2}( \epsilon \beta - nd )
			\right)
		\right] 
	} \,, 
\end{equation}
when $0 < \epsilon \leq 1 \leq \frac{\pi^2 K_2}{8} $, 
and $\beta \geq \frac{nd}{2 \epsilon}$, 
which is satisfied when 
$\beta \geq \frac{1}{\epsilon} ( nd + 2 \log 2 )$. 

Therefore it is sufficient to compute 
a lower bound condition on $\beta$ such that 
the right hand side term is less than $\delta$. 
To this end, we start by 
computing the requirement for $\beta$ such that 
\begin{equation}
	1 - \exp\left( 
			-\frac{1}{2}( \epsilon \beta - nd )
			\right)
	\geq \frac{1}{2} \,. 
\end{equation}

Rearranging and taking $log$ of both sides, 
it is equivalent to satisfy the condition 
\begin{equation}
	\frac{nd}{2} - \frac{\beta \epsilon}{2}
	\leq - \log 2 \,, 
\end{equation}
hence it is sufficient to have 
$\beta \geq \frac{1}{\epsilon} ( nd + 2 \log 2 )$. 

Therefore we have the upper bound 
\begin{equation}
	\nu( F \geq \epsilon ) 
	\leq 
		e^{-\beta \epsilon} \beta^{nd / 2} 
		\frac{
			\Vol(M)
		}{ \left( \frac{2}{\pi} \right)^{n(d-1)} 
			\left( \frac{2\pi}{K_2} \right)^{nd/2} \frac{1}{2} 
		} \,. 
\end{equation}

For now, let us define 
\begin{equation}
	C := \frac{
			\Vol(M)
		}{ \left( \frac{2}{\pi} \right)^{n(d-1)} 
			\left( \frac{2\pi}{K_2} \right)^{nd/2} \frac{1}{2} 
		} \,, 
\end{equation}
then to get the desired result, 
it is sufficient to show 
\begin{equation}
	e^{-\beta \epsilon} \beta^{nd / 2} C \leq \delta \,.  
\end{equation}

Taking $log$ and rearranging, we have the equivalent condition of 
\begin{equation}
	\beta \epsilon - \frac{nd}{2} \log \beta 
	\geq 
	\log C + \log \frac{1}{\delta} \,. 
\end{equation}

Here we will first observe that 
$\frac{\beta \epsilon}{2} - \frac{nd}{2} \log \beta$ 
will be an increasing function in terms of $\beta$
after a local minimum, 
therefore we compute this local minimum by differentiating 
with respect to $\beta$ to get 
\begin{equation}
	0 = \frac{\epsilon}{2} - \frac{nd}{2 \beta} \,, 
\end{equation}
therefore we have the local minimum is at 
$\beta = \frac{nd}{\epsilon}$, 
and therefore for all $\beta \geq \frac{nd}{\epsilon}$, 
we have that 
\begin{equation}
	\frac{\beta \epsilon}{2} - \frac{nd}{2} \log \beta 
	\geq 
		\frac{nd}{2} \left( 
			1 - \log \frac{nd}{\epsilon} 
		\right) \,. 
\end{equation}

Since we already require 
$\beta \geq \frac{1}{\epsilon} ( nd + 2 \log 2 )$, 
we have it is sufficient to show 
\begin{equation}
	\frac{ \beta \epsilon }{2} 
		+ \frac{nd}{2} \left( 
			1 - \log \frac{nd}{\epsilon} 
			\right)
	\geq 
	\log C + \log \frac{1}{\delta} \,, 
\end{equation}
and we rearrange to get 
\begin{equation}
	\beta 
	\geq 
		\frac{1}{\epsilon} 
		\left[ 2 \log C + 2 \log \frac{1}{\delta}
			+ nd \left(
				\log \frac{nd}{\epsilon} - 1
				\right)
		\right] \,. 
\end{equation}

Therefore it simply remains to compute 
an upper bound for $\log C$. 
We start by writing out the exact term 
\begin{equation}
	\log C
	= \log \Vol(M) 
		+ n(d-1) \log \frac{\pi}{2} 
		+ \frac{nd}{2} \log \frac{ K_2 }{ 2 \pi } 
		+ \log 2 \,. 
\end{equation}

We will group the $nd$ terms by upper bounding 
\begin{equation}
	n(d-1) \log \frac{\pi}{2} 
		+ \frac{nd}{2} \log \frac{ K_2 }{ 2 \pi } 
	\leq nd \log \frac{\pi}{2} \sqrt{ \frac{K_2}{2\pi} } 
	= nd \log \sqrt{ \frac{\pi K_2}{ 8 } } 
	\leq \frac{nd}{2} \log \frac{K_2}{2} \,, 
\end{equation}
where we used the fact that $\frac{\pi}{8} < \frac{1}{2}$. 

Then we observe that 
\begin{equation}
	\log \Vol(M) 
	= n \log \Vol(S^d) 
	= n \log \frac{ (d+1) \pi^{ (d+1)/2 } }{ \Gamma( \frac{d+1}{2} + 1 ) } \,. 
\end{equation}

Now we will use a Stirling's approximation bound 
from \cref{lm:gamma_bounds} to write 
\begin{equation}
	\Gamma\left( \frac{d+1}{2} + 1 \right)
	\geq 
		\sqrt{2\pi} \left( \frac{d+1}{2} \right)^{ d/2 + 1 }
			e^{ -(d+1)/2 } \,, 
\end{equation}
which gives us the bound on volume 
\begin{equation}
	\log \Vol(M)
	\leq 
		n \log \left[ (e\pi)^{ (d+1)/2 } 
			\left( \frac{d+1}{2} \right)^{-d/2} 
			\sqrt{ \frac{2}{\pi} }
			\right] 
	= n \left[ \frac{-d}{2}
			\left( \log \frac{d+1}{2} 
				- 1 - \log \pi 
			\right)
				+ \frac{1}{2} (1 + \log 2)
		\right] \,. 
\end{equation}

Since we have $d \geq 3$, 
we group the constant in for simplicity by bounding 
\begin{equation}
	\frac{1}{2} (1 + \log 2) \leq \frac{d}{6} (1 + \log 2) \,, 
\end{equation}
which gives us a cleaner form 
\begin{equation}
	\log \Vol(M)
	\leq 
		n d \left[ \frac{-1}{2}
			\log \frac{d+1}{2} 
				+ \frac{1 + \log \pi }{2} 
				+ \frac{1}{6} (1 + \log 2)
		\right]
	\leq 
		n d \left[ \frac{-1}{2}
			\log \frac{d+1}{2} 
				+ 2 
		\right] \,, 
\end{equation}
where we used the fact that 
$\frac{1 + \log \pi }{2} + \frac{1}{6} (1 + \log 2) \approx 1.35 < 2$. 

Putting the $\log C$ terms together, 
we get the bound 
\begin{equation}
	\log C 
	\leq 
		nd \left( 
			\frac{1}{2} \log \frac{K_2}{2}
			- \frac{1}{2}
			\log \frac{d+1}{2} 
			+ 2 
			\right) 
		+ \log 2 
		\,, 
\end{equation}
and putting the above expression back to 
the lower bound on $\beta$, we get 
\begin{equation}
\begin{aligned}
	& \frac{1}{\epsilon} 
		\left[ 
			2 nd \left( 
			\frac{1}{2} \log \frac{K_2}{2}
			- \frac{1}{2}
			\log \frac{d+1}{2} 
			+ 2 
			\right) 
			+ 2 \log 2
			+ 2 \log \frac{1}{\delta}
			+ nd \left(
				\log \frac{nd}{\epsilon} - 1
				\right)
		\right] \\
	&= \frac{1}{\epsilon} 
		\left[ 
			nd \left( 
			\log \frac{2nd}{\epsilon(d+1)}
			+ \log \frac{K_2}{2}
			+ 3 
			\right) 
		+ 2 \log \frac{1}{\delta} 
		+ 2 \log 2 
		\right] \\
	&\leq \frac{1}{\epsilon} 
		\left[ 
			nd \left( 
			\log \frac{2n}{\epsilon}
			+ \log \frac{K_2}{2}
			+ 3 
			\right) 
		+ 2 \log \frac{1}{\delta} 
		+ 2 \log 2 
		\right] \,. 
\end{aligned}
\end{equation}

To get the desired sufficient lower bound for $\beta$, 
we clean up the constants using the fact that $K_2 \geq 1$, 
$\log 2 < 0$ to get 
\begin{equation}
	\beta 
	\geq 
		\frac{3 nd }{\epsilon} 
		\log \frac{ n K_2 }{ \epsilon \, \delta }
		\,. 
\end{equation}

\end{proof}

\section{Escape Time Approach to Lyapunov - Proof of \cref{thm:lyapunov_log_sobolev}}
\label{sec:escape_time_proof}

\subsection{Equivalent Escape Time Formulation}

In this section, we will study a construction of 
Lyapunov function using escape time. 
In particular, 
we consider a partition of $M = B \sqcup B^c$, 
such that $B$ is open, 
$\de B \in C^1(M)$, 
and $\nu$ satisfies a Poincar\'{e} inequality 
when restricted to $B^c$. 
With this in mind, we will introduce the following definition. 

\begin{definition}
	$W \in C^2(M)$ is said to be a \textbf{Lyapunov function} 
	on $B \subset M$ with parameters $\theta > 0, b \geq 0$
	if $W \geq 1$ and 
		\begin{equation}
			LW \leq - \theta W 
			\,, \quad x \in B \,. 
		\end{equation}
\end{definition}

Recall \cref{thm:lyapunov_poincare_bakry} 
adapted from \cite{bakry2008simple}, 
if $W$ is a Lyapunov function under the above conditions, 
then $\nu$ satisfies a Poincar\'{e} inequality on $M$. 
For the rest of this section, 
we will construct a Lyapunov function $W$ 
in terms of an escape time of $Z_t$. 

To start, we recall the Langevin diffusion on $M$ 
\begin{equation}
	dZ_t = - \grad F(Z_t) \, dt + \sqrt{\frac{2}{\beta}} \, dW_t \,, 
\end{equation}
and it has the generator 
\begin{equation}
	L \phi = \langle - \grad \phi, \grad F \rangle_g 
		+ \frac{1}{\beta} \Delta \phi \,, 
	\quad \forall \phi \in C^2(M) \,. 
\end{equation}

With this in mind, we define the first escape time outside of $B$ 
\begin{equation}
	\tau_{B^c} := \inf \{ t \geq 0 | Z_t \notin B \} \,, 
\end{equation}
and we also define 
$\mathbb{E}_x [\, \cdots \,] 
 := \mathbb{E} [ \, \cdots \, | Z_0 = x ]$. 

Then we will recall the classic Feynman--Kac theorem 
for the bounded domain. 

\begin{theorem}
\citep[Theorem 7.15]{bovier2016metastability}
\label{thm:feynman_kac_bounded_domain}
Suppose $u$ is a unique solution to 
the following Dirichlet problem 
\begin{equation}
	\begin{cases}
		- Lu + k(x) u = h(x) \,, & x \in B \,, \\ 
		u(x) = \overline{h}(x) \,, & x \in \de B \,, 
	\end{cases}
\end{equation}
and further assume 
\begin{equation}
	\mathbb{E}_x \left[ \tau_{B^c} 
			\exp \left( 
				\inf_{x\in B} k(x) \tau_{B^c} 
			\right) 
		\right]
	< \infty \,, 
	\quad x \in B \,. 
\end{equation}
Then we have the following stochastic representation 
\begin{equation}
\begin{aligned}
	u(x) &= \mathbb{E}_x \bigg[
		\overline{h}( Z_{\tau_{B^c}} )
		\exp \left( 
			- \int_0^{\tau_{B^c}} k(Z_s) \, ds 
			\right)
		+ \int_0^{\tau_{B^c}} h( Z_{\tau_{B^c}} )
		\exp \left( 
			- \int_0^t k(Z_s) \, ds 
			\right)
		\, dt 
		\bigg] \,, 
	\quad \forall x \in B \,. 
\end{aligned}
\end{equation}
\end{theorem}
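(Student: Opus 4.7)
The plan is to turn $u$ into a local martingale via It\^{o}'s formula, upgrade it to a uniformly integrable martingale using the stated integrability hypothesis, and finally apply optional stopping at $\tau_{A}$ together with the Dirichlet boundary data.

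First I would introduce the candidate process
\begin{equation*}
M_t := u(Z_{t \wedge \tau_A})\, \exp\left( -\int_0^{t \wedge \tau_A} k(Z_s)\, ds \right) + \int_0^{t \wedge \tau_A} h(Z_s)\, \exp\left( -\int_0^s k(Z_u)\, du \right) ds,
\end{equation*}
and apply It\^{o}'s formula in the manifold sense. Since $\{Z_t\}$ is an $L$-diffusion, \cref{eq:martingale_representation_defn} gives $f(Z_t) - f(Z_0) - \int_0^t Lf(Z_s)\, ds$ as a local martingale for every $f \in C^2(M)$; combined with the ordinary product rule applied to the smooth (pathwise finite variation) discount factor, the drift of $M_t$ collapses to $(Lu - ku + h)(Z_t)\, e^{-\int_0^t k(Z_s)\, ds}\, dt$, which vanishes on $B$ by the PDE $-Lu + ku = h$. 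Hence $\{M_t\}$ is a local martingale on $[0, \tau_A]$.

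Next I would upgrade $\{M_{t \wedge \tau_A}\}$ to a uniformly integrable martingale. Well-posedness of the Dirichlet problem together with continuity of $u$ up to $\partial B$ on the compact set $\overline{B}$ yields uniform bounds on $u$, $h$, and $\overline{h}$; combining these with the hypothesis $\mathbb{E}_x[\tau_A \exp((\inf_B k)\, \tau_A)] < \infty$ dominates both summands of $M_{t \wedge \tau_A}$ by an integrable random variable uniformly in $t$. Dominated convergence then upgrades the local martingale to a true martingale, and optional stopping gives $u(x) = \mathbb{E}_x[M_0] = \mathbb{E}_x[M_{\tau_A}]$. Since $Z_{\tau_A} \in \partial B$ by path continuity, the boundary condition $u|_{\partial B} = \overline{h}$ converts the terminal value into $\overline{h}(Z_{\tau_A})$, producing precisely the stated stochastic representation.

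The main obstacle is the uniform integrability step: $\inf_{x \in B} k(x)$ need not be positive, so the exponential discount factor cannot be bounded by one, and one must track the full hypothesized moment to dominate both summands of $M_t$ simultaneously. A subsidiary technical point is justifying It\^{o}'s formula on the manifold, for which the cleanest route is the horizontal lift construction referenced after \cref{eq:martingale_representation_defn}: this reduces the computation to its Euclidean form in the orthonormal frame bundle and makes the drift cancellation transparent.
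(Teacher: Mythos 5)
This statement is quoted from an external source (Theorem 7.15 of Bovier and den Hollander) and the paper supplies no proof of it, so there is nothing internal to compare against; your martingale argument is precisely the standard proof of that cited theorem. Your outline is sound: the process $M_t$ is the right candidate, the drift cancellation via $-Lu+ku=h$ on $B$ is correct, compactness of $\overline{B}$ gives boundedness of $u$, $h$, $\overline{h}$, and optional stopping at $\tau_A$ with $Z_{\tau_A}\in\de B$ yields the representation. The only point worth flagging is the one you already identified: the domination of the discount factor uses $\exp\bigl(-\int_0^{t\wedge\tau_A}k(Z_s)\,ds\bigr)\leq\exp\bigl((\inf_{x\in B}k(x))^-\,\tau_A\bigr)$, so the uniform-integrability step really requires finiteness of $\mathbb{E}_x\bigl[\tau_A\exp\bigl(-\inf_{x\in B}k(x)\,\tau_A\bigr)\bigr]$ when $\inf_B k<0$; the sign in the hypothesis as transcribed in the statement appears to be a typo, and your proof implicitly uses the corrected form (which is also what the application in the paper, with $k=-\theta$, needs).
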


As a consequence, 
if $\tau_{B^c}$ is exponentially integrable, 
then we have an explicit formula for a Lyapunov function. 

\begin{corollary}
\label{cor:escape_time_lyapunov}
Suppose there exists $\theta > 0$ such that 
\begin{equation}
	\mathbb{E}[ \exp( \theta \tau_{B^c} ) | Z_0 = x ] < \infty \,, 
	\quad \forall x \in B \,. 
\end{equation}
Then for $W(x) := \mathbb{E} [ \exp( \theta \tau_{B^c}) | Z_0 = x ]$, 
we have that 
\begin{equation}
	L W = -\theta W \,, \quad \forall x \in B \,, 
\end{equation}
and hence $W$ is a Lyapunov function. 
\end{corollary}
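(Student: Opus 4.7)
The plan is to apply the Feynman--Kac representation of \cref{thm:feynman_kac_bounded_domain} with a judicious choice of coefficients that collapses the Dirichlet PDE onto the eigen-equation $LW = -\theta W$. Specifically, I would take $h \equiv 0$, $\bar{h} \equiv 1$, and $k(x) \equiv -\theta$ (a negative constant). With these choices the Dirichlet problem $-Lu + ku = h$ on $B$ with $u = \bar{h}$ on $\de B$ reduces to
\[ Lu = -\theta u \text{ on } B, \qquad u \equiv 1 \text{ on } \de B, \]
which is exactly the eigen-equation we want $W$ to satisfy.

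Next I would verify the integrability hypothesis of \cref{thm:feynman_kac_bounded_domain}. Since $k$ is constant, $\inf_{x\in B} k(x) = -\theta$, and the hypothesis reads $\mathbb{E}_x[\tau_A \exp(-\theta \tau_A)] < \infty$. Because the map $t\mapsto t e^{-\theta t}$ is uniformly bounded by $1/(e\theta)$ on $[0,\infty)$, this holds pointwise with an absolute constant, and no extra assumption on the process is needed. The standing hypothesis $\mathbb{E}_x[e^{\theta \tau_A}]<\infty$ in the statement is used elsewhere, namely to guarantee that the candidate $W$ is finite (and, via standard parabolic regularity, smooth enough for $LW$ to make sense).

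Plugging the same coefficient choices into the stochastic representation gives
\[ u(x) = \mathbb{E}_x\!\left[ \bar{h}(Z_{\tau_A}) \exp\!\left( -\int_0^{\tau_A} k(Z_s)\,ds \right) \right] = \mathbb{E}_x\!\left[ e^{\theta \tau_A} \right] = W(x), \]
so by uniqueness the PDE solution is $W$ itself, and the PDE rewrites as $LW = -\theta W$ in the relevant region. This is the claimed Lyapunov identity (understood in the interior of $B$ where $\tau_A > 0$; on $A$ we have $\tau_A \equiv 0$ and $W \equiv 1$ trivially). The only subtlety is the sign convention: picking $k \equiv -\theta$ is what converts $\exp(-\int k\,ds)$ into $\exp(\theta\tau_A)$ and simultaneously trivializes the integrability check. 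Beyond that bookkeeping there is no serious obstacle.
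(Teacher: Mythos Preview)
Your proposal is correct and follows essentially the same route as the paper: both set up the Dirichlet problem $LW=-\theta W$ on $B$ with $W=1$ on $\partial B$, invoke existence/uniqueness for the elliptic equation, and then apply the Feynman--Kac representation of \cref{thm:feynman_kac_bounded_domain} with $h\equiv 0$, $\bar h\equiv 1$, $k\equiv -\theta$ to identify the solution as $W(x)=\mathbb{E}_x[e^{\theta\tau_A}]$. Your extra remark that the stated integrability hypothesis becomes $\mathbb{E}_x[\tau_A e^{-\theta\tau_A}]<\infty$ and is therefore automatic (with the standing assumption $\mathbb{E}_x[e^{\theta\tau_A}]<\infty$ doing the real work of making $W$ finite) matches the paper's comment that the boundedness condition is ``weakened'' once the integral term drops out.
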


\begin{proof}

We will first state the desired form of 
the Dirichlet problem 
\begin{equation}
\begin{cases}
	LW = - \theta W \,, & x \in B \,, \\ 
	W = 1 \,, & x \in \de B \,. 
\end{cases}
\end{equation}

Using a standard existence and uniqueness result 
for elliptic equations \citep[Section 6.2, Theorem 5]{evans2010partial}, 
we can now invoke the Feynman-Kac representation 
\cref{thm:feynman_kac_bounded_domain} 
to get that 
\begin{equation}
	W(x) = \mathbb{E} [ \exp( \theta \tau_{B^c}) | Z_0 = x ] \,, 
\end{equation}
where we weakened the boundedness condition 
since we removed the integral term. 

\end{proof}

Next we will also observe that it is equivalent to 
show that $\tau_{B^c}$ is a sub-exponential 
random variable using the following equivalent 
characterization result. 

\begin{theorem}
\citep[Theorem 2.13, Equivalent Characterization of 
Sub-Exponential Variables]{wainwright2019high}
\label{thm:subexp_equiv_wainwright}
For a zero mean random variable $X$, 
the following statements are equivalent:
\begin{enumerate}
	\item there exist positive numbers $(\nu,\alpha)$ such that 
	\begin{equation}
		\mathbb{E} e^{\lambda X} 
		\leq e^{\nu^2 \lambda^2 / 2} \,, 
		\quad \forall |\lambda| < \frac{1}{\alpha} \,. 
	\end{equation}
	\item there exist a positive number $c_0 > 0$ 
	such that 
	\begin{equation}
		\mathbb{E} e^{c X} < \infty \,, 
		\quad \forall |c| \leq c_0 \,. 
	\end{equation}
	\item there are constants $c_1, c_2 > 0$ such that 
	\begin{equation}
		\mathbb{P}[ \, |X| \geq t ]
		\leq c_1 e^{-c_2 t} \,, 
		\quad \forall t \geq 0 \,. 
	\end{equation}
	\item we have 
	\begin{equation}
		\sup_{k \geq 2} \left[
			\frac{ \mathbb{E} [X^k] }{ k! } 
			\right]^{1/k} < \infty \,. 
	\end{equation}
\end{enumerate}
\end{theorem}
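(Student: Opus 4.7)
The plan is to establish the four-way equivalence by proving the cycle of implications $(1) \Rightarrow (2) \Rightarrow (3) \Rightarrow (4) \Rightarrow (1)$, which is the standard route through Wainwright's argument and minimizes the number of nontrivial steps. The first implication $(1) \Rightarrow (2)$ is essentially a tautology: the Gaussian-type bound $\mathbb{E} e^{\lambda X} \leq e^{\nu^2 \lambda^2/2}$ on the interval $|\lambda| < 1/\alpha$ already asserts the finiteness required by (2), so one may take any $c_0 < 1/\alpha$.

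For $(2) \Rightarrow (3)$ I would apply the Chernoff method: Markov's inequality gives $\mathbb{P}[X \geq t] \leq e^{-c_0 t}\, \mathbb{E} e^{c_0 X}$ for every $t \geq 0$, and applying the same argument to $-X$ and adding the two estimates yields the two-sided tail bound with $c_2 = c_0$ and $c_1 = \mathbb{E} e^{c_0 X} + \mathbb{E} e^{-c_0 X}$. For $(3) \Rightarrow (4)$, I would invoke the layer-cake identity $\mathbb{E}|X|^k = k\int_0^\infty t^{k-1} \mathbb{P}[|X| \geq t]\, dt$, substitute the sub-exponential tail, and evaluate the resulting gamma-type integral to obtain $\mathbb{E}|X|^k \leq c_1\, k!/c_2^k$; taking $k$-th roots and using $|\mathbb{E}[X^k]| \leq \mathbb{E}|X|^k$ provides the uniform moment bound with constant on the order of $\max(c_1,1)/c_2$.

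The main obstacle, and the only step requiring any real care, is $(4) \Rightarrow (1)$. Set $B := \sup_{k\geq 2}[\mathbb{E}|X|^k/k!]^{1/k}$ and use the mean-zero hypothesis $\mathbb{E}[X]=0$ to kill the linear Taylor coefficient. I would then write
\[
  \mathbb{E} e^{\lambda X} = 1 + \sum_{k=2}^\infty \frac{\lambda^k\, \mathbb{E}[X^k]}{k!} \leq 1 + \sum_{k=2}^\infty (|\lambda| B)^k = 1 + \frac{(|\lambda| B)^2}{1 - |\lambda| B},
\]
where the interchange of sum and expectation is legal once $|\lambda|B < 1$ by Tonelli applied to $\sum_k |\lambda|^k |X|^k/k! = e^{|\lambda X|}$, whose expectation is finite by the moment hypothesis. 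Restricting further to $|\lambda| \leq 1/(2B)$ keeps the denominator above $1/2$, so $\mathbb{E} e^{\lambda X} \leq 1 + 2 B^2 \lambda^2 \leq e^{2 B^2 \lambda^2}$, giving $(1)$ with $\nu^2 = 4B^2$ and $\alpha = 2B$. The delicate point is calibrating the Gaussian-like scale $\nu$ against the moment constant $B$ while respecting the radius of convergence; the constants produced this way are not tight, but they close the cycle and deliver the claimed equivalence.
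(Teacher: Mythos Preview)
The paper does not supply its own proof of this result; it is cited from Wainwright (2019) as a black box. The only content the paper extracts beyond the bare statement is the quantitative implication $(3)\Rightarrow(2)$ with the explicit constant $c_0=c_2/2$, recorded immediately after the theorem, which is all that is needed downstream to convert the escape-time tail bound into finiteness of the Laplace transform $\mathbb{E}_x[e^{\theta\tau_A}]$. Your cycle $(1)\Rightarrow(2)\Rightarrow(3)\Rightarrow(4)\Rightarrow(1)$ is the standard textbook argument and is essentially correct.

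One caveat on your $(4)\Rightarrow(1)$ step: you set $B$ via absolute moments $\mathbb{E}|X|^k$, whereas condition (4) as written involves the signed moments $\mathbb{E}[X^k]$. With only an upper bound on $\mathbb{E}[X^k]/k!$, bounding the Taylor term $\lambda^k\,\mathbb{E}[X^k]/k!$ from above for $\lambda<0$ and odd $k$ would require a \emph{lower} bound on $\mathbb{E}[X^k]$ that the hypothesis does not literally supply. This is a known imprecision in the transcribed statement rather than a defect in your reasoning---the intended condition is on $\mathbb{E}|X|^k$, and with that reading your Tonelli justification and geometric-series bound are clean. If you want to match the literal statement, you can close the gap by interpolating odd absolute moments between adjacent even ones via Cauchy--Schwarz, using that $\mathbb{E}[X^{2m}]=\mathbb{E}|X|^{2m}$.
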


In particular, we have from the proof in 
\cite[Section 2.5]{wainwright2019high} that, 
if we have $c_1, c_2 > 0$ such that 
\begin{equation}
		\mathbb{P}[ \, |X| \geq t ]
		\leq c_1 e^{-c_2 t} \,, 
		\quad \forall t \geq 0 \,, 
\end{equation}
then we have that for $c_0 = c_2 / 2$ 
\begin{equation}
	\mathbb{E} e^{c X} < \infty \,, 
	\quad \forall |c| \leq c_0 \,. 
\end{equation}

So it is sufficient to compute the constant $c_2$ 
in the above theorem, 
which is the decay rate in the tail bound. 
We summarize this discussion in the next result. 

\begin{proposition}
[Escape Time to Lyapunov Condition]
\label{prop:escape_lyapunov}
Suppose there exists $c_1,c_2 > 0$ such that 
\begin{equation}
	\mathbb{P}[ \, \tau_{B^c} \geq t | Z_0 = x ]
		\leq c_1 e^{-c_2 t} \,, 
		\quad \forall t \geq 0 \,, 
		\forall x \in B \,. 
\end{equation}
Then for $W(x) := \mathbb{E} [ \exp( \theta \tau_{B^c}) | Z_0 = x ]$, 
we have that 
\begin{equation}
	L W \leq - \frac{c_2}{2} W \,, \quad x \in B \,, 
\end{equation}
and hence $W$ is a Lyapunov function. 

\end{proposition}

\subsection{Local Escape Time Bounds}

In this subsection, we consider escaping from 
a neighbourhood near a saddle point $\mathcal{S}$. 
Recall $B := \left\{ x \in M | 
d_g(x,\mathcal{S})^2 \leq \frac{a^2}{\beta} \right\}$, 
and from \cref{prop:escape_lyapunov} 
we know it is sufficient to show that 
\begin{equation}
	\mathbb{P}[ \tau_{B^c} \geq t | Z_0 = x ] 
	\leq c_1 e^{ -c_2 t } 
	\,, \quad \forall \, t \geq 0, x \in B \,, 
\end{equation}
for some constants $c_1, c_2 > 0$, 
which implies that $W(x) := \mathbb{E}[ e^{-\theta \tau_{B^c}} | Z_0 = x ]$ 
is a Lyapunov function with $\theta = \frac{c_2}{2}$. 
We will next establish the following escape time bound.

\begin{proposition}
[Local Escape Time Bound]
\label{prop:local_escape_time_bound}
Let \cref{asm:app_c3,asm:app_morse} hold, 
and let $\{Z_t\}_{t\geq 0}$ be the Langevin diffusion 
defined in \cref{eq:langevin_diffusion}. 
If we choose  
\begin{equation}
	\beta 
	\geq 
		\max\left( a^2 \,, 
			K_3^2 a^6 
		\right) \,, 
\end{equation}
then we have that 
\begin{equation}
	\mathbb{P}[ \tau_{B^c} \geq t | Z_0 = x ] 
	\leq 
		C e^{ - \lambda_* t / 2 } 
		\,, \quad \forall \, t \geq 0, \, x \in B \,, 
\end{equation}
where $C := C(a,\beta,\lambda_*) > 0$ 
is a constant independent of $t,x$. 
Hence $W(x) = \mathbb{E} [ \exp( \theta \tau_{B^c}) | Z_0 = x ]$ 
is a Lyapunov function on $B$ with parameters 
$\theta = \frac{\lambda_*}{4}$. 

\end{proposition}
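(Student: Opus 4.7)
The plan is to dominate $\tau_A$ by the exit time of a one-dimensional CIR process and then invoke the explicit CIR transition density (\cref{cor:cir_density}) to extract an exponential tail at the claimed rate $\ell|\lambda_\ell|/2$. Since $y$ is a critical point of $F$ that is not the global minimum and $F$ is Morse by \cref{asm:app_morse}, the Hessian $H := \nabla^2 F(y)$ has $\ell \geq 1$ strictly negative eigenvalues $\lambda_1 \leq \cdots \leq \lambda_\ell < 0$; the Langevin drift $-\grad F$ is therefore expansive in the $\ell$-dimensional negative eigenspace, and this instability is the source of the exponential escape.

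Concretely, I would work in Riemannian normal coordinates centered at $y$ (valid on an injectivity neighbourhood containing $B_y$ once $a^2/\beta$ is small). Set $X_t := \exp_y^{-1}(Z_t) \in T_y M$, let $P$ denote the orthogonal projection of $T_y M$ onto the negative eigenspace of $H$, and define $r_t := |P X_t|_g$. Because $d_g(y, Z_t) \geq r_t$, exit into $A$ from $B_y$ forces $r_t \geq a/\sqrt{\beta}$, so $\tau_A \leq \inf\{t \geq 0 : r_t \geq a/\sqrt{\beta}\}$ and it suffices to bound the tail of this simpler hitting time.

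The next step is Ito's formula applied to $r_t^2/2$. Using the Taylor expansion $\grad F(Z_t) = H X_t + O(K_3 |X_t|_g^2)$ from the $K_3$-Lipschitz Hessian, together with the identity $-\langle PX_t, H P X_t \rangle_g \geq |\lambda_\ell|\, r_t^2$, and noting that the Laplacian of the projected squared norm produces the $\ell$-dimensional radial drift $\ell/\beta$ and a martingale part of the form $\sqrt{2/\beta}\, r_t \, dB_t$ for a scalar Brownian motion $B_t$, one obtains
\[
d\!\left[\frac{1}{2} r_t^2\right] \geq \left[ c_1 \frac{1}{2} r_t^2 + \frac{c_2}{\beta} \right] dt + \sqrt{\frac{2}{\beta}}\, r_t\, dB_t,
\]
with $c_1 = 2|\lambda_\ell|$ and $c_2 = \ell$, provided the cubic remainders coming from the Hessian-Lipschitz expansion and from the manifold's curvature correction to the Laplacian of $|P \cdot|_g^2$ are absorbed into the leading linear term. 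The hypothesis $a^2/\beta \leq (K_3 + 2K_2)^{-2}$ is precisely calibrated so that these remainders are dominated by the $c_1 r_t^2/2$ term uniformly on $B_y$.

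Finally, a standard one-dimensional SDE comparison couples $\tfrac{1}{2} r_t^2$ from below with the CIR process $R_t$ solving the equation above with equality, so that $\tau_A$ is dominated by the first hitting time of $a^2/(2\beta)$ by $R_t$. Integrating the explicit CIR transition density from \cref{cor:cir_density} over the sublevel set $[0, a^2/(2\beta)]$ yields an exponential tail whose rate is governed by the leading Dirichlet eigenvalue of the CIR generator on that interval, which works out to $\ell|\lambda_\ell|/2$ and gives $\mathbb{P}[\tau_A \geq t \mid Z_0 = x] \leq C \, e^{-\ell|\lambda_\ell| t / 2}$ with $C = C(a,\beta,\lambda_\ell,\ell)$ absorbing the dependence on the initial value. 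The hard part is the Ito step: one must show that the cubic Hessian remainder and the curvature term in $\Delta(|P \cdot|_g^2/2)$ both have the correct sign and magnitude so that the CIR lower bound remains valid up to the hitting time, which is exactly why the smallness condition on $a^2/\beta$ appears in the statement.
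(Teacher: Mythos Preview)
Your proposal is correct and follows essentially the same route as the paper: project onto the negative eigenspace of $\nabla^2 F(y)$ in normal coordinates, apply It\^{o}'s formula to the squared projected radius, use the quadratic approximation \cref{lm:quad_approx_crit} and the Laplacian comparison to reduce (via one-dimensional SDE comparison) to the CIR process of \cref{cor:cir_density}, and then read off the $\ell|\lambda_\ell|/2$ rate from the density estimate \cref{lm:cir_density_est}. The only notable deviation is that the paper does not bound the CIR \emph{hitting time} via a Dirichlet eigenvalue but instead uses the cruder inclusion $\{\tau_A \geq t\} \subset \{\tfrac{1}{2}(r_t^{(3)})^2 \leq a^2/(2\beta)\}$ and integrates the free transition density over that sublevel set; also, after absorbing the $(K_3+2K_2)|Z_t|^2$ remainder and the curvature correction $\cot(\wt r)\geq \tfrac{1}{\wt r}-\tfrac{\wt r}{2}$, the constant drift becomes $\ell/(2\beta)$ rather than your $\ell/\beta$, which only affects the prefactor $C$.
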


\begin{proof}

For each saddle point $y \in \mathcal{S}$, 
let $v_y \in T_y M$ be the eigenvector of $\nabla^2 F(y)$ 
that corresponds to the minimum eigenvalue of $\nabla^2 F(y)$. 
And by \cref{asm:app_morse}, we have that 
$\nabla^2 F(y)[v_y, v_y] \leq - \lambda_*$ 
for all $y \in \mathcal{S}$. 
Let us fix a $y \in \mathcal{S}$ such that 
$d_g(y,x)^2 \leq \frac{a^2}{\beta}$.

Let $r(Z_t) := d_g(y, Z_t)$ be the radial process, 
and we can compute the gradient in normal coordinates 
centered at $y$ as 
\begin{equation}
	\grad r(x) 
	= \frac{ - \log_x y }{ |\log_x y| } 
	= \frac{ x }{ |x| } \,, 
\end{equation}
where $\log_x : M \to T_x M$ 
as the inverse of the standard exponential map 
$\exp_x : T_x M \to M$ 
defined outside of the cut-locus of $x$, 
and we treat $x \in \mathbb{R}^{nd}$ in coordinate. 

We will also define 
\begin{equation}
	\wt{r}(x) 
	:= | \langle v_y, x \rangle | \,, 
\end{equation}
where the calculation is done in normal coordinates, 
Observe that $\wt{r}(x)$ is the radial process 
after ``projection'' onto the submanifold (curve) 
$\wt{B} := \{ \exp(y, tv_y) \}_{|t| \leq \frac{a^2}{\beta}}$, 
therefore we have $r(x) \geq \wt{r}(x)$. 

For convenience, we will define $P:B \to \wt{B}$ 
as the ``projection'' in normal coordinates 
\begin{equation}
	P x := \langle v_y, x \rangle v_y \,, 
\end{equation}
and therefore we also have that 
\begin{equation}
	\grad \wt{r}(x) 
	= \frac{ - P \log_x y }{ | P \log_x y | } 
	= \frac{ P x }{ | P x | } \,, 
\end{equation}
where we treat $P$ as a projection in the tangent space as well. 

Then we can compute using It\^{o}'s formula to get 
\begin{equation}
\begin{aligned}
	d \left[ \frac{1}{2} \wt{r}(Z_t)^2 \right]
	=& 
		\left[ 
			\left\langle - \grad F(Z_t) \,, 
				\wt{r}(Z_t) \grad \wt{r}(Z_t) 
			\right\rangle_g 
			+ 
			\frac{1}{\beta} ( |\grad \wt{r}(Z_t) |_g^2 
				+ \wt{r}(Z_t) \Delta \wt{r}(Z_t) )
		\right] \, dt 
		\\
	&+ 
		\sqrt{ \frac{2}{\beta} } 
		\left\langle \wt{r}(Z_t) \grad \wt{r}(Z_t)  \,, 
			dW_t 
		\right\rangle_g \,, 
\end{aligned}
\end{equation}
where we observe that since $\grad \wt{r}(x)$ is a unit vector, 
then $\langle \grad \wt{r}(Z_t) \,, dW_t \rangle_g$ 
is a standard one-dimensional Brownian motion
independent of $Z_t$, 
which we can denote $dB_t$. 

Next we will attempt to approximate $F$ 
by a ``quadratic potential'', 
more precisely we will define the vector field in 
normal coordinates centered at $y$ 
\begin{equation}
	H(x) := g^{ij}(x) \de_{jk} F(0) x^k \de_i \,, 
\end{equation}
and since the Hessian is $K_3$-Lipschitz, 
we can use a standard quadratic approximation result 
from \cite[Proposition 10.55]{boumal2020introduction} to get 
\begin{equation}
	| \grad F(x) - H(x) |_g 
	\leq 
		\frac{K_3}{2}
		|x|^2 \,. 
\end{equation}

This allows us to write 
\begin{equation}
\begin{aligned}
	\left\langle - \grad F(Z_t) \,, 
 		\grad \wt{r}(Z_t) 
 	\right\rangle_g 
 	&\geq 
 		\left\langle - H(Z_t) \,, 
		 		\grad \wt{r}(Z_t) 
		 	\right\rangle_g 
		 - 
		 	| \grad F(Z_t) - H(Z_t) |_g \\
	&\geq 
		\left\langle - H(Z_t) \,, 
		 		\grad \wt{r}(Z_t) 
		 	\right\rangle_g 
		 - 
		 	\frac{K_3}{2} 
			|Z_t|^2 \,. 
\end{aligned}
\end{equation}

Using the definition of $H(x)$ in normal coordinates, 
we can further write 
\begin{equation}
	\left\langle - H(x) \,, 
 		\grad \wt{r}(x) 
 	\right\rangle_g 
 	= 
 		- \de_i \de_j F(0) x_i (P x_j) \frac{1}{| Px |} 
 	\geq 
 		\lambda_* \, \wt{r}(x) \,, 
\end{equation}
where we used the fact that $Px$ is in the same direction as $v_y$, 
and therefore an eigenvector of $\nabla^2 F(y)$. 

Using the Laplacian comparison theorem 
\citep[Corollary 3.4.4]{hsu2002stochastic}, 
we also have the lower bound 
\begin{equation}
	\Delta \wt{r}(x) \geq (\dim \wt{B} - 1) K \cot (K \wt{r}(x)) = 0 \,, 
\end{equation}
since $\dim \wt{B} = 1$, 
and $K$ is the maximum scalar curvature. 
This further implies that 
\begin{equation}
	\Delta \left[ \frac{1}{2} \wt{r}(x)^2 \right] 
	= 
		| \grad \wt{r}(x) |_g^2 
		+ \wt{r}(x) \Delta \wt{r}(x) 
	\geq 
		1 \,. 
\end{equation}

Using the one-dimensional SDE comparison 
from \cref{thm:SDE_weak_comparison}, 
we have that $\frac{1}{2} \wt{r}(Z_t)^2$ is lower bounded 
by the process $\frac{1}{2} (r^{(1)}_t)^2$, 
which is defined 
\begin{equation}
	d \left[ \frac{1}{2} (r^{(1)}_t)^2 \right]
	= \left[ 
		\lambda_* ( r^{(1)}_t )^2
		- \frac{K_3}{2} 
			|Z_t|^2 \, r^{(1)}_t
		+ \frac{1}{\beta} 
		\right] \, dt 
		+ \sqrt{ \frac{2}{\beta} } 
			r^{(1)}_t 
			\, dB_t \,. 
\end{equation}

At this point, we will further bound 
$ |Z_t|^2 \leq \frac{a^2}{\beta} $ 
since we are only concerned with $Z_t \in B$. 
Furthermore, we can use the fact that 
$r^{(1)}_t \leq \wt{r}(Z_t) \leq \frac{a^2}{\beta}$, 
and using these bounds and \cref{thm:SDE_weak_comparison} again, 
we have yet another lower bounded process
\begin{equation}
\begin{aligned}
	d \left[ \frac{1}{2} (r^{(2)}_t)^2 \right]
	&= 
		\left[ 
		\lambda_* ( r^{(2)}_t )^2 
		- \frac{K_3}{2} \frac{a^3}{\beta^{3/2}} 
		+ \frac{1}{\beta} 
		\right] \, dt 
		+ \sqrt{ \frac{2}{\beta} } 
			r^{(2)}_t 
			\, dB_t 
		\\ 
	&= \left[ 
		\lambda_* ( r^{(2)}_t )^2 
		+ \frac{1}{\beta} \left( 
			1
			- 
			\frac{K_3}{2} \frac{a^3}{\beta^{1/2}} 
		\right) 
		\right] \, dt 
		+ \sqrt{ \frac{2}{\beta} } 
			r^{(2)}_t \, dB_t 
		\,. 
\end{aligned}
\end{equation}

Using the fact that 
$\beta \geq K_3^2 a^6 $, 
we also get the bound 
\begin{equation}
	1 - \frac{K_3}{2} \frac{a^3}{\beta^{1/2}} 
	\geq 
		\frac{1}{2} 
		\,, 
\end{equation}
which gives us a further lower bounded process 
\begin{equation}
	d \left[ \frac{1}{2} (r^{(3)}_t)^2 \right] 
	= 
	\left[ 
		2 |\lambda_\ell| \frac{1}{2} ( r^{(3)}_t )^2 
		+ \frac{1}{ 2 \beta } 
		\right] \, dt 
		+ \sqrt{ \frac{2}{\beta} } 
			r^{(3)}_t \, dB_t 
		\,. 
\end{equation}

Now choosing $Y_t = \frac{1}{2} (r^{(3)}_t)^2$, 
we can use \cref{cor:cir_density} to get 
the density of this Cox-Ingersoll-Ross (CIR) process to be 
\begin{equation}
	f(x;t) = 
		2^{ (1/4 - 3) / 2 } 
		x^{ (1/4 - 1) / 2 } 
		\frac{ \lambda_* \beta }{ 
			e^{ \lambda_* t /4 }
			\sinh ( \lambda_*t ) 
			} 
		\exp\left[ 
			\frac{ \lambda_* \beta \left( 
				x e^{ -2 \lambda_* t } - \frac{1}{2}
				\right) }{
				1 - e^{ -2 \lambda_* t }
				} 
		\right] 
		I_{(1/4 - 1)}\left( 
			\frac{ \lambda_* \beta }{ \sinh( \lambda_* t ) } 
			\sqrt{ \frac{x}{2} }
		\right) \,. 
\end{equation}

And when $x \leq \frac{a^2}{2 \beta}$ we have the following bound 
via \cref{lm:cir_density_est} 
\begin{equation}
	f(x;t) \leq C e^{ -  \lambda_* t / 2 } \,, 
\end{equation}
for all $t \geq 0$ and some constant 
$C := C(a,\beta,\lambda_*) > 0$ independent of $t$. 

At this point, we will translate the escape time problem 
to studying the density 
\begin{equation}
\begin{aligned}
	\mathbb{P}[ \tau_{B^c} \geq t | Z_0 = x ] 
	&= 
		\mathbb{P}\left[ \left. 
		\sup_{s \in [0,t]} \frac{1}{2} r(Z_s)^2 \leq \frac{a^2}{2\beta}
		\right| Z_0 = x \right] 
		& 
		\\ 
	&\leq 
		\mathbb{P} \left[ \left. 
			\frac{1}{2} r(Z_t)^2 \leq \frac{a^2}{2\beta} 
			\right| Z_0 = x \right] 
			& 
			\text{ including the escaped and returned }
			\\ 
	&\leq 
		\mathbb{P} \left[ \left. 
			\frac{1}{2} (r^{(3)}_t)^2 \leq \frac{a^2}{2\beta} 
			\right| r^{(3)}_t = \wt{r}(x) \right] 
			& 
			\text{ using comparison theorems }
			\\ 
	&= 
		\int_0^{\frac{a^2}{2\beta}} 
			f(y;t)
		\, dy 
		& 
		\text{ using \cref{cor:cir_density} }
		\\ 
	&\leq 
		\frac{a^2}{2 \beta} C e^{ - \lambda_* t / 2 } 
		& 
		\text{ using \cref{lm:cir_density_est} } \,, 
\end{aligned}
\end{equation}
which is the desired result. 
Furthermore, 
\cref{prop:escape_lyapunov} implies 
$W(x)$ is a Lyapunov function on $B$ 
with $\theta = \frac{\lambda_*}{4}$.

\end{proof}

\subsection{Lyapunov Condition Away from Critical Points}

Starting from this section, 
we will establish the Lyapunov condition in different regions. 
We will first study the region away from all critical points. 

\begin{lemma}
[Morse Gradient Estimate]
\label{lm:morse_grad_est}
Suppose $F$ satisfies 
\cref{asm:app_c3,asm:app_morse}. 
Let $\mathcal{C} := \mathcal{S} \cup \mathcal{X}$ 
be the set of critical points of $F$. 
Then there exists a constant $0 < C_F \leq 1$ such that 
\begin{equation}
	| \grad F(x) |_g \geq C_F \, d_g(x, \mathcal{C}) \,, 
\end{equation}
where $d_g(x, \mathcal{C})$ denotes 
the smallest geodesic distance between $x$ 
and a critical point $y \in \mathcal{C}$. 

In particular, we can write the constant explicitly as 
\begin{equation}
	C_F := \min \left( \, 
			1 \,, \, 
			\frac{ \lambda_* }{2} \,, \, 
			\inf_{ x : d_g(x, \mathcal{C}) > \frac{\lambda_*}{K_3} } 
				\frac{ |\grad F|_g }{ d_g(x, \mathcal{C}) } 
				\, 
			\right) \,, 
\end{equation}
where $\lambda^*$ is from assumption \cref{asm:app_morse}. 

\end{lemma}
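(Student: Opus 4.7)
The plan is to split $M$ into a neighborhood of the critical set, where Morseness plus a Taylor expansion furnishes the bound with constant $\lambda_*/2$, and a complementary compact region where continuity and compactness of $M$ furnish a positive infimum.

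Fix $x \in M$ and pick $y \in \mathcal{C}$ realising $d_g(x,\mathcal{C}) = d_g(x,y)$; let $\gamma:[0,1] \to M$ be a constant-speed minimizing geodesic with $\gamma(0) = y$ and $\gamma(1) = x$, so that $|\dot\gamma|_g \equiv d_g(x,y)$. Since $\grad F(y) = 0$, parallel transporting back to $T_y M$ along $\gamma$ yields
\begin{equation*}
P_{x,y}\, \grad F(x) \;=\; \int_0^1 P_{\gamma(s),y}\, \nabla^2 F(\gamma(s))[\dot\gamma(s)] \, ds.
\end{equation*}
Writing $\widetilde T_s := P_{\gamma(s),y}\, \nabla^2 F(\gamma(s))\, P_{y,\gamma(s)}$ and using that $\dot\gamma$ is parallel, the integrand equals $\widetilde T_s[\dot\gamma(0)]$; the $K_3$-Lipschitzness of $\nabla^2 F$ gives $\|\widetilde T_s - \nabla^2 F(y)\|_{\mathrm{op}} \leq K_3 s\, d_g(x,y)$, so
\begin{equation*}
P_{x,y}\, \grad F(x) = \nabla^2 F(y)[\dot\gamma(0)] + E, \qquad |E|_g \leq \tfrac{K_3}{2}\, d_g(x,y)^2.
\end{equation*}
By Morseness, $\nabla^2 F(y)$ is symmetric and invertible with smallest singular value $\geq \lambda_*$, hence $|\nabla^2 F(y)[\dot\gamma(0)]|_g \geq \lambda_*\, d_g(x,y)$; since parallel transport is an isometry,
\begin{equation*}
|\grad F(x)|_g \;\geq\; \lambda_*\, d_g(x,\mathcal{C}) - \tfrac{K_3}{2}\, d_g(x,\mathcal{C})^2.
\end{equation*}

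When $d_g(x,\mathcal{C}) \leq \lambda_*/(2K_3)$, the quadratic term is at most $\tfrac{\lambda_*}{4} d_g(x,\mathcal{C})$ and the display absorbs to $|\grad F(x)|_g \geq \tfrac{\lambda_*}{2}\, d_g(x,\mathcal{C})$, matching the first entry in the definition of $C_F$. For the complement, the closed set $S := \{ x \in M : d_g(x,\mathcal{C}) \geq \lambda_*/(2K_3)\}$ is a closed subset of the compact manifold $M$, hence compact, and is disjoint from $\mathcal{C}$; the continuous map $x \mapsto |\grad F(x)|_g / d_g(x,\mathcal{C})$ is then strictly positive on $S$ and attains a positive minimum there, which by continuity agrees with the infimum over the open set $\{ x : d_g(x,\mathcal{C}) > \lambda_*/(2K_3)\}$ appearing in the statement. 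Setting $C_F$ to be the minimum of $\lambda_*/2$ and this infimum yields the claimed inequality on all of $M$.

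The only real subtlety is the Taylor step, where one must verify that the leading term is exactly $\nabla^2 F(y)[\dot\gamma(0)]$ without stray curvature corrections: parallel transporting $\nabla^2 F$ back to $T_y M$ along $\gamma$ trivialises this, because $\dot\gamma$ is parallel and the Riemannian Hessian is pointwise a symmetric endomorphism of the tangent space, so the comparison between $\widetilde T_s$ and $\nabla^2 F(y)$ is genuinely just the Lipschitz estimate on $\nabla^2 F$. Note that \cref{asm:app_unique_min} is not actually needed here; Morseness of $F$ together with compactness of $M$ already suffices.
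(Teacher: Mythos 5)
Your proof is correct, and it follows the paper's overall skeleton --- the same split of $M$ at radius $\lambda_*/(2K_3)$ around $\mathcal{C}$, the same compactness argument for the far region, and the same constant $C_F$ --- but the execution of the local estimate is genuinely different. The paper propagates eigenvalue information along the geodesic: it uses $1$-Lipschitzness of eigenvalues together with the $K_3$ bound to show every eigenvalue of $\nabla^2 F(\gamma(t))$ stays within $\lambda_*/2$ of its value at $y$, and then lower-bounds $|D_t \grad F(\gamma(t))|_g \geq \tfrac{\lambda_*}{2}\, d_g(x,y)$ pointwise, reading off the conclusion from there. You instead integrate the covariant derivative of $\grad F$ along $\gamma$ with parallel transport, compare the transported Hessians $\widetilde T_s$ to the fixed symmetric invertible operator $\nabla^2 F(y)$, and bound the accumulated error by $\tfrac{K_3}{2}\, d_g(x,y)^2$. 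Your route is in fact the more careful one: a pointwise lower bound on $|D_t \grad F|_g$ alone does not control $|\grad F(x)|_g$ at the endpoint, since the direction of the derivative could rotate along the path, whereas comparing everything to the constant operator $\nabla^2 F(y)$ and integrating makes the alignment explicit; this is precisely the step the paper leaves implicit, so your argument patches a small leap. Two minor remarks: the claim that the minimum over the closed set $S$ ``agrees with'' the infimum over the open region $\{x : d_g(x,\mathcal{C}) > \lambda_*/(2K_3)\}$ is a slight over-claim (the infimum over the smaller open set can only be larger or equal), but all that is needed is positivity, which follows because that open set is contained in $S$; and your observation that \cref{asm:app_unique_min} is not actually used is accurate --- the paper lists it as a hypothesis, but its proof only invokes compactness of $M$ and the Morse property.
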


\begin{proof}

Intuitively, if we have a quadratic function 
$F(x) = \frac{1}{2} x^\top A x$, then the gradient is zero at $x=0$. 
For a full rank Hessian $A$, the magnitude of the gradient is 
$|\grad F(x)| = |Ax| \geq \lambda_* |x|$, 
where $\lambda_* > 0$ is the gap between zero and the nearest eigenvalue. 
We basically extend this linear lower bound to our setting in two steps. 

First, observe that since 
\begin{equation}
	\inf_{ x : d(x, \mathcal{C}) > \frac{\lambda_*}{K_3} } 
		\frac{ |\grad F|_g }{ d(x, \mathcal{C}) } 
	> 0 \,, 
\end{equation}
due to the fact that $M$ is compact 
and $\grad F$ is non-zero away from critical points. 
Therefore it is sufficient to consider only 
$x \in M$ such that 
$d(x,\mathcal{C}) \leq \frac{\lambda_*}{K_3}$. 

Let $y \in \mathcal{C}$ be a critical point such that 
$d(x,y) = d(x, \mathcal{C})$. 
At this point we consider the normal coordinates centered at $y$, 
and we use the quadratic approximation for Hessian 
from \cite[Proposition 10.55]{boumal2020introduction}, 
which says if $\nabla^2 F$ is Lipschitz then 
\begin{equation}
	| P_{x,y} \grad F(x) - \grad F(y) - \nabla^2 F(y)^\sharp[ \log_y x ] |_g 
	\leq \frac{K_3}{2} d_g(x,y)^2 \,. 
\end{equation}

In our case, we have that $\grad F(y) = 0$ and 
$\nabla^2 F(y)^\sharp[\log_y x] = \de_{ij} F(0) x^j \de_i$ 
which is the same as the Euclidean Hessian. 
Since the direction $\log_y x = x$ is orthogonal to the kernel of 
$\nabla^2 F(y)^\sharp$ 
(see \cite[Chapter 9 Problem 1]{do1992riemannian}), 
we can let $(\lambda_i, v_i)_{i\in[nd]}$ 
be the eigenvalues and (orthonormal) eigenvectors of the Hessian matrix $\de_{ij} F(0)$, 
and write 
\begin{equation}
	x = \sum_{i=1}^{nd} c_i v_i \,, \quad 
	\text{ where } c_i = 0 
	\text{ for all } i \text{ such that } \lambda_i = 0 \,. 
\end{equation}

Using this decomposition we can write 
\begin{equation}
	|\nabla^2 F(y)^\sharp[\log_y x]|_g^2 
	= 
		\left| \nabla^2 F(y)^\sharp \sum_{i=1}^{nd} c_i v_i 
		\right|^2 
	= 
		\left| \sum_{i=1}^{nd} c_i \lambda_i v_i \right|^2 
	= 
		\sum_{i=1}^{nd} c_i^2 \lambda_i^2 
	\geq 
		\lambda_*^2 
		\sum_{i=1}^{nd} c_i^2 
	= \lambda_*^2 d_g(x,y) \,, 
\end{equation}
and combining with the quadratic approximation we get 
\begin{equation}
\begin{aligned}
	|\grad F(x)|_g 
	&= 
		| P_{x,y} \grad F(x) |_g \\ 
	&\geq 
		|\nabla^2 F(y)^\sharp[ \log_y x ]|_g 
		- |P_{x,y} \grad F(x) - \nabla^2 F(y)^\sharp[ \log_y x ]|_g 
		\\ 
	&\geq 
		\lambda_* d_g(x,y) 
		- \frac{K_3}{2} d_g(x,y)^2 
		\\ 
	&\geq 
		\frac{\lambda_*}{2} d_g(x,y) \,, 
		\quad 
		\text{ whenever } 
		d_g(x,y) \leq \frac{\lambda_*}{K_3} \,. 
\end{aligned}
\end{equation}

At this point, we can recover the desired $C_F > 0$ 
by taking the minimum of the two constants.

\end{proof}

\begin{lemma}
[Lyapunov Condition Away From Critical Points]
\label{lm:lyapunov_away_from_crit}
Suppose $F$ satisfies 
\cref{asm:app_c3,asm:app_morse,asm:app_unique_min}. 
Let $\mathcal{C}$ be the set of critical points. 
Then for all choices of $a, \beta > 0$ such that 
\begin{equation}
\begin{aligned}
	a^2 
	&\geq 
		\frac{ 24 K_2 nd }{ C_F^2 } 
		\,, 
\end{aligned}
\end{equation}
we have the following Lyapunov condition 
\begin{equation}
	\frac{\Delta F(x)}{2} 
	- \frac{\beta}{4} | \grad F(x) |_g^2 
	\leq 
		- K_2 nd 
		\,, \quad 
	\forall \, x \in M : d(x, \mathcal{C})^2 
		\geq \frac{a^2}{4\beta} \,. 
\end{equation}
\end{lemma}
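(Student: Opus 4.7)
The plan is to directly combine the Morse gradient estimate from \cref{lm:morse_grad_est} with a uniform bound on the Laplacian of $F$ coming from the Lipschitz assumption on $\grad F$. The inequality we want decouples naturally: a $+K_2 nd/2$ contribution from $\Delta F/2$ is to be absorbed by a much larger negative contribution $-\frac{\beta}{4}|\grad F|_g^2$ on the region where $d_g(x,\mathcal{C})^2 \geq a^2/\beta$.

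First I would bound the Laplacian. Since $\grad F$ is $K_2$-Lipschitz on $M$, we have $\nabla^2 F(x)[v,v] \leq K_2 |v|_g^2$ for every tangent vector, so the trace satisfies
\begin{equation*}
	\Delta F(x) = \Tr \nabla^2 F(x) \leq K_2 \, \dim M = K_2 nd,
\end{equation*}
uniformly in $x \in M$. Hence $\frac{\Delta F(x)}{2} \leq \frac{K_2 nd}{2}$.

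Next I would use \cref{lm:morse_grad_est} to lower bound the gradient norm. For any $x$ with $d_g(x,\mathcal{C})^2 \geq a^2/\beta$,
\begin{equation*}
	|\grad F(x)|_g^2 \geq C_F^2 \, d_g(x,\mathcal{C})^2 \geq \frac{C_F^2 \, a^2}{\beta}.
\end{equation*}
Therefore
\begin{equation*}
	\frac{\beta}{4} |\grad F(x)|_g^2 \geq \frac{C_F^2 \, a^2}{4} \geq \frac{6 K_2 nd}{4} = \frac{3 K_2 nd}{2},
\end{equation*}
where the last inequality uses the hypothesis $a^2 \geq 6 K_2 nd / C_F^2$.

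Combining these two estimates yields
\begin{equation*}
	\frac{\Delta F(x)}{2} - \frac{\beta}{4} |\grad F(x)|_g^2 \leq \frac{K_2 nd}{2} - \frac{3 K_2 nd}{2} = - K_2 nd,
\end{equation*}
which is exactly the stated Lyapunov condition. There is no real obstacle here; the only mild subtlety is confirming that the Lipschitz condition on $\grad F$ does translate into the trace bound $\Delta F \leq K_2 nd$ on a Riemannian manifold (which it does, since $\Delta F$ is the trace of the Hessian with respect to any $g$-orthonormal frame, and each diagonal entry is at most $K_2$). The role of the condition $a^2 \geq 6 K_2 nd/C_F^2$ is simply to make the gradient term dominate the Laplacian term by a factor of three, leaving a clean $-K_2 nd$ slack that will be used downstream when assembling the full Lyapunov function.
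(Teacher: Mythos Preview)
Your proof is correct and follows essentially the same approach as the paper's own argument: bound $\Delta F \leq K_2 nd$ from the $K_2$-Lipschitz assumption on $\grad F$, invoke \cref{lm:morse_grad_est} to get $|\grad F|_g^2 \geq C_F^2 a^2/\beta$ on the region $d_g(x,\mathcal{C})^2 \geq a^2/\beta$, and then use $a^2 \geq 6K_2 nd/C_F^2$ to conclude. If anything, your write-up is slightly more explicit than the paper's about why the trace bound $\Delta F \leq K_2 nd$ holds.
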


\begin{proof}

We begin by directly computing the Lyapunov condition 
using \cref{lm:morse_grad_est}
\begin{equation}
	\frac{\Delta F(x)}{2} 
	- \frac{\beta}{4} | \grad F(x) |_g^2 
	\leq 
		\frac{ K_2 nd }{ 2 } 
		- \frac{\beta}{4} C_F^2 d(x, \mathcal{C})^2 \,, 
\end{equation}
and using the choice 
$d(x, \mathcal{C})^2 \geq \frac{a^2}{4\beta}$, 
we can get 
\begin{equation}
	\frac{ K_2 nd }{ 2 } 
		- \frac{\beta}{4} C_F^2 d(x, \mathcal{C})^2
	\leq 
		\frac{ K_2 nd }{ 2 } 
		- \frac{\beta}{4} C_F^2 \frac{a^2}{4\beta} \,. 
\end{equation}

Now we will use the choice that 
\begin{equation}
	a^2 
	\geq 
		\frac{ 24 K_2 nd }{ C_F^2 } \,, 
\end{equation}
which gives us the desired result of 
\begin{equation}
	\frac{ K_2 nd }{ 2 } 
		- \frac{\beta}{4} C_F^2 \frac{a^2}{4\beta} 
	\leq 
		- K_2 nd \,. 
\end{equation}

\end{proof}

\subsection{Poincar\'{e} Inequality Near Global Minimum}

Even if the global minima set itself $\mathcal{X}$ is convex, 
the neighbourhood around $\mathcal{X}$ defined by 
$U := \left\{ x \in M \, | \, d_g(x,\mathcal{X})^2 \leq \frac{a^2}{\beta} \right\} $
may not be convex on positively curved manifolds. 
For a simple counterexample, we can consider any greater circle on $S^2$, which forms a convex set, however, any $\epsilon>0$ expansion of it becomes immediately nonconvex. 

As a result, we will need to adapt the classical Bakry--Emery criterion on convex sets to the slightly non-convex manifolds. 
In particular, we will mostly base our results on \cite[Chapter 3]{wang2014analysis} and \cite{cheng2017functional} where we need require some control on the second fundamental form of $\de U$ (with respect to the inward pointing normal). 
Fortunately, distance functions $r:M \to \mathbb{R}_+$ generate a very simple scalar second fundamental form where $\sff = \nabla^2 r$ is the Hessian of the distance function \cite[Section 3.2.2]{petersen2006riemannian}. 
In our case, the distance function is naturally $r(x) := d_g(x, \mathcal{X})$, and it remains to control the eigenvalues of the Hessian $\nabla^2 r$.

\begin{lemma}
[Bounding the Second Fundamental Form]
\label{lm:sff_bound}
Under \cref{asm:app_unique_min,asm:app_morse} and $\frac{a}{\sqrt{b}} \leq \frac{\pi}{4}$, we have that either 
$\de U$ is convex, or 
\begin{equation}
	|\sff|_g = |\nabla^2 r|_g \leq \frac{2a}{\sqrt{\beta}} \,. 
\end{equation}
\end{lemma}

\begin{proof}

The first case follows from if $\mathcal{X}$ is a point, which then small geodesic balls around a pole do not cross the greater equator. 
Therefore $\frac{\pi}{4}$ is sufficient to prevent this crossing. 

Therefore we consider the second case under \cref{asm:app_morse}, where the boundaries of $\mathcal{X}$ are all given by geodesics, 
therefore the second fundamental form of $\mathcal{X}$ is identically zero. 
We observe that the second fundamental form can be written as the shape operator $s:\mathfrak{X}(\de U) \to \mathfrak{X}(\de U)$ by 
\begin{equation}
	sX = \pi_\top \left( \nabla_X \grad r \right) \,, 
\end{equation}
where $\pi_\top$ is the tangential projection from $TU$ to $T\de U$. 
Therefore we can let $\gamma_t$ be a unit speed geodesic curve in $\de U$ (not necessarily a geodesic in $M$), and compute the parallel transport of a unit vector $V_t$, more precisely if $\dot \gamma_t |_{t=0} = X$ then 
\begin{equation}
	sX = \pi_\top (D_t V_t) |_{t=0} \,, 
\end{equation}
where $V_0 = \grad r$ and $D_t = \nabla_{\dot \gamma_t}$. 

We will first consider a simple case of $n=1,d=2$ where $\mathcal{X}$ is exactly the greater circle. 

\vspace{0.2cm}
\noindent
\textbf{Simple Case: $n=1,d=2$.} 
In this case, we can define without loss of generality $V_0 = [\cos \epsilon, \sin \epsilon, 0]^\top$ where $\epsilon = \frac{a}{\sqrt{\beta}}$, and that 
\begin{equation}
	V_t = [ \cos \epsilon, \sin \epsilon \cos \theta_t , \sin \epsilon \sin \theta_t ]^\top \,, 
\end{equation}
where $\theta_t = \frac{t}{\cos \epsilon}$ is the angle parameterization of a unit speed geodesic on the circle with radius $\cos \epsilon$. 
Therefore we can calculate that 
\begin{equation}
	|D_t V_t|_g^2 
	= \tan^2 \epsilon ( \sin^2 \theta_t + \cos^2 \theta_t ) 
	= \tan^2 \epsilon 
	\leq 4 \epsilon^2 \,, 
\end{equation}
where we used the fact that $\epsilon = \frac{a}{\sqrt{\beta}} \leq \frac{\pi}{4}$ to get the bound $\tan \epsilon \leq 2 \epsilon$ (since $\tan \frac{\pi}{4} = 1 < 2 \frac{\pi}{4} \approx 1.57$ and $\tan$ is convex increasing on $[0,\pi/4]$). 
Consequently, we can conclude that $|\sff|_g \leq 2 \epsilon = \frac{2a}{\sqrt{\beta}}$.

\vspace{0.2cm}
\noindent
\textbf{Higher Dimensional Sphere: $n=1$, general $d$.} 
Without loss of generality, we can rotate the sphere such that we have 
\begin{equation}
	V_t = [\cos \epsilon, \sin \epsilon \cos \theta_t, \sin \epsilon \sin \theta_t, 0, \cdots, 0]^\top\ ,, 
\end{equation}
where $\theta_t$ is the angle parameterizing the unit speed geodesic on $\de U$, which must also be $\theta_t = \frac{t}{\cos \epsilon}$. 
Consequently the rest of the calculations follow similarly by padding zeros and we again get $|\sff|_g \leq \frac{2a}{\sqrt{\beta}}$ as desired. 

\vspace{0.2cm}
\noindent
\textbf{Product of Spheres: General $n,d$.} 
In this case, for any point on $\de U$, we let $\epsilon_i := d_g(x^{(i)}, \mathcal{X}^{(i)})$ be the distance from the minimum on the $i$-th sphere. 
This implies that we have 
\begin{equation}
	\frac{a^2}{\beta} = \sum_{i=1}^n \epsilon_i^2 \,, 
\end{equation}
and we have that on each sphere $|\sff^{(i)}|_g \leq 2 \epsilon_i$, which gives us 
\begin{equation}
	|\sff|_g \leq 2 \max_i \epsilon_i \leq \frac{2a}{\sqrt{\beta}} \,, 
\end{equation}
which is the desired result. 

\end{proof}

We will recover a Poincar\'e inequality on $U$ using the following two results.

\begin{proposition}
[Corollary 3.5.2 of \cite{wang2014analysis}]
\label{prop:wang_poincare_nonconvex}
If $\phi \in C^2_b(U)$ is such that $\phi \geq 1$ and $\sff \geq -N \log \phi$, then the first non-zero Neumann eigenvalue of $L$ (the inverse Poincar\'e constant) satisfies 
\begin{equation}
	\lambda_1 
	\geq 
	\frac{ 1 }{\|\phi\|_\infty^4} 
	\left( \frac{\pi^2}{D^2} + \frac{K_\phi}{2} 
	\right) \,, 
\end{equation}
where $D = \text{diam}(U)$ and 
\begin{equation}
	K_\phi := \inf_U \left\{ 
		\phi^2 K + \frac{1}{2} L \phi^2 - |\grad \phi|_g^2 \, |\grad F|^2 
		- (nd-2) |\grad \phi|_g 
	\right\} \,. 
\end{equation}
\end{proposition}

\begin{proposition}
[Theorem 3.2 of \cite{cheng2017functional}]
\label{prop:cheng_constants}
Suppose $-\sigma \leq |\sff|_g \leq \theta$, $\rho_\de = d_g(x, \de U)$ be smooth on $\de_{r_0} M = \{ x \in U | \rho_\de(x) \leq r_0 \}$ for some constant $r_0$, $|\grad F|_g \leq \delta$, the sectional curvature $\text{sect} \leq k$, and $\nabla^2 F + \Ric_g \geq K g$. 
We will also define 
\begin{equation}
	r_1 = \min\left( r_0, \frac{1}{\sqrt{k}} \arcsin \sqrt{\frac{k}{k+\theta^2}} \right) \,, 
\end{equation}
then we can replace $K_\phi$ with 
\begin{equation}
	K_1 = K - \sigma \left(\delta + \frac{nd}{r_1} \right) - \sigma^2 \,, 
\end{equation}
and replace $\|\phi\|_\infty$ with $e^{\frac{1}{2} \sigma nd r_1}$. 
\end{proposition}

Now we return to deriving the Poincar\'e inequality on $U$. 
Here we define $\nu|_{U}$ 
as the probability measure $\nu$ restricted 
to the set $U$, more precisely 
\begin{equation}
\label{eq:nu_restriction_ua}
	\nu|_{U}(x) 
	:= 
		\frac{ \nu(x) }{ 
			\int_{U} \, \nu(y) \, dy 
		} \mathds{1}_{U}(x) \,. 
\end{equation}

\begin{lemma}
[Poincar\'e Inequality on $U$]
\label{lm:poincare_near_min}
Suppose $F$ satisfies 
\cref{asm:app_c3,asm:app_morse,asm:app_unique_min} 
and $d \geq 2$. 
Then for all choices of $a,\beta > 0$ such that 
\begin{equation}
\begin{aligned}
	a^2 
	&\geq 
		\frac{6 K_2 nd}{ C_F^2 } 
		\,, \quad 
	\beta 
	\geq
		a^2 
		\max\left( 
			\, \frac{ 4 K_3^2 }{ \lambda_{*}^2 } \,, 
			\, K_3^2 a^4 \, 
		 \right) \,, 
\end{aligned}
\end{equation}
where $\lambda_*$ is defined in \cref{asm:app_morse}, 
we have that the Markov triple $(U ,\nu|_{U},\Gamma)$, 
with $\nu|_{U}$ defined in \cref{eq:nu_restriction_ua}, 
satisfies a Poincar\'{e} inequality with constant 
\begin{equation}
	\kappa_{U} 
	= 
		\begin{cases}
		 	\frac{\lambda_*}{2} \,, 
		 	& \text{ if the global minimum $\mathcal{X}$ is a unique point,} \\ 
		 	\frac{K_*(d-1)}{4 \beta} \,, 
		 	& \text{ otherwise.}
		\end{cases} \,, 
\end{equation} 
where $K_* = \exp \left( \frac{ -2 C_F^2}{ K_2 K_3 } \right) $. 
\end{lemma}

\begin{proof}

We will first consider the unique global minimum case, 
where $\mathcal{X} = \{x^*\}$, 
then we can write 
$U := \{ x \in M | d_g(x, x^*)^2 \leq \frac{a^2}{\beta} \}$. 
We start by observing that eigenvalues $\lambda_i$ is a $1$-Lipschitz function of the Hessian $\nabla^2 F$, which is in turn a $K_3$-Lipschitz function, therefore we have that 
\begin{equation}
	| \lambda_i (\nabla^2 F(x)) - \lambda_i (\nabla^2 F(x^*)) | 
	\leq K_3 d_g(x, x^*) \,, 
\end{equation}
which means we can write 
\begin{equation}
	\lambda_{\min} (\nabla^2 F(x)) 
	\geq 
		\lambda_{\min} (\nabla^2 F(x^*)) 
		- K_3 d_g(x, x^*) 
	\geq 
		\frac{\lambda_*}{2} \,, 
		\quad
		\text{ whenever } 
		d_g(x, x^*) \leq \frac{\lambda_*}{2K_3} \,, 
\end{equation}
where we also observe this is satisfied since 
$\frac{a^2}{\beta} \leq \frac{\lambda_*^2}{4 K_3^2}$. 

At the same time, we can also consider the case 
when the global minimum is not unique. 
Here we will directly compute the constants in \cref{prop:wang_poincare_nonconvex,prop:cheng_constants} where 
\begin{equation}
	\kappa_U 
	= \lambda_1^{-1} 
	= 
	\|\phi\|_\infty^4 
	\left( \frac{\pi^2}{D^2} + \frac{K_\phi}{2} \right)^{-1} \,. 
\end{equation}

Starting with $\|\phi\|_\infty^4 = \exp( 2 \sigma nd r_1 )$, where we observe that using the fact that $k=1$ on a sphere and $\sigma = \theta = \frac{2a}{\sqrt{\beta}}$ from \cref{lm:sff_bound} to get that 
\begin{equation}
	r_1 = \arcsin \frac{1}{\sqrt{1 + \frac{4a^2}{\beta}}} \,. 
\end{equation}
And since that $0 < \frac{2a}{\sqrt{\beta}} \leq 1$ we must have that $r_1 \in [\frac{\pi}{4}, \frac{\pi}{2}]$. 
Using the conditions on $a,\beta$ we have that 
\begin{equation}
	\frac{2a}{\sqrt{\beta}} 
	\leq 
		2 \max \left( \frac{2K_3}{\lambda_*}, K_3 a^2 \right)^{-1} 
	= 
		\min \left( \frac{\lambda_*}{K_3} \,, \frac{C_F^2}{ 3K_2 K_3 nd } \right) \,, 
\end{equation}
which further implies that we have 
\begin{equation}
	\| \phi \|_\infty^4 
	\leq 
	\exp \left( \pi \min \left( \frac{\lambda_*}{K_3} \,, \frac{C_F^2}{ 3K_2 K_3 nd } \right) \right) 
	\leq 
	\exp \left( \frac{ 2 C_F^2}{ K_2 K_3 } \right) 
	= K_*^{-1} \,.
\end{equation}

Next we observe that $D^2 \leq \pi^2 n$ and therefore it's sufficient to bound $K_\phi \leq \frac{1}{2n}$. 
Now we write out the three terms in $K_\phi$ 
\begin{equation}
	K_\phi = K - \sigma \left( \delta + \frac{nd}{r_1} \right) - \sigma^2 \,, 
\end{equation}
where we can plug in $K=d-1, \sigma = \frac{2a}{\sqrt{\beta}}, \delta = K_1, r_1 = \frac{\pi}{4}$ for an upper bound. 
Observe it's sufficient to bound each of the three terms to be less than $\frac{1}{3}$ which we observe 
\begin{equation}
\begin{aligned}
	\frac{a}{\sqrt{\beta}} K_1 
	&\leq \frac{C_F^2}{6 K_1 K_2 K_3 nd} \leq \frac{1}{3} \,, \\ 
	\frac{a}{\sqrt{\beta}} \frac{nd\pi}{2} 
	&\leq \frac{2C_F^2}{6 K_2 K_3} 
	\leq \frac{1}{3} \,, \\ 
	\frac{4 a^2}{\beta} 
	&\leq \frac{4 C_F^4}{ 36 K_2^2 K_3^3 n^2 d^2} 
	\leq \frac{1}{3} \,. 
\end{aligned}
\end{equation}

Finally to adjust for the temperature change, we need to add a factor of $\beta$ to get 
\begin{equation}
	\kappa_U = \frac{K_*}{\beta} \left( \frac{1}{n} + \frac{1}{2} \left( 
	d-1 - 1
	\right) \right) 
	\geq \frac{K_*(d-1)}{4\beta} \,, 
\end{equation}
where we used $d\geq 2$. 

\end{proof}

\subsection{Poincar\'{e} Inequality}

In this section, we will establish a Poincar\'{e} inequality 
for the Markov triple $(M,\nu,\Gamma)$ through the Lyapunov approach. 
We will define the following partition of the manifold $M$ with $r^2 = \frac{a^2}{4\beta}$ 
\begin{equation}
\begin{aligned}
	B_r &:= 
		\left\{ x \in M \, | \, d_g(x,\mathcal{S})^2 \leq r \right\} 
		\,, \\ 
	B_{2r} 
	&:= \left\{ x \in M \, | \, d_g(x, \mathcal{S}) < 2 r \right\} 
		\,, \\ 
	U &:= \left\{ x \in M \, | \, d_g(x,\mathcal{X})^2 \leq 2r \right\} 
		\,, \\ 
	A &:= M \setminus (U \cup B_{r}) \,, 
\end{aligned}
\end{equation}
where $\mathcal{X}$ is the set of global minima, 
and $\mathcal{S}$ is the set of critical points that are not global minima. 
Observe here that $B_{2r}$ overlaps with $A$ by exactly the shell $B_{2r} \setminus B_r$.

We will next establish the Poincar\'{e} inequality on the entire manifold $M$ 
using a combination of the Lyapunov methods in \cite{bakry2008simple,cattiaux2013poincare}, which are made more precise in \cref{thm:lyapunov_poincare_bakry,prop:poincare_adapted}. 
Before we start, we will briefly summarize the results of the previous sections up to this point as follows 
\begin{equation}
\label{eq:lyapunov_poincare_sets}
\begin{aligned}
    &LW_2 = -\theta W_2 \,, \quad x \in B_{2r} \,, \\ 
    &LW_1 \leq -\lambda W_1 + b \mathds{1}_U \,, \quad x \in B_r^c \,, \\ 
    &(U, \nu|_U, \Gamma|_U) \in \text{PI}(\kappa_U) \,, 
\end{aligned}
\end{equation}
where $\theta = \frac{\lambda_*}{4}, \lambda = K_2 nd$, and $b = \frac{3}{2} K_2 nd$ using \cref{asm:app_c3}. 
Furthermore 
\begin{equation}
	\kappa_{U} 
	= 
		\begin{cases}
		 	\frac{\lambda_*}{2} \,, 
		 	& \text{ if the global minimum $\mathcal{X}$ is a unique point,} \\ 
		 	\frac{K_*(d-1)}{4 \beta} \,, 
		 	& \text{ otherwise.}
		\end{cases} \,, 
\end{equation} 
where $K_* = \exp \left( \frac{ -2 C_F^2}{ K_2 K_3 } \right) $.

Next we will derive a more generic Poincar\'e inequality under these conditions. 

\begin{proposition}
[Generic Poincar\'e Inequality on $M$]
\label{prop:poincare_lyapunov_generic}
Suppose we have the Lyapunov and Poincar\'e conditions in \cref{eq:lyapunov_poincare_sets}, and we choose 
\begin{equation}
    a^2 \geq 32 \left( \frac{1}{\lambda} + \frac{1}{\theta} \right) \,,  
\end{equation}
then $(M,\nu,\Gamma)$ satisfies a Poincar\'e inequality with constant 
\begin{equation}
    \kappa^{-1} = 8 \left( \frac{1}{\lambda} + \frac{1}{\theta} + \frac{b}{\lambda \kappa_U} \right) \,. 
\end{equation}
\end{proposition}

\begin{proof}

Firstly, we note that 
\begin{equation}
    \Var_\nu(\wt{f}) \leq \int (\wt{f} + c)^2 \, d\nu \,, 
\end{equation}
for all constant offsets $c \in \mathbb{R}$. 
Therefore, without loss of generality we will choose the offset $f = \wt{f} + c$ such that 
\begin{equation}
\label{eq:offset_u}
    \int_U f \,d\nu = 0 \,, 
\end{equation}
and we choose to bound $\Var_\nu(f)$ with $\int f^2 \, d\nu$.

We start by decomposing the function into two components using a smooth partition function $\chi(x) = \psi(d_g(x, \mathcal{S}))$, where $\psi$ is a smooth step function constructed in \cref{lm:smooth_partition} with parameters $r_1 = r, r_2 = 2r$ 
\begin{equation}
\begin{aligned}
    \int f^2 \, d\nu 
    &= 
    \int (f(1-\chi) + f\chi)^2 \, d\nu \\ 
    &\leq 
    2\int_{B_{2r}} f^2 (1-\chi)^2 \, d\nu 
    + 2\int_{B_r^c} f^2 \chi^2 \, d\nu \,. 
\end{aligned}
\end{equation}

For the first integral, we will use the Lyapunov function $W_2$ and the fact that $1-\chi = 0$ on $\de B_{2r}$ to get 
\begin{equation}
	\int_{\de B_{2r}} \frac{f^2 (1-\chi)^2}{W_2} \nu \langle \grad W_2, dn \rangle_g = 0 \,, 
\end{equation}
where $n$ is the outward pointing normal. 
Therefore we can using Green's identity and drop the boundary integral to write 
\begin{equation}
\begin{aligned}
    2\int_{B_{2r}} f^2 (1-\chi)^2 \, d\nu 
    &= 
    \frac{2}{\theta} \int_{B_{2r}} \frac{-LW_2}{W_2} f^2 (1-\chi)^2 \, d\nu \\ 
    &= 
    \frac{2}{\theta} \int _{B_{2r}} \Gamma( f(1-\chi) ) \, d\nu \,, 
\end{aligned}
\end{equation}
where we write $\Gamma(f) = \Gamma(f,f) = \frac{1}{\beta} |\grad f|_g^2$. 

Using the inequality $\Gamma(fg, fg) \leq 2( f^2 \Gamma(g, g) + g^2 \Gamma(f,f) )$ we can also write 
\begin{equation}
    \frac{2}{\theta} \int _{B_{2r}} \Gamma( f(1-\chi) ) \, d\nu
    \leq 
    \frac{4}{\theta} \int \Gamma(f) \, d\nu 
    + \frac{4 \|\Gamma(\chi)\|_\infty }{\theta} 
    \int f^2 \, d\nu \,. 
\end{equation}

For the second integral, we will use the Lyapunov function $W_1$, the fact that $\chi$ vanishes on $\de B_r$, and the Poincar\'e inequality on $U$ (note $\int_U f \, d\nu = 0$) to write 
\begin{equation}
\begin{aligned}
    2 \int_{B_r^c} f^2 \chi^2 \, d\nu 
    &\leq 
    \frac{2}{\lambda} \int_{B_r^c} \frac{-LW_1}{W_1} f^2 \chi^2 \, d\nu 
    + \frac{2}{\lambda} \int_U \frac{b}{W_1} f^2 \, d\nu 
    \\ 
    &\leq 
    \frac{2}{\lambda} \int \Gamma(f\chi) \, d\nu 
    + \frac{2b}{\lambda \kappa_U} \Gamma(f) \, d\nu \,. 
\end{aligned}
\end{equation}

Similarly, we will use $\Gamma(fg, fg) \leq 2( f^2 \Gamma(g, g) + g^2 \Gamma(f,f) )$ to write 
\begin{equation}
    2 \int_{B_r^c} f^2 \chi^2 \, d\nu 
    \leq 
    \frac{4 \|\Gamma(\chi)\|_\infty}{\lambda} \int f^2 \, d\nu 
    + \left( \frac{4}{\lambda} + \frac{4b}{\lambda \kappa_U} \right) \int \Gamma(f) \, d\nu \,. 
\end{equation}

Putting this together, we have that 
\begin{equation}
    \int f^2 \, d\nu 
    \leq 
    4 \| \Gamma(\chi) \|_\infty \left( \frac{1}{\lambda} + \frac{1}{\theta} \right) \int f^2 \, d\nu 
    + 4 \left( \frac{1}{\lambda} + \frac{1}{\theta} + \frac{b}{\lambda \kappa_U} \right) \int \Gamma(f) \, d\nu \,,  
\end{equation}
which we can manipulative to write the Poincar\'e inequality 
\begin{equation}
    \int f^2 \, d\nu 
    \leq 
    \frac{4 \left( \frac{1}{\lambda} + \frac{1}{\theta} + \frac{b}{\lambda \kappa_U} \right) }{ 1 - 4 \| \Gamma(\chi) \|_\infty \left( \frac{1}{\lambda} + \frac{1}{\theta} \right) } 
    \int \Gamma(f) \, d\nu \,. 
\end{equation}

It remains to compute the constant. 
To this goal, we will use the fact that $\|\Gamma(\chi)\|_\infty \leq \frac{1}{\beta (r+\epsilon)^2}$ for all $\epsilon>0$, and $r = \frac{a}{2\sqrt{\beta}}$ to write  
\begin{equation}
    \| \Gamma(\chi) \|_\infty \leq \frac{1}{\beta r^2} = \frac{4}{a^2} \,. 
\end{equation}

Therefore, choosing $a^2 \geq 32 (\frac{1}{\lambda} + \frac{1}{\theta})$, we get that 
\begin{equation}
    1 - 4 \| \Gamma(\chi) \|_\infty \left( \frac{1}{\lambda} + \frac{1}{\theta} \right)
    \geq 
    1 - 4 \frac{4}{a^2} \left( \frac{1}{\lambda} + \frac{1}{\theta} \right)
    \geq \frac{1}{2} \,, 
\end{equation}
which leads to use the desired Poincar\'e constant 
\begin{equation}
    \kappa^{-1} = 
    8 \left( \frac{1}{\lambda} + \frac{1}{\theta} + \frac{b}{\lambda \kappa_U} \right) \,. 
\end{equation}

\end{proof}

\begin{proposition}
[Poincar\'{e} Inequality on $M$]
\label{prop:lyapunov_poincare}
Suppose $F$ satisfies 
\cref{asm:app_c3,asm:app_morse,asm:app_unique_min} 
and $d \geq 2$. 
Then for all choices of $a,\beta > 0$ such that 
\begin{equation}
\begin{aligned}
	a^2 
	&\geq 
		\max\left( \frac{24 K_2 nd}{ C_F^2 } \,, 
		\frac{160}{\lambda_*} 
		\right)
		\,, \quad 
	\beta 
	\geq
		a^2 
		\max\left( 
			\, \frac{ 4 K_3^2 }{ \lambda_{*}^2 } \,, 
			\, K_3^2 a^4 \, 
		 \right) \,, 
\end{aligned}
\end{equation}
where $\lambda_{*}$ is defined in \cref{asm:app_morse}, 
we have that the Markov triple $(M, \nu, \Gamma)$ 
satisfies a Poincar\'{e} inequality with constant 
\begin{equation}
	\kappa
	= 
		\left[ 
		\frac{25}{\lambda_*} + \frac{15}{2 \kappa_U} 
		\right]^{-1} 
	\geq 
		\begin{cases}
			\frac{\lambda_*}{40} \,, 
			& \text{ if the global minimum is unique, } \\
			\frac{K_* \lambda_* }{55 \beta} \,, 
			& \text{ otherwise. } 
		\end{cases}
		\,, 
\end{equation} 
where $\kappa_U$ is the Poincar\'{e} constant 
from \cref{lm:poincare_near_min}, 
and $K_* = \exp \left( \frac{ -2 C_F^2}{ K_2 K_3 } \right) $. 

\end{proposition}

\begin{proof}

We first recall the constants $\theta = \frac{\lambda_*}{4}, \lambda = K_2 nd$, and $b = \frac{3}{2} K_2 nd$ using \cref{asm:app_c3}. 
Furthermore 
\begin{equation}
	\kappa_{U} 
	= 
		\begin{cases}
		 	\frac{\lambda_*}{2} \,, 
		 	& \text{ if the global minimum $\mathcal{X}$ is a unique point,} \\ 
		 	\frac{K_*(d-1)}{4 \beta} \,, 
		 	& \text{ otherwise.}
		\end{cases} \,, 
\end{equation} 
where $K_* = \exp \left( \frac{ -2 C_F^2}{ K_2 K_3 } \right) $.

Then we use the fact that $\lambda = K_2 nd \geq 1 \geq \lambda_*^{-1}$ to simplify the condition on $a^2 \geq 160 \lambda_*^{-1}$. 

The unique minimum value of $\kappa_U$ follows directly from plugging in the constants, where as the non-unique case follows from the fact that $d-1\geq 1$ and $K_* \geq 1$, so we can write 
\begin{equation}
	\frac{5}{\lambda_*} + \frac{6\beta}{K_* (d-1)} \leq \frac{11\beta}{\lambda_*K_*} \,, 
\end{equation}
which gives us the desired result. 

\end{proof}

\begin{remark*}
It is tempting to consider the extension of $W$ to outside the set $B$ 
and use the easier Lyapunov condition from \cite{bakry2008simple} of 
\begin{equation}
	\frac{LW}{W} \leq - \theta + b \mathds{1}_{B^c} \,. 
\end{equation}
However, as discussed by \cite{cattiaux2013poincare}, 
it is unclear how we can estimate the constant $b$ as 
the extension is only known to qualitatively exist via Whitney extension. 
Therefore, we instead follow \cite{cattiaux2013poincare} 
to use a different route without considering the extension at all, 
using a smooth partition function to handle the boundary. 
The full calculations are included in 
\cref{lm:smooth_partition,prop:poincare_adapted}. 
\end{remark*}

\subsection{Logarithmic Sobolev Inequality}

We will next adapt the results of 
\cite[Theorem 1.9]{cattiaux2010note} 
and \cite[Theorem 3.15]{menz2014poincare} 
but with a significant simplification 
since $W_2$ on $M$ is trivially bounded. 
Before we start, we will recall a couple of standard results.

\begin{theorem}
\citep[Corollary 20.13]{villani2008optimal}
\label{thm:hwi_inequality}
Let $(M,g)$ be a Riemannian manifold, 
$V \in C^2(M)$ such that $\nu = e^{-V} \in \mathcal{P}_2(M)$, 
i.e. $\nu$ is a probability measure and 
\begin{equation}
	\int_M d_g(x,x_0)^2 \, d\nu < \infty \,, \quad 
	\forall x_0 \in M \,. 
\end{equation}
Furthermore assume there exists $K>0$ such that 
\begin{equation}
	\nabla^2 V + \Ric_g \geq -K g \,, 
\end{equation}
then for all $\mu \in \mathcal{P}_2(M)$, 
we have that 
\begin{equation}
	H_\nu( \mu ) 
	\leq 
		W_2( \mu, \nu) \sqrt{ \wt I_\nu(\mu) } 
		+ \frac{K}{2} W_2^2(\mu, \nu) \,, 
\end{equation}
where $\wt I_\nu(\mu) := \int_M \frac{ 1 }{ h } |\grad h|_g^2 \, d\nu$ 
is the Fisher information without adjusting for temperature $\beta$, 
and $h = \frac{d\mu}{d\nu}$. 
\end{theorem}

\begin{theorem}
\citep[Theorem 6.15]{villani2008optimal}
\label{thm:wasserstein_weighted_tv_bound}
Let $\mu,\nu$ be two probability measures on the Polish space 
$(\mathcal{X}, d)$. 
Then for all $p \geq 1$, $p'$ the H\"{o}lder conjugate of $p$, 
and $x_0 \in \mathcal{X}$, 
we have that 
\begin{equation}
	W_p( \mu, \nu ) 
	\leq 
		2^{ \frac{1}{p'} } 
		\left[ 
			\int_{\mathcal{X}} d_g(x,x_0)^p \, 
				d|\mu - \nu|(x)
		\right]^{\frac{1}{p}} \,. 
\end{equation}
\end{theorem}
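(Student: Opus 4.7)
The plan is to produce the stated Wasserstein--total-variation bound by explicitly constructing a suboptimal transport plan from the Jordan-type decomposition of the signed measure $\mu - \nu$ and evaluating its cost directly. I would first split the common and disjoint parts of $\mu$ and $\nu$: let $\mu \wedge \nu$ denote the largest measure dominated by both, and set $\sigma_+ := \mu - \mu \wedge \nu$ and $\sigma_- := \nu - \mu \wedge \nu$. By construction $\sigma_+$ and $\sigma_-$ are mutually singular nonnegative measures with equal total mass $m := \sigma_+(\mathcal{X}) = \sigma_-(\mathcal{X}) = \tfrac{1}{2} |\mu - \nu|(\mathcal{X})$, and $|\mu - \nu| = \sigma_+ + \sigma_-$.

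Next I would define the coupling
\begin{equation}
	\pi := (\mathrm{id},\mathrm{id})_\# (\mu \wedge \nu) + \frac{1}{m}\, \sigma_+ \otimes \sigma_- \,,
\end{equation}
which is readily checked to have marginals $\mu$ and $\nu$. Since $\pi$ is only an admissible plan, not necessarily optimal, we obtain the upper bound
\begin{equation}
	W_p^p(\mu,\nu) \;\leq\; \int_{\mathcal{X}\times\mathcal{X}} d(x,y)^p\, d\pi(x,y)
		\;=\; \frac{1}{m}\int\!\!\int d(x,y)^p\, d\sigma_+(x)\, d\sigma_-(y)\,,
\end{equation}
since the diagonal contribution vanishes.

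The key estimate is then the triangle-inequality convexity bound $d(x,y)^p \leq 2^{p-1}\bigl(d(x,x_0)^p + d(y,x_0)^p\bigr)$, valid for any $p \geq 1$ by convexity of $t \mapsto t^p$ on $[0,\infty)$ applied to the triangle inequality $d(x,y) \leq d(x,x_0) + d(y,x_0)$. Substituting and using $\sigma_+(\mathcal{X}) = \sigma_-(\mathcal{X}) = m$ to cancel the normalization, the double integral separates into
\begin{equation}
	\frac{2^{p-1}}{m}\Bigl[\, m \int d(x,x_0)^p\, d\sigma_+(x) + m \int d(y,x_0)^p\, d\sigma_-(y) \,\Bigr]
	= 2^{p-1} \int d(x,x_0)^p\, d|\mu - \nu|(x)\,.
\end{equation}
Taking $p$-th roots and using $\tfrac{p-1}{p} = \tfrac{1}{p'}$ yields exactly the claimed inequality.

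I do not anticipate any genuine obstacle; the only points that warrant care are the existence of the decomposition $\mu \wedge \nu$ in an abstract Polish space (handled by the Hahn--Jordan decomposition applied to the signed measure $\mu - \nu$, which is automatic on Polish spaces) and measurability of the product plan $\sigma_+ \otimes \sigma_-$ (standard for $\sigma$-finite Borel measures on Polish spaces). If one wanted a sharper constant, one could try to exploit the mutual singularity of $\sigma_\pm$ more carefully, but the factor $2^{1/p'}$ is already tight for the stated generality.
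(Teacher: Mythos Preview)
Your proposal is correct and is essentially the standard proof of this inequality (indeed, it is the argument given in Villani's book). The paper itself does not supply a proof of this statement; it merely cites \citep[Theorem~6.15]{villani2008optimal} and uses the result as a tool in establishing the Lyapunov-to-LSI implication. One minor omission: you should handle the degenerate case $m=0$ (i.e.\ $\mu=\nu$) separately, since the coupling $\tfrac{1}{m}\sigma_+\otimes\sigma_-$ is then undefined, but in that case the inequality is trivial.
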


\begin{proposition}
[Poincar\'e Implies LSI]
\label{prop:poincare_implies_lsi}
Suppose that $(M,g)$ is the product of $n$ unit spheres $S^d$, 
with $F \in C^2(M)$, 
and $\nu = \frac{1}{Z} \exp( -\beta F )$ with $\beta \geq 1$. 
Suppose further that 
\begin{enumerate}
	\item There exists a constant $K > 0$ such that 
		\begin{equation}
			\nabla^2 F + \frac{1}{\beta} \Ric_g 
			\geq - K g \,, 
		\end{equation}
	\item $(M, \nu, \Gamma)$ satisfies a Poincar\'{e} 
		inequality with constant $1 \geq \kappa > 0$. 
\end{enumerate}
Then we have that $(M, \nu, \Gamma)$ satisfies 
a logarithmic Sobolev inequality with constant 
\begin{equation}
	\frac{1}{\alpha} 
	= \frac{11 \beta K n }{\kappa} \,. 
\end{equation}
\end{proposition}

\begin{proof}

We start by defining the ``unadjusted'' Fisher information as 
$\wt I_\nu(\mu) = \beta I_\mu(\mu) = \int |\grad h|^2 / h \, d\nu$, 
where $h = \frac{d\mu}{d\nu}$.  
Then we can apply the HWI inequality 
from \cref{thm:hwi_inequality} 
to $V = \beta F$, 
which gives us 
\begin{equation}
	H_\nu( \mu ) 
	\leq 
		W_2( \mu, \nu ) \sqrt{ \wt I_\nu(\mu) } 
		+ \frac{\beta K}{2} 
			W_2^2(\mu, \nu) \,. 
\end{equation}

Now using the trivial bound 
$W_2(\mu,\nu)^2 \leq \sup_{x,y} d_g(x,y)^2 \leq \pi^2 n$ 
and Young's inequality with 
$ab \leq \frac{\tau}{2} a^2 + \frac{1}{2 \tau} b^2$, 
we can write 
\begin{equation}
\begin{aligned}
	H_\nu( \mu ) 
	&\leq 
		\frac{\tau}{2} \wt I_\nu(\mu) 
		+ \left( \frac{1}{2\tau} + \frac{\beta K}{2} \right) 
			W_2^2(\mu, \nu) 
			\\ 
	&\leq 
		\frac{\tau}{2} \wt I_\nu(\mu) 
		+ \left( \frac{1}{2\tau} + \frac{\beta K}{2} \right) 
		\pi^2 n 
		\\ 
	&\leq 
		\frac{\tau \beta}{2} \wt I_\nu(\mu) 
		+ \left( \frac{1}{2\tau} + \frac{\beta K}{2} \right) 
		\pi^2 n 
		\\ 
	&=: 
		A I_\nu(\mu) + B \,, 
\end{aligned}
\end{equation}
which is a defective logarithmic Sobolev inequality. 

Using a standard tightening argument 
with a Poincar\'{e} inequality via Rothaus' Lemma 
\citep[Proposition 5.1.3]{bakry2013analysis}, 
we have that a tight logarithmic Sobolev inequality 
\begin{equation}
\begin{aligned}
	H_\nu( \mu ) 
	&\leq 
		\left[ A + \frac{B+2}{\kappa} \right] 
		\, I_\nu(\mu) \\ 
	&= 
		\left( \frac{\tau \beta}{2} 
			+ \frac{ \frac{\pi^2 n}{2} \left( \frac{1}{\tau} + \beta K \right) + 2 }{\kappa}
		\right) I_\nu(\mu)
		\\
	&= 
		\left( \frac{\tau \beta}{2} 
			+ \frac{1}{\tau} \frac{\pi^2 n}{2\kappa} 
			+ \frac{\pi^2 n \beta K + 4}{2 \kappa} 
		\right) I_\nu(\mu)
		\,, 
\end{aligned}
\end{equation}
and we can optimize over $\tau$ by choosing 
$\tau = \sqrt{ \frac{\pi^2 n}{\beta \kappa} }$, 
which gives us the final constant of 
\begin{equation}
	\sqrt{ \frac{\pi^2 \beta n}{\kappa} } 
	+ \frac{\pi^2 n \beta K + 4}{2 \kappa} 
	\leq 
		\left(\pi + \frac{\pi^2}{2} + 2\right) \frac{\beta K n }{\kappa} 
	\leq 
		\frac{11 \beta K n }{\kappa} 
		\,. 
\end{equation}

\end{proof}

\begin{theorem}[Logarithmic Sobolev Inequality]
Suppose $F$ satisfies 
\cref{asm:app_c3,asm:app_morse,asm:app_unique_min} 
and $d \geq 2$. 
Then for all choices of $a,\beta > 0$ such that 
\begin{equation}
\begin{aligned}
	a^2 
	&\geq 
		\max\left( \frac{24 K_2 nd}{ C_F^2 } \,, 
		\frac{160}{\lambda_*} 
		\right)
		\,, \quad 
	\beta 
	\geq
		a^2 
		\max\left( 
			\, \frac{ 4 K_3^2 }{ \lambda_{*}^2 } \,, 
			\, K_3^2 a^4 \, 
		 \right) \,, 
\end{aligned}
\end{equation}
where $\lambda_*$ is defined in \cref{asm:app_morse}, 
we have that the Markov triple $(M, \nu, \Gamma)$ 
satisfies $\LSI(\alpha)$ with constant 
\begin{equation}
	\frac{1}{\alpha} 
	= 
		\begin{cases}
		 	440 \frac{K_2 n \beta}{\lambda_*^2} \,, 
		 	& \text{ if the global minimum is unique, } \\ 
		 	605 \frac{K_2 n \beta^2}{K_* \lambda_*} \,, 
		 	& \text{ otherwise, } 
		\end{cases} 
\end{equation}
where $K_* = \exp \left( \frac{ -2 C_F^2}{ K_2 K_3 } \right) $. 
\end{theorem}

\begin{proof}

This result follows directly from \cref{prop:poincare_implies_lsi}, 
where we get $\nabla^2 F \geq - K_2 g$ from \cref{asm:app_c3}, 
$\Ric_g = (d-1)g > 0$, and the Poincar\'e constant $\kappa$ from 
\cref{prop:lyapunov_poincare}. 

\end{proof}

\section{Proof of the Corollaries}
\label{sec:cor_proof}

\subsection{Runtime Complexity Under $\LSI(\alpha)$}
\label{subsec:runtime_lsi}

We start by restating \cref{cor:runtime_complexity_lsi}. 

\begin{corollary}
[Runtime Complexity]
Let $F$ satisfy \cref{asm:c2}, 
and let $(M,\nu,\Gamma)$ satisfy \cref{asm:lsi}. 
Further let the initialization $\rho_0 \in C^1(M)$, 
and $d \geq 3$. 
Then for all choices of 
$\epsilon \in (0,1]$ 
and $\delta \in (0,1)$, 
we can choose 
\begin{equation}
\begin{aligned}
	\beta &\geq 
		\frac{3nd}{\epsilon} 
		\log \frac{ 2 n K_2 }{ \epsilon \, \delta }
			\,, \quad 
	\eta \leq 
		\min\left\{ 
				\frac{2}{3 \alpha} \, , \, 
				\frac{ \alpha \delta^2 \sqrt{\epsilon} }{ 
				180 nd K_2^2 
				\sqrt{ \log \frac{2nK_2}{\epsilon \delta} } } 
			\right\} 
			\,, \\ 
	k &\geq 
		\max\left\{ 
		\frac{3}{2} \,, 
		180 nd \frac{K_2^2}{\alpha^2} 
		\frac{ \sqrt{ \log \frac{2nK_2}{\epsilon\delta} } }{ \delta^2 \sqrt{\epsilon} }
		\right\} 
		\left( 2 \log \frac{2}{\delta} + \log H_\nu(\rho_0) \right) 
		\,, 
\end{aligned}
\end{equation}
such that the unadjusted Langevin algorithm $\{X_k\}_{k\geq 1}$ 
defined in \cref{eq:langevin_algorithm} 
with distribution $\rho_k := \mathcal{L}(X_k)$ satisfies 
\begin{equation}
	\rho_k\left( F - \min_{y \in M} F(y) \geq \epsilon \right) 
	\leq \delta \,. 
\end{equation}
In other words, $X_k$ finds an $\epsilon$-global minimum 
with probability $1 - \delta$. 
\end{corollary}

\begin{proof}

We start by observing that under the conditions we chose, 
we have that by \cref{thm:finite_iteration_KL_bound} 
and $\beta \geq 1$ 
\begin{equation}
	H_\nu( \rho_k ) 
	\leq 
		C_0 \, e^{-\alpha k \eta} 
		+ C_1 \frac{\eta}{\alpha} \,, 
\end{equation}
by \cref{thm:gibbs_high_prob_bound} we have that 
\begin{equation}
		\nu( F \geq \epsilon) \leq \frac{\delta}{2} \,. 
\end{equation}

Next we will observe that 
using the above bound on $\nu(F \geq \epsilon)$ 
and Pinsker's inequality on total variation distance, 
we can write 
\begin{equation}
\begin{aligned}
	\rho_k( F \geq \epsilon ) 
	&\leq 
		\nu( F \geq \epsilon ) 
		+ 
		\sup_{A \in \mathcal{F}} | \rho_k(A) - \nu(A) | 
		\\ 
	&\leq 
		\frac{\delta}{2} 
		+ 
		\sqrt{ 
			\frac{1}{2} 
			H_\nu( \rho_k )
		} \,, 
\end{aligned}
\end{equation}
therefore it is sufficient to establish the bound 
$H_\nu( \rho_k ) \leq \frac{\delta^2}{2}$, 
which reduces to the sufficient condition 
\begin{equation}
	C_0 \, e^{-\alpha k \eta} 
	\leq \frac{ \delta^2 }{4}
	\quad \text{ and } \quad 
	C_1 \frac{\eta}{\alpha} 
	\leq \frac{\delta^2}{4} \,. 
\end{equation}

To satisfy the second condition, 
we simply choose $\eta \leq \frac{ \alpha \delta^2 }{ 4 C_1 }$, 
and the first condition is equivalent to 
\begin{equation}
	k \geq \frac{1}{\alpha \eta } 
		\left[ 2 \log \frac{2}{\delta} + \log C_0 
		\right] \,. 
\end{equation}

Now we can use the condition of $\eta$ 
and plug in $C_1 = 45 nd K_2^2$ 
and the condition on $\beta$ to write 
\begin{equation}
\begin{aligned}
	\eta 
	&\leq 
		\min\left\{ 
			\frac{2}{3 \alpha} \, , \, 
			\frac{ \alpha \delta^2 \sqrt{\epsilon} }{ 
			180 nd K_2^2 
			\sqrt{ \log \frac{2nK_2}{\epsilon \delta} } } 
		\right\} 
	\leq 
		\min\left\{ \frac{2}{3 \alpha} \,, 
			\frac{ \alpha \delta^2 }{ 4 C_1 } \,, 
			\frac{\alpha}{ 24 K_2 \sqrt{(\beta+d)d} } 
		\right\} 
		\,, 
\end{aligned}
\end{equation}
and finally we plug in $C_0 = H_\nu(\rho_0)$ 
to get the desired sufficient condition 
\begin{equation}
	k 
	\geq 
		\max\left\{ 
		\frac{3}{2} \,, 
		180 nd \frac{K_2^2}{\alpha^2} 
		\frac{ \sqrt{ \log \frac{2nK_2}{\epsilon\delta} } }{ \delta^2 \sqrt{\epsilon} }
		\right\} 
		\left( 2 \log \frac{2}{\delta} + \log H_\nu(\rho_0) \right) \,. 
\end{equation} 

\end{proof}

\subsection{Runtime Complexity for General Problems}
\label{subsec:runtime_general_problem}

\begin{corollary}
Let $F:M \to \mathbb{R}$ satisfy 
\cref{asm:app_c3,asm:app_unique_min,asm:app_morse}. 
Let $\{X_k\}_{k\geq 1}$ be the Langevin algorithm 
defined in \cref{eq:langevin_algorithm}, 
with initialization $\rho_0 \in C^1(M)$, 
and $d \geq 3$. 
For all choices of 
$\epsilon \in (0,1]$ and $\delta \in (0,1)$, 
if $\beta$ and $\eta$ satisfy the conditions in 
\cref{cor:runtime_complexity_lsi} 
and \cref{thm:lyapunov_log_sobolev}, 
then choosing $k$ as
\begin{equation}
	k \geq 
	\begin{cases}
		\wt\Omega \left( 
			\frac{n^{9.5} d^{8}}{\epsilon^{2.5} \delta^2} 
		\right)
		\,, 
		& \text{ if the global minimum is unique, } \\
		\wt\Omega \left( 
			\frac{n^{15.5} d^{14}}{\epsilon^{4.5} \delta^2} 
		\right) 
		\,, 
		& \text{ otherwise, }
	\end{cases}
\end{equation}
where $\wt{\Omega}(\cdot)$ hides dependence on 
$\poly\left( K_2, K_3, C_F^{-1}, \lambda_*^{-1}, 
K_*, 
\log \frac{ nd \, K_2 }{ \epsilon \, \delta } \,, 
\log H_\nu(\rho_0) \, 
\right)$ 
and $K_* = \exp \left( \frac{ -2 C_F^2}{ K_2 K_3 } \right) $, 
we have that the Langevin algorithm $\{X_k\}_{k\geq 1}$ 
defined in \cref{eq:langevin_algorithm} 
with distribution $\rho_k := \mathcal{L}(X_k)$ satisfies 
\begin{equation}
	\rho_k\left( F - \min_{y \in M} F(y) \geq \epsilon \right) 
	\leq \delta \,. 
\end{equation}
In other words, $X_k$ finds an $\epsilon$-global minimum 
with probability $1 - \delta$. 

\end{corollary}

\begin{proof}

For this proof, since we only need to track the polynomial dependence on 
$n,d,\alpha,\beta,\delta,\epsilon$ and ignoring $\log$ factors, 
we will work with $\wt{O}(\cdot)$ and $\wt\Omega(\cdot)$ notation 
to denote the polynomial dependence on the desired parameters. 

We start by following the steps of \cref{cor:runtime_complexity_lsi} to get 
\begin{equation}
\begin{aligned}
	k 
	&\geq 
		\wt \Omega \left( \frac{1}{\alpha \eta} \right) 
		\,, \quad 
	\eta 
	\leq 
		\wt O \left( \min\left\{ \frac{\alpha\delta}{nd} \,, 
		\frac{\alpha}{ \sqrt{(\beta+d)d} } \right\} 
		\right) \,. 
\end{aligned}
\end{equation}

We simplify this slightly further to get 
\begin{equation}
	k 
	\geq 
		\wt \Omega\left( 
		\frac{1}{\alpha^2 \delta^2} \max\left\{ nd \,, 
		\sqrt{(\beta+d)d} 
		\right\} 
		\right) \,. 
\end{equation}

At this point, we can plug in 
$\alpha$ from \cref{thm:lyapunov_log_sobolev} to get 
\begin{equation}
	\frac{1}{\alpha^2} 
	\leq 
		\wt O \left( n^2 \beta^{2r} \right) 
		\,, 
\end{equation}
where $r = 1$ when there is a unique minimum, 
and $r = 2$ otherwise. 
This further gives us the expression 
\begin{equation}
	k 
	\geq 
		\wt \Omega \left( \frac{n^2 \beta^{2r} }{\delta^2} 
		\max\left\{ nd \,, \sqrt{(\beta+d)d} 
		\right\} 
		\right)
		\,. 
\end{equation}

Recall also from \cref{thm:gibbs_high_prob_bound,thm:lyapunov_log_sobolev},  
we need 
\begin{equation}
	\beta 
	\geq 
		\wt \Omega \left( \frac{(nd)^3}{ \epsilon} \right) 
			\,.  
\end{equation}

Therefore we can get the final runtime complexity as 
\begin{equation}
	k \geq 
		\wt\Omega \left( \frac{n^2}{\delta^2} 
			\frac{ (nd)^{6r} }{ \epsilon^{2r} } 
			\sqrt{ \frac{(nd)^3}{\epsilon} d } 
		\right) 
		= 
		\wt\Omega \left( \frac{ n^{6r + 3.5} d^{6r+2} 
			}{\delta^2 \epsilon^{2r + 0.5}} 
		\right) 
		\,. 
\end{equation}

\end{proof}

\subsection{Runtime for SDP and Max-Cut}
\label{subsec:sdp_maxcut_lsi}

We will restate the result from \cref{cor:sdp_maxcut_no_asm}. 

\begin{corollary}
Let $F$ be the Burer--Monteiro loss function defined in \cref{eq:bm_loss_function}. 
Then for all choices of $d$ such that $(d+1)(d+2)>2n$ 
and almost every cost matrix $A$, 
$F$ satisfies \cref{asm:app_c3,asm:app_morse,asm:app_unique_min}. 

Furthermore, if we choose $d = \left\lceil \sqrt{2n} \right\rceil$, 
then for all $\epsilon \in (0,1]$ and $\delta \in (0,1)$, 
$\beta$ and $\eta$ satisfying the conditions in 
\cref{cor:runtime_complexity_lsi} and \cref{thm:lyapunov_log_sobolev}, 
and choosing $k$ as 
\begin{equation}
	k \geq 
		\wt\Omega \left( 
			\frac{ n^{22.5} }{\epsilon^{4.5} \delta^2} 
		\right) 
		\,, 
\end{equation}
where $\wt{\Omega}(\cdot)$ hides dependence on 
$\poly\left( K_2, K_3, C_F^{-1}, \lambda_*^{-1}, 
K_*, 
\log \frac{ n \, K_2 }{ \epsilon \, \delta } \,, 
\log H_\nu(\rho_0) \, 
\right)$ and $K_* = \exp \left( \frac{ -2 C_F^2}{ K_2 K_3 } \right) $, 
we have that with probability $1 - \delta$, 
the Langevin algorithm $\{X_k\}_{k \geq 1}$ 
defined in \cref{eq:langevin_algorithm} finds 
an $\epsilon$-global solution of the SDP \cref{eq:sdp} 
after $k$ iterations 
for almost every cost matrix $A$. 

Additionally, if we let $\epsilon' := \epsilon / (4 \MaxCut(A_G))$, 
then using $X_k$ and the random rounding scheme of \citep{goemans1995improved}, 
we recover an $0.878 (1-\epsilon')$-optimal Max-Cut 
for almost every adjacency matrix $A_G$. 
\end{corollary}

\begin{proof}

We start by observing $F\in C^3(M)$ trivially, 
which verifies \cref{asm:app_c3}. 

Before we move to the next assumptions, 
we will make the standard observation that $F$ is symmetric 
up to an $O(d+1)$ orbit. 
Or more precisely, for all $x \in M$, 
we call $xO(d+1) = \{ xQ | Q \in O(d+1) \}$ the orbit of $x$. 
And we can write for all $Q \in O(d+1)$ 
\begin{equation}
	F(x) = \langle x , Ax \rangle 
	= \langle A, x x^\top \rangle
	= \langle A, x Q Q^\top x^\top \rangle 
	= \langle A, (Qx) (Qx)^\top \rangle 
	= F(xQ) \,. 
\end{equation}

Then a standard gradient calculation (see for example \cite{mei2017solving}) gives us 
\begin{equation}
	\grad F(x) = 2 ( A - \ddiag(Axx^\top) ) x \,, 
\end{equation}
where $\ddiag:\mathbb{R}^{n\times n} \to \mathbb{R}^{n\times n}$ 
that zeros out all the non-diagonal entries. 

Observe that $\grad F(x) = 0$ requires $A - \ddiag(Axx^\top)$ to be low rank 
and that all columns of $x$ lie in the null space of $A - \ddiag(Axx^\top)$. 
However, any small perturbation of $A$ will lead to 
the matrix $A - \ddiag(Axx^\top)$ becoming full rank almost surely. 
Therefore for almost every matrix $A$, 
all critical points of $F$ are isolated up to an $O(d+1)$ orbit, 
and therefore the total number of these orbits must be finite. 
This means that \cref{asm:app_morse} is trivially satisfied by 
taking the minimum of absolute Hessian spectral gap 
\begin{equation}
	\lambda_{*} 
	:= \inf \left\{ | \lambda_i (\nabla^2 F(x)) | 
		: \grad F(x) = 0, i \in [nd] , \lambda_i (\nabla^2 F(x)) \neq 0 
	\right\} \,. 
\end{equation}

Finally, since strict complementarity is satisfied for almost every $A$ 
\cite[Lemma 2]{alizadeh1997complementarity}, 
this guarantees that the SDP admits a unique solution, 
and therefore the Burer--Monteiro relaxation can also recover that solution. 
In other words, the global minimum is unique up to an $O(d+1)$ orbit, 
hence satisfying \cref{asm:app_unique_min}. 

Finally, by taking $d = \left\lceil \sqrt{2n} \right\rceil$, 
we can apply the runtime complexity result of the Langevin algorithm directly 
with \cref{cor:runtime_general_problem}, 
and taking $d = \Omega(\sqrt{n})$, we recover the desired runtime complexity of 
\begin{equation}
	k \geq \wt \Omega \left( \frac{ n^{22.5} }{\epsilon^{4.5} \delta^2} 
	\right) \,. 
\end{equation}

\end{proof}

\section*{Acknowledgement}
\addcontentsline{toc}{section}{Acknowledgement}

ML would like to specifically thank Tian Xia 
for many insightful discussions on Riemannian geometry 
and stochastic analysis on manifolds, 
Iosif Lytras for a very helpful discusson on 
carefully handling boundary conditions for the Lyapunov method 
(specifically \cref{prop:poincare_lyapunov_generic}), 
and the anonymous reviewers for helping us significantly 
improve the manuscript. 
We also thank 
Blair Bilodeau, 
Philippe Casgrain, 
Christopher Kennedy, 
Yuri Kinoshita, 
Justin Ko, 
Yasaman Mahdaviyeh, 
Jeffrey Negrea, 
Ali Ramezani-Kebrya, 
Taiji Suzuki, 
and Daniel Roy 
for numerous helpful discussions and draft feedback. 
MAE is partially funded by NSERC Grant [2019-06167], 
Connaught New Researcher Award, 
CIFAR AI Chairs program, 
and CIFAR AI Catalyst grant. 
ML is supported by the Ontario Graduate Scholarship 
and the Vector Institute.

\appendix 

\addcontentsline{toc}{section}{References}
\bibliographystyle{amsalpha}
\bibliography{constrained.bib}

\section{Sampling a Spherical Brownian Motion Increment}
\label{sec:app_brownian_inc}

In this section, we will describe the algorithm for exact sampling of 
Brownian motion increments on the sphere $S^d$, 
where $d \geq 2$. 
We start by assuming we can sample from the Wright--Fisher distribution 
$\WF_{0,t}( \frac{d}{2}, \frac{d}{2} )$ 
using an algorithm by \cite{jenkins2017exact}, 
which we state later. 
Then we have an algorithm from \cite{mijatovic2018note} 
that samples the Brownian motion increment exactly 
in \cref{algo:brownian_increment_mijatovic}. 

\begin{algorithm}[h]
\SetAlgoLined
\SetKwProg{Require}{Require}{}{}
\Require{ Starting point $z \in S^d$ and time horizon $t > 0$ }{
	Simulate the radial component: 
		$X \sim \WF_{0,t}( \frac{d}{2}, \frac{d}{2} )$\;
	Simulate the angular component: 
		$Y$ uniform on $S^{d-1}$\;
	Set $e_d := (0, \cdots, 0, 1)^\top \in S^d$, 
		$u := (e_d - z) / | e_d - z |$, 
		and $O(z) := I - 2uu^\top$\; 
	\textbf{return} $O(z) \left( 2 \sqrt{ X (1-X) } Y^\top, 1 - 2X \right)^\top$ 
}
 \caption{\cite[Algorithm 1]{mijatovic2018note} Simulation of Brownian motion on $S^d$}
 \label{algo:brownian_increment_mijatovic}
\end{algorithm}

Next we will describe the sampling procedure for $\WF_{x,t}(\theta_1,\theta_2)$. 

\begin{algorithm}[h]
\SetAlgoLined
\SetKwProg{Require}{Require}{}{}
\Require{ Mutation parameters $\theta_1,\theta_2$, 
	starting point $x \in [0,1]$, and time horizon $t > 0$ 
	}{
	Simulate 
		$M \overset{d}{=} A_\infty^\theta(t)$ 
		using \cite[Algorithm 2]{jenkins2017exact}, 
		where $\theta = \theta_1 + \theta_2$\; 
	Simulate the angular component: 
		$L \sim \text{Binomial}(M,x)$\; 
	Simulate $Y \sim \text{Beta}(\theta_1 + L, \theta_2 + M - L)$\; 
	\textbf{return} $Y$ 
}
 \caption{\cite[Algorithm 2]{mijatovic2018note} 
 	Simulation of $\WF_{x,t}(\theta_1,\theta_2)$}
 \label{algo:wf_mijatovic}
\end{algorithm}

Finally, to generate a sample of $A_\infty^\theta(t)$, 
we will use the following algorithm. 

\begin{algorithm}[h]
\SetAlgoLined
\SetKwProg{Require}{Require}{}{}
\SetKwRepeat{Repeat}{repeat}{until}
\Require{ Mutation parameter $\theta = \theta_1 + \theta_2$ 
	and time horizon $t > 0$ 
	}{
	Set $m \leftarrow 0, k_0 \leftarrow 0, \mathbf{k} \leftarrow (k_0)$\; 
	Simulate $U \sim \text{Uniform}[0,1]$\; 
	\Repeat{
		false
	}{
		Set $k_m \leftarrow C^{t,\theta}_m / 2$\; 
		\While{ $S_{\mathbf{k}}^-(m) < U < S_{\mathbf{k}}^+(m)$ }{
			Set $\mathbf{k} \leftarrow \mathbf{k} + (1,\cdots,1)$  
		}
		 
		\uIf{ $S_{\mathbf{k}}^-(m) > U$ }{
			\textbf{return} $m$ 
		}
		\uElseIf{ $S_{\mathbf{k}}^+(m) < U$ }{
			Set $\mathbf{k} \leftarrow (k_0, k_1, \cdots, k_m, 0)$  
			Set $m \leftarrow m + 1$  
		}
	}
}
 \caption{\cite[Algorithm 2]{jenkins2017exact} 
 	Simulation of $A_\infty^\theta(t)$}
 \label{algo:wf_jenkins}
\end{algorithm}

For the above algorithm, we define 
\begin{equation}
\begin{aligned}
	a^\theta_{km} 
	&:= 
		\frac{(\theta + 2k - 1)}{m! (k-m)!} 
		\frac{ \Gamma( \theta+m+k-1 ) }{ \Gamma(\theta+m) } 
		\,, \\ 
	b^{(t,\theta)}_k(m) 
	&:= 
		a^\theta_{km} 
		e^{ -k (k+\theta-1) t / 2 } 
		\,, \\ 
	C^{t,\theta}_m 
	&:= 
		\inf \{ i \geq 0 \, | \, 
			b^{(t,\theta)}_{i + m + 1}(m)
			< b^{(t,\theta)}_{i + m}(m)
		\} \,, \\ 
	S_{\mathbf{k}}^-(m) 
	&:= 
		\sum_{m=0}^M \sum_{i=0}^{2k_m+1} 
			(-1)^i b^{(t,\theta)}_{m+i}(m) 
			\,, \\ 
	S_{\mathbf{k}}^+(m) 
	&:= 
		\sum_{m=0}^M \sum_{i=0}^{2k_m} 
			(-1)^i b^{(t,\theta)}_{m+i}(m) 
			\,, 
\end{aligned}
\end{equation}
where $\Gamma$ is the Gamma function. 

Here we note that while the repeat loop 
in \cref{algo:wf_jenkins} 
does not have an explicit terminate condition, 
\cite[Proposition 1]{jenkins2017exact} guarantees 
the algorithm will terminate in finite time.

\section{Background on Bakry--\'{E}mery Theory and Lyapunov Methods}
\label{sec:app_lsi_conc}

\subsection{Classical Bakry--\'{E}mery Theory}
\label{subsec:app_bakry_emery}

In this section, we recall several well known results 
from \cite{bakry2013analysis}, 
and adapt them slightly to our setting. 
First we recall our Markov triple $(M, \nu, \Gamma)$ 
with $M = S^d \times \cdots \times S^d$ an $n$-times product, 
$\nu(dx) = \frac{1}{Z} e^{-\beta F(x)} dx$ 
is the Gibbs measure, 
and the carr\'{e} du champ operator $\Gamma$ 
on $C^2(M) \times C^2(M)$ as 
\begin{equation}
	\Gamma(f,h) := \frac{1}{2} ( L(fh) - fLh - hLf )\,, 
\end{equation}
for $L f := \langle - \grad F, \grad f \rangle_g 
+ \frac{1}{\beta} \Delta f$ 
is the infinitesimal generator. 

In particular, we notice that 
\begin{equation}
	\Gamma(f,h) = \frac{1}{\beta} \langle \grad f, \grad h \rangle_g \,. 
\end{equation}

We will next define the second order 
carr\'{e} du champ operator as 
\begin{equation}
	\Gamma_2(f,h) = 
	\frac{1}{2} ( L \Gamma(f,h) - \Gamma(f, Lh) - \Gamma(Lf, h) ) \,. 
\end{equation}

\begin{lemma}

We have the explicit formula for the second order 
carr\'{e} du champ operator as 
\begin{equation}
	\Gamma_2(f,f) = 
			\frac{1}{\beta^2} |\nabla^2 f|_g^2 
			+ \frac{1}{\beta^2} \Ric_g(\grad f, \grad f)
			+ \frac{1}{\beta} \nabla^2 F( \grad f, \grad f ) 
			\,. 
\end{equation}

\end{lemma}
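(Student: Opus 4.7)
The plan is to expand $\Gamma_2(f,f) = \tfrac12\bigl(L\Gamma(f,f) - 2\Gamma(f, Lf)\bigr)$ using the explicit forms $Lf = -\langle\grad F,\grad f\rangle_g + \tfrac1\beta \Delta f$ and $\Gamma(f,f) = \tfrac1\beta |\grad f|_g^2$, and then identify the three terms on the right hand side by separating the ``diffusive'' contribution (which produces the Bochner/Ricci pieces) from the ``drift'' contribution (which produces the $\nabla^2 F$ piece). Concretely, first I would write
\begin{equation*}
	2\Gamma_2(f,f)
	= \tfrac{1}{\beta^2}\Delta|\grad f|_g^2
	- \tfrac{1}{\beta}\langle \grad F, \grad |\grad f|_g^2\rangle_g
	- \tfrac{2}{\beta}\bigl\langle \grad f, \grad Lf \bigr\rangle_g,
\end{equation*}
and then expand $\grad Lf = -\grad\langle \grad F,\grad f\rangle_g + \tfrac1\beta \grad \Delta f$.

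Next, I would invoke the Bochner--Weitzenb\"ock identity
\begin{equation*}
	\tfrac12 \Delta |\grad f|_g^2
	= |\nabla^2 f|_g^2 + \Ric_g(\grad f, \grad f) + \langle \grad f, \grad \Delta f\rangle_g,
\end{equation*}
which is the only non-routine input and which handles exactly the $\tfrac{1}{\beta^2}\Delta|\grad f|_g^2 - \tfrac{2}{\beta^2}\langle \grad f,\grad\Delta f\rangle_g$ combination, yielding $\tfrac{2}{\beta^2}(|\nabla^2 f|_g^2 + \Ric_g(\grad f,\grad f))$ up to the rescaling by $1/\beta$. For the drift contributions, metric compatibility of the Levi--Civita connection gives, for any vector fields $X,Y,Z$, the identity $Z\langle X,Y\rangle_g = \langle \nabla_Z X, Y\rangle_g + \langle X, \nabla_Z Y\rangle_g$; applied with $Z = \grad f$ and $X=\grad F$, $Y=\grad f$, together with symmetry of the Hessian of $f$, it produces
\begin{equation*}
	\langle \grad f, \grad \langle \grad F, \grad f\rangle_g\rangle_g
	= \nabla^2 F(\grad f, \grad f) + \nabla^2 f(\grad F, \grad f),
\end{equation*}
and analogously $\langle \grad F, \grad|\grad f|_g^2\rangle_g = 2\,\nabla^2 f(\grad F, \grad f)$.

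Substituting these two identities into the expansion, the two copies of $\nabla^2 f(\grad F, \grad f)$ produced by the drift terms cancel (with opposite signs and a factor of $2/\beta$ on each), leaving precisely $\tfrac{2}{\beta}\nabla^2 F(\grad f, \grad f)$. Dividing by $2$ gives the stated formula. There is no real obstacle here beyond careful bookkeeping of the $1/\beta$ factors: the calculation is essentially the Bakry--\'Emery $\Gamma_2$-computation for a gradient drift on a Riemannian manifold, the only substantive ingredient being Bochner's identity, which I would cite from \citet[Section C.5]{bakry2013analysis} or \citet{lee2019riemann} rather than reprove. Since $M$ is a product of spheres the formula can alternatively be verified componentwise, but the above coordinate-free derivation applies verbatim.
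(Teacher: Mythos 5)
Your proposal is correct and follows essentially the same route as the paper's proof: expand $\Gamma_2$ from its definition with $\Gamma(f,f)=\tfrac1\beta|\grad f|_g^2$, apply the Bochner--Lichnerowicz identity to handle the $\Delta|\grad f|_g^2$ and $\langle\grad f,\grad\Delta f\rangle_g$ combination, and expand the drift contributions via metric compatibility so the $\nabla^2 f(\grad F,\grad f)$ cross terms cancel, leaving $\tfrac1\beta\nabla^2 F(\grad f,\grad f)$. The bookkeeping of the $1/\beta$ factors in your sketch checks out.
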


\begin{proof}

This calculation will follow the ones of 
\cite[Appendix C]{bakry2013analysis} closely, 
but we will add a temperature factor $\beta$, 
and make steps explicit. 

We start by stating the Bochner-Lichnerowicz formula 
\citep[Theorem C.3.3]{bakry2013analysis} 
\begin{equation}
	\frac{1}{2} \Delta ( |\grad f|_g^2 ) 
	= \langle \grad f, \grad \Delta f \rangle_g 
	+ |\nabla^2 f|_g^2 
	+ \Ric_g( \grad f, \grad f ) \,. 
\end{equation}

Then we will directly compute 
\begin{equation}
\begin{aligned}
	\Gamma_2(f,f) 
	&= \frac{1}{2} ( L \Gamma(f,f) - 2 \Gamma(f, Lf) ) \\
	&= \frac{1}{2} \left( 
		\frac{1}{\beta} L |\grad f|_g^2 
		- \frac{2}{\beta} \langle \grad f, \grad L f \rangle_g 
		\right) \,, 
\end{aligned}
\end{equation}
where we then compute one term at a time 
\begin{equation}
\begin{aligned}
	L |\grad f|_g^2 
	&= \langle -\grad F, 2 \nabla^2 f( \grad f )^\sharp \rangle_g 
		+ \frac{1}{\beta} \Delta ( |\grad f|_g^2 ) 
		\\ 
	&= \langle -\grad F, 2 \nabla^2 f( \grad f )^\sharp \rangle_g 
		+ \frac{2}{\beta} 
			( \langle \grad f, \grad \Delta f \rangle_g
			+ |\nabla^2 f|_g^2 
			+ \Ric_g( \grad f, \grad f ) ) \,, \\ 
	\langle \grad f, \grad L f \rangle_g
	&= \left\langle 
		\grad f, 
		- \nabla^2 F(\grad f)^\sharp 
		- \nabla^2 f(\grad F)^\sharp 
		+ \frac{1}{\beta} \grad \Delta f 
		\right\rangle_g \,. 
\end{aligned}
\end{equation}

Putting these terms together, 
we have the desired result 
\begin{equation}
\begin{aligned}
	\Gamma_2(f,f) 
	&= \frac{1}{\beta} 
		\left[ 
			\frac{1}{\beta} |\nabla^2 f|_g^2 
			+ \frac{1}{\beta} \Ric_g(\grad f, \grad f)
			+ \nabla^2 F( \grad f, \grad f ) 
		\right] \,. 
\end{aligned}
\end{equation}

\end{proof}

\begin{definition}
[Curvature Dimension Condition]
\label{defn:cd_infty}
For $\kappa \in \mathbb{R}$, 
we say the Markov triple $(M, \nu, \Gamma)$ 
satisfies the condition $CD(\kappa, \infty)$ 
if for all $f \in C^2(M)$ 
\begin{equation}
	\Gamma_2(f,f) \geq \kappa \, \Gamma(f,f) \,. 
\end{equation}
\end{definition}

We note in this case, 
$CD(\kappa, \infty)$ is equivalent to 
\begin{equation}
	\nabla^2 F + \frac{1}{\beta} \Ric_g 
	\geq \kappa \, g \,. 
\end{equation}

We will additionally state a couple of useful standard results 
before compute the logarithmic Sobolev inequality constant 
for the Gibbs distribution. 

\begin{proposition}
\citep[Proposition 5.7.1]{bakry2013analysis}
\label{prop:cd_log_sobolev}
Under the curvature dimension condition $CD(\kappa, \infty)$ 
for $\kappa > 0$, 
the Markov triple $(E,\mu,\Gamma)$ satisfies 
$\LSI(\kappa)$. 
\end{proposition}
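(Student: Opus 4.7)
The plan is to follow the classical Bakry--Émery semigroup interpolation argument. Let $(P_t)_{t \geq 0}$ denote the Markov semigroup generated by $L$, which is reversible with respect to $\nu$ and satisfies $\int P_t f \, d\nu = \int f \, d\nu$. Fix $f \in C^\infty(M)$ strictly positive with $\int f \, d\nu = 1$; the goal is to establish
\begin{equation*}
\Ent_\nu(f) := \int f \log f \, d\nu \leq \frac{1}{2\kappa}\int \frac{\Gamma(f,f)}{f} \, d\nu =: \frac{1}{2\kappa}\,I_\nu(f).
\end{equation*}

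First, I would introduce the interpolation $\Lambda(s) := P_s\bigl((P_{t-s}f)\log P_{t-s}f\bigr)$ for $s\in[0,t]$, so that $\Lambda(0) = (P_t f)\log P_t f$ and $\Lambda(t) = P_t(f\log f)$. Differentiating and using the diffusion chain rule $L(\phi \circ g) = \phi'(g)Lg + \phi''(g)\Gamma(g,g)$ applied to $\phi(u) = u\log u$, one obtains $\Lambda'(s) = P_s\bigl(\Gamma(P_{t-s}f,P_{t-s}f)/P_{t-s}f\bigr)$, yielding the integral representation
\begin{equation*}
P_t(f\log f) - (P_t f)\log(P_t f) \;=\; \int_0^t P_s\!\left(\frac{\Gamma(P_{t-s}f,P_{t-s}f)}{P_{t-s}f}\right) ds.
\end{equation*}

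Second, I would invoke $CD(\kappa,\infty)$ to control the integrand via a second interpolation. Applied to the function $h = P_{t-s}f$, the condition $\Gamma_2 \geq \kappa\,\Gamma$ together with the chain rule for $\phi(u) = u\log u$ gives the pointwise reverse local gradient bound
\begin{equation*}
\frac{\Gamma(P_r g, P_r g)}{P_r g} \leq e^{-2\kappa r}\, P_r\!\left(\frac{\Gamma(g,g)}{g}\right), \qquad g > 0,
\end{equation*}
obtained by differentiating $r \mapsto e^{-2\kappa r} P_r(\Gamma(P_{\tau - r} g, P_{\tau - r} g)/P_{\tau - r}g)$ and checking monotonicity. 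Substituting this into the integral representation (with $r = t-s$ and a change of variable) produces the local logarithmic Sobolev inequality
\begin{equation*}
P_t(f\log f) - (P_t f)\log(P_t f) \leq \frac{1 - e^{-2\kappa t}}{2\kappa}\, P_t\!\left(\frac{\Gamma(f,f)}{f}\right).
\end{equation*}

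Finally, I would integrate against $\nu$, using invariance $\int P_t \phi \, d\nu = \int \phi \, d\nu$ for both sides, to obtain
\begin{equation*}
\Ent_\nu(f) - \int (P_t f)\log(P_t f)\, d\nu \leq \frac{1 - e^{-2\kappa t}}{2\kappa}\, I_\nu(f).
\end{equation*}
Since $\kappa > 0$, $CD(\kappa,\infty)$ also yields exponential ergodicity $P_t f \to 1$ in $L^2(\nu)$, so the left-hand subtraction vanishes as $t \to \infty$. Letting $t\to\infty$ produces $\Ent_\nu(f) \leq \tfrac{1}{2\kappa} I_\nu(f)$, i.e.\ $\LSI(\kappa)$, and a density argument extends the inequality from smooth positive $f$ to arbitrary $f = d\mu/d\nu \in C^1(M)$. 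The main obstacle is establishing the reverse local gradient bound cleanly: it rests on the chain rule identity for $L$ acting on $u\log u$ and a careful monotonicity calculation that invokes $\Gamma_2 \geq \kappa \Gamma$ precisely once, plus a mild regularization to guarantee strict positivity of $P_r g$ throughout the interpolation on the compact manifold $M$.
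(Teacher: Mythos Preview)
The paper does not supply its own proof of this proposition; it is quoted directly as Proposition~5.7.1 of \citet{bakry2013analysis} and used as a black box. Your argument is precisely the classical Bakry--\'Emery semigroup interpolation proof from that reference---deriving the local LSI from the integral representation of $P_t(f\log f) - (P_t f)\log P_t f$ together with the exponential Fisher-information commutation under $CD(\kappa,\infty)$, then integrating against $\nu$ and sending $t\to\infty$---and it is correct.
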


We will also need the following perturbation result 
originally due to \cite{holley1987logarithmic}. 

\begin{proposition}
\citep[Proposition 5.1.6]{bakry2013analysis} 
\label{prop:holley_strook_perturb}
Assume that the Markov triple $(E,\mu,\Gamma)$ 
satisfies $\LSI(\alpha)$. 
Let $\mu_1$ be a probability measure with density $h$ 
with respect to $\mu$ such that 
\begin{equation}
	\frac{1}{b} \leq h \leq b \,, 
\end{equation}
for some constant $b>0$. 
Then $\mu_1$ satisfies a logarithmic Sobolev inequality 
with constant $\alpha / b^2$. 
\end{proposition}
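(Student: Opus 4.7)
The plan is to establish the inequality directly from the standard two-sided comparison $\frac{1}{b} \le h \le b$, using the variational (or ``infimum'') characterization of entropy to bridge between integration against $\mu$ and $\mu_1$. Throughout I will work with the equivalent formulation of $\LSI(\alpha)$ as $\mathrm{Ent}_\mu(f^2) \le \frac{2}{\alpha} \int \Gamma(f) \, d\mu$ for $f \in C^1$, recalling that this is equivalent to the $H_\nu(\mu) \le \frac{1}{2\alpha} I_\nu(\mu)$ form used in this paper via $h = f^2 / \int f^2 d\mu$.

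First I would recall the pointwise-nonnegative identity
\begin{equation*}
	y \log(y/c) - y + c \ge 0 \quad \text{for all } y, c > 0,
\end{equation*}
from which, after integration, one obtains the variational bound
\begin{equation*}
	\mathrm{Ent}_{\mu_1}(f^2) \le \int \bigl[ f^2 \log(f^2 / c) - f^2 + c \bigr] \, d\mu_1
\end{equation*}
for any $c > 0$. The key step is to choose $c := \int f^2 \, d\mu$ (the normalizing constant with respect to $\mu$, not $\mu_1$). Then, writing $d\mu_1 = h \, d\mu$ and applying the upper bound $h \le b$ to the pointwise-nonnegative integrand, one gets
\begin{equation*}
	\mathrm{Ent}_{\mu_1}(f^2) \le b \int \bigl[ f^2 \log(f^2 / c) - f^2 + c \bigr] \, d\mu = b \cdot \mathrm{Ent}_\mu(f^2).
\end{equation*}

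Next I would invoke $\LSI(\alpha)$ for $\mu$ to bound the right-hand side by $\frac{2b}{\alpha} \int \Gamma(f) \, d\mu$, and then convert the Dirichlet-form integral back to one against $\mu_1$ using the lower bound $h \ge 1/b$, namely
\begin{equation*}
	\int \Gamma(f) \, d\mu = \int \Gamma(f) \cdot \frac{1}{h} \, d\mu_1 \le b \int \Gamma(f) \, d\mu_1.
\end{equation*}
Combining these two applications of the density bounds yields $\mathrm{Ent}_{\mu_1}(f^2) \le \frac{2 b^2}{\alpha} \int \Gamma(f) \, d\mu_1$, which is $\LSI(\alpha/b^2)$ for the triple $(E, \mu_1, \Gamma)$.

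There is no genuine obstacle here; this is a soft perturbation argument that exploits only the pointwise sandwich on $h$ and the fact that the integrand in the variational characterization of entropy is pointwise nonnegative. The only subtlety worth stressing is that the two factors of $b$ enter multiplicatively, one from each side of the bound on $h$, accounting for the $b^2$ degradation in the constant; this is precisely why Holley--Stroock is useless for large perturbations (such as the Gibbs weight $e^{-\beta F}$ with $\beta$ large), and motivates the alternative Lyapunov/escape-time route developed in \cref{sec:escape_time_proof}.
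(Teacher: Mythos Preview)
Your argument is correct and is the standard Holley--Stroock proof via the variational characterization of entropy. Note, however, that the paper does not supply its own proof of this proposition: it is simply quoted from \citet[Proposition 5.1.6]{bakry2013analysis} as a known result, so there is no ``paper's proof'' to compare against beyond observing that your argument matches the one in that reference.
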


Here we remark that an equivalent statement is that if 
$\nu = \frac{1}{Z} e^{-\beta F}$ 
and $\wt{\nu} = \frac{1}{Z} e^{-\beta \wt{F}}$, 
where $(M, \nu, \Gamma)$ satisfies $\LSI(\alpha)$, 
then $(M, \wt{\nu}, \Gamma)$ satisfies 
$\LSI( \alpha e^{ - \Osc ( F - \wt{F} ) } )$, 
where we define 
\begin{equation}
	\Osc h := \sup h - \inf h \,. 
\end{equation}

\subsection{Local in Time Results}

In this subsection, we will recall and adapt 
several known results about the transition density 
of a Brownian motion on $S^d$. 
Equivalently, we would be studying the semigroup 
$\{P_t\}_{t\geq 0}$ defined by 
\begin{equation}
	P_t \phi(x) := \mathbb{E} [ \phi(X_t) | X_0 = x ] \,, 
\end{equation}
where $\{X_t\}_{t \geq 0}$ is the Brownian motion on $S^d$ 
with diffusion coefficient $\sqrt{2/\beta}$. 

We start by stating an important radial comparison theorem 
on the manifold. 

\begin{theorem}\citep[Theorem 3.5.3]{hsu2002stochastic}
\label{thm:radial_comparison}
let $\{X_t\}_{t\geq 0}$ be a standard Brownian motion 
on a general Riemannian manifold $(M,g)$, 
and define the radial process 
\begin{equation}
	r_t := d_g(X_t, X_0) \,, 
\end{equation}
then the radial satisfies 
\begin{equation}
	r_t = \beta_t + \frac{1}{2} 
		\int_0^t \Delta_M r(X_s) \, ds - L_t \,, 
	\quad t < e(X) \,, 
\end{equation}
where $\beta_t$ is a standard Brownian motion in $\mathbb{R}$, 
$L_t$ is a local time process supported on the cut locus of $X_0$, 
and $e(X)$ is the exit time of the process. 

Let $\kappa:[0,\infty) \to \mathbb{R}$ be such that 
\begin{equation}
	\kappa(r) \geq \max \{ K_M(x) : d_g(x,X_0) = r \} \,, 
\end{equation}
where $K_M(x)$ is the maximum sectional curvature at $x$. 
We then define $G(r)$ as the unique solution to 
\begin{equation}
	G''(r) + \kappa(r) G(r) = 0 \,, 
	\quad G(0) = 0 \,, G'(0) = 1 \,, 
\end{equation}
and $\rho_t$ as the unique nonnegative solution to 
\begin{equation}
	\rho_t = \beta_t + \frac{d-1}{2} 
		\int_0^t \frac{G'(\rho_s)}{G(\rho_s)} \, ds \,. 
\end{equation}

Then we have that $e(\rho) \geq e(X)$ 
and $\rho_t \leq r_t$ for all $t < T_{C_0}$, 
where $T_{C_0}$ is the first hitting time 
of the cut locus. 

\end{theorem}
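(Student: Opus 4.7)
The plan is to prove the statement in two essentially independent parts: first establish the semimartingale decomposition of $r_t = d_g(X_t, X_0)$ using an Itô-type formula adapted to the non-smoothness of the distance function, then deduce the pathwise comparison with $\rho_t$ from a Laplacian comparison estimate combined with a one-dimensional SDE comparison.

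For the decomposition, the core observation is that the function $r(\cdot) := d_g(\cdot, X_0)$ is smooth on $M \setminus (\{X_0\} \cup C_0)$, where $C_0$ is the cut locus of $X_0$, and on this open set $|\grad r|_g = 1$ everywhere. Away from the cut locus I would apply the standard Itô formula on the manifold to write $dr_t = \langle \grad r(X_t), dX_t \rangle_g + \tfrac{1}{2} \Delta_M r(X_t)\, dt$; since the martingale part has quadratic variation $\int_0^t |\grad r(X_s)|_g^2\, ds = t$, Lévy's characterization identifies it with a standard Brownian motion $\beta_t$ in $\mathbb{R}$. The technical subtlety is extending this decomposition across the cut locus, where $r$ fails to be $C^2$ and $\Delta_M r$ must be interpreted in the distributional sense (typically as a negative measure supported on $C_0$). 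The standard route, which I would follow, is to smooth $r$ by a family of radial mollifiers $r_\varepsilon$, write Itô for $r_\varepsilon(X_t)$, and pass to the limit: the drift splits into the a.c.\ part $\tfrac{1}{2}\int_0^t \Delta_M r(X_s)\, ds$ (integrated over the complement of $C_0$) and a continuous nondecreasing process $L_t$ which arises as the limit of the singular part and is supported on the occupation times at $C_0$. Showing that $L_t$ is well-defined, finite, and nondecreasing is the main technical work; it relies on the fact that $C_0$ has measure zero and on upper bounds for $\Delta_M r$ near $C_0$ coming from Jacobi field comparison. This is the step I expect to be the main obstacle, since handling the local time carefully requires the Bismut--Kendall type smoothing argument rather than a bare Itô calculation.

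For the comparison $\rho_t \leq r_t$, the key input is the Laplacian comparison theorem: under the upper sectional curvature bound $K_M(x) \leq \kappa(d_g(x, X_0))$, one has $\Delta_M r(x) \geq (d-1) G'(r(x))/G(r(x))$ pointwise on $M \setminus (\{X_0\} \cup C_0)$, where $G$ solves the Jacobi equation $G'' + \kappa G = 0$ with $G(0) = 0, G'(0) = 1$. This is a standard consequence of Rauch's comparison theorem applied to the Jacobi fields along radial geodesics from $X_0$. Plugging this into the decomposition from the first part, and using $L_t \geq 0$, gives
\begin{equation}
r_t \geq \beta_t + \frac{d-1}{2} \int_0^t \frac{G'(r_s)}{G(r_s)}\, ds, \qquad t < T_{C_0}.
\end{equation}
Since $\rho_t$ satisfies the same equation with equality and the drift coefficient $(d-1) G'(\cdot)/(2G(\cdot))$ is locally Lipschitz on $(0,\infty)$ with the appropriate monotonicity, a standard one-dimensional strong comparison theorem for SDEs driven by the same Brownian motion $\beta_t$ (for instance Ikeda--Watanabe or the weak variant stated as Theorem~\ref{thm:SDE_weak_comparison} in this paper) yields $\rho_t \leq r_t$ on $[0, T_{C_0})$. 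The bound $e(\rho) \geq e(X)$ follows because the comparison forces $\rho$ to stay bounded whenever $r$ does.
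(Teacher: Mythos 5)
This statement is not proved in the paper at all: it is quoted verbatim as an external result, \citep[Theorem 3.5.3]{hsu2002stochastic}, and is only used later as a black box to obtain \cref{cor:conc_mean_comparison}. So there is no internal proof to compare against; what you have written is a re-derivation of the cited theorem, and your outline does follow the standard argument of that reference: Kendall-type It\^{o} formula for the radial part (smooth off the cut locus, $|\grad r|_g=1$ plus L\'{e}vy's characterization for the martingale part, mollification to produce the nonincreasing local-time correction $-L_t$ supported on the cut locus), then the Hessian/Laplacian comparison $\Delta_M r \geq (d-1)G'(r)/G(r)$ from the upper sectional curvature bound via Rauch, and finally a one-dimensional comparison against $\rho_t$. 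Two technical points you should flag if you actually write this out: (i) the paper's \cref{thm:SDE_weak_comparison} does not apply verbatim, because the drift of $r_t$ is $\tfrac12\Delta_M r(X_t)$, a function of $X_t$ and not of $r_t$ alone, so you need the super-solution form of the comparison (drift bounded below by $\tfrac{d-1}{2}G'(r_t)/G(r_t)$) rather than two bona fide one-dimensional SDEs; and (ii) the drift $G'/G$ is singular at $0$ and both processes start at $r_0=\rho_0=0$, so the comparison at the start point needs the Bessel-type care taken in Hsu's proof, not a bare local-Lipschitz argument on $(0,\infty)$. With those caveats your sketch is the correct proof of the quoted result, but for the purposes of this paper the theorem is simply imported, not proved.
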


Using the previous result, 
we can now upper bound the radial process of Brownian motion 
on $S^d$ by the radial process on $\mathbb{R}^d$. 

\begin{corollary}
\label{cor:conc_mean_comparison}
Let $r_t$ be the radial process of a Brownian motion on $\mathbb{R}^d$, 
and $\rho_t$ be the radial process of a Brownian motion on $S^d$, 
then we have that $r_t \geq \rho_t$ 
for all $t < T_{C_0}$ i.e. before hitting the cut locus. 
Consequently for all $0 < \delta < d_g(X_0, C_0)$, 
i.e. less than the distance to the cut locus, 
we have the following bounds 
\begin{equation}
	\mathbb{P}[ \, \rho_t \geq \delta ]
	\leq \mathbb{P}[ \, r_t \geq \delta ] \,, 
	\quad 
	\mathbb{E} \, \rho_t \leq \mathbb{E} \, r_t \,. 
\end{equation}
\end{corollary}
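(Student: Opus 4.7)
The plan is to reduce the claim to a one-dimensional SDE comparison by extracting the explicit semimartingale decomposition of each radial process from Theorem 3.5.3. Before the first hitting time $T_{C_0}$ of the cut locus, the theorem states that the radial process has no local-time contribution, so it is a semimartingale driven by a scalar Brownian motion with drift $\tfrac{1}{2}\Delta_M r$. For $M = \mathbb{R}^d$ one has $\Delta r = (d-1)/r$, which gives the Bessel SDE
\begin{equation}
dr_t = d\beta_t + \frac{d-1}{2\, r_t}\, dt,
\end{equation}
while for $M = S^d$ the classical formula $\Delta_{S^d} r = (d-1)\cot r$ on $(0,\pi)$ yields
\begin{equation}
d\rho_t = d\widetilde\beta_t + \frac{d-1}{2}\cot(\rho_t)\, dt, \qquad t < T_{C_0}.
\end{equation}

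After coupling the two processes to a common driving one-dimensional Brownian motion, I would invoke a standard scalar SDE comparison result (e.g.\ Yamada--Watanabe, or a direct Tanaka-formula estimate applied to $(\rho_t - r_t)^+$) to conclude the pathwise bound $\rho_t \leq r_t$. The only input required is the pointwise drift inequality
\begin{equation}
\cot r \leq \frac{1}{r} \qquad \text{for } r \in (0,\pi),
\end{equation}
which follows from the Laurent expansion $\cot r = 1/r - r/3 + O(r^3)$ near the origin together with the fact that $\cot r < 0 \leq 1/r$ on $(\pi/2,\pi)$. The two stated consequences are then immediate from pathwise ordering: the inclusion $\{\rho_t \geq \delta\} \subset \{r_t \geq \delta\}$ holds on the event $\{t < T_{C_0}\}$, which is all of the sample space when $\delta < d_g(X_0, C_0)$ precludes reaching the cut locus; and monotonicity of expectation gives $\mathbb{E}\rho_t \leq \mathbb{E} r_t$.

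The main technical nuisance is the behaviour of both SDEs at the singularity $r = 0$, where the drifts blow up. I would sidestep this either by starting both processes from a common point $\varepsilon > 0$, applying the comparison there, and taking $\varepsilon \to 0$ by continuous dependence, or by invoking the fact that in dimension $d \geq 2$ the Bessel process leaves the origin instantaneously and never returns, so the singular drift is never visited for positive time and the comparison need only be checked on $\{r_t > 0\}$. With that caveat handled, the remainder of the argument reduces to the elementary drift inequality above.
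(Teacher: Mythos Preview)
Your proposal is correct and follows essentially the same route as the paper: identify the two radial SDEs (Bessel drift $(d-1)/(2r)$ for $\mathbb{R}^d$, drift $\tfrac{d-1}{2}\cot r$ for $S^d$), couple them to a common one-dimensional Brownian motion, and compare via the drift inequality $\cot r \leq 1/r$ on $(0,\pi)$. The only cosmetic difference is that the paper packages the comparison step as a direct application of Hsu's radial comparison theorem (\cref{thm:radial_comparison}) with $M=\mathbb{R}^d$ and curvature upper bound $\kappa=1$, which internally encodes exactly the same drift inequality you would invoke by hand.
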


\begin{proof}

Firstly, for $\kappa(r)$ a constant function, 
we can show that the unique solution of ODE for $G$ 
is a generalize sine function 
\begin{equation}
	G(r) = \begin{cases}
		\frac{1}{\sqrt{\kappa}} \sin( \sqrt{\kappa} r )
		\,, & \kappa > 0 \,, \\
		r \,, & \kappa = 0 \,, \\
		\frac{1}{\sqrt{-\kappa}} \sinh(\sqrt{-\kappa} r) 
		\,, & \kappa < 0 \,. 
	\end{cases}
\end{equation}

As a straight forward consequence of 
the Laplacian comparison theorem 
\citep[Theorem 3.4.2]{hsu2002stochastic}
for constant curvature manifolds, 
or many other sources such as 
\cite[p. 269]{ito1974diffusion}, 
we can show that the radial process of 
a Brownian motion on $S^d$ satisfies 
\begin{equation}
	\rho_t = \beta_t + \frac{d-1}{2} 
		\int_0^t \cot( \rho_s ) \, ds \,, 
\end{equation}
and the radial process of an $\mathbb{R}^d$ 
is a Bessel process 
\begin{equation}
	r_t = \beta_t + \frac{d-1}{2} 
		\int_0^t \frac{1}{r_s} \, ds \,. 
\end{equation}

Using the comparison from \cref{thm:radial_comparison}, 
and the fact that on $S^d$ we have $\kappa = 1$, 
we immediately have the desired comparison 
\begin{equation}
	\rho_t \leq r_t \,, \quad t < T_{C_0} \,. 
\end{equation}

The tail and expectation bound 
follows immediately from the comparison. 

\end{proof}

Before we can continue, 
we will cite a local logarithmic Sobolev inequality result. 

\begin{theorem}
\citep[Theorem 5.5.2, Local LSI]{bakry2013analysis}
\label{thm:local_log_sobolev}
Let $(E, \mu, \Gamma)$ be a Markov Triple with semigroup $P_t$. 
The following are equivalent. 
\begin{enumerate}
	\item The curvature dimension condition 
		$CD(\kappa, \infty)$ holds for some $\kappa \in \mathbb{R}$. 
	\item For every $f \in H^1(\mu)$ 
		and every $t \geq 0$, we have that
		\begin{equation}
			P_t ( f \log f ) - (P_t f) \log ( P_t f )
			\leq 
				\frac{ 1 - e^{ - 2 \kappa t } }{ 2 \kappa }
				P_t\left( \frac{ \Gamma(f) }{ f }
					\right) \,, 
		\end{equation}
\end{enumerate}
where when $\kappa = 0$, we will replace 
$\frac{ 1 - e^{ - 2 \kappa t } }{ 2 \kappa }$ with $t$. 

In other words, the Markov triple $(M,p_t,\Gamma)$, 
where $p_t$ is the transition kernel density of $P_t$, 
satisfies $\LSI( \frac{ 2\kappa }{ 1 - e^{ - 2 \kappa t } } )$. 
\end{theorem}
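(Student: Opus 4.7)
The plan is to prove the equivalence by the standard Bakry--Émery interpolation along the semigroup, combined with a strong gradient estimate. For the forward direction (1) $\Rightarrow$ (2), I would fix $t>0$ and define the interpolant
\begin{equation*}
  \Lambda(s) := P_s\bigl((P_{t-s}f)\log(P_{t-s}f)\bigr), \qquad s\in[0,t],
\end{equation*}
so that $\Lambda(0) = (P_t f)\log(P_t f)$ and $\Lambda(t) = P_t(f\log f)$, and the quantity we want to bound is $\int_0^t \Lambda'(s)\,ds$. Writing $g_s := P_{t-s}f$, the diffusion chain rule $L(g\log g) = (\log g + 1)Lg + \Gamma(g)/g$ together with $\partial_s g_s = -Lg_s$ yields the telescoping identity
\begin{equation*}
  \Lambda'(s) = P_s\!\left(\frac{\Gamma(P_{t-s}f)}{P_{t-s}f}\right).
\end{equation*}

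Next I would invoke the ``strong'' local gradient bound, which is equivalent to $CD(\kappa,\infty)$ for a Markov diffusion: for every nonnegative $h$ and every $u\ge 0$,
\begin{equation*}
  \frac{\Gamma(P_u h)}{P_u h} \;\leq\; e^{-2\kappa u}\, P_u\!\left(\frac{\Gamma(h)}{h}\right).
\end{equation*}
Applying this with $u = t-s$ and $h = f$, pushing the bound through $P_s$ (a positive operator) and using $P_s P_{t-s} = P_t$ gives $\Lambda'(s) \le e^{-2\kappa(t-s)}\, P_t(\Gamma(f)/f)$. Integrating in $s$ produces exactly
\begin{equation*}
  P_t(f\log f) - (P_tf)\log(P_tf) \;\leq\; \frac{1-e^{-2\kappa t}}{2\kappa}\, P_t\!\left(\frac{\Gamma(f)}{f}\right),
\end{equation*}
with the limiting case $\kappa=0$ giving the factor $t$.

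For the converse (2) $\Rightarrow$ (1), I would apply the assumed local inequality to $f = 1 + \varepsilon\,\phi$ for a smooth $\phi$ and take a Taylor expansion in both $\varepsilon \to 0$ and $t\to 0$. The leading nontrivial orders reproduce the inequality $\Gamma_2(\phi,\phi) \ge \kappa\,\Gamma(\phi,\phi)$, recovering the curvature dimension condition.

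The main obstacle is the strong gradient estimate itself; simply having $CD(\kappa,\infty)$ in the form $\Gamma_2 \ge \kappa\Gamma$ does not immediately yield the pointwise bound on $\Gamma(P_u h)/P_u h$, since the denominator couples the estimate to the positivity of $P_u h$. I would establish it by a second interpolation: set $\Psi(r) := P_r\!\bigl(\Gamma(P_{u-r}h)/P_{u-r}h\bigr)$ and, after a careful calculus exercise using $L(\Gamma(g)/g) = 2\Gamma_2(g)/g - \Gamma(g,\Gamma(g))/g^2 + \cdots$ and $\partial_r P_{u-r}h = -LP_{u-r}h$, show that $\Psi'(r) \ge 2\kappa\,\Psi(r)$ under $CD(\kappa,\infty)$. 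Gr\"onwall on $[0,u]$ with the endpoints $\Psi(0) = \Gamma(P_u h)/P_u h$ and $\Psi(u) = P_u(\Gamma(h)/h)$ then gives the required factor $e^{-2\kappa u}$. Once this lemma is in hand, the rest of the argument is a clean one-line interpolation.
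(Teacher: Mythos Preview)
The paper does not prove this statement; it is simply quoted as \citet[Theorem 5.5.2]{bakry2013analysis} without any argument, so there is no in-paper proof to compare against. Your proposal reproduces the standard Bakry--\'Emery semigroup interpolation that the cited reference uses: differentiate $\Lambda(s)=P_s\bigl((P_{t-s}f)\log(P_{t-s}f)\bigr)$ to obtain $\Lambda'(s)=P_s\bigl(\Gamma(P_{t-s}f)/P_{t-s}f\bigr)$, apply the strong gradient bound $\Gamma(P_uh)/P_uh\le e^{-2\kappa u}P_u(\Gamma(h)/h)$ (itself proved by a second interpolation and $CD(\kappa,\infty)$), and integrate; the converse by $f=1+\varepsilon\phi$ and $t\to 0$ is likewise standard. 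This is correct and matches the reference's own method.
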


\begin{corollary}
\label{cor:heat_local_log_sobolev}
For a Brownian motion on $S^d$ 
or $M = S^d \times \cdots \times S^d$ ($n$-times) 
with diffusion coefficient $\sqrt{2/\beta}$, 
its transition density $p_t$ satisfies 
$\LSI(\frac{1}{2t})$. 
\end{corollary}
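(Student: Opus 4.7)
The plan is to derive this corollary directly from Theorem~\ref{thm:local_log_sobolev} by verifying a positive curvature--dimension condition for pure Brownian motion on $S^d$ (and on the product manifold $M$), and then concluding with an elementary calculus estimate. Since Brownian motion corresponds to the trivial potential $F=0$ in the generator $L\phi = \langle -\grad F,\grad \phi\rangle_g + \frac{1}{\beta}\Delta \phi$, the explicit formula for the iterated carr\'{e} du champ computed in \cref{subsec:app_bakry_emery} simplifies to
\begin{equation*}
\Gamma_2(f,f) = \tfrac{1}{\beta^2}\,|\nabla^2 f|_g^2 + \tfrac{1}{\beta^2}\,\Ric_g(\grad f,\grad f).
\end{equation*}

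First I would use the fact that $S^d$ is Einstein with $\Ric_{g'} = (d-1)\,g'$. Dropping the nonnegative Hessian term and recalling $\Gamma(f,f) = \tfrac{1}{\beta}|\grad f|_g^2$ gives the bound $\Gamma_2(f,f) \geq \tfrac{d-1}{\beta}\,\Gamma(f,f)$, i.e. the condition $CD\!\left(\tfrac{d-1}{\beta},\infty\right)$ of \cref{defn:cd_infty}. For the product manifold $M=S^d\times\cdots\times S^d$, the same bound holds either by directly observing that a product of Einstein manifolds with equal Einstein constants is again Einstein with that constant, or by invoking tensorization of local logarithmic Sobolev inequalities (\cref{prop:lsi_tensorization_bakry}) applied factorwise.

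Next I would apply the equivalence in \cref{thm:local_log_sobolev} with $\kappa = \tfrac{d-1}{\beta}$, which yields that for each starting point the transition kernel $p_t$ satisfies $\LSI(\alpha_t)$ with
\begin{equation*}
\alpha_t = \frac{2(d-1)/\beta}{1 - e^{-2(d-1)t/\beta}},
\end{equation*}
interpreted as $\alpha_t = 1/t$ in the degenerate case $d=1$ (where $\kappa = 0$). It then remains to show $\alpha_t \geq \tfrac{1}{2t}$. Setting $u = 2(d-1)t/\beta \geq 0$, the inequality $\alpha_t \geq \tfrac{1}{2t}$ is equivalent to $\tfrac{2u}{1-e^{-u}}\geq 1$, which follows from the elementary bound $1 - e^{-u}\leq u$ (with the right-hand side interpreted as $2$ in the limit $u\downarrow 0$). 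Since logarithmic Sobolev constants are monotone under weakening, this yields the claimed $\LSI(\tfrac{1}{2t})$ bound for $p_t$.

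There is no real obstacle here: the only point that deserves care is making sure the product case is handled cleanly (since the statement covers both $S^d$ and $M$), but both routes -- direct Ricci computation on the product or tensorization of the single--sphere LSI -- give the same constant since each factor has identical curvature, so the minimum in \cref{prop:lsi_tensorization_bakry} is attained uniformly.
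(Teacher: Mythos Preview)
Your proposal is correct and follows essentially the same route as the paper: verify $CD\bigl(\tfrac{d-1}{\beta},\infty\bigr)$ from $\Ric_{g'}=(d-1)g'$ with trivial potential, note that the product manifold inherits the same Ricci lower bound, apply \cref{thm:local_log_sobolev}, and finish with the elementary bound $1-e^{-u}\le u$. The only cosmetic difference is that you offer tensorization as an alternative for the product case, whereas the paper simply observes that $\Ric_g$ is unchanged on the product; both yield the same constant.
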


\begin{proof}

We will simply verify the conditions of \cref{thm:local_log_sobolev}. 
For $S^d$, our Markov triple is 
$(S^d, \mu, \Gamma)$ where $\mu$ is uniform on the sphere, 
and $\Gamma(f) = \frac{1}{\beta} |\grad f|_g^2$. 
Hence when our potential $F$ is constant, 
and we can check 
\begin{equation}
	\nabla^2 F + \frac{1}{\beta} \Ric_g 
	= \frac{1}{\beta} \Ric_g 
	= \frac{d-1}{\beta} g \,,
\end{equation}
therefore $(S^d, \mu, \Gamma)$ satisfies 
the curvature dimension condition $CD( (d-1) / \beta, \infty )$. 

On $M = S^d \times \cdots S^d$, 
we observe that $\Ric_g$ remains unaffected, 
and therefore we still satisfy 
the curvature dimension condition $CD((d-1) / \beta, \infty)$. 

Using \cref{thm:local_log_sobolev}, 
we also observe that 
\begin{equation}
	\frac{ 2\kappa }{ 1 - e^{ - 2 \kappa t } } 
	\leq 
		\frac{1}{2t} \,, 
\end{equation}
which is the desired result. 

\end{proof}

We will additionally need a local Poincar\'{e} inequality, 
which we will define as follows. 

\begin{definition*}
\label{defn:poincare}
We say that a Markov triple $(E,\mu,\Gamma)$ satisfies 
a \textbf{Poincar\'{e} inequality} with constant $\kappa>0$, 
denoted $\PI(\kappa)$ if for all $f \in L^2(\mu) \cap C^1(M)$, 
we have that 
\begin{equation}
\label{eq:defn_poincare}
	\mu( f^2 ) - \mu(f)^2 
	\leq 
		\frac{1}{\kappa} \, \mu( \Gamma(f) ) \,, 
\end{equation}
where $\mu(f):= \int_E f \, d\mu$. 
\end{definition*}

\begin{theorem}
\citep[Theorem 4.7.2, Local PI]{bakry2013analysis}
\label{thm:local_poincare}
Let $(E, \mu, \Gamma)$ be a Markov Triple with semigroup $P_t$. 
The following are equivalent. 
\begin{enumerate}
	\item The curvature dimension condition 
		$CD(\kappa, \infty)$ holds for some $\kappa \in \mathbb{R}$. 
	\item For every $f \in H^1(\mu)$ 
		and every $t \geq 0$, we have that
		\begin{equation}
			P_t ( f^2 ) - (P_t f)^2 
			\leq 
				\frac{ 1 - e^{ - 2 \kappa t } }{ \kappa }
				P_t\left( \Gamma(f) 
					\right) \,, 
		\end{equation}
\end{enumerate}
where when $\kappa = 0$, we will replace 
$\frac{ 1 - e^{ - 2 \kappa t } }{ \kappa }$ with $2t$. 

In other words, the Markov triple $(M,p_t,\Gamma)$, 
where $p_t$ is the transition kernel density of $P_t$, 
satisfies $\PI( \frac{ 2\kappa }{ 1 - e^{ - \kappa t } } )$. 
\end{theorem}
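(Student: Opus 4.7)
The plan is to establish the equivalence by the Bakry--\'Emery interpolation technique, which is also exactly how the local logarithmic Sobolev inequality in \cref{thm:local_log_sobolev} is proved, so the two arguments are parallel.

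For the implication $(1) \Rightarrow (2)$, I would introduce the interpolant
\begin{equation*}
	\Lambda(s) := P_s\bigl( (P_{t-s} f)^2 \bigr) \,, \quad s \in [0,t] \,,
\end{equation*}
which interpolates $\Lambda(0) = (P_t f)^2$ and $\Lambda(t) = P_t(f^2)$, so that the left-hand side of the claimed inequality is $\int_0^t \Lambda'(s)\,ds$. A direct computation using $\partial_s P_s = P_s L$, the chain rule for $L$ acting on a square, and the defining identity $L(\phi^2) - 2 \phi L \phi = 2\Gamma(\phi)$, produces
\begin{equation*}
	\Lambda'(s) = 2 \, P_s\bigl( \Gamma(P_{t-s} f) \bigr) \,.
\end{equation*}

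The second ingredient is the exponential gradient contraction $\Gamma(P_r g) \leq e^{-2\kappa r} P_r(\Gamma(g))$, which is another classical equivalent form of $CD(\kappa, \infty)$. I would derive it by a second interpolation: differentiating $s \mapsto P_s(\Gamma(P_{r-s} g))$ in $s$ and using $\Gamma_2 \geq \kappa \Gamma$ yields a derivative bounded below by $2\kappa$ times the quantity itself, and Gr\"onwall gives the exponential contraction. Substituting this bound into the expression for $\Lambda'(s)$ and using the semigroup property $P_s P_{t-s} = P_t$ produces $\Lambda'(s) \leq 2 e^{-2\kappa(t-s)} P_t(\Gamma(f))$; integrating in $s$ from $0$ to $t$ yields the coefficient $\tfrac{1 - e^{-2\kappa t}}{\kappa}$, with the case $\kappa = 0$ recovered in the limit as $2t$.

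For the converse $(2) \Rightarrow (1)$, I would Taylor expand both sides near $t = 0$. Using $P_t \phi = \phi + tL\phi + \tfrac{t^2}{2}L^2\phi + O(t^3)$, a direct calculation (applying $L(gh) = gLh + hLg + 2\Gamma(g,h)$) gives
\begin{equation*}
	P_t(f^2) - (P_t f)^2 = 2t\,\Gamma(f) + t^2 \bigl( 2\Gamma_2(f,f) + 4\Gamma(f, Lf) \bigr) + O(t^3) \,,
\end{equation*}
and similarly expanding $\tfrac{1 - e^{-2\kappa t}}{\kappa} P_t(\Gamma(f))$ and using $L\Gamma(f) = 2\Gamma_2(f,f) + 2\Gamma(f, Lf)$ gives $2t\,\Gamma(f) + t^2(4\Gamma_2(f,f) + 4\Gamma(f,Lf) - 2\kappa \Gamma(f)) + O(t^3)$. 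The $O(t)$ terms match identically, so the inequality on $[0,\varepsilon]$ forces the $O(t^2)$ coefficients to satisfy $2\Gamma_2(f,f) + 4\Gamma(f, Lf) \leq 4\Gamma_2(f,f) + 4\Gamma(f,Lf) - 2\kappa\Gamma(f)$, which simplifies precisely to $\Gamma_2(f,f) \geq \kappa \Gamma(f,f)$, i.e.\ $CD(\kappa, \infty)$.

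The main subtlety is analytic rather than algebraic: one must ensure $P_{t-s} f$ lies in a function class where the calculus of $L$, $\Gamma$, and $\Gamma_2$ is valid, and that differentiation under $P_s$ is justified. On the compact manifold $M = (S^d)^n$ relevant to our setting, $C^\infty(M)$ is invariant under $P_t$ and everything is bounded, so these smoothness issues pose no difficulty; the compact-manifold hypothesis used throughout the paper makes all the formal manipulations rigorous.
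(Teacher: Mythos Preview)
The paper does not supply its own proof of this statement: \cref{thm:local_poincare} is quoted directly from \citet[Theorem 4.7.2]{bakry2013analysis} as a background result, with no argument given. Your proposal is correct and is precisely the standard Bakry--\'Emery semigroup interpolation argument that the cited reference uses, so there is nothing further to compare.
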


\subsection{Bakry-\'{E}mery with Boundary Conditions}

In this subsection, we consider the case when $U$ 
is a convex manifold with boundary $\de U$, 
and we consider the Langevin diffusion with 
a reflecting boundary condition in the same setting as 
\citep{qian1997gradient}. 
More specifically, for the generator 
\begin{equation}
	L \phi = \langle - \grad F, \grad \phi \rangle_g 
		+ \frac{1}{\beta} \Delta \phi \,, 
\end{equation}
$X_t$ is the unique diffusion such that 
\begin{equation}
	f(X_t) - f(X_0) - \int_0^t \, Lf(X_s) \, ds \,, 
\end{equation}
is a martingale for all $f \in C^2_0(U)$ with 
$\frac{\de f}{\de n} = 0$, where $n$ is the outward normal. 

In this case, we can define the semigroup $P_t$ as follows
\begin{equation}
	P_t f(x) = \mathbb{E} [ f(X_t) | X_0 = x ] \,. 
\end{equation}

We will also define the curvature potential as in 
\citep{qian1997gradient}
\begin{equation}
	\rho^B(x) := \inf \left\{ 
		( \Ric_g - \nabla^s_B )(\xi, \xi) : \xi \in T_x U, |\xi|_g = 1
		\right\} \,, 
\end{equation}
where 
\begin{equation}
	\nabla^s_B(\xi, \eta) 
	= 
	\frac{1}{2} \langle \nabla_\xi B, \eta \rangle 
	+ \frac{1}{2} \langle \nabla_\eta B, \xi \rangle \,, 
\end{equation}
and the vector field $B$ is $- \grad F$ in our case. 
Note $\rho^B(x) \geq \kappa$ is equivalent to $CD(\kappa,\infty)$ 
in this setting. 

Now we will state the main result of \citep{qian1997gradient}. 

\begin{theorem}\citep[Theorem 2.1]{qian1997gradient}
\label{thm:qian_grad_est}
Let $U$ be a compact Riemannain manifold with convex boundary. 
Then for any $f \in C^2(U) \cap C^1(U \cup \de U)$, 
we have the following gradient estimate 
\begin{equation}
	|\grad P_t f|_g(x) 
	\leq 
	\mathbb{E}\left[ |\grad f|_g(X_t) \, e^{ -\int_0^t \rho^B(X_s) \, ds }
	\right] \,. 
\end{equation}
\end{theorem}

We observe that under $CD(\kappa, \infty)$, we have that 
\begin{equation}
	|\grad P_t f|_g(x) 
	\leq e^{-\kappa t} P_t |\grad f|_g \,. 
\end{equation}

We will use this to establish a local Poincar\'{e} inequality 
using essentially the same proof as in 
\cite[Theorem 4.7.2]{bakry2013analysis}, 
but for a manifold with boundary. 
Here we will define the Gibbs measure on $U$ as 
$\nu|_U(dx) = \frac{1}{Z_U} e^{-\beta F} dx$, 
where $Z_U = \int_U e^{-\beta F} dx$ is the normalizing constant. 

\begin{proposition}[Local Poincar\'{e} Inequality with Boundary]
\label{prop:local_pi_boundary}
Let $U$ be with a Riemannain manifold with convex boundary, 
and let $(U, \nu|_U, \Gamma|_U)$ satisfy $CD(\kappa, \infty)$. 
Then for all $f \in C^\infty_0(U)$ and $t \geq 0$, 
we have the following local inequality 
\begin{equation}
	P_t (f^2) - ( P_t f )^2
	\leq 
	\frac{1 - e^{-2\kappa t}}{\kappa} 
	P_t \left( \Gamma(f) \right) \,. 
\end{equation}
Furthermore, we also have $(U, \nu|_U, \Gamma|_U) \in \PI(\kappa)$. 
\end{proposition}

\begin{proof}

We start by weakening the gradient estimate of 
\cref{thm:qian_grad_est} 
\begin{equation}
\begin{aligned}
	\Gamma( P_t f ) 
	&= \frac{1}{\beta} | \grad P_t f |_g^2 \\ 
	&\leq \frac{1}{\beta} e^{ -2\kappa t } P_t |\grad f|^2 \\ 
	&\leq e^{-2\kappa t} P_t \Gamma(f) \,, 
\end{aligned}
\end{equation}
where we used the Cauchy-Schwarz inequality in the last step. 

Next we will define $\Lambda(s) := P_s ( P_{t-s} f )^2$ 
and compute 
\begin{equation}
\begin{aligned}
	\Lambda'(s) 
	&= 
		P_s( L( P_{t-s} f )^2 - 2 P_{t-s} f L P_{t-s} f ) 
		\\ 
	&= 
		P_s\left( \frac{2}{\beta} |\grad f|_g^2 \right) 
		\\ 
	&= 
		2 P_s \Gamma( P_{t-s} f ) \,. 
\end{aligned}
\end{equation}

To recover the local inequality, 
we simply need to observe the following integral relation 
\begin{equation}
\begin{aligned}
	P_t (f^2) - (P_t f)^2 
	&= \Lambda(t) - \Lambda(0) \\ 
	&= \int_0^t \Lambda'(s) \, ds \\
	&= \int_0^t 2 P_s \Gamma( P_{t-s} f ) \, ds \\ 
	&\leq 
		\int_0^t 2 P_s e^{-2\kappa(t-s)} P_{t-s} \Gamma(f) \, ds 
		\\ 
	&= 
		\int_0^t 2 e^{-2\kappa(t-s)} \, ds \, 
			P_t \Gamma(f) \\ 
	&= 
		\frac{1 - e^{-2\kappa t}}{\kappa} P_t \Gamma(f) \,. 
\end{aligned}
\end{equation}

Finally, to recover the global Poincar\'{e} inequality, 
we can take $t\to\infty$ and recover 
\begin{equation}
	P_\infty (f^2) - (P_\infty f)^2 
	\leq 
		\frac{1}{\kappa} P_\infty \Gamma(f) \,, 
\end{equation}
which is equivalent to $\PI(\kappa)$ 
if the stationary measure of $\nu|_U$ is unique. 
The uniqueness follows from the fact that $U$ is convex bounded 
and therefore $(U,\nu|_U, \Gamma|_U)$ satisfies 
$\PI( \frac{\pi^2}{4D^2} )$ 
where $D = \text{Diam}(U)$ \citep{li1980estimates}.

\end{proof}

\subsection{Adapting Existing Lyapunov Results}
\label{subsec:app_lyapunov}

We will first adapt a result from \cite{bakry2008simple} 
to include the temperature parameter $\beta$, 
and show the manifold setting introduces no additional complications. 

\begin{theorem}
\citep[Theorem 1.4 Adapted]{bakry2008simple}
\citep[Theorem 3.8 Adapted]{menz2014poincare}
\label{thm:lyapunov_poincare_bakry}
Let $\nu = \frac{1}{Z} e^{- \beta F}$ be 
a probability measure on a Riemannian manifold $(M,g)$. 
Let $U \subset M$ be such that $(U, \nu|_U, \Gamma|_U)$ 
satisfies the Poincar\'{e} inequality 
with constant $\kappa_U > 0$. 
Suppose there exist constants $\theta > 0, b \geq 0$ 
and a Lyapunov function $W \in C^2(M)$ and $W \geq 1$ such that 
\begin{equation}
\label{eq:lyapunov_poincare_cond}
	L W \leq - \theta W + b \, \mathds{1}_{U}(x) \,, 
\end{equation}
where $L \phi = \langle - \grad F, \grad \phi \rangle_g 
+ \frac{1}{\beta} \Delta \phi$ 
is the generator. 
Further suppose either $M$ does not have a boundary, 
or $M$ has a $C^1$ boundary $\de M$ and $W$ satisfies 
the Neumann boundary condition 
\begin{equation}
	- \int \phi \, LW d\nu 
	= 
		\frac{1}{\beta} \int 
		\left\langle \grad \phi , 
			\grad W
		\right\rangle_g \, d\nu \,, 
\end{equation}
for all $\phi \in C^1(M)$. 
Then $(M, \nu, \Gamma)$ satisfies $\PI(\kappa)$ with 
\begin{equation}
	\kappa = \frac{\theta}{1 + b / \kappa_U} \,. 
\end{equation}
\end{theorem}

\begin{proof}

We will follow carefully the proof steps of 
\cite{bakry2008simple} with our conventions. 
We start by observing that for all $h \in L^2(\nu)$ 
and constants $c \in \mathbb{R}$ to be chosen later, 
we have that 
\begin{equation}
	\Var_\nu(h) := \nu(h^2) - \nu(h)^2 \leq \nu( (h - c)^2 ) \,. 
\end{equation}

Next we can rearrange the Lyapunov condition 
in \cref{eq:lyapunov_poincare_cond} to write 
\begin{equation}
	\nu(f^2) 
	\leq 
		\int \frac{-LW}{\theta W} f^2 d\nu 
		+ \int f^2 \frac{b}{\theta W} \mathds{1}_{U} d\nu \,. 
\end{equation}

Using the fact that $L$ is $\nu$-symmetric 
and integration by parts with the given boundary condition, 
we can write 
\begin{equation}
\begin{aligned}
	\int \frac{-LW}{W} f^2 d\nu 
	&= 
		\frac{1}{\beta} \int 
		\left\langle \grad \left( \frac{f^2}{W} \right), 
			\grad W
		\right\rangle_g \, d\nu 
		\\ 
	&= 
		\frac{2}{\beta} \int (f / W) \langle \grad f, \grad W \rangle_g 
		d\nu 
		- \frac{1}{\beta} \int (f^2 / W^2) | \grad W |_g^2 \, d\nu \\
	&= 
		\frac{1}{\beta} \int |\grad f|^2_g \, d\nu 
		- \frac{1}{\beta} 
			\int | \grad f - (f / W) \grad W |_g^2 \, d\nu \\
	&\leq 
		\frac{1}{\beta} \int |\grad f|^2_g \, d\nu \,. 
\end{aligned}
\end{equation}

Now we can study the second term. 
Since $\nu$ satisfies a Poincar\'{e}'s inequality 
on $U$ with constant $\kappa_U$, 
we have that 
\begin{equation}
	\int_{U} f^2 \, d\nu 
	\leq \frac{1}{\kappa_U \, \beta} 
		\int_{U} |\grad f|_g^2 \, d\nu 
		+ \frac{1}{\mu(U)} 
			\left( \int_{U} f \, d\nu
			\right)^2 \,. 
\end{equation}

Now we choose $c = \int_{U} h \, d\nu$, 
the so last term is equal to zero. 
Then using $W \geq 1$, we can get that 
\begin{equation}
	\int_{U} f^2 / W \, d\nu 
	\leq \int_{U} f^2 \, d\nu 
	\leq \frac{1}{\kappa_U \, \beta} \int_{U} |\grad f|_g^2 \, d\nu \,. 
\end{equation}

Putting the two results together, 
we get the desired Poincar\'{e} inequality as follows 
\begin{equation}
	\Var_\nu(h) \leq \int f^2 \, d\nu 
	\leq \frac{1 + b / \kappa_U}{\theta} 
		\frac{1}{\beta} \int |\grad h|_g^2 \,. 
\end{equation}

\end{proof}

Next we will adapt an alternative Lyapunov--Poincar\'e result 
from \cite{cattiaux2013poincare}, 
and add some more precise control over the Poincar\'e constant. 
To start we will need to construct a smooth partition function. 

\begin{lemma}[Smooth Partition Function]
\label{lm:smooth_partition}
For all $r_2 > r_1 > 0$ and $\epsilon > 0$, 
there exists a smooth non-increasing function 
$\psi : \mathbb{R} \to [0,1]$ such that 
\begin{equation}
	\psi(x) 
	= 
	\begin{cases}
		0 \,, & x \leq r_1 \,, \\ 
		\text{Increasing} \,, & x \in (r_1, r_2) \,, \\ 
		1 \,, & x \geq r_2 \,, 
	\end{cases}
\end{equation}
and that $\| \psi^\prime \|_\infty \leq \frac{1}{r_2 - r_1} + \epsilon$. 
\end{lemma}

\begin{proof}

We start by defining the unit smooth bump function 
\begin{equation}
	\phi_b(x) 
	= 
	\begin{cases}
		\frac{ \exp\left( \frac{-1}{b^2 - x^2} \right) }{ 
				\int_{-b}^b \exp\left( \frac{-1}{b^2 - y^2} \right) \, dy 
				} 
				\,, & x \in [-b,b] \,, \\ 
		0 \,, & \text{ otherwise.}
	\end{cases}
\end{equation}

Observe that as $b\to 0$, $\phi_b$ converges to 
the Dirac delta functional with unit mass. 
This allows us to construct the derivative of $\psi$ as follows 
\begin{equation}
\begin{aligned}
	\psi^\prime(x) 
	&= 
		\left( \frac{1}{r_2 - r_1} + \epsilon \right) 
		\int_{-\infty}^x \phi_b(y - r_1 - b) - \phi_b(y - r_2 + b) \, dy 
		\\ 
	&= 
	\begin{cases}
		0 \,, & x \leq r_1 \text{ or } x \geq r_2 \,, \\ 
		\frac{1}{r_2 - r_1} + \epsilon \,, & x \in (r_1 + 2b, r_2 - 2b) \,, 
	\end{cases}
\end{aligned}
\end{equation}
where without loss of generality we can assume $b>0$ 
is sufficiently small for $\psi^\prime$ to be well defined. 

Then we can construct $\psi$ as 
\begin{equation}
	\psi(x) = \int_{-\infty}^x \psi^\prime(y) \, dy \,. 
\end{equation}

Finally, to satisfy the requirement that $\psi$ 
choose $b>0$ sufficiently small such that $\psi(r_2) = 0$, 
or more precisely 
\begin{equation}
	\int_{r_1}^{r_2} \psi^\prime(x) \, dx 
	= 
	\left( \frac{1}{r_2 - r_1} + \epsilon \right) 
	\int_{r_1}^{r_2} 
		\int_{-\infty}^x \phi_b(y - r_1 - b) - \phi_b(y - r_2 + b) \, dy  \, dx 
	= 1 \,. 
\end{equation}

Since this is always possible for $\epsilon$ small, 
therefore it's sufficient for the upper bound. 

\end{proof}

We will now establish a Poincar\'e inequality based on this construction. 

\begin{proposition}[Adapted Theorem 2.3 of \cite{cattiaux2013poincare}]
\label{prop:poincare_adapted}
For all $r > \wt{r} > 0$, let us define 
the following open neighbourhoods of saddle points 
\begin{equation}
	B = \left\{ x \in M \, | \,  d_g(x, \mathcal{S}) < r \right\} \,, \quad 
	\wt B = \left\{ x \in M \, | \, d_g(x, \mathcal{S}) < \wt r \right\} \,. 
\end{equation}
Suppose $(\wt{B}^c, \nu|_{\wt{B}^c}, \Gamma|_{\wt{B}^c})$ 
satisfies a Poincar\'e inequality with constant $\kappa_{\wt{B}^c}$, 
and there exists $W \in C^2(B)$ such that $W \geq 1$ and 
\begin{equation}
	L W \leq - \theta W \,, \quad x \in B \,. 
\end{equation}
Then we have that $(M, \nu, \Gamma)$ satisfies a Poincar\'e inequality 
with constant 
\begin{equation}
	\frac{1}{\kappa} = \frac{4}{\theta} + 
		\left( \frac{4}{\theta \beta (r - \wt{r})^2 } + 2 \right) 
		\frac{1}{\kappa_{\wt{B}^c}} \,. 
\end{equation}
\end{proposition}

\begin{proof}

We will follow the steps of (H1) $\Rightarrow$ (H4) in \cite{cattiaux2013poincare} 
and first compute for any smooth $f$ and use integration-by-parts to write 
\begin{equation}
\begin{aligned}
	\int \frac{-LW}{W} f^2 \, d\nu 
	&= \int \Gamma\left( \frac{f^2}{W} , W \right) \, d\nu 
		\\ 
	&= 2 \int \frac{f}{W} \Gamma(f, W) \, d\nu 
		- \int \frac{f^2}{W^2} \Gamma(W, W) \, d\nu 
		\\ 
	&= - \int \left| \frac{f}{W \sqrt{\beta}} \grad W 
		- \frac{1}{\sqrt{\beta}} \grad f
		\right|^2 \, d\nu 
		+ 
		\int \Gamma(f,f) \, d\nu 
		\\ 
	&\leq \int \Gamma(f,f) \, d\nu \,. 
\end{aligned}
\end{equation}

Next we will introduce a partition function $\chi:M \to [0,1]$ 
such that $\chi = 1$ on $B^c$ and $\chi = 0$ on $\wt B$. 
This function can be explicitly constructed by choosing $\psi$ 
from \cref{lm:smooth_partition} with $r_1 = \wt{r}, r_2 = r$, and any $\epsilon>0$, 
then we can define 
\begin{equation}
	\chi(x) = \psi\left( d_g(x, \mathcal{S}) \right) \,, 
\end{equation}
where we recall $B, \wt{B}$ are neighbourhoods of $\mathcal{S}$. 
Here we note since $\|\psi^\prime\|_\infty \leq \frac{1}{r - \wt{r}} + \epsilon$, 
we also have that 
\begin{equation}
	\| \Gamma(\chi, \chi) \|_\infty 
	= \frac{1}{\beta} \| \psi^\prime \|_\infty^2 
	\leq \frac{1}{\beta} \left( \frac{1}{r - \wt{r}} + \epsilon \right)^2 \,. 
\end{equation}

This allows us to carry on the calculation with 
\begin{equation}
\begin{aligned}
	\int f^2 \, d\nu 
	&= \int (f(1-\chi) + f\chi)^2 \, d\nu \\ 
	&\leq 2 \int f^2 (1-\chi)^2 \, d\nu + 2 \int f^2 \chi^2 \, d\nu \\ 
	&\leq \frac{2}{\theta} \int \frac{-LW}{W} f^2 (1-\chi)^2 \, d\nu 
		+ 2 \int_{\wt B^c} f^2 \, d\nu \\ 
	&\leq \frac{2}{\theta} \int \Gamma( f(1-\chi), f(1-\chi) ) \, d\nu 
		+ 2 \int_{\wt B^c} f^2 \, d\nu \,. 
\end{aligned}
\end{equation}

Next we can use $\Gamma(fg, fg) \leq 2( f^2 \Gamma(g, g) + g^2 \Gamma(f,f) )$ 
to write 
\begin{equation}
\begin{aligned}
	\int f^2 \, d\nu 
	&\leq \frac{4}{\theta} \int \Gamma(f,f) \, d\nu 
		+ \frac{4}{\theta} \int f^2 \Gamma(\chi,\chi) \, d\nu 
		+ 2 \int_{\wt B^c} f^2 \, d\nu 
		\\ 
	&\leq \frac{4}{\theta} \int \Gamma(f,f) \, d\nu 
		+ \left( \frac{4 \|\Gamma(\chi, \chi)\|_\infty }{ \theta } + 2 \right) 
		\int_{\wt B^c} f^2 \, d\nu \,. 
\end{aligned}
\end{equation}

Since $(\wt B^c, \nu|_{\wt B^c}, \Gamma|_{\wt B^c})$ 
satisfies a Poincar\'e inequality, 
we have that for $\wt f = f - \int_{\wt B^c} f \, d\nu$ 
\begin{equation}
	\int_{\wt B^c} \wt f^2 \, d \nu 
	\leq 
	\frac{1}{ \kappa_{\wt B^c} } \int_{\wt B^c} \Gamma(f, f) \, d\nu \,, 
\end{equation}
which implies 
\begin{equation}
	\text{Var}_\nu(f) \leq \int \wt f^2 \, d\nu 
	\leq 
	\left( \frac{4}{\theta} 
	+ \left( \frac{4 \| \Gamma(\chi, \chi) \|_\infty }{ \theta } + 2 
	\right) \kappa_{ \wt B^c } 
	\right) 
	\int \Gamma(f, f) \, d\nu \,. 
\end{equation}

Finally, we explicitly control the constant 
\begin{equation}
	\frac{4}{\theta} 
	+ \left( \frac{4 \| \Gamma(\chi, \chi) \|_\infty }{ \theta } + 2
	\right) \kappa_{ \wt B^c } 
	\leq 
	\frac{4}{\theta} 
	+ \left( \frac{4}{ \theta \beta } 
	\left( \frac{1}{r - \wt r} + \epsilon
	\right)^2 + 2 
	\right) \frac{1}{ \kappa_{\wt B^c} } \,, 
\end{equation}
and since $\psi$ can be constructed for any $\epsilon > 0$, 
we have the Poincar\'e constant  
\begin{equation}
	\frac{1}{\kappa} 
	= \frac{4}{\theta} 
	+ \left( \frac{4}{ \theta \beta (r - \wt r)^2 } + 2 
	\right) \frac{1}{ \kappa_{\wt B^c} } \,. 
\end{equation}

\end{proof}

\section{Technical Lemmas on Local Coordinates}
\label{sec:app_local_coord}

\subsection{Stereographic Coordinates}
\label{subsec:app_stereo}

In this section, we will attempt to prove 
a technical result regarding a divergence term 
via explicit calculations in local coordinates. 
We begin by introducing the stereographic coordinates. 
Let us view $S^d \subset \mathbb{R}^{d+1}$ 
using coordinates $(y_0, y_1, \ldots, y_d)$, 
and define a coordinate on $S^d \setminus (1, 0, \ldots, 0)$ by 
\begin{equation}
	x_i := \frac{y_i}{1 - y_0} \,, \quad \forall i = 1, \ldots, d \,. 
\end{equation}

For this coordinate system, we get from 
\cite[Proposition 3.5]{lee2019riemann} 
that the Riemannian metric has the following form 
\begin{equation}
	g_{ij}(x) = \frac{ 4 \delta_{ij} }{ (|x|^2 + 1)^2 } \,, 
\end{equation}
where $\delta_{ij}$ is the Kronecker delta. 
Since the metric is diagonal, we immediately also get that 
$g^{ij} = \delta_{ij} / g_{ij}$. 
This allows us to compute the Christoffel symbols as 
\begin{equation}
\begin{aligned}
	\Gamma^k_{ij} 
	&= \frac{1}{2} g^{kh} 
		\left( 
			\de_j g_{ih} + \de_i g_{hj} - \de_h g_{ij} 
		\right) \\
	&= \frac{1}{2} 
		\frac{(|x|^2 + 1)^2}{4} \delta_{kh} 
		\left( 
			\frac{-16 x_j \delta_{ih} }{( |x|^2 + 1)^3} 
			+ \frac{-16 x_i \delta_{hj} }{( |x|^2 + 1)^3} 
			- \frac{-16 x_h \delta_{ij} }{( |x|^2 + 1)^3} 
		\right) \\ 
	&= \frac{-2}{ |x|^2 + 1 } 
		\left( 
			x_j \delta_{ik} + x_i \delta_{kj} 
			- x_k \delta_{ij} 
		\right) \,. 
\end{aligned}
\end{equation}

We will first need a couple of technical calculations on 
the stereographic coordinates. 

\begin{lemma}
\label{lm:stereo_geodesic}
Let $u = (u^1, \ldots, u^d) \in \mathbb{R}^d$ 
be a tangent vector at $0$ 
such that $|u| = 1$. 
Then in the stereographic coordinates 
for $S^d$ defined above, 
the unique unit speed geodesic connecting 
$0$ and $y = c u$ for constant 
$c \in \mathbb{R}$ has the form 
\begin{equation}
	\gamma(t) = \tan(t/2) \, u \,.
\end{equation}

\end{lemma}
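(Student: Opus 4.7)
The plan is to identify $\gamma(t) = \tan(t/2) u$ as a reparameterization of a straight line in stereographic coordinates, verify that this corresponds to a great circle on $S^d$ (hence a geodesic), and then check that the parameterization is unit speed. We have two natural approaches and I will outline the more direct computational one, with a geometric remark as a sanity check.

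First I would substitute $\gamma(t) = f(t) u$ with $f(t) = \tan(t/2)$ into the geodesic equation $\ddot{\gamma}^k + \Gamma^k_{ij}(\gamma) \dot{\gamma}^i \dot{\gamma}^j = 0$. Using the explicit formula for the Christoffel symbols displayed above the statement, and the identity $|u|^2 = 1$, each of the three terms in
\begin{equation}
\Gamma^k_{ij}(fu)\, u^i u^j = \frac{-2}{f^2 + 1}\bigl(f u^k + f u^k - f u^k\bigr) = \frac{-2 f}{f^2 + 1} u^k
\end{equation}
simplifies nicely. Using $f^2 + 1 = \sec^2(t/2)$ and $\dot f = \tfrac{1}{2}\sec^2(t/2) = \tfrac{1}{2}(f^2 + 1)$, we obtain $\ddot f = f \dot f$, so
\begin{equation}
\ddot f u^k + \Gamma^k_{ij}(fu) \dot f^2 u^i u^j = \Bigl( f \dot f - \tfrac{2 f \dot f^2}{f^2 + 1}\Bigr) u^k = f \dot f \Bigl(1 - \tfrac{2 \dot f}{f^2 + 1}\Bigr) u^k = 0 \,.
\end{equation}
Hence $\gamma$ solves the geodesic equation. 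Next I would verify unit speed: along $\gamma$, $|\dot\gamma(t)|_g^2 = g_{ij}(\gamma(t)) \dot f^2 u^i u^j = \tfrac{4}{(f^2+1)^2} \dot f^2 = \tfrac{4}{(f^2+1)^2} \cdot \tfrac{(f^2+1)^2}{4} = 1$, confirming unit speed for all $t$. Since $\gamma(0) = 0$ and $\gamma(t) = c u$ when $t = 2 \arctan(c)$, this curve is the unique unit speed geodesic joining $0$ and $c u$ (for $|c| < \infty$, i.e.\ before hitting the antipodal point of $0$).

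As a conceptual check, I would note that the point $0 \in \mathbb{R}^d$ corresponds under inverse stereographic projection to the south pole of $S^d$, and straight lines through $0$ in stereographic coordinates are precisely the great circles through the south pole. Thus the trace of $\gamma$ is a geodesic as a set, and the computation above simply identifies the arc-length parameter $s = 2\arctan(c)$, so $c = \tan(s/2)$ recovers the stated formula. No serious obstacle is expected; the only place for care is the identity $\dot f = \tfrac12(f^2+1)$ and the cancellation in the geodesic equation, both of which are elementary trigonometric manipulations.
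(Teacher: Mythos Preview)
Your proposal is correct and follows essentially the same computation as the paper: both reduce to the radial ansatz $\gamma(t)=f(t)u$, evaluate the Christoffel-symbol contraction $\Gamma^k_{ij}(fu)u^iu^j = -\tfrac{2f}{f^2+1}u^k$, and arrive at the scalar equation $\ddot f - \tfrac{2f\dot f^2}{f^2+1}=0$. The only difference is in how that ODE is handled: the paper treats it as an equation to be \emph{solved} (it calls Wolfram Cloud to obtain the general solution $f(t)=\tan(c_1 t + c_2)$ and then fixes the constants from the initial data), whereas you simply \emph{verify} that $f(t)=\tan(t/2)$ satisfies it, using the identity $\dot f = \tfrac12(f^2+1)$ and hence $\ddot f = f\dot f$. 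Your route is slightly more self-contained and avoids the black-box ODE solver; the paper's route has the minor advantage of showing that $\tan(t/2)$ is forced rather than merely sufficient, though uniqueness of geodesics given initial data (which you invoke) closes that gap anyway. Your geometric sanity check via the inverse stereographic projection is a nice addition not present in the paper.
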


\begin{proof}

it is sufficient to check that $\gamma(t)$ is 
the unique positive solution to the ODE 
\begin{equation}
	\ddot{\gamma}^k(t) 
	+ \dot{\gamma}^i(t) \dot{\gamma}^j(t) 
		\Gamma^k_{ij}(\gamma(t)) = 0 \,, 
	\quad \forall k = 1, \ldots, d \,,
\end{equation}
with initial conditions $\gamma(0) = 0, |\dot{\gamma}(0)|_g = 1$. 

Firstly, since $S^d$ is radially symmetric, 
the geodesic must be radial curve of the form 
\begin{equation}
	\gamma(t) = f(t) \, u \,. 
\end{equation}

Then the ODE reduces down to 
\begin{equation}
	\ddot{f}(t) u^k + \dot{f}(t)^2 u^i u^j \Gamma^k_{ij}(\gamma(t)) = 0 \,. 
\end{equation}

Next we compute the Christoffel symbol part as 
\begin{equation}
\begin{aligned}
	u^i u^j \Gamma^k_{ij} (\gamma(t)) 
	&= \frac{-2}{ |\gamma(t)|^2 + 1 }
		\sum_{i=1}^d 2 u^i u^k \gamma_i(t) - (u^i)^2 \gamma_k(t) \\
	&= \frac{-2}{ f(t)^2 |u|^2 + 1 } 
		\sum_{i=1}^d 2 u^i u^k u^i f(t) - (u^i)^2 u^k f(t) \\
	&= \frac{ - 2 u^k f(t) }{ f(t)^2 + 1 } \,, 
\end{aligned}
\end{equation}
where we used the fact that $|u|^2 = 1$. 

Returning to the ODE, we now have that 
\begin{equation}
	\ddot{f}(t) u^k + 
	\dot{f}(t)^2 \frac{ - 2 u^k f(t) }{ f(t)^2 + 1 } = 0 \,, 
\end{equation}
which further reduces to 
\begin{equation}
	\ddot{f}(t) - 
	\frac{ 2 f(t) \dot{f}(t)^2 }{ f(t)^2 + 1 } = 0 \,. 
\end{equation}

Using Wolfram Cloud \citep[Wolfram Cloud]{WolframCloud} 
with command 
\begin{lstlisting}[language=Mathematica]
	DSolve[f''[t] - 2 * f[t] *(f'[t])^2/(f[t]^2 + 1)==0,f[t],t]
\end{lstlisting}
the general solution to this equation has the form 
\begin{equation}
	f(t) = \tan( c_1 t + c_2) \,, 
\end{equation}
where using initial condition $\gamma(0) = 0$ we have that $c_2 = 0$. 

Finally, to match the other initial condition 
\begin{equation}
	|\dot{\gamma}(0)|_g^2 
	= \dot{f}(0)^2 u^i u^j g_{ij}(0) 
	= c_1^2 |u|^2 4 
	= 1 \,, 
\end{equation}
therefore we have $c_1 = 1/2$ 
and $\gamma(t) = \tan(t/2) \,u$ as desired. 

\end{proof}

Now we make a computation regarding parallel transport 
along geodesics of the above form. 

\begin{lemma}
\label{lm:div_loc_coord}
Let $v = (v^1, \ldots, v^d) \in \mathbb{R}^d$
be a tangent vector at $0$ using 
stereographic coordinates defined above. 
Then for any point $x \in \mathbb{R}^d$, 
the vector parallel transport of $v$ 
along the geodesic to $x$ has the form 
\begin{equation}
	P_{0,x} v = v (|x|^2 + 1) \,. 
\end{equation}

Furthermore, the vector field defined by $A(x) = P_{0,x} v$ 
has the following divergence 
\begin{equation}
	\div A = 4 \sum_{i=1}^d x_i v^i  - 2 \sum_{i,j=1}^d x_j v^i \,. 
\end{equation}

\end{lemma}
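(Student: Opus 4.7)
The plan is to verify each claim by direct computation in stereographic coordinates, leveraging the explicit expressions for the metric and Christoffel symbols already recorded, together with the geodesic formula from \cref{lm:stereo_geodesic}.

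For the parallel transport formula, I would first reduce to the case of a unit Euclidean vector $u$ and set $x = ru$ with $r = |x|$. By \cref{lm:stereo_geodesic}, the unit-speed geodesic from $0$ in direction $u$ is $\gamma(t) = \tan(t/2)\, u$, which reaches $x$ at time $t^\ast$ with $\tan(t^\ast/2) = r$. I would then propose the candidate $V(t) = \sec^2(t/2)\, v$ and verify the parallel transport ODE
\begin{equation*}
\dot V^k + \dot\gamma^i V^j\, \Gamma^k_{ij}(\gamma(t)) = 0
\end{equation*}
by substitution. Writing $s = \tan(t/2)$, one finds $\dot\gamma^i = \tfrac{1}{2}\sec^2(t/2)\,u^i$ and $\Gamma^k_{ij}(\gamma) = -2s\cos^2(t/2)(u_j\delta_{ik} + u_i\delta_{jk} - u_k\delta_{ij})$, so the Christoffel term collapses via $|u|^2 = 1$ to $-\sec^2(t/2)\tan(t/2)\,v^k$, which cancels $\dot V^k = \sec^2(t/2)\tan(t/2)\,v^k$ exactly. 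Evaluating at $t^\ast$ gives $V(t^\ast) = (1 + r^2)v = (|x|^2 + 1)v$, as claimed. As a consistency check, the $g$-norm is preserved: $|V(t)|_g^2 = \sec^4(t/2)|v|^2 \cdot \tfrac{4}{(s^2+1)^2} = 4|v|^2 = |v|_g^2$.

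For the divergence, I would apply the standard coordinate formula
\begin{equation*}
\div A = \frac{1}{\sqrt{\det g}}\, \de_i\!\left(\sqrt{\det g}\, A^i\right),
\end{equation*}
using $\sqrt{\det g} = 2^d (|x|^2+1)^{-d}$ from the diagonal metric and $A^i(x) = v^i(|x|^2+1)$. Then $\sqrt{\det g}\, A^i = 2^d v^i (|x|^2+1)^{-(d-1)}$, and straightforward differentiation (treating $v^i$ as a constant) gives $\de_i(\sqrt{\det g}\, A^i) = -2^{d+1}(d-1) x_i v^i (|x|^2+1)^{-d}$; summing over $i$ and multiplying by $(\sqrt{\det g})^{-1}$ yields $-2(d-1)\langle x,v\rangle$. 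As a cross-check I would also recompute via $\div A = \de_i A^i + A^j \Gamma^i_{ij}$: one finds $\de_i A^i = 2\langle x,v\rangle$ and the trace $\sum_i \Gamma^i_{ij} = -2d\, x_j/(|x|^2+1)$, giving $A^j \Gamma^i_{ij} = -2d\langle x,v\rangle$, and the same sum $-2(d-1)\langle x,v\rangle$.

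There is no substantive obstacle here — both verifications are routine coordinate computations — but the main bookkeeping point is to keep the two roles of the indices straight in the Christoffel formula when tracing $k=i$, so that the $d\,x_j$ contribution from $\delta_{ii} = d$ is not confused with the cancelling $\pm x_j$ contributions from the other two Kronecker terms. The statement of the divergence as written in the lemma conflates the contraction notation; the computation above shows that the intended content is the single-sum identity $\div A = -2(d-1)\sum_i x_i v^i$.
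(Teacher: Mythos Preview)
Your approach is the same as the paper's: solve the parallel-transport ODE along the radial geodesic (the paper sets up $\dot A^k - \tan(t/2)A^k = 0$ and calls a CAS; you verify the guess $V(t)=\sec^2(t/2)\,v$ by hand) and then compute $\div A$ via $\partial_i A^i + A^i\Gamma^j_{ij}$ or equivalently the $\sqrt{\det g}$ formula. Your execution of both steps is correct.

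Where you and the paper diverge is the final divergence value, and you should not be as charitable as you are in your last sentence. Your answer $\div A = -2(d-1)\langle x,v\rangle$ is the correct one: $\partial_i\log\sqrt{G}=\Gamma^j_{ij}=-2d\,x_i/(|x|^2+1)$ follows directly from $\sqrt{G}=2^d(|x|^2+1)^{-d}$, and there is no reading of the Christoffel contraction that produces the paper's intermediate $\tfrac{2}{|x|^2+1}(x_i-\sum_j x_j)$. The paper's displayed formula $4\sum_i x_iv^i - 2\sum_{i,j}x_jv^i$ is therefore a genuine computational slip, not merely conflated index notation; under the natural interpretation it equals $4\langle x,v\rangle - 2(\sum_j x_j)(\sum_i v^i)$, which cannot be rewritten as $-2(d-1)\langle x,v\rangle$. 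This matters downstream: in \cref{lm:stereo_div_symmetry} the paper uses its (incorrect) formula to obtain the factor $(2/d+1)$ in the second moment, whereas your formula gives $(d-1)^2/d$, which is larger by roughly a factor of $d$ and would tighten or loosen the constants in \cref{cor:div_l2_bound} accordingly.
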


\begin{proof}

We will solve the ODE for parallel transport explicitly 
for a geodesic $\gamma(t)$ 
and the vector field $A(t) := A(\gamma(t))$
\begin{equation}
	\dot{A}^k(t) 
	+ \dot{\gamma}^i(t) A^j(t) \Gamma^k_{ij}(\gamma(t))
	= 0 \,, 
	\quad \forall k = 1,\ldots,d \,, 
\end{equation}
with initial condition $A^k(0) = v^k$. 

We start by first simplifying the term 
\begin{equation}
\begin{aligned}
	&\dot{\gamma}^i(t) A^j(t) \Gamma^k_{ij}(\gamma(t)) \\
	&= \frac{-2}{ |\gamma(t)|^2 + 1 } 
		\sum_{i=1}^d 
		\gamma_i(t) ( \dot{\gamma}^i(t) A^k(t) 
		+ \dot{\gamma}^k A^i(t) )
		- \gamma_k(t) \dot{\gamma}^i(t) A^i(t) \\
	&= \frac{-2}{ \tan(t/2)^2 + 1 } 
		\sum_{i=1}^d 
			\tan(t/2) \, u^i
			\left( \frac{1}{2} \sec(t/2)^2 u^i A^k(t) 
				+ \frac{1}{2} \sec(t/2)^2 u^k A^i(t) 
			\right)
			- \tan(1/4) \, u^k \frac{1}{2} 
				\sec(t/2)^2 u^i A^i(t) \\
	&= \frac{ - \tan(t/2) \sec(t/2)^2 }{ \tan(t/2)^2 + 1  }
		\sum_{i=1}^d u^i u^i A^k(t) + u^i u^k A^i(t) - u^k u^i A^i(t) \\
	&= \frac{ - \tan(t/2) \sec(t/2)^2 }{ \tan(t/2)^2 + 1 } A^k(t) \\ 
	&= -\tan(t/2) A^k(t) 
	\,, 
\end{aligned}
\end{equation}
where we canceled the last two terms, 
and used the fact that $|u|^2 = 1$ 
and $\tan(t/2)^2 + 1 = \sec(t/2)^2$. 

This leads to the following ODE for each $k = 1, \ldots, d$
\begin{equation}
	\dot{A}^k(t) - 
	\tan(t/2) A^k(t)  
	= 0 \,, 
\end{equation}
with the initial condition $A^k(0) = v^k$. 

Once again, using Wolfram Cloud \citep{WolframCloud} 
with command 
\begin{lstlisting}[language=Mathematica]
	DSolve[A'[t]-A[t]*Tan[t/2]==0, A[t], t]
\end{lstlisting}
we get that the general solution is 
\begin{equation}
	A^k(t) = c_1 \sec(t/2)^2 \,, 
\end{equation}
and using the initial condition we get $c_1 = v_k$. 

To get the desired form, 
we will let $u = x/|x|$ and $\tan(t/2) = |x|$, 
then we get 
\begin{equation}
	A(x) 
	= v \sec(t/2)^2 
	= v \sec( \arctan(|x|) ) 
	= v (|x|^2 + 1) \,. 
\end{equation}

Finally, we can compute the divergence in local coordinates 
\begin{equation}
\begin{aligned}
	\div A 
	&= \frac{1}{\sqrt{G}} \de_i ( A^i \sqrt{G} ) \\
	&= \de_i A^i + A^i \de_i \log \sqrt{G} \\
	&= \de_i A^i + A^i \Gamma^j_{ij} \\
	&= v^i (2 x_i) + v^i (|x|^2 + 1) \frac{2}{|x|^2 + 1}
		\left( x_i - \sum_{j=1}^d x_j
		\right) \\
	&= 4 \sum_{i=1}^d x_i v^i 
		- 2 \sum_{i,j=1}^d x_j v^i \,, 
\end{aligned}
\end{equation}
where $G := \det(g_{ij})$ 
and we used the identity $\de_i \log \sqrt{G} = \Gamma^j_{ij}$. 

\end{proof}

Here we recall the notation $x = (x^{(1)}, \cdots, x^{(n)} ) \in M$ 
where $x^{(\ell)}$ represents the coordinate in $\ell^\text{th}$ 
sphere $S^d$, 
and similarly write $v = (v^{(1)}, \cdots, v^{(n)} ) \in T_x M$ 
with $v^{(\ell)} \in T_{x^{(\ell)}} S^d$. 
We will also use $g'$ to denote the metric on a single sphere. 

Recall that $\gamma_t(x) := \exp(x, - t \grad F(x))$ 
and 
\begin{equation}
	b(t, x_0, x) := P_{\gamma_t(x_0), x} 
		P_{x_0, \gamma_t(x_0)} \grad F(x_0) \,, 
\end{equation}
where $P_{x,y}:T_x M \to T_y M$ is the parallel transport along 
the unique shortest geodesic connecting $x,y$ when it exists, 
and zero otherwise. 
We first prove an identity to reduce the complexity of the problem. 

\begin{lemma}
[Rotational Symmetry Identity]
\label{lm:stereo_div_symmetry}
Let $\{X_t\}_{t\geq 0}$ be the continuous time representation of 
the Langevin algorithm defined in \cref{eq:cont_time_rep}. 
Then we have the following identities 
\begin{equation}
\begin{aligned}
	\mathbb{E} \left[ \left. 
			\div_{X_t} 
			b(t, X_0, X_t) \right| X_0 = x_0 \right] 
	&= 
		0 \,, \\ 
	\mathbb{E} \left[ \left. \left( \div_{X_t} 
			b(t, X_0, X_t) 
		\right)^2 \right| X_0 = x_0 \right]
	&= 
		| \grad F(x_0) |_{g}^2 \, 
		\left( \frac{2}{d} + 1 \right) 
		\mathbb{E} \tan\left( 
			\frac{1}{2} d_{g'}( \gamma_t(x_0)^{(i)}, X_t^{(i)} ) 
			\right)^2 \,, 
\end{aligned}
\end{equation}
where $i \in [n]$ is arbitrary. 

\end{lemma}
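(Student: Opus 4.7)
The plan is to exploit the product structure of $M = (S^d)^n$ so that both the divergence and the expectation split across components. Writing $v_0^{(i)} := P_{x_0^{(i)},\gamma_t(x_0)^{(i)}} \grad F(x_0)^{(i)}$, the $i^{\mathrm{th}}$ component of $b(t,x_0,x)$ is $b^{(i)}(x^{(i)}) := P_{\gamma_t(x_0)^{(i)}, x^{(i)}} v_0^{(i)}$, so $\div_x b = \sum_i \div_{x^{(i)}} b^{(i)}$. Moreover, conditional on $X_0 = x_0$, the components $\{X_t^{(i)}\}_{i\in[n]}$ evolve as independent Brownian motions on $S^d$, each started at $\gamma_t(x_0)^{(i)}$. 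Everything thus reduces to a single-sphere calculation together with independence across spheres.

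Fixing $i$, I would work in stereographic coordinates on $S^d$ centered at $\gamma_t(x_0)^{(i)}$, placing that point at the Euclidean origin. By \cref{lm:div_loc_coord}, in this chart the divergence takes an explicit form linear in the stereographic coordinate: $\div_{x^{(i)}} b^{(i)}(x^{(i)}) = c_d\,\langle x^{(i)}, v_0^{(i)}\rangle_{\mathrm{Euc}}$ for a constant $c_d$ depending only on $d$. The crucial observation is that the law of $X_t^{(i)}$ in this chart is invariant under Euclidean rotations about the origin: rotations of $S^d$ fixing both poles of the stereographic projection descend to Euclidean rotations of the chart, and a Brownian motion started at the pole inherits this $\mathrm{SO}(d)$-symmetry. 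This immediately yields $\mathbb{E}[\langle X_t^{(i)}, v_0^{(i)}\rangle \mid X_0 = x_0] = 0$ by odd symmetry; summing over $i$ proves the first identity.

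For the second identity, I would expand $\bigl(\sum_i \div_{x^{(i)}} b^{(i)}\bigr)^2$ and use the independence of the spherical components to kill all cross terms, leaving $\sum_i \mathbb{E}[(\div_{x^{(i)}} b^{(i)})^2]$. Isotropy of $X_t^{(i)}$ in the chart gives $\mathbb{E}[\langle X_t^{(i)}, v_0^{(i)}\rangle^2] = \tfrac{1}{d}\,|v_0^{(i)}|_{\mathrm{Euc}}^2 \,\mathbb{E}|X_t^{(i)}|_{\mathrm{Euc}}^2$. I would then translate back to intrinsic quantities: at the origin $g_{ij}(0) = 4\delta_{ij}$, so $|v_0^{(i)}|_{\mathrm{Euc}}^2 = \tfrac{1}{4}|v_0^{(i)}|_{g'}^2 = \tfrac{1}{4}|\grad F(x_0)^{(i)}|_{g'}^2$ since parallel transport is a Riemannian isometry; and by \cref{lm:stereo_geodesic}, $|X_t^{(i)}|_{\mathrm{Euc}} = \tan\!\bigl(\tfrac{1}{2} d_{g'}(\gamma_t(x_0)^{(i)}, X_t^{(i)})\bigr)$. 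Since all spheres are driven by identically distributed Brownian increments, the geodesic-distance expectation is independent of $i$ and factors out; combined with $\sum_i |\grad F(x_0)^{(i)}|_{g'}^2 = |\grad F(x_0)|_g^2$, this yields the claimed identity.

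The main obstacle I anticipate is the bookkeeping of dimension-dependent constants: the product of $c_d^2$ from the divergence formula, the isotropy factor $1/d$, and the Euclidean-to-Riemannian conversion factor $1/4$ must combine into the stated prefactor $(2/d + 1)$. This is mechanical once the symmetry structure is in place, but care is required since the trace-of-Christoffel-symbol computation underlying \cref{lm:div_loc_coord} is delicate; I would re-verify $\Gamma^j_{ij}$ in stereographic coordinates before substituting, as the constant $c_d$ there controls the final numerical prefactor.
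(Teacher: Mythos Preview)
Your overall strategy---split across the product, work in stereographic coordinates centered at $\gamma_t(x_0)^{(i)}$, invoke the explicit divergence from \cref{lm:div_loc_coord}, exploit rotational symmetry of the spherical Brownian motion started at the pole, and factor the radial part from the angular part---is exactly the paper's route. The one place your plan diverges from the paper's argument is the form you assume for the divergence. You posit $\div_{x^{(i)}} b^{(i)} = c_d\,\langle x^{(i)}, v_0^{(i)}\rangle_{\mathrm{Euc}}$, but the formula recorded in \cref{lm:div_loc_coord} is the two-term expression $4\sum_j x_j v^j - 2\bigl(\sum_j x_j\bigr)\bigl(\sum_k v^k\bigr)$, whose second summand is \emph{not} proportional to the Euclidean inner product. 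With that two-term form your isotropy shortcut $\mathbb{E}[\langle X,v\rangle^2]=\tfrac{|v|^2}{d}\,\mathbb{E}|X|^2$ no longer covers everything; the paper instead rotates each $v^{(i)}$ to $(|v^{(i)}|,0,\dots,0)$, writes $X_t^{(i)}=|X_t^{(i)}|\,Y$ with $Y$ uniform on $S^{d-1}$ and independent of the radius, and evaluates $\mathbb{E}\bigl[(4Y_1-2\sum_j Y_j)^2\bigr]$ using $\mathbb{E}[Y_jY_k]=0$ for $j\neq k$ and $\mathbb{E}[Y_j^2]=1/d$. That angular expectation is where the stated prefactor $2/d+1$ enters in the paper's derivation.

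Your instinct to re-derive $\Gamma^j_{ij}$ before substituting is well placed. A direct computation of $\partial_i\log\sqrt{G}$ in stereographic coordinates gives $-2d\,x_i/(|x|^2+1)$, which would make the divergence exactly $2(1-d)\langle x,v\rangle_{\mathrm{Euc}}$ and vindicate your one-term ansatz---but then the prefactor your method yields is $(d-1)^2/d$ rather than $2/d+1$. So the constant discrepancy you anticipated is genuine, and whichever route you take you will need to reconcile the intermediate formula in \cref{lm:div_loc_coord} with the final statement.
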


\begin{proof}

We first recall the result of \cref{lm:div_loc_coord}, 
and let $v = P_{x_0, \gamma_t(x_0)} \grad F(x_0), 
y = \gamma_t(x_0), x = X_t$, 
then we have by natural extension of $\div$ 
to product stereographic coordinates 
centered at $y^{(i)}$ 
\begin{equation}
	\div_x b(t,x_0,x) 
	= 
		\sum_{i=1}^{n} 
		\left[ 4 \sum_{j=1}^d x_j^{(i)} v^{(i), j} 
			- 2 \sum_{j=1}^d x_j^{(i)} \sum_{j=1}^d v^{(i), j} 
		\right] \,. 
\end{equation}

Observe that since each $x^{(i)}$ is 
an independent Brownian motion on $S^d$ starting at $y^{(i)}$, 
we have that each $x_j^{(i)}$ is distributed symmetrically around zero, 
and hence have zero mean. 
Therefore we have that 
\begin{equation}
\begin{aligned}
	\mathbb{E} \left[ \left. 
			\div_{X_t} 
			b(t, X_0, X_t) \right| X_0 = x_0 \right] 
	&= 
		0 \,, \\ 
	\mathbb{E} \left[ \left. \left( \div_{X_t} 
			b(t, X_0, X_t) 
		\right)^2 \right| X_0 = x_0 \right] 
	&= 
		\sum_{i=1}^n 
		\mathbb{E} \left[ 4 \sum_{j=1}^d (X_t)_j^{(i)} v^{(i), j} 
			- 2 \sum_{j=1}^d (X_t)_j^{(i)} \sum_{j=1}^d v^{(i), j} 
		\right]^2 \,. 
\end{aligned}
\end{equation}

At this same time, since Brownian motion 
is radially symmetric about its starting point, 
we can without loss of generality 
let $v^{(i)} = (|v^{(i)}|, 0, \cdots, 0) \in \mathbb{R}^d$. 
This allows us to rewrite the previous expression as  
\begin{equation}
\begin{aligned}
	&\sum_{i=1}^n 
		\mathbb{E} \left[ 4 \sum_{j=1}^d (X_t)_j^{(i)} v^{(i), j} 
			- 2 \sum_{j=1}^d (X_t)_j^{(i)} \sum_{j=1}^d v^{(i), j} 
		\right]^2 
		\\ 
	=& 
		\sum_{i=1}^n \, 
		| v^{(i)} |^2 \, 
		\mathbb{E} \left[ 4 (X_t)^{(i)}_1
			- 2 \sum_{j=1}^d (X_t)_j^{(i)} 
		\right]^2 \\ 
	=& 
		\left| v \right|^2 \, 
		\mathbb{E} \left[ 4 (X_t)^{(i)}_1
			- 2 \sum_{j=1}^d (X_t)_j^{(i)} 
		\right]^2 
		\\ 
	=& \left| \grad F(x_0) \right|_g^2 \, 
		\frac{1}{4} \, 
		\mathbb{E} \left[ 4 (X_t)^{(i)}_1
			- 2 \sum_{j=1}^d (X_t)_j^{(i)} 
		\right]^2 
		\,, 
\end{aligned}
\end{equation}
where we used the fact that the Brownian motion components 
on each $S^d$ are independent and identically distributed, 
and that on stereographic coordinates 
$g_{ij}(0) = 4 \delta_{ij}$. 

This implies it is sufficient to analyze the result 
on a single sphere $S^d$.  
In fact, from this point onward, 
we drop the superscript $(i)$ 
for coordinates as it is no longer required. 
Here we let $X_t$ have density $p_t(x)$ 
in stereographic coordinates, 
and observe that $p_t$ is only a function of $|x|$. 
Therefore the radial process $|X_t|$ 
is independent of the normalized coordinates  
$Y_j := (X_t)_j / |X_t|$, 
where $( Y_j )_{j=1}^d$ 
is uniformly distributed on $S^{d-1}$. 
This allows us to write 
\begin{equation}
\begin{aligned}
	\mathbb{E} \left[ 4 (X_t)_1
			- 2 \sum_{j=1}^d (X_t)_j 
		\right]^2 
	=& 
		\, \mathbb{E} \left[ |X_t|^2 \right] \, 
		\mathbb{E} \left[ 4 Y_1 
				- 2 \sum_{j=1}^d Y_j 
			\right]^2 
			\\ 
	=& 
		\, \mathbb{E} \tan\left( \frac{1}{2} 
			d_{g'}(\gamma_t(X_0)^{(i)}, X_t^{(i)}) 
			\right)^2 \, 
		\mathbb{E} \left[ 4 Y_1 
				- 2 \sum_{j=1}^d Y_j 
			\right]^2 \,, 
\end{aligned}
\end{equation}
where we used the form of unit speed geodesic from 
\cref{lm:stereo_geodesic} to get 
$|x| = \tan\left( \frac{1}{2} d_{g'}(0, x) \right)$. 
Therefore it is sufficient to only analyze 
the expectation with the normalize coordinates $(Y_j)_{j=1}^d$. 

First we claim that $\mathbb{E} Y_j Y_k = 0$ whenever $j\neq k$. 
Since the coordinate ordering is arbitrary, 
it is sufficient to prove for $j=1, k=2$. 
We start by writing out the expectation over spherical coordinates, 
where $Y_1 = \cos \varphi_1, Y_2 = \sin \varphi_1 \cos \varphi_2$, 
the integrand is over $[0,\pi] \times [0,\pi]$ and 
over the volume form 
$\sin^{d-2} \varphi_1 \sin^{d-3} \varphi_2 d \varphi_1 d\varphi_2$, 
which gives us 
\begin{equation}
	\mathbb{E} Y_1 Y_2 
	= 
		\frac{ \int_0^\pi \int_0^\pi \cos \varphi_1 \sin \varphi_1 \cos \varphi_2 
			\sin^{d-2} \varphi_1 \sin^{d-3} \varphi_2 d \varphi_1 d\varphi_2 
			}{ 
			\int_0^\pi \int_0^\pi 
				\sin^{d-2} \varphi_1 \sin^{d-3} \varphi_2 d \varphi_1 d\varphi_2 
			} \,. 
\end{equation}

Here we observe that $\sin$ is symmetric about $\pi/2$ 
and $\cos$ is antisymmetric about $\pi/2$, 
therefore 
\begin{equation}
	\int_0^\pi \cos \varphi_1 \sin^{d-2} \varphi_1 d\varphi_1 = 0 \,, 
\end{equation}
which proves the desired claim. 

This allows us to simplify the expectation 
by expanding the bracket and removing terms of the type 
$Y_j Y_k$ where $j\neq k$ 
\begin{equation}
	\mathbb{E} \left[ 4 Y_1 
				- 2 \sum_{j=1}^d Y_j 
			\right]^2 
	= 
		\mathbb{E} \, 8 Y_1^2 
				+ 4 \sum_{j=1}^d Y_j^2 
	= 
		\mathbb{E} \, 8 Y_1^2 + 4 \,, 
\end{equation}
where we used the fact that $\sum_{j=1}^d Y_j^2 = 1$. 

Finally, since the coordinate ordering is arbitrary 
therefore each $Y_j$ is identically distributed, 
and that $\mathbb{E} \sum_{j=1}^d Y_j^2 = 1$, 
we must also have 
\begin{equation}
	\mathbb{E} Y_j^2 = \frac{1}{d} \,, 
		\quad \text{ for all } j \in [d] \,. 
\end{equation}

Putting everything together, we have that 
\begin{equation}
\begin{aligned}
	& \mathbb{E} \left[ \left. \left( \div_{X_t} 
			b(t, X_0, X_t) 
		\right)^2 \right| X_0 = x_0 \right] 
		\\ 
	=& 
		\left| \grad F(x_0) \right|_g^2 \, 
		\frac{1}{4} \, 
		\mathbb{E} \left[ 4 (X_t)^{(i)}_1
			- 2 \sum_{j=1}^d (X_t)_j^{(i)} 
		\right]^2 
		\\ 
	=& 
		\left| \grad F(x_0) \right|_g^2 \, 
		\frac{1}{4} \, 
		\mathbb{E} \tan\left( \frac{1}{2} 
			d_{g'}(\gamma_t(X_0)^{(i)}, X_t^{(i)}) 
			\right)^2 \, 
		\mathbb{E} \left[ 4 Y_1 
				- 2 \sum_{j=1}^d Y_j 
			\right]^2 
			\\ 
	=& 
		\left| \grad F(x_0) \right|_g^2 \, 
		\left( \frac{2}{d} + 1 \right) \, 
		\mathbb{E} \tan\left( \frac{1}{2} 
			d_{g'}(\gamma_t(X_0)^{(i)}, X_t^{(i)}) 
			\right)^2 \,, 
\end{aligned}
\end{equation}
which is the desired result. 

\end{proof}

\subsection{Riemannian Normal Coordinates on $S^d$}
\label{subsec:app_normal_sd}

In this subsection, we will compute the metric 
and several results related to the normal coordinates on a sphere. 
More specifically, we first take the stereographic coordinates 
$(x_1, \cdots, x_d)$ on $S^d \ \{x^*\}$, 
and transform it to the new coordinates 
\begin{equation}
	y := \frac{ 2x }{ |x| } \arctan( |x| ) \,, 
\end{equation}
where $y$ now lives on a ball $B_\pi(0) \subset \mathbb{R}^d$. 
This also gives us the following inverse map 
\begin{equation}
	x := \frac{y}{|y|} \tan( |y|/2 ). 
\end{equation}

$y = (y_1, \cdots, y_d)$ is called the normal coordinates 
because all geodesics $\gamma(t)$ starting at 
$\gamma(0) = 0$ has the form 
\begin{equation}
	\gamma(t) = t(v^1, \cdots, v^d) \,, 
	\quad v \in \mathbb{R}^d \,. 
\end{equation}

\begin{lemma}
\label{lm:normal_coord_metric}
The Riemannian metric in the normal coordinates is 
\begin{equation}
	g_{ij}(y) 
	= \frac{ y_i y_j }{ |y|^2 } 
		\left[ 1 
			-  \frac{\sin^2( |y| ) }{|y|^2}
		\right]
	+ \delta_{ij} \frac{\sin^2( |y| ) }{|y|^2} \,, 
\end{equation}
where $\delta_{ij}$ denotes the Kronecker delta. 

\end{lemma}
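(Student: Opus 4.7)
The plan is to derive the metric in normal coordinates by pulling back the stereographic metric from Lemma C.1 via the explicit change of variables $x = \frac{y}{|y|}\tan(|y|/2)$, which is the inverse of the map given in the setup. Since the stereographic metric $g^{x}_{kl} = 4\delta_{kl}/(|x|^2+1)^2$ is conformally flat, the computation reduces to (i) computing the Jacobian $\partial x^k/\partial y^j$ and (ii) evaluating the conformal factor $(|x|^2+1)^{-2}$ in terms of $|y|$.

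The key observation that makes this tractable is that $x$ depends on $y$ only radially: writing $u := |y|$ and $\hat{y} := y/|y|$, we have $x = f(u)\,\hat{y}$ with $f(u) = \tan(u/2)$. Differentiating, one obtains the projector decomposition
\begin{equation*}
\frac{\partial x^k}{\partial y^j}
= f'(u)\,\hat{y}^k \hat{y}^j
+ \frac{f(u)}{u}\bigl(\delta_{kj} - \hat{y}^k \hat{y}^j\bigr),
\end{equation*}
so that, using the orthogonality of $\hat{y}\hat{y}^\top$ and $I - \hat{y}\hat{y}^\top$ as projectors,
\begin{equation*}
\sum_k \frac{\partial x^k}{\partial y^i}\frac{\partial x^k}{\partial y^j}
= f'(u)^2 \,\hat{y}^i \hat{y}^j + \frac{f(u)^2}{u^2}\bigl(\delta_{ij} - \hat{y}^i\hat{y}^j\bigr).
\end{equation*}
I would then evaluate the conformal factor using $|x|^2 = \tan^2(u/2)$, giving $|x|^2 + 1 = \sec^2(u/2)$ and hence $4/(|x|^2+1)^2 = 4\cos^4(u/2)$. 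Plugging in $f'(u)^2 = \tfrac14 \sec^4(u/2)$ cancels exactly to give $1$ for the radial coefficient, and the angular coefficient becomes $4\cos^4(u/2)\tan^2(u/2)/u^2 = \sin^2(u)/u^2$ after applying the double-angle identity $\sin(u) = 2\sin(u/2)\cos(u/2)$.

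Collecting terms yields the stated identity. There is no genuine obstacle here, as the radial structure reduces the problem to a one-variable trigonometric computation; the only care needed is to correctly identify the orthogonal projector structure of the Jacobian so that cross terms vanish, and to track the two double-angle simplifications. As a sanity check, the result has the correct polar form: in spherical coordinates on $y$-space the metric becomes $dr^2 + \sin^2(r)\,d\theta^2$, which is the standard polar expression of the round metric on $S^d$ with $r = |y|$ being the geodesic distance from the origin, consistent with the definition of normal coordinates.
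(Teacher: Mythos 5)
Your proposal is correct and follows essentially the same route as the paper: both pull back the conformally flat stereographic metric $4\delta_{kl}/(|x|^2+1)^2$ through the change of variables $x = \frac{y}{|y|}\tan(|y|/2)$ and simplify with the double-angle identity. The only difference is organizational — you exploit the projector structure $J = f'(u)\,\hat{y}\hat{y}^\top + \frac{f(u)}{u}(I - \hat{y}\hat{y}^\top)$ of the Jacobian so cross terms vanish automatically, whereas the paper expands the differentials $(dx^i)^2$ and regroups terms by hand; your version is a cleaner bookkeeping of the same computation.
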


\begin{proof}

it is sufficient to compute the quantity 
\begin{equation}
	\frac{ 4 \sum_{i=1}^d (dx^i)^2 }{ ( |x|^2 + 1 )^2 } 
\end{equation}
in terms of the normal coordinates $y$. 
We start by using $x = \frac{y}{|y|} \tan( |y|/2 )$ to write 
\begin{equation}
	dx^i 
	= \frac{ dy^i }{ |y| } \tan( |y|/2 ) 
		- \frac{ y_i }{ 2 |y|^3 } \sum_{j=1}^d 2y_j dy^j 
			\tan( |y|/2 ) 
		+ \frac{ y_i }{ |y| } \sec^2( |y|/2 ) \frac{1}{4 |y|}
			\sum_{j=1}^d 2y_j dy^j \,, 
\end{equation}
which then gives us 
\begin{equation}
\begin{aligned}
	( dx^i )^2 
	&= \frac{ (dy^i)^2 }{ |y|^2 } \tan^2( |y|/2 ) 
		+ \frac{ y_i^2 }{ |y|^2 } 
			\left( \sum_{j=1}^d y_j dy^j \right)^2 
			\left[ \frac{|y|}{2} \sec^2( |y|/2 ) 
				- \tan( |y|/2 )
			\right]^2 \\
	&\quad 
		+ \frac{ 2y_i dy^i }{ |y|^4 } 
			\tan( |y|/2 ) 
			\left( \sum_{j=1}^d y_j dy^j \right)
			\left[ \frac{|y|}{2} \sec^2( |y|/2 ) 
				- \tan( |y|/2 )
			\right] \,. 
\end{aligned}
\end{equation}

We can also compute that 
\begin{equation}
	(|x|^2 + 1)^2 = ( \tan^2( |y|/2 ) + 1 )^2 = \sec^4( |y|/2 ) \,. 
\end{equation}

Then we put everything together to write 
\begin{equation}
\begin{aligned}
	\frac{ 4 \sum_{i=1}^d (dx^i)^2 }{ ( |x|^2 + 1 )^2 } 
	&= \frac{ 4 \sum_i (dy^i)^2 \tan^2( |y|/2) }{ \sec^4( |y|/2 ) |y|^2 }
		+ \frac{4 |y|^2 \left( \sum_{j=1}^d y_j dy^j \right) }{
			\sec^4( |y|/2 ) |y|^6 }
			\left[ \frac{|y|}{2} \sec^2( |y|/2 ) 
				- \tan( |y|/2 )
			\right]^2 \\
	&\quad 
		+ \frac{ 8 \sum_{i} y_i dy^i }{ \sec^4( |y|/2 ) |y|^4 }
			\tan( |y|/2 ) 
			\left( \sum_{j=1}^d y_j dy^j \right)
			\left[ \frac{|y|}{2} \sec^2( |y|/2 ) 
				- \tan( |y|/2 )
			\right] \\
	&= \frac{ 4 \sum_i (dy^i)^2 }{ |y|^2 } 
		\frac{ \tan^2( |y|/2 ) }{ \sec^4( |y|/2 ) } 
	+ \frac{ 4 \left( \sum_{j=1}^d y_j dy^j \right)^2 }{ |y|^4 }
		\left[ \frac{|y|^2}{4} 
			- \frac{ \tan^2( |y|/2 ) }{ \sec^4( |y|/2 ) }
		\right] \,. 
\end{aligned}
\end{equation}

At this point, we can write 
$\tan^2(\theta) / \sec^4(\theta) = \sin^2(\theta) \cos^2(\theta)$, 
and expand the term $\left( \sum_{j=1}^d y_j dy^j \right)^2$, 
then group together the equal indices to write 
\begin{equation}
	\left( \sum_{j=1}^d y_j dy^j \right)^2
	= \sum_{i=j} y_i^2 (dy^i)^2 
		+ \sum_{i\neq j} y_i y_j dy^i dy^j \,, 
\end{equation}
this gives us 
\begin{equation}
\begin{aligned}
	g_{ii}(y) 
	&= \frac{y_i^2}{|y|^2} 
			\left( 1 - \frac{4}{|y|^2} 
				\sin^2( |y|/2 ) \cos^2( |y|/2 ) 
			\right)
		+ \frac{4}{|y|^2}
		\sin^2( |y|/2 ) \cos^2( |y|/2 ) 
		\,, \\
	g_{ij}(y) 
	&= \frac{ y_i y_j }{ |y|^2 } 
		\left[ 1 
			-  \frac{4}{|y|^2}
			\sin^2( |y|/2 ) \cos^2( |y|/2 ) 
		\right] \,, 
	\quad i \neq j \,. 
\end{aligned}
\end{equation}

Finally the desired result follows from 
the double angle formula 
\begin{equation}
	2 \sin( |y| / 2 ) \cos( |y| / 2 ) 
	= \sin( |y| ) \,. 
\end{equation}

\end{proof}

\begin{lemma}
\label{lm:normal_coord_vol_est}
The Riemannian metric $g$ in the normal coordinates 
of $S^d$ has one eigenvalue of $1$ corresponding to 
the direction of $y$, 
and all other eigenvalues are 
\begin{equation}
	\frac{\sin^2( |y| ) }{|y|^2} \,, 
\end{equation}
with multiplicity $(d-1)$. 
Hence we obtain that 
\begin{equation}
	\det g = \left( \frac{ \sin( |y| ) }{|y|} 
			\right)^{2(d-1)} \,, 
\end{equation}
and thus we also have that whenever $|y| \leq \pi / 2$, 
we have the following estimate 
\begin{equation}
	\left( \frac{2}{\pi} \right)^{ 2(d-1) } 
	\leq \det g 
	\leq 1 \,. 
\end{equation}
\end{lemma}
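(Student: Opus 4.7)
The plan is to read off the spectrum of the metric directly from the closed-form expression in \cref{lm:normal_coord_metric} and then invoke two elementary inequalities for the sinc function. Concretely, set $\lambda := \frac{\sin^2(|y|)}{|y|^2}$ and let $P := \frac{yy^\top}{|y|^2}$. Then the identity
\begin{equation*}
    g(y) = \lambda \, I + (1 - \lambda)\, P
\end{equation*}
rewrites the metric as a rank-one perturbation of a scalar matrix. Since $P$ is the orthogonal projector onto $\mathrm{span}(y)$, it has eigenvalue $1$ on $y/|y|$ and eigenvalue $0$ on the $(d-1)$-dimensional orthogonal complement. Substituting into the above display gives eigenvalue $\lambda + (1-\lambda) = 1$ in the radial direction and eigenvalue $\lambda + (1-\lambda)\cdot 0 = \lambda$ on the complement with multiplicity $d-1$, which is precisely the claim about the spectrum.

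The determinant formula then follows immediately from the product of eigenvalues:
\begin{equation*}
    \det g(y) \;=\; 1 \cdot \lambda^{d-1} \;=\; \left(\tfrac{\sin(|y|)}{|y|}\right)^{2(d-1)}.
\end{equation*}
For the two-sided estimate on $|y| \le \pi/2$, the upper bound $\det g \le 1$ follows from the standard inequality $\sin(r) \le r$ for $r \ge 0$. The lower bound follows from Jordan's inequality $\sin(r) \ge \tfrac{2}{\pi} r$ on $[0, \pi/2]$, which gives $\tfrac{\sin(|y|)}{|y|} \ge \tfrac{2}{\pi}$ and hence $\det g \ge (2/\pi)^{2(d-1)}$.

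There is no serious obstacle here; the only thing to be careful about is verifying that the rank-one decomposition really does diagonalize $g$, which amounts to checking that $Py = y$ and $Pv = 0$ for $v \perp y$. Jordan's inequality is standard (it follows from concavity of $\sin$ on $[0,\pi]$, so that $\sin(r)/r$ is decreasing on $(0,\pi/2]$ and attains its minimum at $r = \pi/2$ with value $2/\pi$), and can be cited without proof.
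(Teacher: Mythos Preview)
Your proposal is correct and matches the paper's proof essentially line for line: the paper writes $g(y) = \widehat{y}\widehat{y}^\top\bigl(1 - \tfrac{\sin^2|y|}{|y|^2}\bigr) + I_d\,\tfrac{\sin^2|y|}{|y|^2}$, checks the action on $\widehat{y}$ and on $v\perp y$ to read off the eigenvalues, multiplies them for the determinant, and then uses monotonicity of $\sin(\theta)/\theta$ on $[0,\pi/2]$ for the two-sided bound. Your phrasing via the projector $P$ and Jordan's inequality is a slightly cleaner packaging of exactly the same argument.
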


\begin{proof}

Let $\widehat{y} = y / |y|$ be the unit vector, 
then we can rewrite the matrix $g$ in the following form 
\begin{equation}
	g(y) = \widehat{y} \widehat{y}^\top
		\left( 1 - \frac{\sin^2( |y| ) }{|y|^2} 
		\right)
		+ I_d \frac{\sin^2( |y| ) }{|y|^2} 
		\,,
\end{equation}
where $I_d$ is the identity matrix in $\mathbb{R}^{d\times d}$. 

Then clearly, $\widehat{y}$ is an eigenvector 
and we also have that 
\begin{equation}
	g(y) \widehat{y} = \widehat{y}
		\left( 
			1 - \frac{\sin^2( |y| ) }{|y|^2} 
			+ \frac{\sin^2( |y| ) }{|y|^2} 
		\right)
		= \widehat{y} \,, 
\end{equation}
hence $\widehat{y}$ corresponds to an eigenvalue of $1$. 

For all other directions $v$ orthogonal to $y$, 
we then have that 
\begin{equation}
	g(y) v = \frac{\sin^2( |y| ) }{|y|^2} 
		\, I_d \, v
		= \frac{\sin^2( |y| ) }{|y|^2} v \,, 
\end{equation}
therefore all other eigenvalues are 
$\frac{\sin^2( |y| ) }{|y|^2}$, 
and hence it has multiplicity $(d-1)$. 

Since determinant is just the product of all eigenvalues, 
we must have 
\begin{equation}
	\det g = \left( \frac{\sin( |y| ) }{|y|}
		\right)^{2(d-1)} \,. 
\end{equation}

Finally, we observe that for $\theta \in [0, \pi/2]$, 
$\sin( \theta ) / \theta $ is a decreasing function, 
therefore we have the following trivial lower bound 
for all $|y| \leq \pi/2$
\begin{equation}
	\det g 
	\geq \left( \frac{ 1 }{ \pi/2 }
		\right)^{2(d-1)}
	= \left( \frac{2}{\pi}
		\right)^{2(d-1)} \,. 
\end{equation}

Finally, to complete the proof, 
we observe that when $|y|=0$, 
we simply have $\det g = 1$. 

\end{proof}

Using the results of 
\cref{lm:normal_coord_metric,lm:normal_coord_vol_est}, 
we can write that 
\begin{equation}
	g_{ij}(x) 
	= \frac{ x_i x_j }{ |x|^2 } 
		\left[ 1 
			- \frac{ \sin^2( |x| ) }{|x|^2} 
		\right]
	+ \delta_{ij} \frac{ \sin^2( |x| ) }{|x|^2}  \,, 
\end{equation}
where $\delta_{ij}$ denotes the Kronecker delta, 
and that we know the matrix $g_{ij}$ 
has an eigenvalue of $1$ in the direction of $x / |x|$, 
and the rest of the eigenvalues are 
\begin{equation}
	\frac{ \sin^2( |x| ) }{|x|^2} \,. 
\end{equation}

Therefore we can also construct the inverse as follows 
\begin{equation}
	g^{ij}(x) 
	= \frac{ x_i x_j }{ |x|^2 } 
		\left[ 1 
			- \frac{|x|^2}{ \sin^2( |x| ) } 
		\right]
	+ \delta_{ij} \frac{|x|^2}{ \sin^2( |x| ) } 
		\,, 
\end{equation}
where observe that $g_{ij}, g^{ij}$ have matching eigenvectors, 
and the eigenvalues are exactly reciprocals of each other. 

Furthermore, we also have an explicit form for the determinant 
\begin{equation}
	\det g(x) = \left[ \frac{ \sin( |x| ) }{|x|} 
				\right]^{2(d-1)} \,. 
\end{equation}

\begin{lemma}
[Christoffel Symbol Formulas]
\label{lm:normal_coord_christoffel_symbol}
In normal coordinates on $S^d$, 
the Christoffel symbols has the following formula 
\begin{equation}
\begin{aligned}
	\Gamma_{ij}^k(y) 
	=& 
		y_k \left( \delta_{ij} - \frac{ y_i y_j }{ |y|^2 } \right) 
		\frac{1}{|y|^2} 
		\left( 1 - \frac{ \sin(|y|)^2 }{ |y|^2 } 
			- \frac{\sin(|y|)}{|y|} 
			\left( \cos(|y|) - \frac{\sin(|y|)}{|y|} \right) 
		\right) \\ 
	&+ 
		\left( \delta_{jk} y_i + \delta_{ik} y_j 
			- \frac{ 2 y_i y_j y_k }{ |y|^2 }
		\right) 
		\frac{1}{ \sin(|y|) \, |y| } 
		\left( \cos(|y|) - \frac{\sin(|y|)}{|y|} \right) \,. 
\end{aligned}
\end{equation}

\end{lemma}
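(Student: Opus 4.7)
The plan is to compute $\Gamma_{ij}^k$ directly from the Koszul formula in coordinates,
\begin{equation*}
\Gamma_{ij}^k = \tfrac{1}{2}\, g^{kh}\bigl(\de_i g_{hj} + \de_j g_{ih} - \de_h g_{ij}\bigr),
\end{equation*}
using the closed forms for $g_{ij}$ and $g^{ij}$ already derived in \cref{lm:normal_coord_metric} and in the paragraph following it. The key organizing observation is that, writing $r := |y|$, $f(r) := \sin^2(r)/r^2$, and introducing the mutually orthogonal projectors $P_{ij} := y_i y_j/r^2$ and $Q_{ij} := \delta_{ij} - P_{ij}$, the metric and its inverse split along the same eigenspaces:
\begin{equation*}
g_{ij} = P_{ij} + f(r)\, Q_{ij}, \qquad g^{ij} = P_{ij} + f(r)^{-1}\, Q_{ij}.
\end{equation*}
This decomposition is what will make the final contraction with $g^{kh}$ transparent: radial-in-$h$ pieces are preserved, while tangential-in-$h$ pieces get rescaled by $f(r)^{-1}$.

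Next I would differentiate. The two building blocks are
\begin{equation*}
\de_\ell P_{ij} = \tfrac{1}{r^2}(\delta_{i\ell} y_j + \delta_{j\ell} y_i) - \tfrac{2\, y_i y_j y_\ell}{r^4}, \qquad \de_\ell f(r) = \tfrac{y_\ell}{r}\, f'(r),
\end{equation*}
with $f'(r) = \tfrac{2 \sin r}{r^2}\bigl(\cos r - \tfrac{\sin r}{r}\bigr)$ and $\de_\ell Q_{ij} = -\de_\ell P_{ij}$. Substituting into $\de_i g_{hj} + \de_j g_{ih} - \de_h g_{ij}$ and using the symmetries $P_{ij} = P_{ji}$, $Q_{ij} = Q_{ji}$, the terms coming from the derivatives of $P$ rearrange so that most of the $\delta_{\cdot\ell} y_\cdot$ contributions pair off into a single surviving combination proportional to $\delta_{ih} y_j + \delta_{jh} y_i - 2 y_i y_j y_h / r^2$ (carrying a $(1-f(r))$ factor from $g_{ij} = P_{ij} + f(r)(\delta_{ij} - P_{ij})$), while the terms coming from $\de_\ell f$ contribute a piece proportional to $f'(r)/r$ times $\delta_{ij} y_h - 2 P_{ij} y_h$ plus the symmetric partners in $(i,j)$. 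The net intermediate expression is a linear combination of $y_h\,(\delta_{ij} - P_{ij})$ and of $\delta_{ih} y_j + \delta_{jh} y_i - 2 P_{ij} y_h$, with scalar coefficients in $r$, $\sin r$, $\cos r$ alone.

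Finally I would apply $\tfrac{1}{2} g^{kh}$. The $y_h$-component is radial, so $g^{kh}$ sends it to the same coefficient times $y_k$, producing the first line $y_k\,(\delta_{ij} - y_iy_j/r^2) \cdot r^{-2}\bigl(1 - \tfrac{\sin^2 r}{r^2} - \tfrac{\sin r}{r}(\cos r - \tfrac{\sin r}{r})\bigr)$ of the claimed formula; the two contributions inside the bracket come from the $\de P$ part and the $\de f$ part respectively, and both carry the correct $(1-f)$ weight. The component along $\delta_{ih} y_j + \delta_{jh} y_i - 2 P_{ij} y_h$ lives in the tangential eigenspace of $g^{kh}$ in the $h$-index and hence picks up the factor $1/f(r) = r^2/\sin^2 r$; combining this with the $f'(r)/(2r)$ prefactor simplifies to $\tfrac{1}{r \sin r}\bigl(\cos r - \tfrac{\sin r}{r}\bigr)$, matching the second line. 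The main obstacle is pure bookkeeping: the intermediate sum has many terms and it is easy to lose the cancellations between $\de_\ell P_{ij}$ coming from $g_{ij} = P_{ij} + f Q_{ij}$. As a sanity check I would verify that both scalar coefficients are $O(r^2)/r^2 = O(1)$ near $r = 0$ using $\cos r - \sin r/r = -r^2/3 + O(r^4)$, so that the $y_k$ and $y_{i,j}$ prefactors force $\Gamma_{ij}^k(y) = O(|y|)$ — consistent with $\Gamma_{ij}^k(0) = 0$ as required in Riemannian normal coordinates.
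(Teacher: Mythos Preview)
Your proposal is correct and follows essentially the same approach as the paper: both compute $\Gamma_{ij}^k$ directly from the Koszul formula by differentiating the explicit metric $g_{ij} = P_{ij} + f(r)\,Q_{ij}$ and then contracting with $g^{k\ell}$. The only difference is organizational---you exploit the projector decomposition $P,Q$ to identify the radial and tangential pieces before contracting (so the $1$ versus $1/f$ factors from $g^{-1}$ are immediate), whereas the paper expands $g^{k\ell}\cdot(\partial g + \partial g - \partial g)$ as a $(a+b)(c+d)$ product into four explicit terms $T_{11},T_{12},T_{21},T_{22}$ and sums them; the underlying computation and final simplification are the same.
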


\begin{proof}

We start by recalling the formula for the Christoffel symbols 
\begin{equation}
	\Gamma_{ij}^k 
	= 
		\frac{1}{2} g^{k\ell} 
		( \de_i g_{j\ell} + \de_j g_{i\ell} - \de_\ell g_{ij} ) \,. 
\end{equation}

Before we compute the derivative terms, 
we will compute some of the components 
\begin{equation}
\begin{aligned}
	\de_\ell \left( \frac{ y_i y_j }{ |y|^2 } \right) 
	&= 
	\left( \delta_{i\ell} y_j + \delta_{j\ell} y_i 
		- \frac{2 y_i y_j y_\ell}{|y|^2} \right) 
		\frac{1}{|y|^2} 
		\,, \\
	\de_\ell \left( \frac{\sin(|y|)^2 }{|y|^2} \right) 
	&= 
		\frac{2 y_\ell \sin(|y|) }{|y|^3} 
		\left( \cos(|y|) - \frac{\sin(|y|)}{|y|} \right) \,, 
\end{aligned}
\end{equation}
which implies 
\begin{equation}
\begin{aligned}
	\de_\ell g_{ij}
	=& \left( \delta_{i\ell} y_j + \delta_{j\ell} y_i 
		- \frac{2 y_i y_j y_\ell}{|y|^2} \right) 
		\frac{1}{|y|^2} 
		\left( 1 - \frac{\sin(|y|)^2 }{ |y|^2 } \right) 
		\\ 
	&+ \left( \delta_{ij} - \frac{y_i y_j}{ |y|^2 } \right) 
		\frac{ 2y_\ell \sin(|y|) }{ |y|^3 } 
		\left( \cos(|y|) - \frac{\sin(|y|)}{|y|} \right) \,. 
\end{aligned}
\end{equation}

Now we can compute the sum of the three terms inside the bracket as 
\begin{equation}
\begin{aligned}
	&\de_i g_{j\ell} + \de_j g_{i\ell} - \de_\ell g_{ij} \\ 
	&=  
		\frac{2 y_\ell}{|y|^2} 
		\left( 1 - \frac{\sin(|y|)^2 }{ |y|^2 } \right) 
	+ \left( 
		\delta_{j\ell} y_i + \delta_{i\ell} y_j 
		- \delta_{ij} y_\ell - \frac{y_i y_j y_\ell}{ |y|^2 } 
		\right) 
		\frac{ 2 \sin(|y|) }{ |y|^3 } 
		\left( \cos(|y|) - \frac{\sin(|y|)}{|y|} \right) \,. 
\end{aligned}
\end{equation}

Now we recall 
\begin{equation}
	g^{ij}(x) 
	= \frac{ x_i x_j }{ |x|^2 } 
		\left[ 1 
			- \frac{|x|^2}{ \sin^2( |x| ) } 
		\right]
	+ \delta_{ij} \frac{|x|^2}{ \sin^2( |x| ) } 
		\,, 
\end{equation}
and observe this implies $\Gamma_{ij}^k$ is 
a product of sums of the term $(a+b)(c+d)$, 
which we compute by opening up the brackets and write 
\begin{equation}
	\Gamma_{ij}^k = T_{11} + T_{12} + T_{21} + T_{22} \,, 
\end{equation}
where $T_{ij}$ is the product of 
the $i^\text{th}$ component of $\frac{1}{2} g^{k\ell}$ with 
the $j^\text{th}$ component of 
$\de_i g_{j\ell} + \de_j g_{i\ell} - \de_\ell g_{ij}$. 
We then compute separately 
\begin{equation}
\begin{aligned}
	T_{11} 
	&= 
		y_k \left( \delta_{ij} - \frac{y_i y_j}{|y|^2} \right) 
		\frac{1}{|y|^2} 
		\left( 1 - \frac{|y|^2}{ \sin(|y|)^2 } \right) 
		\left( 1 - \frac{ \sin(|y|)^2 }{|y|^2} \right) 
		\,, \\ 
	T_{12}
	&= 
		y_k \left( \delta_{ij} - \frac{y_i y_j}{|y|^2} \right) 
		\left( 1 - \frac{ \sin(|y|)^2 }{|y|^2} \right) 
		\frac{1}{|y| \sin(|y|)} 
		\left( \cos(|y|) - \frac{\sin(|y|)}{|y|} \right) 
		\,, \\ 
	T_{21}
	&= 
		y_k \left( \delta_{ij} - \frac{y_i y_j}{|y|^2} \right) 
		\frac{1}{ \sin(|y|)^2 } 
		\left( 1 - \frac{ \sin(|y|)^2 }{|y|^2} \right) 
		\,, \\ 
	T_{22}
	&= 
		y_k \left( \delta_{ij} - \frac{y_i y_j}{|y|^2} \right) 
		\frac{1}{|y| \sin(|y|)} 
		\left( \cos(|y|) - \frac{\sin(|y|)}{|y|} \right) 
		\\ 
	&\quad + 
		\left( 
			\delta_{jk} y_i + \delta_{ik} y_j 
			- \frac{ 2 y_i y_j y_k }{ |y|^2 }
		\right) 
		\frac{1}{|y| \sin(|y|)} 
		\left( \cos(|y|) - \frac{\sin(|y|)}{|y|} \right) 
		\,. 
\end{aligned}
\end{equation}

Finally, we get the desired result by adding these four terms and simplifying. 

\end{proof}

\section{Technical Results on Special Stochastic Processes}
\label{sec:app_stoc_proc}

\subsection{The Wright--Fisher Diffusion}
\label{subsec:app_wright_fisher}

In this section, we will review several existing results 
on the Wright--Fisher diffusion, 
in particular the connection with 
the radial process of spherical Brownian motion. 

We start by letting $\{W_t\}_{t\geq 0}$ 
be a standard Brownian motion on $S^d$ 
equipped with the Riemannian metric $g'$. 
Using standard results on the radial process 
\citep[Section 3]{hsu2002stochastic}, 
we can write down the SDE for $r_t := d_{g'}(W_0, W_t)$ 
\begin{equation}
\label{eq:app_radial_process}
	dr_t = \frac{d-1}{2} \cot( r_t ) \, dt 
		+ dB_t \,, 
\end{equation}
where $\{B_t\}_{t\geq 0}$ is a standard Brownian motion on $\mathbb{R}$. 

We will first establish a standard transformation of this radial process 
into the Wright--Fisher diffusion. 
This result is well known and commonly used, 
for example in \cite{mijatovic2018note}. 
We will provide a short proof for completeness. 

\begin{lemma}
\label{lm:radial_wright_fisher}
Let $r_t = d_{g'}(W_0, W_t)$ be the radial process of 
a standard Brownian motion $\{W_t\}_{t\geq 0}$ on $S^d$. 
Then $Y_t := \frac{1}{2} ( 1 - \cos(r_t) )$ 
is the unique solution of 
\begin{equation}
	dY_t = \frac{d}{4} ( 1 - 2Y_t) \, dt 
		+ \sqrt{ Y_t (1 - Y_t) } \, dB_t \,, 
\end{equation}
where $\{B_t\}_{t\geq 0}$ is a standard Brownian motion in $\mathbb{R}$. 
In other words, $\{Y_t\}_{t \geq 0}$ is 
the Wright--Fisher diffusion with parameters $(d/2, d/2)$. 
\end{lemma}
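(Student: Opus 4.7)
The plan is to apply It\^{o}'s formula to the smooth function $f(r) = \frac{1}{2}(1-\cos r)$ applied to the radial SDE for $r_t$ displayed in \cref{eq:app_radial_process}, and then rewrite every coefficient purely in terms of $Y_t = f(r_t)$ using elementary trigonometric identities.

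First I would compute the derivatives $f'(r) = \tfrac{1}{2}\sin r$ and $f''(r) = \tfrac{1}{2}\cos r$. Since $r_t$ has quadratic variation $d\langle r\rangle_t = dt$, It\^{o}'s formula gives
\begin{equation*}
dY_t = \tfrac{1}{2}\sin(r_t)\left[\tfrac{d-1}{2}\cot(r_t)\,dt + dB_t\right] + \tfrac{1}{4}\cos(r_t)\,dt = \tfrac{d}{4}\cos(r_t)\,dt + \tfrac{1}{2}\sin(r_t)\,dB_t,
\end{equation*}
after using $\sin(r)\cot(r)=\cos(r)$. The two identities $\cos(r_t) = 1-2Y_t$ and $\sin^2(r_t) = (1-\cos r_t)(1+\cos r_t) = 4 Y_t(1-Y_t)$ (valid because $r_t\in[0,\pi]$ so $\sin r_t\ge 0$) convert the drift to $\tfrac{d}{4}(1-2Y_t)$ and the diffusion coefficient to $\sqrt{Y_t(1-Y_t)}$, yielding exactly the claimed Wright--Fisher SDE.

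The one subtlety I would flag is the local time term $dL_t$ supported on the cut locus $\{r=\pi\}$ that appears in the rigorous form of the radial SDE from \cref{thm:radial_comparison} (namely $dr_t = dB_t + \tfrac{d-1}{2}\cot(r_t)\,dt - dL_t$). The key observation that makes the transformation clean is that the It\^{o} correction multiplies this local time by $f'(\pi) = \tfrac{1}{2}\sin(\pi) = 0$, so the local time contribution vanishes automatically and the boundary behavior at $Y=1$ is absorbed into the Wright--Fisher drift/diffusion. Similarly, at $r_t = 0$ the singular $\cot$ drift is tamed by multiplication with $\sin(r_t)$, so no separate analysis of the origin is needed.

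For uniqueness of the solution, I would appeal to the standard theory of one-dimensional Wright--Fisher diffusions on $[0,1]$: the coefficients are H\"older continuous in $Y$, with drift pointing inward at both boundaries ($\tfrac{d}{4}>0$ at $Y=0$ and $-\tfrac{d}{4}<0$ at $Y=1$), so Yamada--Watanabe type uniqueness and non-explosion apply. The identification of parameters $(d/2,d/2)$ is then just matching coefficients against the canonical Wright--Fisher SDE $dY_t = \tfrac{1}{2}(\theta_1(1-Y_t)-\theta_2 Y_t)\,dt + \sqrt{Y_t(1-Y_t)}\,dB_t$ with $\theta_1=\theta_2=d/2$. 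I do not expect any real obstacle; the main content is simply the two trigonometric identities and the observation that the local time contribution is silently killed by the $\sin$ factor.
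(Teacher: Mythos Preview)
Your proposal is correct and follows essentially the same approach as the paper: apply It\^{o}'s formula to the radial SDE \cref{eq:app_radial_process} and rewrite the coefficients via the identities $\cos r_t = 1-2Y_t$ and $\sin^2 r_t = 4Y_t(1-Y_t)$. The only cosmetic difference is that the paper factors the transformation through two steps (first $Z_t=\cos r_t$, then $Y_t=\tfrac12(1-Z_t)$) whereas you do it in one; your added remarks on the local time at the cut locus and on Yamada--Watanabe uniqueness are extra rigor that the paper's proof omits.
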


\begin{proof}

We will first compute It\^{o}'s Lemma for 
$\phi(x) = \cos(x)$ to get 
\begin{equation}
	\de_x \phi(x) = - \sin (x) = - \sqrt{ 1 - \phi(x)^2 }\,, 
	\quad 
	\de_{xx} \phi(x) = - \cos(x) = -\phi(x) \,. 
\end{equation}

This implies that $Z_t := \cos(r_t)$ solves the SDE 
\begin{equation}
\begin{aligned}
	dZ_t 
	&= 
		\left[ - \frac{d-1}{2} \cot( \arccos(Z_t) ) \sqrt{ 1 - Z_t^2 } 
			- \frac{1}{2} Z_t \right] \, dt 
		- \sqrt{ 1 - Z_t^2 } \, dB_t 
		\\ 
	&= 
		-\frac{d}{2} Z_t \, dt - \sqrt{ 1 - Z_t^2 } \, dB_t \,, 
\end{aligned}
\end{equation}
where we used the fact that $\sin(\arccos(x)) = \sqrt{1 - x^2}$. 
We also note the fact that the sign on Brownian motion is 
invariant in distribution. 

We will complete the proof by using It\^{o}'s Lemma again on 
$\psi(x) = \frac{1}{2} (1 - x)$, 
which gives us the SDE for $Y_t = \psi(Z_t)$ as 
\begin{equation}
	dY_t = \frac{d}{4} (1 - 2 Z_t) \, dt + \sqrt{ Y_t (1 - Y_t) } \, dB_t \,, 
\end{equation}
which is the desired result. 

\end{proof}

Next we will state another well known result for 
the transition density of the Wright--Fisher diffusion process. 
This result can be found in 
\cite{griffiths1979transition,tavare1984line,ethier1993transition,griffiths2010diffusion,jenkins2017exact}. 

\begin{theorem}
\label{thm:wf_transition_density}
Let $\{Y_t\}_{t\geq 0}$ be the Wright--Fisher diffusion, 
i.e. the solution of 
\begin{equation}
	dY_t = \frac{d}{4} ( 1 - 2Y_t) \, dt 
		+ \sqrt{ Y_t (1 - Y_t) } \, dB_t \,, 
\end{equation}
where $\{B_t\}_{t\geq 0}$ is a standard Brownian motion in $\mathbb{R}$. 
Then the density of $Y_t | Y_0 = x$ is given by 
\begin{equation}
	f(x,y;t) 
	= \sum_{m \geq 0} q_m(t) 
		\sum_{\ell = 0}^m \binom{m}{\ell} x^\ell (1-x)^{m-\ell} \, 
		\frac{ y^{d/2 + \ell - 1} (1-y)^{d/2 + m - \ell - 1} 
		}{B(d/2 + \ell, d/2 + m - \ell)} \,, 
\end{equation}
where $\{q_m(t)\}_{m \geq 0}$ is a probability distribution over $\mathbb{N}$, 
and $B(\theta_1, \theta_2)$ is the Beta function. 

In particular, if $Y_0 = 0$, then $Y_t$ has density 
\begin{equation}
	f(y;t) 
	= \sum_{m \geq 0} q_m(t) \, 
		\frac{ y^{d/2 - 1} (1-y)^{d/2 + m - 1} 
		}{B(d/2, d/2 + m)} \,. 
\end{equation}
\end{theorem}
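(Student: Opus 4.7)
The plan is to derive the transition density via the classical moment duality, going back to Griffiths, Tavaré, and Ethier--Griffiths, between the Wright--Fisher diffusion with mutation parameters $(d/2, d/2)$ and a pure-death ``line-of-descent'' chain. First I would write the generator of $\{Y_t\}_{t \geq 0}$ as
$$L \phi(y) = \tfrac{d}{4}(1 - 2y)\phi'(y) + \tfrac{1}{2} y(1-y) \phi''(y),$$
recognize this as the $(\theta_1, \theta_2) = (d/2, d/2)$ Wright--Fisher generator, and note that its reversible stationary distribution is $\mathrm{Beta}(d/2, d/2)$. The duality with the coalescent then provides the mixture-of-Betas decomposition on the nose.

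Next I would introduce the dual. Let $\{M_s\}_{s \geq 0}$ be the pure-death chain on $\mathbb{N}$ with rates $\lambda_m = m(m + d - 1)/2$ for the transition $m \to m-1$, and, conditional on $M$, let $L_s$ track the number of type-$1$ lineages out of the $M_s$ active ones. Define the polynomial duality function
$$H(y; m, \ell) := \binom{m}{\ell} y^\ell (1 - y)^{m - \ell}, \qquad 0 \leq \ell \leq m.$$
A direct calculation shows that $L$ acting on $H(\cdot; m, \ell)$ as a function of $y$ agrees with the generator $\mathcal{A}$ of $(M, L)$ acting on $H(y; \cdot, \cdot)$ as a function of $(m, \ell)$; this matching on generators immediately yields the moment duality
$$\mathbb{E}_x[H(Y_t; m, \ell)] = \mathbb{E}_{(m, \ell)}[H(x; M_t, L_t)].$$

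From here I would take the ``start from infinity'' limit $m_0 \to \infty$, which is well-defined because Kingman's coalescent comes down from infinity in finite time. Set $q_m(t) := \mathbb{P}_\infty[M_t = m]$; then $\{q_m(t)\}_{m \geq 0}$ is a genuine probability mass function on $\mathbb{N}$ for each $t > 0$. Conditional on $M_t = m$, each ancestral lineage is independently labeled type $1$ with probability $x$, so $L_t \mid (M_t = m) \sim \mathrm{Binomial}(m, x)$. Since the $y^\ell(1-y)^{m-\ell}$ moments of $\mathrm{Beta}(d/2 + \ell, d/2 + m - \ell)$ are precisely the $B(d/2 + \ell, d/2 + m - \ell)$ normalizations, inverting the polynomial moment identity yields the claimed mixture
$$f(x, y; t) = \sum_{m \geq 0} q_m(t) \sum_{\ell = 0}^m \binom{m}{\ell} x^\ell (1 - x)^{m - \ell} \frac{y^{d/2 + \ell - 1}(1-y)^{d/2 + m - \ell - 1}}{B(d/2 + \ell, d/2 + m - \ell)}.$$

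The main technical obstacle is the rigorous justification of the $m_0 \to \infty$ limit and the identity $\sum_{m \geq 0} q_m(t) = 1$; this is handled by a monotone coupling of the death chains started from finite $m_0$ together with the explicit finite expected coming-down time at $d \geq 1$. An alternative and more computational route is to verify directly that the proposed $f(x, y; t)$ solves $\partial_t f = L^* f$ with the correct initial condition, which reduces to checking that $q_m(t)$ satisfies the Kolmogorov forward equation of the death chain with rates $\lambda_m$; this shorter derivation is useful when one only needs the formula but not the probabilistic interpretation. The special case $Y_0 = 0$ follows immediately, since $x^\ell(1-x)^{m-\ell}$ is nonzero only for $\ell = 0$ (using $0^0 = 1$), collapsing the inner sum to a single term.
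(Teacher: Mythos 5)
The paper does not prove this statement at all: it is quoted as a known result, with the proof delegated to the cited references (Griffiths 1979, Tavar\'{e} 1984, Ethier--Griffiths 1993, Griffiths--Span\`{o} 2010, Jenkins--Span\`{o} 2017). Your sketch is essentially a reconstruction of the argument in those references: identify the SDE as the neutral Wright--Fisher diffusion with mutation parameters $(\theta_1,\theta_2)=(d/2,d/2)$ (your drift check and the death rates $\lambda_m = m(m+d-1)/2$, with $\theta=\theta_1+\theta_2=d$, are consistent with the $b^{(t,\theta)}_k(m)$ coefficients the paper uses in its sampling appendix), dualize against the ancestral lines-of-descent process, start the death chain from infinity, and read off the mixture of Binomial samplings and Beta posteriors with weights $q_m(t)$. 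Two points deserve more care than your ``direct calculation'' suggests. First, the generator matching for the two-variable Bernstein function $H(y;m,\ell)=\binom{m}{\ell}y^{\ell}(1-y)^{m-\ell}$ is not immediate: the diffusion part initially produces degree-$m$ Bernstein monomials and only after rewriting $(1-y)$ factors do the degree-$(m-1)$ terms appear, and the resulting dual is not literally the pure-death-plus-independent-binomial-labels chain you describe; the classical treatments either work with the lines-of-descent process (in which both coalescence and mutation events remove lineages, giving the rate $m(m+d-1)/2$) and obtain the $\mathrm{Binomial}(m,x)$ labeling from sampling the founders, or verify the forward equation directly as in your alternative route. Second, the passage $m_0\to\infty$ and $\sum_m q_m(t)=1$ is, as you say, the genuinely technical step (coming down from infinity); your monotone-coupling remark is the right idea but would need to be carried out. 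In short, your proposal is a correct outline of the standard proof that the paper chose to cite rather than reproduce, with the caveat that the duality computation and the start-from-infinity limit are stated at sketch level rather than established.
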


\subsection{The Cox--Ingersoll--Ross Process}

In this section, we consider a class of 
Cox--Ingersoll--Ross (CIR) processes defined by 
\begin{equation}
\label{eq:cir_process}
	dY_t 
	= 
		\left[ 2 \lambda_* Y_t + \frac{1}{2\beta} 
		\right] \, dt 
	+ 
		\frac{2}{\sqrt{\beta}} \sqrt{ Y_t } \, dB_t \,, 
	\quad Y_0 = y_0 \geq 0 \,, 
\end{equation}
where $\lambda_*,\beta > 0$, 
and $\{B_t\}_{t\geq 0}$ is a standard one-dimensional Brownian motion. 
In other words, $\{Y_t\}_{t\geq 0}$ is a ``mean-avoiding'' process. 
Here we emphasize that the parameters 
are intentionally chosen so that 
$\{Y_t\}_{t\geq 0}$ is not mean-reverting. 

We start by adapting a Feymann-Kac type uniqueness theorem 
by \cite{ekstrom2011boundary}. 

\begin{theorem}
\citep[Theorem 2.3, Slightly Modified]{ekstrom2011boundary}
\label{thm:uniqueness_ekstrom}
Suppose the process 
\begin{equation}
	dY_t = \mu(Y_t) \, dt + \sigma(Y_t) \, dB_t \,, 
\end{equation}
is such that 
\begin{equation}
\begin{aligned}
	&\mu \in C^1([0,\infty), 
		\| \de_y \mu \|_\infty < \infty, 
		\text{ and } \mu(t,0) 
		\geq 0 \text{ for all } t \geq 0 \,, \\ 
	& \alpha(\cdot) := \frac{1}{2} \sigma(\cdot)^2 \in C^2([0,\infty)) 
		\text{ and } 
		\sigma(x) = 0 \text{ if and only if } x = 0 \,, \\ 
	& |\mu(x)| + | \sigma(x) | + | \alpha(x) | \leq C(1+x) 
		\text{ for all } t,x \geq 0 \,, \\
	& \phi \in C^1([0,\infty)) \text{ and } 
		\| \phi \|_\infty + \| \de_x \phi \|_\infty < \infty \,, 
\end{aligned}
\end{equation}
for some constant $C>0$. 
Then for the following partial differential equation 
with initial boundary conditions 
\begin{equation}
	\begin{cases}
		\de_t u = \mu \, \de_x u + \frac{1}{2} \sigma^2 \, \de_{xx} u \,, 
		& 
		t,x \in (0, \infty) \times (0, \infty) \,, \\
		u(0,x) = \phi(x) \,, & t = 0 \,, \\ 
		\de_t u(t,0) = \mu(0) \de_x u(t, 0) \,, 
			& x = 0 \,, 
	\end{cases}
\end{equation}
the unique classical solution 
$u \in C^{1,2}( (0,\infty)^2 ) \cap C( [0,\infty)^2 )$ 
admits the following stochastic representation 
\begin{equation}
	u(t,x) = \mathbb{E} [ \, \phi(Y_t) \, | \, Y_0 = x \, ] \,. 
\end{equation}
\end{theorem}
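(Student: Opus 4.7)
The plan is to prove the theorem in two parts: first establishing the stochastic representation for any solution of the PDE with the stated regularity, then using this representation to deduce uniqueness. Since the result is attributed to Ekstr\"{o}m--Tysk, my approach will follow the standard Feynman--Kac strategy, with the additional care required because the diffusion coefficient $\sigma$ degenerates at the boundary $x = 0$ (which is why the problem comes with a degenerate boundary condition $\partial_t u(t,0) = \mu(0) \partial_x u(t,0)$ rather than a Dirichlet/Neumann condition).

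First I would establish existence and pathwise uniqueness of a nonnegative strong solution $\{Y_t\}_{t \geq 0}$ to the SDE. Since $\mu$ is Lipschitz and $\alpha = \frac{1}{2}\sigma^2 \in C^2$ with linear growth, the coefficient $\sigma$ is locally $\tfrac{1}{2}$-H\"older (like a square root near zero), so Yamada--Watanabe applies to give pathwise uniqueness, and nonnegativity follows from $\mu(0) \geq 0$ together with $\sigma(0)=0$ by a standard comparison argument.

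Next I would fix a classical solution $u \in C^{1,2}((0,\infty)^2) \cap C([0,\infty)^2)$ and apply It\^{o}'s formula to $M_s := u(t-s, Y_s)$ for $s \in [0,t]$. On the event $\{Y_s > 0\}$ the drift term in It\^{o}'s formula equals $(-\partial_t u + \mu \partial_x u + \tfrac{1}{2}\sigma^2 \partial_{xx} u)(t-s, Y_s) = 0$ by the PDE. On the event $\{Y_s = 0\}$ the diffusion coefficient vanishes, so It\^{o}'s formula contributes only $(-\partial_t u + \mu(0) \partial_x u)(t-s, 0) \, ds$, which is zero by the boundary condition. Hence $M_s$ is a local martingale. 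The boundedness of $\phi$ together with a maximum-principle-type a priori bound on $u$ (inherited from $\phi$ via the stochastic representation applied to any truncation) lets us upgrade $M_s$ to a true martingale via a localization argument, so taking expectations gives $u(t,x) = \mathbb{E}[u(0, Y_t) \mid Y_0 = x] = \mathbb{E}[\phi(Y_t) \mid Y_0 = x]$, which is the stochastic representation. Uniqueness then follows immediately, since any two classical solutions must agree with this expectation.

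The main obstacle is rigorously justifying It\^{o}'s formula and the martingale property up to the boundary $x=0$, where the diffusion degenerates and where $u$ is only known to be $C^{1,2}$ on the open half-plane and continuous up to $x=0$. The natural remedy, used in Ekstr\"{o}m--Tysk, is a regularization: approximate the process by $Y^\varepsilon$ with a strictly positive diffusion $\sigma_\varepsilon(x) = \sqrt{\sigma(x)^2 + \varepsilon}$, solve the corresponding non-degenerate PDE (which enjoys smooth solutions and standard Feynman--Kac), and pass to the limit $\varepsilon \downarrow 0$ using the boundary condition to control the limiting flux at $x=0$ and using the linear growth plus boundedness of $\phi$ to dominate. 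Handling the limit and identifying the limit of the approximate boundary behavior with $\mu(0) \partial_x u(t,0)$ is the delicate step.
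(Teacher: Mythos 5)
Your route is genuinely different from the paper's. The paper does not attempt the Feynman--Kac/It\^{o} argument at all: it cites Ekstr\"{o}m--Tysk for regularity and the stochastic representation, and proves only uniqueness, by a pure PDE barrier argument --- the difference $v = v^1 - v^2$ of two classical solutions solves the same problem with zero data, $h(t,x) = (1+x)e^{Mt}$ is a supersolution for $M$ large (this is where the linear-growth hypothesis on $\mu$ and $\alpha$ enters), and the parabolic maximum principle squeezes $v$ between $-\epsilon h$ and $\epsilon h$ for every $\epsilon>0$, forcing $v \equiv 0$. Your plan instead tries to prove the representation for an arbitrary classical solution and deduce uniqueness from it.

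The gap is in exactly the step you flag as delicate, and your proposed remedy does not close it. First, the It\^{o} computation with the drift split over the events $\{Y_s > 0\}$ and $\{Y_s = 0\}$ is not a legitimate application of It\^{o}'s formula: the formula needs $u$ to be $C^{1,2}$ on a neighbourhood of the path, whereas here $u$ is only $C^{1,2}$ on the open quadrant and continuous up to $x=0$; the quantities $\de_t u(t,0)$ and $\de_x u(t,0)$ appearing in the boundary condition are boundary limits, not classical derivatives you may insert pointwise into an It\^{o} decomposition on the set of times the (possibly boundary-attaining, since $\sigma(0)=0$ with only square-root-type regularity) process sits at $0$. The honest version of this argument stops the process at the hitting time of level $\epsilon$, and then one must control $u$ near the boundary after letting $\epsilon \downarrow 0$ --- which is the entire content of the Ekstr\"{o}m--Tysk analysis, not a routine localization. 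Second, the regularization $\sigma_\varepsilon = \sqrt{\sigma^2 + \varepsilon}$ proves the wrong direction for uniqueness: it shows that the function defined by $\mathbb{E}[\phi(Y_t) \mid Y_0 = x]$ is a classical solution (existence), but the arbitrary solution $u$ you are trying to represent solves the \emph{degenerate} equation, not the regularized one, so the approximation scheme never attaches to it. Relatedly, your a priori bound on $u$ ``inherited from $\phi$ via the stochastic representation'' is circular, since the representation is what you are proving. If you want a self-contained uniqueness proof at the level of generality of the statement, the barrier/maximum-principle argument the paper uses is the direct fix: it needs only the stated growth bounds and no stochastic representation for the given solution.
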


\begin{proof}

We will only show uniqueness as the regularity proof follows 
exactly from \cite{ekstrom2011boundary} 
by dropping the exponential term. 
Let $v^1,v^2$ be two classical solutions, then 
$v:= v^1 - v^2$ must satisfy the follow problem 
\begin{equation}
	\begin{cases}
		\de_t v = \mu \, \de_x v + \frac{1}{2} \sigma^2 \, \de_{xx} v \,, 
		& 
		t,x \in (0, \infty) \times (0, \infty) \,, \\
		v(0,x) = 0 \,, & t = 0 \,, \\ 
		\de_t v(t,0) = \mu(0) \de_x v(t, 0) \,, 
			& x = 0 \,. 
	\end{cases}
\end{equation}

Here we observe that $h(t,x) = (1+x) e^{Mt}$ 
is a supersolution for some $M>0$ large, 
more precisely 
\begin{equation}
	\de_t h \geq \mu \, \de_x h + \frac{1}{2} \sigma^2 \, \de_{xx} h \,, 
	\quad t,x \in (0, \infty) \times (0, \infty) \,. 
\end{equation}

Similarly observe that $\epsilon h$ is a supersolution 
for all $\epsilon > 0$, 
and $-\epsilon h$ is a subsolution for all $\epsilon > 0$. 
Then by the maximum principle for parabolic super/subsolutions
\citep[Section 7.1.4, Theorem 8]{evans2010partial}, 
the maximum of $h$ is attained at the boundary 
$\de (0,\infty)^2 = \{x=0\} \cup \{t=0\}$. 
Furthermore, we also have that 
$ -\epsilon h \leq v \leq \epsilon h$ 
for all $\epsilon > 0$. 
Therefore we must also have $v \equiv 0$, 
hence the solution is unique.

\end{proof}

While it is well known that the $Y_t$ 
is related a transformed non-central Chi-squared random variable 
for standard parameters, 
but we need to extend this result to the general case. 
In particular, we cannot guarantee that $Y_t$ does not hit $0$. 
We start by showing the characteristic function is indeed the desired one. 

\begin{lemma}
[Characteristic Function of $Y_t$]
\label{lm:cir_characteristic}
Let $\{Y_t\}_{t\geq 0}$ be the (unique strong) solution 
of \cref{eq:cir_process}. 
Then we have the following formula for the characteristic function 
\begin{equation}
	\mathbb{E} e^{i s Y_t} 
	= 
		\frac{ 
		\exp\left( 
			\frac{
				is \, y_0 \, e^{2 \lambda_* t}
			}{ 
				1 - \frac{is}{ \lambda_* \beta} 
					( e^{ 2\lambda_* t} - 1 )
			} 
			\right) }{ 
			\left( 1 - \frac{is}{ \lambda_* \beta} 
					( e^{ 2\lambda_* t} - 1 )
			\right)^{1 / 2}
			} \,. 
\end{equation}
\end{lemma}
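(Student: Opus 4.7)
The plan is to use the classical affine-ansatz approach for the CIR generator together with the Feynman--Kac uniqueness result already recorded as \cref{thm:uniqueness_ekstrom}. Define
\begin{equation*}
  u(t,x) := \mathbb{E}\!\left[ e^{i s Y_t} \mid Y_0 = x \right] \,,
\end{equation*}
and observe that the infinitesimal generator associated with \cref{eq:cir_process} is
\begin{equation*}
  L = \left( 2 |\lambda_\ell| x + \tfrac{\ell}{2\beta} \right) \partial_x
      + \tfrac{2x}{\beta} \, \partial_{xx} \,.
\end{equation*}
By splitting $e^{isx} = \cos(sx) + i \sin(sx)$ and applying \cref{thm:uniqueness_ekstrom} separately to the real and imaginary parts (both bounded with bounded first derivatives in $x$, and the coefficients of the CIR process clearly satisfy the linear-growth, $C^1$ and boundary hypotheses), $u$ is the unique classical solution of the Kolmogorov backward equation $\partial_t u = L u$ on $(0,\infty) \times [0,\infty)$ with initial datum $u(0,x) = e^{isx}$.

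Next I would make the affine ansatz
\begin{equation*}
  v(t,x) = \exp\!\bigl( A(t) x + B(t) \bigr) \,,
  \quad A(0) = is \,, \; B(0) = 0 \,,
\end{equation*}
which upon substitution into $\partial_t v = L v$ and matching the coefficients of $x^1$ and $x^0$ yields the pair of ODEs
\begin{equation*}
  \dot A = 2 |\lambda_\ell| A + \tfrac{2}{\beta} A^2 \,,
  \qquad
  \dot B = \tfrac{\ell}{2\beta} A \,.
\end{equation*}
The first is a Riccati equation that separates via partial fractions
$\frac{1}{A(2|\lambda_\ell| + 2A/\beta)} = \frac{1}{2|\lambda_\ell|} \bigl( \tfrac{1}{A} - \tfrac{1}{|\lambda_\ell|\beta + A} \bigr)$, giving
\begin{equation*}
  A(t) = \frac{is \, e^{2 |\lambda_\ell| t}}{1 - \tfrac{is}{|\lambda_\ell| \beta}\bigl( e^{2|\lambda_\ell| t} - 1 \bigr)} \,.
\end{equation*}
Substituting this into the equation for $B$ and using the change of variables $u = 1 - \tfrac{is}{|\lambda_\ell|\beta}(e^{2|\lambda_\ell| s}-1)$ (so that $is\,e^{2|\lambda_\ell| s}\,ds = -\tfrac{\beta}{2}\,du$) gives
\begin{equation*}
  B(t) = -\tfrac{\ell}{4} \log\!\Bigl( 1 - \tfrac{is}{|\lambda_\ell|\beta}\bigl( e^{2|\lambda_\ell| t} - 1 \bigr) \Bigr) \,,
\end{equation*}
where the principal branch is used and $B(0)=0$ is manifest.

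Finally I would verify that the candidate $v(t,x) = \exp(A(t)x + B(t))$ coincides with $u(t,x)$. Since $v$ solves the same PDE and matches $e^{isx}$ at $t=0$, the difference $w := v - u$ satisfies the PDE with zero initial data; applying the uniqueness half of \cref{thm:uniqueness_ekstrom} to $\mathrm{Re}\,w$ and $\mathrm{Im}\,w$ (both bounded with bounded $\partial_x$, since $|v| = |e^{B}| \cdot e^{(\mathrm{Re}\,A)x}$ with $\mathrm{Re}\,A(t) \leq 0$ for real $s$, as a direct computation of the denominator shows) forces $w \equiv 0$. Evaluating $v(t,y_0)$ then yields the stated formula for the characteristic function. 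The main obstacle I anticipate is the boundedness check needed to invoke \cref{thm:uniqueness_ekstrom} for the complex solution $v$; specifically, I would need to show $\mathrm{Re}\,A(t) \le 0$ uniformly, which follows by multiplying numerator and denominator of $A(t)$ by the complex conjugate of the denominator and examining the resulting real part, so that $|v(t,x)| \leq e^{\mathrm{Re}\,B(t)}$ is $x$-uniformly bounded on compact time intervals and similarly for $\partial_x v = A v$.
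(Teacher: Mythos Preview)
Your proposal is correct and follows essentially the same strategy as the paper: both identify $u(t,x)=\mathbb{E}[e^{isY_t}\mid Y_0=x]$ as the solution of the Kolmogorov backward equation for the generator $L=(2|\lambda_\ell|x+\tfrac{\ell}{2\beta})\partial_x+\tfrac{2x}{\beta}\partial_{xx}$, exhibit an explicit solution, and invoke \cref{thm:uniqueness_ekstrom} to conclude. The only difference is in how the closed form is produced. The paper simply writes down the claimed formula and verifies via Wolfram Cloud that it satisfies the PDE and the two boundary conditions; you instead derive it from the affine ansatz $v=\exp(A(t)x+B(t))$, solve the Riccati equation for $A$ and integrate for $B$. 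Your route is more self-contained (no computer algebra) and you are also more explicit than the paper about the growth condition needed for the maximum-principle uniqueness in \cref{thm:uniqueness_ekstrom}: your observation that $\mathrm{Re}\,A(t)\le 0$ gives $|v|$ and $|\partial_x v|$ uniformly bounded in $x$, which the paper's ``guess-and-check'' leaves implicit.
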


\begin{proof}

We start by letting $\phi(x) := e^{i s x}$ 
and $u(t,x) := \mathbb{E} [ \phi(Y_t) | Y_0 = x ]$ 
satisfies 
the backward Kolmogorov equation 
\begin{equation}
\label{eq:kolmogorov_cf}
\begin{cases}
	\de_t u 
	= \left( 2 \lambda_* x + \frac{1}{\beta} \right) \de_x u 
		+ \frac{2x}{\beta} \de_{xx} u 
		\,, 
		& \quad (t,x) \in (0,\infty) \times (0,\infty) \,, \\ 
	u(0, x) = \phi(x) \,, & t = 0 \,, \\ 
	\de_t u(t,0) = \frac{1}{\beta} \de_x u(t,0) \,, & x=0 \,. 
\end{cases}
\end{equation}

We will guess and check the solution 
\begin{equation}
	u(t,x) := 
	\frac{ 
		\exp\left( 
			\frac{
				is \, x \, e^{2 \lambda_* t}
			}{ 
				1 - \frac{is}{ \lambda_* \beta} 
					( e^{ 2\lambda_* t} - 1 )
			} 
			\right) }{ 
			\left( 1 - \frac{is}{ \lambda_* \beta} 
					( e^{ 2\lambda_* t} - 1 )
			\right)^{1 / 2}
			} \,. 
\end{equation}

Using Wolfram Cloud \citep[Wolfram Cloud]{WolframCloud}, 
we can define the guessed solution 
\begin{lstlisting}[language=Mathematica]
u[t_,x_,s_] := Exp[I*s*x*Exp[2*L*t] / ( 1 - I*s/(L*b)*(Exp[2*L*t] - 1) )] / ( 1 - I*s/(L*b)*(Exp[2*L*t] - 1) )^(1 / 2)
\end{lstlisting}

Next we check all three conditions of the PDE 
in \cref{eq:kolmogorov_cf} 
by confirming that all three of the following commands 
output zero 
\begin{lstlisting}[language=Mathematica]
FullSimplify[ - D[u[t,x,s],{t}] +(2*L*x + 1/b )* D[u[t,x,s],{x}] + 2*x/b * D[u[t,x,s],{x,2}] ]
Limit[u[t,x,s] - Exp[I*s*x] ,t->0]
Limit[D[u[t,x,s],t] - 1/b*D[u[t,x,s],{x}] ,x->0]
\end{lstlisting}

Therefore, we confirm that $u(t,x)$ is indeed the unique solution 
based on \cref{thm:uniqueness_ekstrom}. 

\end{proof}

Next we will use a classical result of \cite{mcnolty1973some} 
to compute the density of $Y_t$ exactly. 

\begin{lemma}
\citep[Case 1]{mcnolty1973some}
\label{lm:cir_cf_density_mcnolty}
Let the density $f(x)$ supported on $[0,\infty)$ be defined by 
\begin{equation}
	f(x; \lambda,\gamma,Q) := 
		2^{Q/2 - 3/2} 
		\frac{ x^{(Q-1)/2} }{ \gamma^{Q-1} } 
		\lambda^Q 
		\exp\left[ 
			-\frac{\gamma^2}{4 \lambda} 
			-\frac{\lambda}{2} x
		\right] 
		I_{Q-1} \left[ 
			\gamma \left( \frac{x}{2} \right)^{1/2}
		\right] \,, 
	\quad x \geq 0 \,, 
\end{equation}
where $\gamma,\lambda \geq 0, Q > 0$, 
and $I_{Q-1}$ is the modified Bessel function of the first kind 
of degree $Q-1$. 
Then $f(x)$ has the following characteristic function 
\begin{equation}
	\varphi(t) := 
		\left( 1 - \frac{2it}{\lambda} \right)^{-Q} 
		\exp\left[ 
			\frac{it\gamma^2}{2\lambda^2 
				\left( 1 - \frac{2it}{\lambda} \right) }
		\right] \,. 
\end{equation}
\end{lemma}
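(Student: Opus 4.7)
The plan is to prove the lemma by direct computation of the characteristic function as a Fourier-type integral, using the power series representation of the modified Bessel function to reduce the problem to a gamma integral. Since $\varphi(t) = \int_0^\infty e^{itx} f(x;\lambda,\gamma,Q)\,dx$, I would introduce the shorthand $\beta := \lambda/2 - it$ so that the pieces $e^{itx}$ and $e^{-\lambda x/2}$ combine into $e^{-\beta x}$, leaving
\[
\varphi(t) = 2^{Q/2-3/2}\,\gamma^{-(Q-1)} \lambda^Q e^{-\gamma^2/(4\lambda)} \int_0^\infty e^{-\beta x} x^{(Q-1)/2} I_{Q-1}\!\left(\gamma\sqrt{x/2}\right)\,dx.
\]

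Next I would expand $I_{Q-1}(z) = \sum_{k \geq 0} \frac{1}{k!\,\Gamma(k+Q)}(z/2)^{2k+Q-1}$ at $z = \gamma\sqrt{x/2}$. Each term contributes a pure power $x^{k+Q-1}$ with coefficient $\gamma^{2k+Q-1}/(k!\,\Gamma(k+Q)\,2^{3(2k+Q-1)/2})$. Exchanging sum and integral (justifiable by monotone convergence when $\Re\beta > 0$, i.e.\ for all real $t$, since every term is positive after taking absolute values and the resulting numerical sum is finite), each integral is a standard gamma integral $\int_0^\infty e^{-\beta x} x^{k+Q-1}\,dx = \Gamma(k+Q)/\beta^{k+Q}$. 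The $\Gamma(k+Q)$ factors cancel, collapsing the double sum into a geometric-type series
\[
\int_0^\infty e^{-\beta x} x^{(Q-1)/2} I_{Q-1}\!\left(\gamma\sqrt{x/2}\right)\,dx
= \frac{\gamma^{Q-1}}{2^{3(Q-1)/2}\beta^Q}\sum_{k\geq 0}\frac{1}{k!}\left(\frac{\gamma^2}{8\beta}\right)^k
= \frac{\gamma^{Q-1}}{2^{3(Q-1)/2}\beta^Q}\exp\!\left(\frac{\gamma^2}{8\beta}\right).
\]

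The remaining work is book-keeping: the $\gamma^{Q-1}$ cancels against the prefactor, the powers of $2$ collapse via $Q/2 - 3/2 - 3(Q-1)/2 = -Q$, and one rewrites $\lambda^Q/(2\beta)^Q = (1 - 2it/\lambda)^{-Q}$. For the exponent, one combines
\[
-\frac{\gamma^2}{4\lambda} + \frac{\gamma^2}{8\beta}
= \frac{\gamma^2}{4}\left[\frac{1}{\lambda - 2it} - \frac{1}{\lambda}\right]
= \frac{it\gamma^2}{2\lambda^2(1 - 2it/\lambda)},
\]
which matches the stated form of $\varphi$.

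The main obstacle is essentially bookkeeping: the prefactor of the density is a bit delicate, and one must track powers of $2$ and $\gamma$ carefully to see the cancellations. Conceptually, the nontrivial ingredient is the identification $\sum_k (\gamma^2/(8\beta))^k/k! = \exp(\gamma^2/(8\beta))$, which is precisely the mechanism by which the Bessel series inside the integral produces an exponential-of-rational after integration. Justification of the interchange is straightforward because the Bessel series has infinite radius of convergence and the tails of the integrand decay like $e^{-(\Re\beta)x}$ with $\Re\beta = \lambda/2 > 0$, so absolute convergence of the iterated integral (with $|e^{itx}| = 1$) follows from the $t=0$ case.
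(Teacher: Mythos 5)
Your proposal is correct, and it is worth noting that the paper does not prove this lemma at all: it is imported verbatim from \citet[Case 1]{mcnolty1973some} and used as a black box in \cref{cor:cir_density}, so you are supplying a self-contained derivation where the paper offers only a citation. I checked the computation: with $\beta=\lambda/2-it$, the series $I_{Q-1}(z)=\sum_{k\ge 0}\frac{1}{k!\,\Gamma(k+Q)}(z/2)^{2k+Q-1}$ at $z=\gamma\sqrt{x/2}$ does give terms $\gamma^{2k+Q-1}x^{k+Q-1}/\bigl(k!\,\Gamma(k+Q)\,2^{3(2k+Q-1)/2}\bigr)$, the gamma integrals $\Gamma(k+Q)/\beta^{k+Q}$ cancel the $\Gamma(k+Q)$, the exponent of $2$ collapses as $Q/2-3/2-3(Q-1)/2=-Q$ so that $\lambda^Q/(2\beta)^Q=(1-2it/\lambda)^{-Q}$, and the exponent combines to $\frac{\gamma^2}{4}\bigl[\frac{1}{\lambda-2it}-\frac{1}{\lambda}\bigr]=\frac{it\gamma^2}{2\lambda^2(1-2it/\lambda)}$; setting $t=0$ recovers $\varphi(0)=1$ as a consistency check. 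The only small imprecision is the appeal to ``monotone convergence'' for the interchange: the integrand is complex for $t\neq 0$, so the clean statement is Tonelli--Fubini — since every term of the Bessel series is nonnegative for real nonnegative argument, the iterated integral of absolute values equals the $t=0$ computation with $\beta$ replaced by $\operatorname{Re}\beta=\lambda/2>0$, which is finite, and Fubini then licenses the term-by-term integration. Your closing sentence already contains exactly this reduction, so the fix is purely one of naming the theorem. (Implicitly you also need $\lambda>0$ and $\gamma>0$ for $f$ to be a genuine density; the lemma's ``$\gamma,\lambda\ge 0$'' is degenerate at the boundary, but this is an artifact of the cited statement, not of your argument.)
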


We will substitute and calculate the density explicitly using this result. 

\begin{corollary}
\label{cor:cir_density}
Let $\{Y_t\}_{t\geq 0}$ be the (unique strong) solution 
of \cref{eq:cir_process}. 
Then we have the following formula for the density
\begin{equation}
	f(y;t) = 
		2^{ (1/2 - 3) / 2 } 
		\left( \frac{y}{y_0} \right)^{ (1/2 - 1) / 2 } 
		\frac{ \lambda_* \beta }{ 
			e^{ 1 \lambda_* t /2 }
			\sinh ( \lambda_*t ) 
			} 
		\exp\left[ 
			\frac{ \lambda_* \beta \left( 
				y e^{ -2 \lambda_* t } - \frac{y_0}{2}
				\right) }{
				1 - e^{ -2 \lambda_* t }
				} 
		\right] 
		I_{(-1/2)}\left( 
			\frac{ \lambda_* \beta }{ \sinh( \lambda_* t ) } 
			\sqrt{ \frac{y y_0}{2} }
		\right) \,. 
\end{equation}

\end{corollary}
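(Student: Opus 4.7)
The plan is to apply the Fourier inversion via \cref{lm:cir_cf_density_mcnolty}: match the characteristic function of $Y_t$ obtained in \cref{lm:cir_characteristic} with the McNolty family, read off the parameters $(\lambda,\gamma,Q)$, and substitute into the closed-form density. Since \cref{lm:cir_characteristic} identifies $\mathbb{E}e^{isY_t}$ explicitly, and the characteristic function uniquely determines the distribution (L\'{e}vy's theorem), it suffices to find $(\lambda,\gamma,Q)$ such that the McNolty characteristic function $\varphi(s) = (1-2is/\lambda)^{-Q}\exp\!\bigl[\tfrac{is\gamma^2}{2\lambda^2(1-2is/\lambda)}\bigr]$ coincides with the expression in \cref{lm:cir_characteristic}.

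Comparing the two formulas term by term, I would set
\begin{equation*}
	Q = \frac{\ell}{4}, \qquad
	\lambda = \frac{2|\lambda_\ell|\beta}{e^{2|\lambda_\ell|t}-1}, \qquad
	\gamma^2 = 2 y_0 \lambda^2 e^{2|\lambda_\ell|t}
	= \frac{8 y_0 |\lambda_\ell|^2 \beta^2 e^{2|\lambda_\ell|t}}{(e^{2|\lambda_\ell|t}-1)^2}.
\end{equation*}
With these values, the exponent prefactor satisfies $2/\lambda = (e^{2|\lambda_\ell|t}-1)/(|\lambda_\ell|\beta)$ and the coefficient $\gamma^2/(2\lambda^2) = y_0 e^{2|\lambda_\ell|t}$ indeed matches the numerator inside the exponent of \cref{lm:cir_characteristic}, so the two characteristic functions agree for every $s\in\mathbb{R}$.

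Having fixed the parameters, the next step is to substitute into the McNolty density formula and simplify. I would use the identity $e^{2|\lambda_\ell|t}-1 = 2e^{|\lambda_\ell|t}\sinh(|\lambda_\ell|t)$ to rewrite
\begin{equation*}
	\lambda^Q = \Bigl( \frac{|\lambda_\ell|\beta}{e^{|\lambda_\ell|t}\sinh(|\lambda_\ell|t)} \Bigr)^{\ell/4},
\end{equation*}
so that combining with $\gamma^{-(Q-1)}$ yields the prefactor $|\lambda_\ell|\beta \cdot e^{-\ell|\lambda_\ell|t/4}/\sinh(|\lambda_\ell|t)$ together with the factor $(y/y_0)^{(\ell/4-1)/2}$ coming from $x^{(Q-1)/2}/\gamma^{Q-1}$. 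The argument of the Bessel function reduces to $\gamma\sqrt{y/2} = (|\lambda_\ell|\beta/\sinh(|\lambda_\ell|t))\sqrt{yy_0/2}$, as stated. Finally, the exponential collects the two contributions $-\gamma^2/(4\lambda)$ and $-\lambda y/2$ after using $e^{-2|\lambda_\ell|t}/(1-e^{-2|\lambda_\ell|t}) = 1/(e^{2|\lambda_\ell|t}-1)$.

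The main obstacle is a bookkeeping one rather than a conceptual one: one must carefully verify that the hypotheses of \cref{thm:uniqueness_ekstrom} (already invoked in \cref{lm:cir_characteristic}) guarantee existence of a density---this is not quite immediate when $\ell < 4$ and the diffusion may touch the origin---and that the integrability of $f(y;t)$ near $y=0$ (where $y^{(\ell/4-1)/2}$ can be singular but remains integrable for all $\ell\geq 1$) is consistent with $Y_t$ being a bona fide random variable on $[0,\infty)$. Once that is checked, the identification with the McNolty density together with uniqueness of characteristic functions delivers the claim.
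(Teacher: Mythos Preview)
Your proposal is correct and follows essentially the same approach as the paper: identify the parameters $(Q,\lambda,\gamma)$ by matching the characteristic function from \cref{lm:cir_characteristic} against the McNolty family in \cref{lm:cir_cf_density_mcnolty}, then substitute into the closed-form density and simplify using $e^{2|\lambda_\ell|t}-1=2e^{|\lambda_\ell|t}\sinh(|\lambda_\ell|t)$. The paper's proof is terser and omits the density-existence discussion you add at the end; note also that your matching gives $\gamma=\sqrt{2y_0}\,\lambda e^{|\lambda_\ell|t}$ while the paper records $\gamma=\sqrt{y_0}\,\lambda e^{|\lambda_\ell|t}$, a factor-of-$\sqrt{2}$ discrepancy that propagates into the Bessel argument but does not affect the method.
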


\begin{proof}

\cref{lm:cir_cf_density_mcnolty} allows us to calculate 
the density directly from the characteristic function 
via the following substitution 
\begin{equation}
\begin{aligned}
	s := t \,, \quad 
	Q := 1 / 2 \,, \quad 
	\lambda := \frac{2 \lambda_* \beta }{ e^{2 \lambda_* t} - 1 } 
		\,, \quad 
	\gamma := \sqrt{y_0} \lambda e^{ \lambda_*t } \,, 
\end{aligned}
\end{equation}
which gives us the intermediate result of 
\begin{equation}
	f(y;t) = 2^{ (1/2 - 3)/2 } 
		\frac{ y^{(1/2-1)/2} }{ 
			\left(\sqrt{y_0} \lambda e^{\lambda_*t} \right)^{1/2 - 1} } 
		\lambda^{1/2} 
		\exp\left[ - \frac{ y_0 \lambda e^{ 2 \lambda_* t }}{ 4 } 
			+ \frac{\lambda}{2} y 
		\right] 
		I_{(1/2 - 1)} \left( \sqrt{y_0} \lambda e^{\lambda_* t} 
			\sqrt{\frac{y}{2}} \right) \,. 
\end{equation}

We get the desired result by simplifying.

\end{proof}

We will next prove an upper bound of the density 
in terms of $t$ when $x \in [0, R]$. 

\begin{lemma}
\label{lm:cir_density_est}
For the density $f(y;t)$ defined in \cref{cor:cir_density}, 
on the interval $y \leq R$, 
we have the following bound for $t \geq 0$
\begin{equation}
	f(y;t) \leq C e^{ - 2 \lambda_* t } \,, 
\end{equation}
where $C:=C(R, \lambda_*, \beta) > 0$ 
is a constant independent of $t$. 
\end{lemma}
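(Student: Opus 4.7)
The plan is to read off the $t$-dependence directly from the three $t$-dependent factors in the explicit density formula in \cref{cor:cir_density}: the prefactor $P(t) := |\lambda_\ell|\beta \, / \, [e^{\ell|\lambda_\ell|t/4}\sinh(|\lambda_\ell|t)]$, the exponential factor $E(y,t) := \exp\!\left[|\lambda_\ell|\beta(ye^{-2|\lambda_\ell|t} - y_0/2)/(1-e^{-2|\lambda_\ell|t})\right]$, and the Bessel factor $B(y,t) := I_{\ell/4-1}\!\left(|\lambda_\ell|\beta\sqrt{yy_0/2}/\sinh(|\lambda_\ell|t)\right)$, together with the bounded $y$-prefactor $(y/y_0)^{(\ell/4-1)/2}$. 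The exponent $\ell/4 - 1$ satisfies $\ell/4 - 1 \geq -3/4 > -1$ since $\ell \geq 1$, which is the regularity needed for the Bessel small-argument asymptotic.

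First, I would handle the large-time regime $|\lambda_\ell|t \geq 1$. Using $\sinh(|\lambda_\ell|t) \geq \tfrac{1}{4}e^{|\lambda_\ell|t}$ gives $P(t) \leq 4|\lambda_\ell|\beta \, e^{-(\ell/4+1)|\lambda_\ell|t}$. For $y \in [0,R]$, the ratio $e^{-2|\lambda_\ell|t}/(1-e^{-2|\lambda_\ell|t})$ is uniformly bounded, so $E(y,t)$ is uniformly bounded on the region. For $B(y,t)$, its argument tends to zero, so I invoke the standard small-argument asymptotic $I_\nu(z) \leq c_\nu z^\nu$ valid for $\nu > -1$, which yields $B(y,t) \leq c_\ell \sinh(|\lambda_\ell|t)^{-(\ell/4-1)} \leq c'_\ell \, e^{-(\ell/4-1)|\lambda_\ell|t}$ on $y \in [0,R]$ (absorbing $R^{(\ell/4-1)/2}$, $y_0^{(\ell/4-1)/2}$, and the Bessel constants into $C$). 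Multiplying the three bounds produces exactly $e^{-(\ell/4+1)|\lambda_\ell|t} \cdot e^{-(\ell/4-1)|\lambda_\ell|t} = e^{-\ell|\lambda_\ell|t/2}$, as desired.

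Second, for the short-time regime $|\lambda_\ell|t < 1$, the target factor $e^{-\ell|\lambda_\ell|t/2}$ is bounded below by $e^{-\ell/2}$, so it suffices to produce a uniform pointwise upper bound on $f(y;t)$ for $(y,t) \in [0,R] \times (0, 1/|\lambda_\ell|]$. Away from the concentration point $y_0$, standard heat-kernel-type estimates on the Cox--Ingersoll--Ross density give decay as $t \to 0^+$ from the exponential factor $E(y,t)$ dominating any polynomial-in-$t$ prefactor. The only worry is $y$ near $y_0 \in (0,R)$, where genuine $t^{-1/2}$ Gaussian concentration occurs; this can be absorbed into $C$ either by restricting to the parameter regime $y_0 \notin (0,R)$ relevant in the application (where $r^{(3)}_0$ can be taken small enough that $y_0$ lies at the left endpoint), or, more generally, by enlarging $C$ to depend on an auxiliary lower time cutoff after verifying that the density is locally integrable in $t$ up to $0$.

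The main obstacle will be the short-time analysis when $y_0$ lies strictly inside $(0,R)$, because pointwise the density genuinely blows up as $t \to 0^+$; the large-$t$ analysis, which delivers the stated exponential decay rate, is a routine manipulation once the asymptotic bounds on $\sinh$ and $I_\nu$ are in hand and drives the proof.
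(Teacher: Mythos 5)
Your large-time analysis is essentially the paper's own argument: the paper also splits the density from \cref{cor:cir_density} into the prefactor, the exponential factor and the Bessel factor and bounds each separately, using Luke's inequality $\Gamma(\nu+1)(2/z)^{\nu}I_\nu(z)<\cosh(z)$ where you use the small-argument bound $I_\nu(z)\le c_\nu z^\nu$; the exponent bookkeeping $-(\ell/4+1)-(\ell/4-1)=-\ell/2$ is the same computation, so that half is fine.

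The genuine gap is the short-time regime, which you flag but do not close. The lemma claims a constant independent of $t$, valid for all $t\ge 0$ and all $y\le R$, and both of your proposed remedies prove something weaker: restricting to $y_0\notin(0,R)$ changes the hypotheses (in the application $y_0=\tfrac12\wt r(x)^2\in[0,a^2/(2\beta)]$, i.e.\ exactly inside the relevant interval), while a cutoff-dependent constant or local integrability in $t$ is not the stated pointwise bound, so as written the proposal does not establish the lemma for $|\lambda_\ell|t<1$. Your diagnosis of why this is delicate is sound --- the density concentrates at $y_0$ as $t\to 0^+$ --- and it is worth noting that the paper's proof simply asserts its termwise bounds for all $t\ge 0$ even though its prefactor behaves like $1/\sinh(|\lambda_\ell|t)$ and the discarded $\cosh$ factor blows up as $t\to 0^+$, so the small-$t$ case is precisely where care is required rather than where it can be waved off. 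A second loose end: for $\ell\le 3$ the Bessel order $\nu=\ell/4-1$ is negative, and your bound actually yields $B(y,t)\lesssim y^{\nu}\sinh(|\lambda_\ell|t)^{-\nu}$, which is not uniform over $y\in(0,R]$ (it diverges as $y\to 0^+$), so it cannot be absorbed into $R^{(\ell/4-1)/2}$ as you claim; the paper sidesteps this by replacing $I_\nu$ with $I_{|\nu|}$ through the identity $I_\nu=I_{-\nu}$, which is valid only for integer order, so this point needs explicit treatment rather than absorption into the constant.
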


\begin{proof}

We start separating the expression into three separate parts, 
\begin{equation}
\begin{aligned}
	T_1 
	&:= 
		\frac{ \lambda_* \beta }{ 
			e^{ \lambda_* t /2 }
			\sinh ( \lambda_*t ) 
			} 
		\,, \\
	T_2 
	&:= 
		\exp\left[ 
			\frac{ \lambda_* \beta \left( 
				y e^{ -2 \lambda_* t } - \frac{y_0}{2}
				\right) }{
				1 - e^{ -2 \lambda_* t }
				} 
		\right] 
		\,, \\ 
	T_3 
	&:= 
		\left( \frac{y}{y_0} \right)^{ (1/2 - 1) / 2 } 
		I_{(1/2 - 1)}\left( 
			\frac{ \lambda_* \beta }{ \sinh( \lambda_* t ) } 
			\sqrt{ \frac{y y_0}{2} } 
		\right) \,, 
\end{aligned}
\end{equation}
where we observe that $f(y;t):= 2^{(1/2-3)/2} T_1 T_2 T_3$. 

We start with the first term, and observe that 
\begin{equation}
	\frac{ \lambda_* \beta }{ 
			e^{ \lambda_* t /2 }
			\sinh ( \lambda_*t ) 
			} 
	\leq 
		C e^{ -(1/2+1) \lambda_* t } \,, 
\end{equation}
for some constant $C$. 

For the second term, since 
\begin{equation}
	\lim_{t \to \infty} \exp\left[ 
			\frac{ \lambda_* \beta \left( 
				y e^{ -2 \lambda_* t } - \frac{1}{2}
				\right) }{
				1 - e^{ -2 \lambda_* t }
				} 
		\right] = e^{ -\frac{1}{2} \lambda_* \beta y_0 } \,, 
\end{equation}
we actually have that $T_2 \leq C$ 
for some constant $C$. 

Before we observe the third term, 
we recall an inequality from 
\cite[Equation 6.25]{luke1972inequalities} 
for $z>0$ and $\nu > -1/2$ 
\begin{equation}
	1 < \Gamma(\nu+1) \left( \frac{2}{z} \right)^{\nu} I_{\nu} 
	< \cosh(z) \,, 
\end{equation}
where in this case, since $I_{\nu} = I_{-\nu}$, 
we replace $I_{(1/2-1)}$ with $I_{|1/2-1|}$ and write 
\begin{equation}
\begin{aligned}
	I_{(1/2 - 1)}\left( 
			\frac{ \lambda_* \beta }{ \sinh( \lambda_* t ) } 
			\sqrt{ \frac{y y_0}{2} } 
		\right)
	&\leq 
		C 
		\left( \frac{x^{ 1/2 }}{ \sinh( \lambda_* t } 
		\right)^{|1/2 - 1|} 
		\cosh\left( 
			\frac{ \lambda_* \beta }{ \sinh( \lambda_* t ) } 
			\sqrt{ \frac{ y y_0 }{2} } 
		\right) \\
	&\leq 
		C y^{ |1/2-1|/2 } e^{ - |(1/2-1) \lambda_*| t } \,. 
\end{aligned}
\end{equation}

Putting this together in $T_3$, we get that 
\begin{equation}
	T_3 \leq C e^{ - |(1/2-1) \lambda_*| t } \,, 
\end{equation}
as we no longer have to bound $y^{(1/2-1)/2}$ near $y=0$. 

Finally, $y \leq R$ implies that 
\begin{equation}
	f(y;t) \leq C e^{ - ( 1/2 +1 + |1/2-1| ) \lambda_* t } \,, 
\end{equation}
where we simplify $ 1/2 +1 + |1/2-1| = 2$, 
so we can equivalently write 
\begin{equation}
	f(y;t) \leq C e^{ - 2 \lambda_* t } \,, 
\end{equation}
which is the desired result. 

\end{proof}

\section{Miscellaneous Technical Lemmas}
\label{sec:app_misc_tech}

\begin{lemma}
[Gr\"{o}nwall's Inequality, Constant Rate]
\label{lm:gronwall}
Suppose $u:[0,T] \to \mathbb{R}$ is a differentiable function, 
and $a>0$ is a constant and $b:[0,T] \to \mathbb{R}$ 
such that 
\begin{equation}
	\de_t u(t) \leq - a u(t) + b(t) \,, \quad t \in [0,T] \,,
\end{equation}
then we have that
\begin{equation}
	u(t) \leq e^{-at} u(0) 
		+ \int_0^t e^{a(s-t)} b(s) \, ds 
		\,,
		\quad t \in [0,T] \,. 
\end{equation}
\end{lemma}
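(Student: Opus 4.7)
The plan is to use the classical integrating factor trick. The differential inequality $\partial_t u(t) \leq -au(t) + b(t)$ can be rearranged to $\partial_t u(t) + au(t) \leq b(t)$, and the left hand side is exactly what we get by applying the product rule to $e^{at}u(t)$ after dividing out $e^{at}$.

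Concretely, I would first multiply both sides of the inequality by $e^{at} > 0$ to obtain
\begin{equation*}
    e^{at}\bigl(\partial_t u(t) + a u(t)\bigr) \leq e^{at} b(t),
\end{equation*}
and then recognize that the left hand side equals $\partial_t\bigl(e^{at} u(t)\bigr)$ by the product rule. Integrating this pointwise inequality from $0$ to $t$ (justified since $u$ is differentiable and thus $e^{at}u$ is absolutely continuous, so the fundamental theorem of calculus applies), we obtain
\begin{equation*}
    e^{at}u(t) - u(0) \leq \int_0^t e^{as} b(s)\, ds.
\end{equation*}
Finally, multiplying through by $e^{-at}$ yields the desired bound
\begin{equation*}
    u(t) \leq e^{-at} u(0) + \int_0^t e^{a(s-t)} b(s)\, ds.
\end{equation*}

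There is essentially no obstacle here; this is a textbook computation. The only minor care is in justifying the pointwise-to-integrated step, but since $u$ is assumed differentiable on $[0,T]$ and $b$ is a real-valued function (implicitly integrable, as otherwise the conclusion is vacuous), the integrating factor computation goes through without issue.
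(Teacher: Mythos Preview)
Your proposal is correct and essentially identical to the paper's proof: both multiply through by the integrating factor $e^{at}$, recognize the left side as $\partial_t(e^{at}u(t))$, integrate over $[0,t]$, and then multiply by $e^{-at}$.
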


\begin{proof}

We start by computing 
\begin{equation}
	\de_t \left( e^{at} u(t) \right)
	= e^{at} \de_t u(t) + a e^{at} u(t) 
	\leq e^{at} b(t) \,,
\end{equation}
then integrating in $[0,t]$, we can get 
\begin{equation}
	e^{at} u(t) - u(0) \leq \int_0^t e^{as} b(s) \, ds \,,
\end{equation}
manipulating the above inequality gives us 
\begin{equation}
	u(t) \leq e^{-at} u(0) + \int_0^t e^{a(s-t)} b(s) \, ds \,,
\end{equation}
which is the desired result. 

\end{proof}

We will next adapt a result of \cite{robbins1955remark} 
to non-integer values. 
\begin{lemma}[Gamma Function Bounds]
\label{lm:gamma_bounds}
For all $z > 0$, we have the following bounds 
\begin{equation}
	\sqrt{2 \pi} z^{z+1/2} e^{-z} e^{ \frac{1}{12z + 1} }
	< \Gamma(z + 1) 
	< \sqrt{2 \pi} z^{z+1/2} e^{-z} e^{ \frac{1}{12z} } \,. 
\end{equation}
\end{lemma}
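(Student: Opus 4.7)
I would extend the classical Robbins argument from positive integers to all real $z > 0$ by reducing to the same difference identity and exploiting a telescoping comparison. Define
\begin{equation}
h(z) := \log \Gamma(z+1) - (z+1/2)\log z + z \,,
\end{equation}
so that the claim becomes $\frac{1}{12z+1} < h(z) - \frac{1}{2}\log(2\pi) < \frac{1}{12z}$. Using the functional equation $\Gamma(z+2) = (z+1)\Gamma(z+1)$, a direct computation yields the one-step difference
\begin{equation}
h(z) - h(z+1) = \left(z + \tfrac{1}{2}\right)\log\frac{z+1}{z} - 1 \,,
\end{equation}
which is identical in form to the integer case treated by Robbins.

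Next I would expand this difference via the symmetric logarithm series $\log\frac{1+x}{1-x} = 2\sum_{k=0}^\infty \frac{x^{2k+1}}{2k+1}$ with $x = \frac{1}{2z+1}$, obtaining
\begin{equation}
\left(z + \tfrac{1}{2}\right)\log\frac{z+1}{z} - 1 = \sum_{k=1}^\infty \frac{1}{(2k+1)(2z+1)^{2k}} \,.
\end{equation}
Comparing with the geometric tail $\sum_{k=1}^\infty \frac{1}{3(2z+1)^{2k}} = \frac{1}{12z(z+1)} = \frac{1}{12z} - \frac{1}{12(z+1)}$ gives the upper bound on the one-step difference. For the matching lower bound, I would verify the sharper inequality
\begin{equation}
\sum_{k=1}^\infty \frac{1}{(2k+1)(2z+1)^{2k}} > \frac{1}{12z+1} - \frac{1}{12(z+1)+1} = \frac{12}{(12z+1)(12z+13)} \,,
\end{equation}
which is the place where the constants $\frac{1}{12z+1}$ vs $\frac{1}{12z}$ are actually pinned down and is the main technical obstacle — the upper bound uses only the first term and a geometric tail, but the lower bound requires a more delicate term-by-term estimate (equivalently, multiplying out and checking a polynomial inequality in $z$).

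Finally I would telescope: since $h(z) - h(z+1)$ is sandwiched between $g_L(z) - g_L(z+1)$ and $g_U(z) - g_U(z+1)$ with $g_L(z) = \frac{1}{12z+1}$ and $g_U(z) = \frac{1}{12z}$, summing over $z, z+1, z+2, \ldots$ gives
\begin{equation}
g_L(z) < h(z) - \lim_{N\to\infty} h(z+N) < g_U(z) \,.
\end{equation}
The limit $\lim_{N\to\infty} h(z+N) = \frac{1}{2}\log(2\pi)$ follows from the standard Stirling limit along integer shifts combined with $h(z+N) - h(\lfloor z+N\rfloor) \to 0$, which is guaranteed by the same sandwich bound applied to the integer case (the original Robbins result). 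This finishes the proof with both desired constants.
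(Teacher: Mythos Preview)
Your proposal is correct and follows essentially the same Robbins-style argument as the paper: compute the one-step difference $h(z)-h(z+1)=(z+\tfrac12)\log\tfrac{z+1}{z}-1$, expand via $\log\tfrac{1+x}{1-x}$ with $x=(2z+1)^{-1}$, sandwich by the telescoping differences of $\tfrac{1}{12z}$ and $\tfrac{1}{12z+1}$, sum to infinity, and identify the limit via Stirling. The paper frames this by writing $z=z_0+n$ with $z_0$ the fractional part and summing over integer $p$ (with a trapezoid interpretation $A_p,b_p,\epsilon_p$), whereas you telescope directly along $z,z+1,z+2,\dots$; these are the same computation, and both leave the lower-bound inequality $\sum_{k\ge1}\frac{1}{(2k+1)(2z+1)^{2k}}>\frac{1}{12z+1}-\frac{1}{12z+13}$ as the one step requiring a short verification (the paper handles it by bounding below with the geometric series of ratio $\tfrac{1}{3(2z+1)^2}$).
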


\begin{proof}

We will start by defining $z_0 := z - \lfloor z \rfloor$, 
and $n := z - z_0$. 
Then we will follow the steps of \cite{robbins1955remark} 
closely and only modify the term 
\begin{equation}
	S_n := \log \Gamma( z + 1 ) 
	= \Gamma(z_0 + 1) + \sum_{p=1}^{n-1} \log(z_0 + p + 1) \,. 
\end{equation}

Next we define the terms 
\begin{equation}
\begin{aligned}
	A_p &= \int_{z_0 + p}^{z_0 + p + 1} \log x \, dx \,, \\ 
	b_p &= \frac{1}{2} [ \log (z_0 + p + 1) - \log (z_0 + p) ] \,, \\ 
	\epsilon_p &= \int_{z_0 + p}^{z_0 + p + 1} \log x \, dx
		- \frac{1}{2} [ \log (z_0 + p + 1) + \log ( z_0 + p ) ] \,, 
\end{aligned}
\end{equation}
which leads to the following identity 
\begin{equation}
	\log(z_0 + p + 1) = A_p + b_p - \epsilon_p \,. 
\end{equation}
Here we remark we can view $\log (z_0 + p + 1)$ 
as a rectangle with width $1$, 
$A_p$ is the integral of a curve, 
$\epsilon_p$ as the trapezoid approximation error of the integral, 
and $b_p$ as trapezoid approximation error of the rectangle. 

Then we can write 
\begin{equation}
\begin{aligned}
	S_n &= \log \Gamma(z_0 + 1) 
		+ \sum_{p=1}^{n-1} ( A_p + b_p - \epsilon_p ) \\
	&= \log \Gamma(z_0 + 1) + \int_{z_0 + 1}^{z_0 + n} \log x \, dx 
		+ \frac{1}{2} [ \log (z_0 + n) - \log (z_0 + 1) ] 
		- \sum_{p=1}^{n-1} \epsilon_p \,. 
\end{aligned}
\end{equation}

Next we will use the fact that 
$\int \log x \, dx = x \log x - x$ 
to write 
\begin{equation}
	S_n 
	= \log \Gamma(z_0 + 1) 
		+ (z_0 + n + 1/2) \log (z_0 + n)
		- (z_0 + n + 1/2) \log (z_0 + 1)
		- (n-1) 
		- \sum_{p=1}^{n-1} \epsilon_p \,. 
\end{equation}

We can also compute $\epsilon_p$ as 
\begin{equation}
	\epsilon_p = (z_0 + p + 1/2) 
		\log \left( \frac{z_0 + p + 1}{z_0 + 1}
		\right) - 1 \,. 
\end{equation}

Choosing $x = ( 2(z_0 + p) + 1 )^{-1}$, we can also write 
\begin{equation}
	\frac{z_0 + p + 1}{z_0 + p} = \frac{1+x}{1-x} \,, 
\end{equation}
and we can write the Taylor expansion of the $\log$ term as 
\begin{equation}
	\log\left( \frac{1+x}{1-x} \right)
	= 2 \left( x + \frac{x^3}{3} + \frac{x^5}{5} + \cdots 
		\right) \,. 
\end{equation}

This allows us to write 
\begin{equation}
	\epsilon_p = \frac{1}{3 (2(z_0 + p) + 1)^2}
		+ \frac{1}{5 (2(z_0 + p) + 1)^4} 
		+ \frac{1}{7 (2(z_0 + p) + 1)^6} 
		+ \cdots \,. 
\end{equation}

Therefore we can get the following upper and lower bound as follows 
\begin{equation}
\begin{aligned}
	\epsilon_p 
	&< \frac{1}{3 (2(z_0 + p) + 1)^2} 
		\left( 1 + \frac{1}{(2(z_0 + p) + 1)^2} 
			+ \frac{1}{(2(z_0 + p) + 1)^4} 
			+ \cdots 
		\right) \\ 
	&= \frac{1}{3 (2(z_0 + p) + 1)^2 }
		\frac{1}{ 1 - (2(z_0 + p) + 1)^{-2} } \\ 
	&= \frac{1}{12} \left( \frac{1}{z_0 + p} - \frac{1}{z_0 + p + 1}
		\right) \,, \\ 
	\epsilon_p 
	&> \frac{1}{3 (2(z_0 + p) + 1)^2} 
		\left( 1 + \frac{1}{3(2(z_0 + p) + 1)^2} 
			+ \frac{1}{ 3^2 (2(z_0 + p) + 1)^4 } 
			+ \cdots 
		\right) \\ 
	&= \frac{1}{3 (2(z_0 + p) + 1)^2 }
		\frac{1}{1 - \frac{1}{3} (2(z_0 + p) + 1)^{-2} } \\ 
	&> \frac{1}{12} \left( 
		\frac{1}{z_0 + p + \frac{1}{12}} 
		- \frac{1}{z_0 + p + 1 + \frac{1}{12}}
		\right) \,. 
\end{aligned}
\end{equation}

Now defining 
\begin{equation}
	B := \sum_{p = 1}^\infty \epsilon_p \,, 
	\quad 
	r_n := \sum_{p=n}^\infty \epsilon_p \,, 
\end{equation}
we can get the following bounds 
\begin{equation}
\label{eq:gamma_rn_bounds}
	\frac{1}{12(z_0 + 1) + 1} < B < \frac{1}{12(z_0 + 1)} \,, 
	\quad
	\frac{1}{12(z_0 + n) + 1} < r_n < \frac{1}{12(z_0 + n) + 1} \,. 
\end{equation}

Putting everything together with $z = z_0 + n$, 
we have that 
\begin{equation}
	S_n = (z+1/2) \log z - z + r_n + C_0 \,, 
\end{equation}
where $C_0$ is a constant independent of $n$ defined by 
\begin{equation}
	C_0 := 1 - B + z_0 + 1 + \log \Gamma(z_0 + 1) 
			- (z_0 + 1) \log (z_0 + 1) 
			- \frac{1}{2} \log (z_0 + 1) \,. 
\end{equation}

Taking the limit as $n \to \infty$, 
we can recover Stirling's formula as 
\begin{equation}
	S_n = 
	\left[ (z+1/2) \log z - z + \frac{1}{2} \log (2 \pi) \right]
	(1 + o_n(1)) \,, 
\end{equation}
therefore we must have that $C_0 = \frac{1}{2} \log (2 \pi)$. 

Putting everything together, we have that 
\begin{equation}
	\Gamma(z) = \sqrt{2 \pi} (z + 1/2)^z e^{-z} e^{r_n} \,, 
\end{equation}
plugging the bounds from \cref{eq:gamma_rn_bounds}, 
we recover the desired result. 

\end{proof}

\begin{lemma}
\label{lm:xlogx}
We have the two following inequalities:
\begin{enumerate}
	\item Suppose $y \geq e$. Choosing $D := \max(c_1,c_2)$, 
		and $x = \frac{D}{y} \log y$, 
		we have that 
		\begin{equation}
			y \geq \frac{c_1}{x} \log \frac{c_2}{x} \,. 
		\end{equation}
	\item Suppose $y \geq e$. Choosing $x = 4 y \log y$ implies 
		\begin{equation}
			\frac{x}{\log x} \geq y \,. 
		\end{equation}
\end{enumerate}
\end{lemma}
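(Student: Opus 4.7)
The plan is to prove both inequalities by direct substitution followed by elementary estimates that exploit how slowly $\log \log y$ grows compared to $\log y$.

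For part (1), I would substitute the chosen value $x = \frac{D}{y}\log y$ into the right-hand side, obtaining
\[
\frac{c_1}{x}\log\frac{c_2}{x} = \frac{c_1 y}{D \log y}\,\log\frac{c_2 y}{D \log y}.
\]
After dividing the target inequality through by $y$, it reduces to showing
\[
\log\frac{c_2 y}{D \log y} \;\leq\; \frac{D\log y}{c_1}.
\]
I would expand the left-hand side as $\log c_2 - \log D + \log y - \log\log y$, then use $D \geq c_2$ (so $\log c_2 - \log D \leq 0$) and $y \geq e$ (so $\log \log y \geq 0$) to bound it by $\log y$. The conclusion $\log y \leq \frac{D}{c_1}\log y$ then follows from $D \geq c_1$.

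For part (2), substituting $x = 4y\log y$ gives $\log x = \log 4 + \log y + \log\log y$, so the target $\frac{x}{\log x} \geq y$ rearranges, after cancelling $y$, to
\[
3\log y \;\geq\; \log 4 + \log\log y.
\]
I would verify this by setting $h(y) := 3\log y - \log\log y$ and checking $h'(y) = \frac{3}{y} - \frac{1}{y\log y} \geq 0$ for all $y \geq e$ (since $\log y \geq 1 > \frac{1}{3}$ there). Hence $h$ is nondecreasing on $[e,\infty)$ and it suffices to evaluate at the endpoint: $h(e) = 3 - \log 1 = 3 \geq \log 4$.

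Neither part presents a substantive obstacle; the only care needed is to track signs when invoking the definitions $D = \max(c_1,c_2)$ and $y \geq e$ so that each bookkeeping step moves in the correct direction.
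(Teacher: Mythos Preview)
Your proposal is correct and follows essentially the same approach as the paper: direct substitution of the chosen $x$ and then elementary logarithm bounds exploiting $y\ge e$. The only cosmetic difference is in part~(2), where the paper bounds $\log\log y \le \log y$ directly to reduce to $\frac{4\log y}{\log 4 + 2\log y}\ge 1$, whereas you instead verify $3\log y \ge \log 4 + \log\log y$ by a monotonicity check of $h(y)=3\log y - \log\log y$; both arguments are one-line and equivalent in spirit.
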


\begin{proof}

For the first result, we simply write out the right hand side 
\begin{equation}
	\frac{c_1}{x} \log \frac{c_2}{x} 
	= 
		\frac{c_1}{\max(c_1, c_2)} y 
		\frac{\log \left[ \frac{c_2}{\max(c_1,c_2)} y 
			\frac{1}{\log y} \right] }{ 
			\log y 
			} 
	\leq 
		y \frac{ \log \left[ \frac{y}{ \log y} \right] }{ \log y } 
		\,. 
\end{equation}

Using the fact that $y \geq e$, we have that $\log y \geq 1$, 
and therefore 
\begin{equation}
	y \frac{ \log \left[ \frac{y}{ \log y} \right] }{ \log y }  
	\leq 
		y \frac{ \log \left[ y \right] }{ \log y } 
	= 
		y \,, 
\end{equation}
which is the desired result. 

For the second inequality, we similarly also write out 
\begin{equation}
	\frac{x}{\log x} 
	= 
		\frac{4 y \log y}{ \log ( 4 y \log y ) } 
	= 
		y \frac{ 4 \log y }{ \log 4 + \log y + \log \log y } \,. 
\end{equation}

Next we will use the fact that $y \geq e$ to get 
$\log \log y \leq \log y \leq y$ and $\log y \geq 1$, 
which further implies 
\begin{equation}
	y \frac{ 4 \log y }{ \log 4 + \log y + \log \log y } 
	\geq 
		y \frac{ 4 }{ \log 4 + 2 \log y } 
	\geq 
		y \frac{4}{ \log 4 + 2 } \,, 
\end{equation}
which gives us the desired result since $4 \geq \log 4 + 2$. 

\end{proof}

We will also need comparison theorems for 
one-dimensional SDEs. 
Here, we will modify a standard comparison theorem 
from \cite[Proposition 3.12]{pardoux:hal-01108223}, 
so that we won't require strong existence and uniqueness 
of solutions for the SDEs, 
but instead only local solutions up to a stopping time. 
In particular, we say that $(Z_t, \tau)$ is a local solution of an SDE 
\begin{equation}
	dZ_t = \Phi(Z_t) \, dt + \sigma(Z_t) \, dB_t \,, 
	\quad 
	Z_0 = z_0 \,, 
\end{equation}
if $\tau$ is an adapted stopping time, and $Z_t$ satisfies 
\begin{equation}
	Z_t = z_0 + \int_0^t \Phi(Z_s) \, ds + \int_0^t \sigma(Z_s) \, dB_s \,, 
	\quad 
	\text{ whenever } t < \tau \,. 
\end{equation}

\begin{proposition}
\label{thm:SDE_weak_comparison}
Let $(Z_t,\tau),(\wt{Z}_t, \wt{\tau})$ be local solutions 
(on some filtered probability space)
to the following one dimensional SDEs 
\begin{equation}
\begin{aligned}
	dZ_t &= \Phi(Z_t) \, dt + \sigma(Z_t) \, dB_t \,, \\ 
	d\wt{Z}_t &= \wt{\Phi}(\wt{Z}_t) \, dt + \sigma(Z_t) \, dB_t \,, 
\end{aligned}
\end{equation}
where $Z_0 = \wt{Z}_0$ almost surely, and $\Phi, \wt{\Phi}, \sigma$ are locally Lipschitz. 

If $\Phi(x) \geq \wt{\Phi}(x)$ for all $x \in \mathbb{R}$, 
then $Z_t \geq \wt{Z}_t$ a.s. 

\end{proposition}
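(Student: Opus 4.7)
The plan is to exploit the fact that both SDEs share the same constant diffusion coefficient $\sigma$ driven by the same Brownian motion $B_t$, so the difference process $D_t := Z_t - \wt{Z}_t$ has no stochastic integral and is pathwise absolutely continuous. Explicitly,
\begin{equation*}
D_t = \int_0^t [\Phi(Z_s) - \wt{\Phi}(\wt{Z}_s)] \, ds \,, \quad D_0 = 0 \text{ a.s.},
\end{equation*}
which reduces the stochastic comparison to a deterministic inequality along almost every sample path.

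Next, I would split the integrand using $\wt{\Phi}$ as a pivot:
\begin{equation*}
\Phi(Z_s) - \wt{\Phi}(\wt{Z}_s)
= [\wt{\Phi}(Z_s) - \wt{\Phi}(\wt{Z}_s)] + [\Phi(Z_s) - \wt{\Phi}(Z_s)]\,.
\end{equation*}
The second bracket is pointwise nonnegative by hypothesis. For the first bracket, I would introduce the measurable process
\begin{equation*}
a_s := \frac{\wt{\Phi}(Z_s) - \wt{\Phi}(\wt{Z}_s)}{Z_s - \wt{Z}_s} \, \mathds{1}_{\{Z_s \neq \wt{Z}_s\}}\,,
\end{equation*}
so that the first bracket equals $a_s D_s$. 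Applying the chain rule for the Lipschitz map $x \mapsto x^-$ to the absolutely continuous $D$, the negative part $D_t^- := \max(-D_t, 0)$ satisfies
\begin{equation*}
D_t^- = -\int_0^t \mathds{1}_{\{D_s < 0\}} [\Phi(Z_s) - \wt{\Phi}(\wt{Z}_s)]\,ds
\leq \int_0^t a_s^+ \, D_s^- \, ds\,,
\end{equation*}
where the inequality drops the nonnegative contribution of the second bracket and uses $-\mathds{1}_{\{D_s<0\}} a_s D_s = a_s \mathds{1}_{\{D_s<0\}} D_s^- \leq a_s^+ D_s^-$. A deterministic Gr\"onwall inequality applied pathwise then forces $D_t^- \equiv 0$, which is exactly the desired comparison $Z_t \geq \wt{Z}_t$ almost surely.

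The main obstacle will be controlling the random coefficient $a_s$. Without an explicit Lipschitz (or at least one-sided Lipschitz) hypothesis on $\wt{\Phi}$ stated in the proposition, the ratio defining $a_s$ could blow up along subsequences where $Z_s - \wt{Z}_s \to 0$ but $\wt{\Phi}$ varies rapidly. In the intended application inside the proof of \cref{prop:local_escape_time_bound}, however, the comparison SDE is the CIR-type process assembled from polynomial drifts and trigonometric corrections, so $\wt{\Phi}$ is locally Lipschitz on every bounded interval. I would therefore localize with the exit times $\tau_R := \inf\{t \geq 0 : |Z_t| \vee |\wt{Z}_t| \geq R\}$: on the stochastic interval $[0, \tau_R]$ the local Lipschitz constant of $\wt{\Phi}$ on $[-R,R]$ provides a uniform bound on $|a_s|$, so $\int_0^{t \wedge \tau_R} a_s^+ \, ds < \infty$ a.s., Gr\"onwall yields $D_{t \wedge \tau_R}^- = 0$, and the integrability hypothesis $\int_0^T |\Phi(Z_s)| + |\Phi(\wt{Z}_s)|\,ds < \infty$ prevents finite-time explosion, permitting $R \to \infty$ to recover the comparison for all $t \geq 0$.
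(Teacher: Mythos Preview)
Your approach is essentially the same as the paper's: both form the difference $U_t = Z_t - \wt{Z}_t$, observe it is absolutely continuous, rewrite the drift as $a_t U_t + b_t$ with $b_t \geq 0$ (the paper pivots at $\Phi(\wt{Z}_t)$ rather than $\wt{\Phi}(Z_t)$, a symmetric choice), and conclude nonnegativity---the paper via the explicit integrating-factor solution from \cref{thm:linear_SDE_existence_unique}, you via Gr\"onwall on the negative part. You are in fact more careful on one point the paper glosses over: the paper invokes the linear-SDE formula without verifying $\int_0^T |a_t|\,dt < \infty$, whereas you flag exactly this gap and supply the correct localization fix using local Lipschitz continuity of the drift in the intended application.
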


\begin{proof}

The result follows directly from \cite[Proposition 3.12]{pardoux:hal-01108223} as the only difference is the existence and uniqueness of a local solution instead of a strong solution.

\end{proof}

\end{document}